\pgfplotsset{compat=newest}
    \let\cref\crtcref 
    \let\autocite\@gobble 
\RenewDocumentCommand\nameref{s}{\IfBooleanTF{#1}{\@namerefstar}{\T@nameref}}
  \renewcommand*{\chapnamefont}{\normalfont\Large\sffamily}
  \renewcommand*{\printchaptername}{%
    \chapnamefont\centering\@chapapp}
\def\@advisors{}
\newcommand{\advisors}[1]{\def\@advisors{#1}}
\def\@department{}
\newcommand{\department}[1]{\def\@department{#1}}
\def\@thesistype{}
\newcommand{\thesistype}[1]{\def\@thesistype{#1}}
\renewcommand{\maketitlehookb}{\vspace{1in}%
  \par\begin{center}\Large\sffamily\@thesistype\end{center}}
\renewcommand{\maketitlehookd}{%
  \vfill\par
  \begin{flushright}
    \sffamily
    \@advisors\par
    \@department, ETH Z\"urich
  \end{flushright}
}
\theoremstyle{definition}
\newtheorem{definition}{Definition}[chapter]
\theoremstyle{plain}
\newtheorem{theorem}{Theorem}[chapter]
\newenvironment{theoremvar}[2][]{ 
    \let\oldthetheorem\thetheorem
    \renewcommand{\thetheorem}{\oldthetheorem{} (#2)}
    \begin{theorem}[#1]\addtocounter{theorem}{-1}
}{
    \end{theorem}
}
\newtheorem{lemma}{Lemma}[chapter]
\newenvironment{lemmavar}[2][]{ 
    \let\oldthelemma\thelemma
    \renewcommand{\thelemma}{\oldthelemma{} (#2)}
    \begin{lemma}[#1]\addtocounter{lemma}{-1}
}{
    \end{lemma}
}
\newtheorem{corollary}{Corollary}[chapter]
\theoremstyle{definition}
\newtheorem{remark}{Remark}[chapter]
\newtheorem*{remark*}{Remark}
\newcounter{proof}
\newcounter{proofstep}
\numberwithin{proofstep}{proof}
\renewcommand{\theproofstep}{\arabic{proofstep}}
\newenvironment{proofstep}[1][]{
    \setbool{proofstep}{true}
    \ifstrempty{#1}
        {\paragraph{Proof Step~\theproofstep.}}
        {\paragraph{Proof Step~\theproofstep~\textnormal{(#1)}.}}
}{
    \setbool{proofstep}{false}
}
\numberwithin{equation}{chapter}
\newcommand{\eqnote}[1]{\textcolor{gray}{\text{(#1)}}} 
\WithSuffix\newcommand{\eqnote}*[1]{\textcolor{gray}{\text{#1}}} 
\colorlet{eqdefault}{black} 
    \colorlet{eqchange}{black} 
    \colorlet{eqnochange}{gray!75!white} 
    \colorlet{eqchange}{blue} 
    \colorlet{eqnochange}{lightgray} 
\newcommand{\eqcontrast}{\color{eqnochange}}
\newcommand{\eqnocontrast}{\color{eqdefault}} 
\newcommand{\eqchange}[1]{\mathcolor{eqchange}{#1}{}} 
\newcommand{\eqnochange}[1]{\mathcolor{eqnochange}{#1}{}} 
\newenvironment{derivation*}[1][2]{
    \allowdisplaybreaks
    \csname alignat*\endcsname{#1}
}{
    \csname endalignat*\endcsname
    \ignorespacesafterend
}
\newcounter{derivation} 
\newcounter{derivationstep}
\newcounter{derivationstepstart} 
\renewcommand{\thederivationstep}{\ifbool{proofstep}{\theproofstep.}{}\arabic{derivationstep}}
\newenvironment{derivation}[1][2]{
    \stepcounter{derivation}
    \setcounter{derivationstepstart}{\value{derivationstep}}
    \let\oldrefstepcounter\refstepcounter
    \renewcommand{\refstepcounter}[1]{\oldrefstepcounter{derivationstep}}
    \renewcommand{\theequation}{\thederivationstep}
    \allowdisplaybreaks
    \alignat{#1}
}{
    \endalignat
    \ifbool{proofstep}{}{\setcounter{derivationstep}{0}}%
}
\newenvironment{derivationinline}[1][2]{
    \stepcounter{derivation}
    \setcounter{derivationstepstart}{\value{derivationstep}}
    \let\label@in@display\label
    \newcommand{\tageq}[1]{\refstepcounter{derivationstep}\stackrel{\eqnocontrast(\thederivationstep)}{##1}}
    \allowdisplaybreaks
    \csname alignat*\endcsname{#1}
}{
    \csname endalignat*\endcsname
    \ifbool{proofstep}{}{\setcounter{derivationstep}{0}}%
}
\newenvironment{justification}{
    \begin{enumerate}[label=(\ifbool{proofstep}{\theproofstep.}{}\arabic*), start=\value{derivationstepstart}+1,left=0.8cm]
}{
    \end{enumerate}
}
\definecolor{solblue}{HTML}{268BD2}
\crefname{enumi}{}{} 
\crefname{proofstep}{Proof Step}{Proof Steps}
\crefname{derivationstep}{}{}
\protected\def\blx@imc@biblinkstart{%
  \@ifnextchar[
    {\blx@biblinkstart}
    {\blx@biblinkstart[\abx@field@entrykey]}}
\def\blx@biblinkstart[#1]{%
  \blx@sfsave\hyper@natlinkstart{\the\c@refsection @#1}\blx@sfrest}
\protected\def\blx@imc@biblinkend{%
  \blx@sfsave\hyper@natlinkend\blx@sfrest}
\def\iflinkparens{%
  \ifboolexpr{ test {\ifnumequal{\value{multicitetotal}}{0}} and
               test {\ifnumequal{\value{citetotal}}{1}} and
               test {\iffieldundef{prenote}} and
               test {\iffieldundef{postnote}} }}
      \global\booltrue{cbx:link}\biblinkstart%
      \global\booltrue{cbx:link}\biblinkstart%
\global\boolfalse{cbx:parens}}
\global\boolfalse{cbx:link}}
\newcommand{\ifset}[2]{\ifstrempty{#1}{}{#2}}
\newcommand{\ExtendWithDelims}[2]{
    \WithSuffix\def#1*##1{#1#2*{##1}} 
    \WithSuffix\def#1[##1]##2{#1#2[##1]{##2}} 
}
\WithSuffix\newcommand{\ExtendWithDelims}*[2]{ 
    \WithSuffix\newcommand{#1}*[2][]{#1_{##1}#2*{##2}} 
\newcommand{\DeclareMathOperatorWithDelims}[3]{
    \DeclareMathOperator{#1}{#3}
    \ExtendWithDelims{#1}{#2}
}
\WithSuffix\newcommand{\DeclareMathOperatorWithDelims}*[3]{ 
    \DeclareMathOperator{#1}{#3}
\newcommand{\DeclareFunctionWithDelims}[3]{
    \newcommand{#1}{#3}
    \ExtendWithDelims{#1}{#2}
}
\WithSuffix\newcommand{\DeclareFunctionWithDelims}*[3]{ 
    \newcommand{#1}{#3}
\newcommand{\TODO}[1]{\begingroup\color{red}TODO #1\endgroup}
\WithSuffix\newcommand{\TODO}*[1]{\begingroup\color{red}\textnormal{TODO #1}\endgroup} 
\newcommand{\TODOLIST}[2][]{\begingroup\color{red}\TODO{#1}\begin{itemize}[topsep=2pt]#2\end{itemize}\endgroup}
\newcommand{\MAYBE}[1]{\begingroup\color{orange}#1\endgroup} 
    \renewcommand{\TODO}[1]{\ignorespaces}
    \renewcommand{\TODOLIST}[2]{\ignorespaces}
    \renewcommand{\MAYBE}[1]{\ignorespaces}
    \let\oldTODO\TODO
    \renewcommand{\TODO}[1]{\refstepcounter{todo}\addcontentsline{todos}{todo}{\protect\numberline{\thetodo}TODO #1}\oldTODO{#1}}
    \let\oldMAYBE\MAYBE
    \renewcommand{\MAYBE}[1]{\refstepcounter{todo}\addcontentsline{todos}{todo}{\protect\numberline{\thetodo}MAYBE #1}\oldMAYBE{#1}}
\DeclareMathOperator*{\arggre}{arg\widetilde{max}}
\newcommand{\Camspace}{\mathcal{C}} 
\newcommand{\Surface}{\mathcal{S}} 
\newcommand{\Domain}{\mathcal{D}} 
\newcommand{\dmax}{d_{max}}
\newcommand{\dmin}{d_{min}}
\newcommand{\dcam}{d_{cam}}
\newcommand{\DOF}{d_{DOF}}
\newcommand{\FOV}{\alpha_{FOV}}
\let\oldtheta\theta
\renewcommand{\theta}[1][]{\oldtheta\ifset{#1}{^{(#1)}}}
\newcommand{\thetaopt}{\Theta^{\star}} 
\newcommand{\thetagre}{\theta^*} 
\newcommand{\sigmaeps}{\sigma_\varepsilon} 
\newcommand{\x}{\mathrm{x}} 
\newcommand{\y}{\mathrm{y}} 
\newcommand{\fm}{\tilde{f}} 
\newcommand{\A}[1][]{\mathcal{A}\ifset{#1}{^{(#1)}}} 
\newcommand{\Fu}[1][]{F_u\ifset{#1}{^{(#1)}}} 
\renewcommand{\hyphen}{\text{-}}
\newcommand{\addhyphen}[1]{\ifset{#1}{#1\hyphen}} 
\newcommand{\kpwarp}[1][]{k_{\addhyphen{#1}p_u}}
\newcommand{\kpsuminf}[1][]{k_{\addhyphen{#1}p_\infty}}
\newcommand{\kpsumfin}[2][]{k_{\addhyphen{#1}\tilde{p}_{#2}}}
\newcommand{\ktrunc}{\tilde{k}}
\newcommand{\kptrunc}[2][]{k_{\addhyphen{#1}p_{#2}}}
\newcommand{\PhiI}{\Phi^{(I)}} 
\newcommand{\PhiS}{\Phi^{(S)}} 
\newcommand{\fov}[2]{fov\paren*{#2;#1}}
\newcommand{\ray}[2]{ray\paren*{#2;#1}}
\newcommand{\matern}{Matérn}
\newcommand{\GPUCB}{\alg{GP-UCB}}
\newcommand{\SMUCB}{\alg{SM-UCB}}
\newcommand{\MACOPT}{\alg{MaCOpt}}
\newcommand{\algOS}{\alg{ObservedSurface}}
\newcommand{\algOCL}{\alg{ObservedConfidenceLower}}
\newcommand{\algOCU}{\alg{ObservedConfidenceUpper}}
\newcommand{\algIOA}{\alg{IntersectionOcclusionAware}}
\newcommand{\algI}{\alg{Intersection}}
\newcommand{\algC}{\alg{Confidence}}
\newcommand{\algCS}{\alg{ConfidenceSimple}}
\newcommand{\algCSP}{\alg{ConfidenceSimplePolar}}
\newcommand{\algCSW}{\alg{ConfidenceSimpleWeighted}}
\newcommand{\algU}{\alg{Uncertainty}}
\newcommand{\algUP}{\alg{UncertaintyPolar}}
\newcommand{\algGCS}{Greedy-\algCS{}}
\newcommand{\algGU}{Greedy-\algU{}}
\newcommand{\algGUP}{Greedy-\algUP{}}
\newcommand{\algTCSU}{TwoPhase-\algCS{}-\algU{}}
\newcommand{\TCSU}{CS\hyphen U}
\newenvironment{discussion}[1]{
    \par \textbf{(#1)}
}{} 
\newcommand{\ie}{i.e.,}
\newcommand{\eg}{e.g.,}
\newcommand{\wloog}{w.l.o.g.\@}
\newcommand{\iid}{i.i.d.\@}
\newcommand{\whp}{w.h.p.\@}
\renewcommand{\i}{\mathrm{i}} 
\newcommand{\1}{\mathbf{1}} 
\newcommand{\Natural}{\mathbb{N}}
\newcommand{\Integer}{\mathbb{Z}}
\newcommand{\Real}{\mathbb{R}}
\newcommand{\Xcal}{\mathcal{X}}
\newcommand{\Xhat}{\widehat{X}}
\newcommand{\Yhat}{\widehat{Y}}
\newcommand{\defeq}{\coloneqq}
\newcommand{\equivalent}{\Longleftrightarrow}
\newcommand{\ffrac}[2]{{#1/#2}}
\newcommand{\multlinedcmd}[2][5cm]{\begin{multlined}[#1]#2\end{multlined}}
\let\brack\undefine 
\DeclarePairedDelimiter\paren{\lparen}{\rparen}
\DeclarePairedDelimiter\brack{\lbrack}{\rbrack}
\DeclarePairedDelimiter\set{\{}{\}}
\DeclarePairedDelimiter\abs{\lvert}{\rvert}
\DeclarePairedDelimiter\ceil{\lceil}{\rceil}
\DeclarePairedDelimiter\floor{\lfloor}{\rfloor}
\DeclareMathOperator*{\argmin}{argmin}
\DeclareMathOperator*{\argmax}{argmax}
\ExtendWithDelims{\max}{\paren}
\ExtendWithDelims{\min}{\paren}
\ExtendWithDelims{\exp}{\paren}
\ExtendWithDelims{\log}{\paren}
\ExtendWithDelims{\ln}{\paren}
\ExtendWithDelims{\sin}{\paren}
\ExtendWithDelims{\cos}{\paren}
\ExtendWithDelims{\tan}{\paren}
\ExtendWithDelims{\cot}{\paren}
\ExtendWithDelims{\arcsin}{\paren}
\ExtendWithDelims{\arccos}{\paren}
\ExtendWithDelims{\arctan}{\paren}
\ExtendWithDelims{\arccot}{\paren}
\ExtendWithDelims{\sinh}{\paren}
\ExtendWithDelims{\cosh}{\paren}
\ExtendWithDelims{\tanh}{\paren}
\ExtendWithDelims{\coth}{\paren}
\DeclareMathOperatorWithDelims{\supp}{\paren}{supp}
\DeclareFunctionWithDelims{\bigO}{\paren}{\mathcal{O}}
\DeclareFunctionWithDelims{\littleO}{\paren}{\mathcal{o}}
\newcommand{\Fourier}[2]{\mathcal{F}\brack*{#1}\paren*{#2}}
\DeclarePairedDelimiter\norm{\lVert}{\rVert}
\DeclarePairedDelimiter\inner{\langle}{\rangle}
\ExtendWithDelims{\ker}{\paren}
\ExtendWithDelims{\dim}{\paren}
\DeclareMathOperatorWithDelims{\rank}{\paren}{rank}
\ExtendWithDelims{\det}{\paren}
\DeclareMathOperatorWithDelims{\tr}{\paren}{tr}
\DeclareMathOperatorWithDelims{\diag}{\paren}{diag}
\DeclareMathOperatorWithDelims{\vectorize}{\paren}{vec}
\newcommand{\diff}{\mathrm{d}}
\renewcommand{\d}[1]{\,\diff #1}
\newcommand{\dx}{\d{x}}
\newcommand{\deriv}[3][]{\frac{\diff^{#1} #2}{\diff #3}}
\renewcommand{\P}{\mathbb{P}} 
\ExtendWithDelims{\Pr}{\brack}
\DeclareMathOperatorWithDelims*{\E}{\brack}{\mathbb{E}}
\DeclareMathOperatorWithDelims*{\Var}{\brack}{Var}
\DeclareMathOperatorWithDelims{\Cov}{\paren}{Cov}
\DeclareFunctionWithDelims{\Normal}{\paren}{\mathcal{N}}
\DeclareFunctionWithDelims{\GP}{\paren}{\mathcal{GP}}
\DeclarePairedDelimiterX{\probdiv}[2]{(}{)}{#1\;\delimsize\|\;\mathopen{}#2} 
\newcommand{\DKL}{D_{KL}\probdiv}
\newcommand{\with}{\text{ with }}
\newcommand{\inst}[2]{#1 \rightarrow #2}
\newcommand{\subst}[2]{#1 \leftarrow #2}
\newcommand{\noqed}{\renewcommand{\qedsymbol}{}}
\newcommand{\alg}[1]{\textsc{#1}}
\newcommand{\cmark}{\checkmark}
\newcommand{\xmark}{\textit{\sffamily X}}
\newcommand{\omark}{\textit{\sffamily O}}
\newcommand{\fullref}[1]{\cref{#1} (\nameref{#1})}
\DeclareRobustCommand{\crefnosort}[1]{\begingroup\@cref@sortfalse\cref{#1}\endgroup}
\title{Near-optimal Active Reconstruction}
\author{Daniel Yang}
\date{April 12, 2023}
\begin{document}

\frontmatter

\ifbool{FINAL}{}{
    \listoftodos*
    \cleartorecto
}

\begin{titlingpage}
  \calccentering{\unitlength}
  \begin{adjustwidth*}{\unitlength-24pt}{-\unitlength-24pt}
    \maketitle
  \end{adjustwidth*}
\end{titlingpage}

\markboth{}{} 

\begin{abstract}
    With the growing practical interest in vision-based tasks for autonomous systems, the need for efficient and complex methods becomes increasingly larger. In the rush to develop new methods with the aim to outperform the current state of the art, an analysis of the underlying theory is often neglected and simply replaced with empirical evaluations in simulated or real-world experiments. While such methods might yield favorable performance in practice, they are often less well understood, which prevents them from being applied in safety-critical systems.
    
    The goal of this work is to design an algorithm for the \emph{Next Best View (NBV) problem} in the context of active object reconstruction, for which we can provide qualitative performance guarantees with respect to true optimality. To the best of our knowledge, no previous work in this field addresses such an analysis for their proposed methods.
    Based on existing work on Gaussian process optimization, we rigorously derive sublinear bounds for the cumulative regret of our algorithm, which guarantees near-optimality. Complementing this, we evaluate the performance of our algorithm empirically within our simulation framework. We further provide additional insights through an extensive study of potential objective functions and analyze the differences to the results of related work.
\end{abstract}

\cleartorecto 
\markboth{}{} 

\renewcommand{\abstractname}{Acknowledgements}
\begin{abstract}
    ``Try to write the proof over the weekend, so we can discuss it next time,'' I was once told by my supervisors. As it turned out, it took me few weeks and more than 30 pages. But without their final push, I would still continue improving my simulation framework and completely miss the appreciation I later found for the theoretical part of my thesis.
    
    With this, I would like to especially thank my supervisors Manish and Jelena for giving me such a thorough research experience both practically and theoretically. Without their support and guidance, their patience in addressing my constant questioning, the freedom I received, but also the gentle pushes into the right direction, I would not have made it this far.
    I would also like to thank Prof.\ Andreas Krause for letting me write my thesis in his group.
    
    Further, I want to sincerely thank Slava Borovitskiy for the insightful discussions on periodic kernels and for patiently resolving my questions. Without his help, some pages would still be covered with question marks. I also appreciate the help from Constantin Koch who gave me a hint for a proof at a time when I was too tired to think of a solution.
    
    Special thanks go to my father Bin Yang, who took the time to rigorously proofread almost my complete thesis, which I would not have expected from anyone, and provided valuable feedback. I would also like to thank Dominik Proschmann for correcting some proofs and making constructive suggestions.
    
    Without the mental support (and sometimes food) from my friends and family, this thesis would have ended differently. Thanks to all of you!
\end{abstract}
\cleartorecto

\begingroup
    \hypersetup{linkcolor=black} 
    \tableofcontents
    \cleartorecto
\endgroup

\chapter*[Notations]{Notations}\label{chp:notations}
\addcontentsline{toc}{chapter}{Notations}

\begingroup 
    \renewcommand{\arraystretch}{1.1}
    \newcommand{\sepspace}{\addlinespace[0.25cm]}
    
    \newcommand{\header}[1]{\multicolumn{2}{l}{\sffamily\bfseries #1}}
    \newcommand{\subheader}[1]{\multicolumn{2}{l}{\sffamily\itshape #1}}
    \begin{tabularx}{\textwidth}{p{0.2\textwidth}X}
        \header{Abbreviations}\\
        \wloog{} & without loss of generality\\
        \iid{} & independent and identically distributed\\
        \whp{} & with high probability\\
        NBV & next best view\\
        FOV & field of view\\
        DOF & depth of field\\
        LOS & line of sight\\
        GP & Gaussian process\\
        \sepspace
        \subheader{Objective Functions}\\
        OS & \algOS\\
        OCU & \algOCU\\
        OCL & \algOCL\\
        IOA & \algIOA\\
        I & \algI\\
        C & \algC\\
        CS & \algCS\\
        CSP & \algCSP\\
        CSW & \algCSW\\
        U & \algU\\
        UP & \algUP\\
        \sepspace
        \\ \\ \\ \\ \\
        \header{General Setting}\\
        \(\A\) & algorithm\\
        \(t\) & current round (current number of measurements)\\
        \(T\) & final round (total number of measurements)\\
        \sepspace
        \subheader{Camera Pose (decision)}\\
        \(\Camspace\) & space of camera poses\\
        \(\Theta \subseteq \Camspace\) & set of camera poses\\
        \(\theta \in \Camspace\) & camera pose\\
        \(\theta_t\) & \(t\)-th NBV estimate\\
        \(\theta_{1:t}\) & set of NBV estimates \(\theta_1,\dots,\theta_t\)\\
        \(\thetaopt_T\) & optimal solution for \(T\) rounds\\
        \(\thetagre_t\) & greedy decision for round \(t\)\\
        \sepspace
        \subheader{Objectives}\\
        \(F(\Theta)\) & utility of observing from \(\Theta\)\\
        \(F(\theta \mid \Theta)\) & marginal utility of observing from \(\theta\)\\
        \(\Fu(\theta \mid \Theta)\) & upper bound for marginal utility\\
        \(F_l(\theta \mid \Theta)\) & lower bound for marginal utility\\
        \sepspace
        \subheader{Regret}\\
        \(r(t)\) & simple regret in round \(t\)\\
        \(R(T)\) & cumulative regret up to round \(T\)\\
        \(r_{ind}(t)\) & simple individual regret in round \(t\)\\
        \(R_{ind}(T)\) & cumulative individual regret up to round \(T\)\\
        \sepspace
        \header{Simplified 2D Setting}\\
        \subheader{Object}\\
        \(\Domain\) & domain of parameterized surface function\\
        \(\Phi \subseteq \Domain\) & set of parameters for surface function\\
        \(\varphi \in \Domain\) & parameter for surface function\\
        \(f(\varphi)\) & surface function\\
        \(f(\Phi)\) & \([f(\varphi)]_{\varphi\in\Phi}\)\\
        \(\dmax\) & upper bound for surface function\\
        \(\dmin\) & lower bound for surface function\\
        \sepspace
        \subheader{Object Discretization}\\
        \(h\) & width of real world pixel\\
        \(\Surface \subseteq \Domain\) & set of all surface points\\
        \(X \subseteq \Surface\) & set of surface points\\
        \(x_i \in \Surface\) & \(i\)-th surface point\\
        \(N\) & total number of surface points\\
        \sepspace
        \subheader{Camera}\\
        \(\dcam\) & distance of camera to real world center\\
        \(\DOF\) & camera DOF\\
        \(\FOV\) & camera FOV\\
        \(\fov{\theta}{\varphi}\) & left-right FOV boundary of camera at \(\theta\)\\
        \(\ray{\theta,\alpha}{\varphi}\) & ray of camera at \(\theta\) casted at angle \(\alpha\)\\
        \(o(\theta)\) & observation function mapping \(\theta\) to set of observed surface points\\
        \(o(\Theta)\) & \(\bigcup_{\theta\in\Theta} o(\theta)\)\\
        \(\fm(\varphi)\) & measurement function mapping \(\varphi\) to measured distance between real world center and object surface\\
        \(\fm(\Phi)\) & \([\fm(\varphi)]_{\varphi\in\Phi}\)\\
        \(\varepsilon_\varphi\) & measurement noise when measuring surface at \(\varphi\)\\
        \(\varepsilon_\Phi\) & \([\varepsilon_\varphi]_{\varphi\in\Phi}\)\\
        \(\sigmaeps\) & standard deviation of measurement noise\\
        \(X_{1:t} \defeq o(\theta_{1:t})\) & set of surface points observed from \(\theta_{1:t}\)\\
        \(n_{1:t} \defeq \abs{X_{1:t}}\) & number of surface points observed from \(\theta_{1:t}\)\\
        \(Y_{1:t} \defeq \fm(X_{1:t})\) & measurements at observed surface points \(X_{1:t}\)\\
        \(f_{1:t} \defeq f(X_{1:t})\) & true surface function at observed surface points \(X_{1:t}\)\\
        \(\varepsilon_{1:t} \defeq \varepsilon_{X_{1:t}}\) & measurement noise at observed surface points \(X_{1:t}\)\\
        \sepspace
        \subheader{Gaussian Process}\\
        \(\GP(m, k)\) & Gaussian process\\
        \(m(\varphi)\) & mean function of Gaussian process\\
        \(k(\varphi, \varphi')\) & covariance/kernel function of Gaussian process\\
        \(\mu(\Phi)\) & mean vector of \(\Phi\)\\
        \(\mu_0(\Phi)\) & mean vector of prior\\
        \(\mu_t(\Phi)\) & mean vector of posterior based on \(Y_{1:t}\)\\
        \(\Sigma(\Phi,\Phi')\) & covariance matrix between \(\Phi\) and \(\Phi'\)\\
        \(\Sigma_0(\Phi,\Phi')\) & covariance matrix of prior\\
        \(\Sigma_t(\Phi,\Phi')\) & covariance matrix of posterior based on \(Y_{1:t}\)\\
        \(\Sigma(\Phi)\) & \(\Sigma(\Phi,\Phi)\)\\
        \(\sigma(\varphi)\) & standard deviation of marginal distribution at \(\varphi\)\\
        \(\sigma_0(\varphi)\) & standard deviation of prior\\
        \(\sigma_t(\varphi)\) & standard deviation of posterior based on \(Y_{1:t}\)\\
        \(u_t(\varphi)\) & upper confidence bound for round \(t\) based on \(Y_{1:t-1}\)\\
        \(l_t(\varphi)\) & lower confidence bound for round \(t\) based on \(Y_{1:t-1}\)\\
        \(\beta_t\) & confidence parameter for round \(t\)\\
        \sepspace
        \\ \\
        \subheader{Kernels for Gaussian Process}\\
        \(l\) & length scale\\
        \(\sigma_f\) & standard deviation of Gaussian process\\
        \(\nu\) & smoothness of Matérn kernel\\
        \(k_{RBF}(x,x')\) & RBF kernel\\
        \(k_M(x,x')\) & Matérn kernel\\
        \(k_{M_\nu}(x,x')\) & Matérn kernel with smoothness \(\nu\)\\
        \(\kpwarp[X](x,x')\) & kernel \(X\) periodized by warping with function \(u\)\\
        \(\kpsuminf[X](x,x')\) & kernel \(X\) periodized by infinite periodic summation\\
        \(\kpsumfin[X]{\kappa}(x,x')\) & kernel \(X\) \(\kappa\)-approximately periodized by finite periodic summation\\
        \(\kptrunc[X]{c_1 \to c_2}(x,x')\) & kernel \(X\) periodized by truncation from \(c_1\) to \(c_2\)\\
        \(S(\omega)\) & spectral density of kernel function (Fourier transform of \(k(r)\))\\
        \sepspace
        \subheader{Algorithms and Analysis}\\
        \(\PhiI_t(\theta)\) & summation interval defined by ``FOV-confidence intersection''\\
        \(\PhiS(\theta)\) & summation interval defined by ``simple FOV endpoint''\\
        \(\A(\Theta;\Fu)\) & greedy algorithm based on objective \(\Fu\) and given previous camera poses \(\Theta\)\\
        \(\A[X](\Theta)\) & \(\A(\Theta;\Fu[X])\)\\
        \(\A[*](\Theta)\) & \(\A(\Theta;F)\) (optimal greedy algorithm)\\
        \(\A(\Theta;\Fu[1],\Fu[2])\) & two-phase algorithm based on objectives \(\Fu[1]\) for phase 1 and \(\Fu[2]\) for phase 2 and given previous camera poses \(\Theta\)\\
        \(\A[X\hyphen Y](\Theta)\) & \(\A(\Theta;\Fu[X],\Fu[Y])\) \\
        \(I(Y_{1:T};f_{1:T})\) & information gain about \(f\) through measurements \(\theta_{1:T}\)\\
        \(\gamma_T\) & maximum information gain through \(T\) measurements\\
        \(\Domain_t\) & uniform discretization of \(\Domain\)\\
        \([\Phi]_t\) & uniform discretization of \(\Phi\) based on \(\Domain_t\)\\
        \([\varphi]_t\) & closest point in \(\Domain_t\) to \(\varphi\)\\
        \sepspace
        \header{General Notations}\\
        \(\sum_{x\in[a,b]}^{\Delta x} f(x)\) & \(\sum_{k=0}^{\floor{\frac{b-a}{\Delta x}}} f(a+k\cdot\Delta x)\) (sum from \(a\) to \(b\) with step size \(\Delta x\))\\
        \(\abs{[a,b]}\) & \(b-a\) (length of interval)\\
        \(\lambda_i(A)\) & \(i\)-th eigenvalue of \(A\)\\
        \(\in_{2\pi}\) & interval membership modulo \(2\pi\)
    \end{tabularx}
\endgroup

\begin{remark*}
    For the sake of simplicity and readability, we introduce some sloppiness in our notation, which we discuss in the following:
    \begin{itemize}
        \item Sets \(X = \set{x_1,\dots,x_t}\) and sequences/vectors \(x_{1:t} = (x_1,\dots,x_t)\) are interpreted equivalently.
        
        The set notion allows us to use set operations such as \(x \in X\) or \(X_1 \cup X_2\) on sets and sequences. The sequence notion imposes an order on the elements and allows us to use elementwise math \(x_{1:t} + y_{1:t} = [x_i+y_i]_{i=1}^t\) and elementwise function application \(f(x_{1:t}) = [f(x_i)]_{i=1}^{t}\) on sets and sequences. We use both notations interchangeably and the specific notion should be clear from the context. In most cases, we refer to it as a set.
        \item Indexing by single index \(x_t\) or singleton sequence \(x_{t:t}\) are interpreted equivalently.
        
        This allows us to use single elements and singleton sets interchangeably and the specific notion should be clear from the context. In addition, definitions provided only for sequence indexes naturally generalize to single indexes.
    \end{itemize}
\end{remark*}

\mainmatter

\chapter{Introduction}\label{chp:introduction}

With the rise of computer vision, machine learning and robotics, an increasing number of technologies emerge at their intersection. \emph{3D reconstruction} is one of them, which aims to reconstruct a digital 3D model of an object or a scene from images or other measurement types. This has become an important way to provide a 3D understanding of our world to computers. It allows autonomous robots and AI systems to understand and explore their environment, enables offline inspection of infrastructure which is difficult to access, and brings real-world objects into digital reality.

In this work, we focus on the reconstruction of objects, for which the number of measurements is limited. This is of high importance, when the sampling process of new measurements is tedious or expensive. For example, positioning autonomous robots in difficult terrains or unmanned aerial vehicles at windy conditions for taking high-quality measurements can be a time-consuming process. At the same time, limited resources such as battery capacity and computational power require efficient strategies for performing reconstruction on-board.

The goal of \emph{active reconstruction}, an instance of active learning, is to make informed decisions for the next measurement locations based on previously acquired information. Sampling measurements from strategically chosen locations will lead to more efficient and faster reconstruction. In literature, this is commonly known as the \emph{Next Best View (NBV) problem}, which has been studied extensively \autocite{connolly1985determination, banta2000nextbestview, wenhardt2007active}. A wide range of methods, algorithms and heuristics for efficient active reconstruction has been developed over time \autocite{banta2000nextbestview, chen2005vision, delmerico2018comparison, schmid2020efficient}. So far, most of the existing work only validates their methods quantitatively in simulated or real-world experiments to show superiority over the state of the art \autocite{bissmarck2015efficient, schmid2020efficient, kompis2021informed}. To the best of our knowledge, none of them provide qualitative results with respect to the optimality of reconstruction. However, developing well-founded methods with strong theoretical guarantees is important for the deployment in safety-critical systems. In addition, methods with a theoretical foundation can be reasoned about more precisely and allow one to formally justify certain design choices in practice. With this work, we try to devise an algorithm, whose proposed NBV for reconstructing a given object is close to the optimal NBV. In other words, we aim for near-optimal active reconstruction.

\emph{Gaussian process optimization} is a powerful way to optimize unknown functions under uncertainty while providing provably near-optimal guarantees based on sublinear regret bounds \autocite{srinivas2012informationtheoretic}. This has been applied to many settings including sensor placement \autocite{krause2007nearoptimal}, interactive recommender systems \autocite{chen2017interactive}, and multi-agent coverage control \autocite{prajapat2022nearoptimal}.
Our goal is to apply similar approaches in the object reconstruction setting to obtain near-optimality guarantees for our algorithm.

\subsubsection{Structure of the Thesis}
The structure of this thesis is as follows.
In \cref{chp:background}, we provide background knowledge for subsequent chapters and in \cref{chp:related} we highlight related works in the area of active reconstruction and Gaussian process optimization. Continuing with \cref{chp:problem}, we formally define the setting and formulate the problem. Furthermore, we provide a list of all simplifications, which facilitate our later analysis, and compare our setting to other related work in more details. In \cref{chp:design}, the design ideas for different components of our algorithm are presented and compared with each other. After selecting a small set of candidate algorithms, \cref{chp:analysis} is devoted to their theoretical analysis with respect to near-optimality. \cref{chp:experiments} discusses the experimental results obtained from our simulation framework and contrasts them with our theoretical findings. Finally, \cref{chp:conclusion} concludes our work with a summarizing discussion and suggestions for potential future work.

We refer to \cref{fig:problem-overview} for an overview of our setting and problem formulation. Similarly, \cref{fig:design-overview} summarizes the design process of our algorithms and \cref{fig:analysis-overview} provides an overview of the structure of our theoretical analysis.

\chapter{Background}\label{chp:background}

In this chapter, we provide background knowledge for subsequent chapters. In \cref{sec:background-gp} we discuss Gaussian processes, which form the foundation of our algorithms. In \cref{sec:background-infotheory} we provide a brief introduction to information theory which is relevant for their analysis.


\section{Gaussian Process}\label{sec:background-gp}

Before diving into Gaussian processes, we first briefly discuss the general class of stochastic processes in \cref{ssec:background-gp-sp}. Then we discuss Gaussian processes in \cref{ssec:background-gp-gp} and in particular the influence of the kernel function on the Gaussian process in \cref{ssec:background-gp-kernels}. Finally, we describe how to perform Gaussian process regression of an unknown function using Bayesian inference in \cref{ssec:background-gp-gpr}. This becomes relevant for our setting, in which we want to learn the unknown object surface function.

For a ``systematic and unified treatment'' of Gaussian processes, we refer to the book \citetitle{rasmussen2005gaussian} written by \textcite{rasmussen2005gaussian}. A quick introduction with coding examples can be found on the website of \textcite{roelants2019gaussian} and for interactive visualizations we can recommend \textcite{gortler2019visual}. A brief overview over kernel functions for Gaussian processes is provided by \textcite{duvenaud2014kernel}.

\subsection{Stochastic Process}\label{ssec:background-gp-sp}

Recall from probability theory that a \textit{random variable} \(X\) informally represents a variable whose value is distributed according to some probability distribution.
A \textit{random vector} \(X = (X_1,\dots,X_n)\) is a finite collection of indexed random variables, which are distributed by some corresponding multivariate distribution.
A \emph{stochastic process} generalizes random variables once more to a collection of indexed random variables \(\set{X_t}_{t\in T}\) or \(\set{X(t)}_{t\in T}\) with some arbitrary index set \(T\), which can be infinitely large. Hence, a stochastic process can be interpreted as a \textit{random sequence} or \textit{random function} corresponding to a infinite-dimensional random vector and is distributed with some probability distribution over sequences or functions.

In practice, \(T\) often has some notion of time with \(X(t)\) describing the randomness of some quantity at time \(t\). Therefore, common choices for \(T\) are subsets of \(\Real\) such as the set of integers or an interval.
Based on \(T\), stochastic processes can be categorized into 
\emph{discrete-time stochastic processes} corresponding to random sequences or \emph{continuous-time stochastic processes} corresponding to random functions.

As one can sample concrete values \(x\in\Real\) of a random variable \(X\) and concrete vectors \((x_1,\dots,x_n)\in\Real^n\) of a random vector \((X_1,\dots,X_n)\), one can also sample sequences \((x_t)_{t\in T}\) or functions \(x\colon T\to\Real\) of a stochastic process which are called \emph{sample functions} or \emph{sample paths}. In theory, this requires us to sample from a possibly infinite-dimensional joint distribution of random variables \(x_t\) or \(x(t)\) over all \(t\in T\).
In practice, it suffices to realize \(x_t\) or \(x(t)\) only for a finite subset of indexes \(\set{t_1,\dots,t_n}\subset T\), which corresponds to sampling from the finite-dimensional joint distribution of the random variables with index \(t \in \set{t_1,\dots,t_n}\) while marginalizing the infinitely many remaining random variables with index \(t \in T\setminus\set{t_1,\cdots,t_n}\).

The goal of discussing stochastic processes or more specifically Gaussian processes in the next \cref{ssec:background-gp-gp} is to use them as a model for the unknown surface function \(f\) of our object. Hence, we only focus on continuous-time stochastic processes (\ie{} random functions)
\begin{equation*}
    f \defeq \set{f(x)}_{x\in \Xcal}
\end{equation*}
with random variables \(f(x)\) indexed by points on the continuous domain \(\Xcal\).

\subsection{Gaussian Process}\label{ssec:background-gp-gp}

A Gaussian process is a stochastic process which we denote as%
\footnote{Note that \textcite[Eq. 2.14]{rasmussen2005gaussian} write \(f(x) \sim \GP*{m(x),k(x,x')}\), but we change this notation to \(f\) instead of \(f(x)\) to put emphasis on the notion of a random function while avoiding the misconception that \(f(x)\) refers to the random variable at point \(x\).}
\begin{equation*}
    f \sim \GP*{m(x),k(x,x')}
\end{equation*}
and which represents a random function defined on \(\Xcal\).
It is fully characterized by a \emph{mean function} and a positive semi-definite \emph{covariance function}
\begin{equation}\label{eq:mean-covariance-function}
    \begin{aligned}
        m(x) &\defeq \E*{f(x)}\\
        k(x,x') &\defeq \E*{\paren*{f(x)-m(x)}\paren*{f(x') - m(x')}} 
        \\ &= \Cov*{f(x), f(x')}
        .
    \end{aligned}
\end{equation}
The mean function \(m(x)\) at point \(x\) corresponds to the mean of the random variable \(f(x)\) and therefore describes the average function values at \(x\). The covariance function \(k(x,x')\) at points \(x\) and \(x'\) corresponds to the covariance between the random variables \(f(x)\) and \(f(x')\) and therefore describes the pairwise similarity between the function values at \(x\) and \(x'\).

The important characteristic of a Gaussian process is that for every finite subset of points \(X = \set{x_1,\dots,x_n}\) the corresponding set of random variables \(f(X) = \set{f(x) \mid x \in X}\) is distributed with a multivariate Gaussian distribution
\begin{equation*}
    f(X) \sim \Normal*{\mu(X),\Sigma(X)}
\end{equation*}
with \textit{mean vector} and \textit{covariance matrix}
\begin{equation}\label{eq:mean-vector-covariance-matrix}
    \begin{aligned}
        \mu(X) &\defeq \brack*{m(x)}_{x\in X}\\
        \Sigma(X,X') &\defeq \brack*{k(x,x')}_{x\in X,x'\in X'}
    \end{aligned}
\end{equation}
and \(\Sigma(X)\) denoting \(\Sigma(X,X)\). Hence, we can interpret \(f\) as a random function or an infinite-dimensional random vector distributed with an infinite-dimensional multivariate Gaussian distribution. In addition, we can easily sample a finite set of function values \(f(X)\) from the multivariate Gaussian distribution and interpolate between these values to visualize the sample function. This is the advantage of Gaussian processes, since they stay computationally tractable despite the infinite dimensionality.

The choice of \(m(x)\) is less important for the behavior of functions sampled from the Gaussian process, since it only specifies the offset around which the function values are expected to lie. Often the mean function for the Gaussian process is assumed to be zero, since \(f \sim \GP*{m(x), k(x,x')}\) can always be represented as \(f(x) = f_0(x) + m(x)\) with \(f_0 \sim \GP*{0, k(x,x')}\). As one can see, the choice of the covariance function \(k(x,x')\), which is also called the \emph{kernel function}, governs most of the properties of the Gaussian process such as continuity, smoothness, stationarity or periodicity. The next \cref{ssec:background-gp-kernels} is reserved for a discussion on kernel functions.

\subsection{Kernel Functions}\label{ssec:background-gp-kernels}

It is commonly known that a kernel function \(k\colon\Xcal \times \Xcal \to \Real\) is used as a similarity measure between pairwise points \(x\) and \(x'\) \autocite[Chapter 4]{rasmussen2005gaussian}.%
\footnote{Recall that the kernel function represents an inner product \(k(x,x') = \inner{\phi(x),\phi(x')}\) with \(\phi\) typically interpreted as some feature map. In Euclidean space, this inner product is maximally positive or negative if \(\phi(x)\) and \(\phi(x')\) are aligned (\ie{} high similarity) and it is zero if they are orthogonal (\ie{} no similarity).}
For a Gaussian process \(f\), the kernel \(k(x,x')\) is defined as the covariance between the random variables \(f(x)\) and \(f(x')\) as defined in \cref{eq:mean-covariance-function}, which corresponds to measuring the similarity between \(f(x)\) and \(f(x')\) in how they deviate from the mean function.
Hence, depending on the choice of \(k(x,x')\), functions of different types and with different characteristics can be sampled from the Gaussian process.

\subsubsection{Stationarity of Kernel}
One common type of kernel functions are \emph{stationary kernels} characterized as
\begin{equation}\label{eq:stationary-kernel}
    k(x,x') = k_S(x-x')
\end{equation}
with some univariate function \(k_S\colon\Xcal\to\Real\). These kernels are translation invariant, since the returned similarity only depends on the relative positions of \(x\) and \(x'\) and not on the absolute position of each individual point. Without the knowledge of the absolute positions, stationary kernels are only capable of influencing the local, stationary behavior of sample functions and are not able to describe global trends \autocite{gortler2019visual}. Hence, a sample function from a stationary process behaves similarly at all locations as shown in \cref{fig:background-gp-kernels-stationarity-stationary}.

\begin{figure}
    \centering
    \begin{subfigure}{0.45\linewidth}
        \centering
        \includegraphics[width=\linewidth]{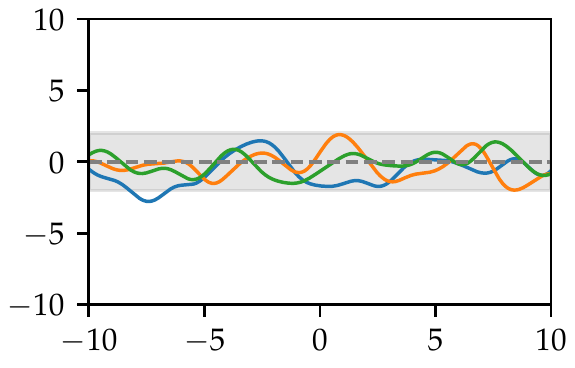}
        \caption{stationary Gaussian process}
        \label{fig:background-gp-kernels-stationarity-stationary}
    \end{subfigure}%
    \begin{subfigure}{0.45\linewidth}
        \centering
        \includegraphics[width=\linewidth]{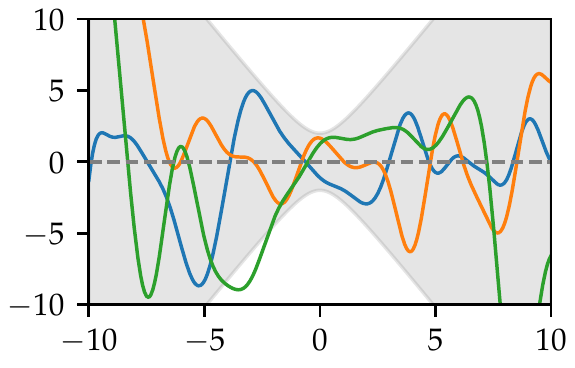}
        \caption{non-stationary Gaussian process}
        \label{fig:background-gp-kernels-stationarity-nonstationary}
    \end{subfigure}
    \caption[Stationary vs.\ Non-Stationary Gaussian Process]{
        Stationary vs.\ Non-Stationary Gaussian Process.
        (a) visualizes three sample functions from a stationary Gaussian process obtained by sampling from a multivariate Gaussian distribution at finitely many evaluation points and interpolating these sampled function values. The dotted gray line denotes the mean and the gray region denotes twice the standard deviation at each of the sample locations.
        Similarly, (b) shows three functions sampled from a Gaussian process, which is however defined with a non-stationary kernel.
        Observe, how the random behavior of sample functions from a stationary Gaussian process in (a) stays globally the same, while the non-stationary process in (b) also captures functions which globally change in their magnitude.
    }
    \label{fig:background-gp-kernels-stationarity}
\end{figure}

We can further categorize stationary kernels into \emph{isotropic kernels} defined as
\begin{equation}\label{eq:isotropic-kernel}
    k(x,x') = k_S(\norm{x-x'}_2)
\end{equation}
or as \(k(x,x') = k_S(\abs{x-x'})\) with \(\Xcal\subseteq\Real\). Isotropic kernels are translation and rotation invariant, since they only depend on the Euclidean distance between pairs of points. Hence, the similarity \(k(x,\cdot)\) to neighbor points around \(x\) is symmetric. The class of \emph{anisotropic kernels} can be written as
\begin{equation}\label{eq:anisotropic-kernel}
    k(x,x') = k_S(\norm{x-x'}_M)
\end{equation}
with some positive semi-definite matrix \(M\), which can take the direction of deviation into account \autocite[Chapter 4.2.1]{rasmussen2005gaussian}.

On the other side, \emph{non-stationary kernels} determine the similarity based on the absolute positions of each individual point, which allows the Gaussian process to capture global trends as seen in \cref{fig:background-gp-kernels-stationarity-nonstationary}. A common class of non-stationary kernels are dot product kernels, which can be written as \(k(x,x') = k_{dot}(x^Tx')\) \autocite[Chapter 4.1]{rasmussen2005gaussian}.

Since for our setting, we are only interested in stationary kernels as described later in \cref{ssec:design-gp-covariance}, we continue focusing only on stationary kernels. This also reduces the dependence of the kernel function
\begin{equation}\label{eq:stationary-kernel-univariate}
    k(x, x+r) = k_S(r)
\end{equation}
to a single variable \(r\) denoting the distance between pairs of points. This allows us to reason about the similarity only based on the distance \(r\) around some unspecified \(x\) instead of based on a pair of \(x\) and \(x'\). In addition, we can now easily plot the kernel functions over \(r\).%
\footnote{In fact, we plot the kernel functions over the distances \(r=x-x'\) instead of the absolute distances \(\abs{x-x'}\), since this allows the intuition of centering the plotted kernel function on \(x\) to imagine the similarity value to the left and right neighbor points \(x+r\).}

\subsubsection{Smoothness of Kernel}
The choice of kernel function also determines the smoothness properties of a stationary Gaussian process. The smoothness of a process is described in terms of \emph{mean-square (MS) continuity} and \emph{MS differentiability}, which is loosely related to the continuity and differentiability of sample functions as explained by \textcite[Chapter 4.1.1]{rasmussen2005gaussian}.
For stationary kernels, the behavior of \(k_S(r)\) around zero is decisive for the smoothness of the Gaussian process, since it determines how similar \(f(x)\) and \(f(x+r)\) are. Later in \cref{fig:background-gp-kernels-rbf-matern}, we provide visualizations of sample functions for kernels with different behavior at \(r=0\).

\subsubsection{Examples of Stationary Kernels}
Stationary kernels often have the form
\begin{equation}\label{eq:stationary-kernel-parameters}
    k(x,x+r) = \sigma_f^2 \cdot k_S\paren*{\frac{r}{l}}
\end{equation}
with \(k_S(r)\) normalized to \(k_S(0)=1\). The parameter \(\sigma_f\) corresponds to the \emph{process standard deviation} and describes the average distance of sample functions \(f(x)\) to the mean function \(m(x)\).
The parameter \(l\) represents the \emph{characteristic length scale} of the Gaussian process and is used to adjust how far the similarity to \(x\) can reach the neighbor points \(x'\). This length roughly corresponds to the distance to \(x\) before the function value changes significantly \autocite[Chapter 2.2]{rasmussen2005gaussian}.
Choosing a smaller length scale results into sample functions with more rapid behavior as seen in \cref{fig:background-gp-kernels-parameters-lengthshort}, since \(l\) scales up the actual distances between \(x\) and \(x'\) such that the similarity to \(x\) reduces faster around \(x\). On the other hand, a larger length scale results into sample functions with slower variation as visualized in \cref{fig:background-gp-kernels-parameters-lengthlong}, since the similarity to \(x\) decreases slower around \(x\).

\begin{figure}
    \centering
    \begin{subfigure}{0.45\linewidth}
        \centering
        \includegraphics[width=\linewidth]{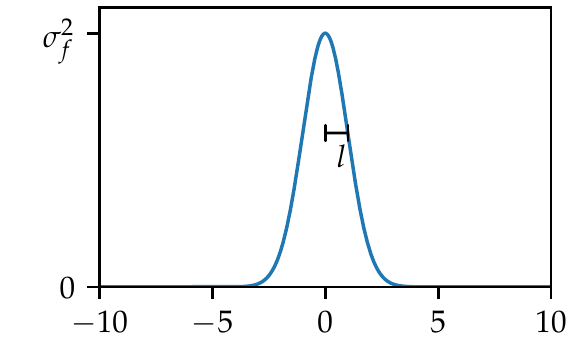}
        \includegraphics[width=\linewidth]{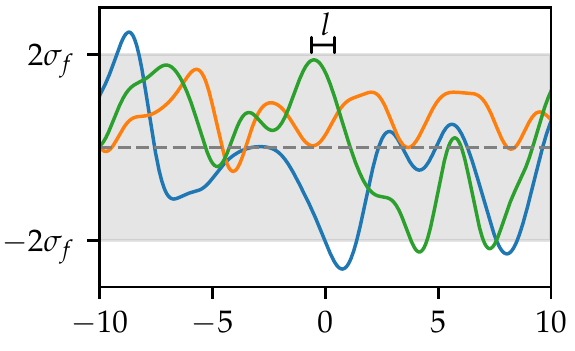}
        \caption{RBF kernel with \(l=1\)}
        \label{fig:background-gp-kernels-parameters-lengthshort}
    \end{subfigure}%
    \begin{subfigure}{0.45\linewidth}
        \centering
        \includegraphics[width=\linewidth]{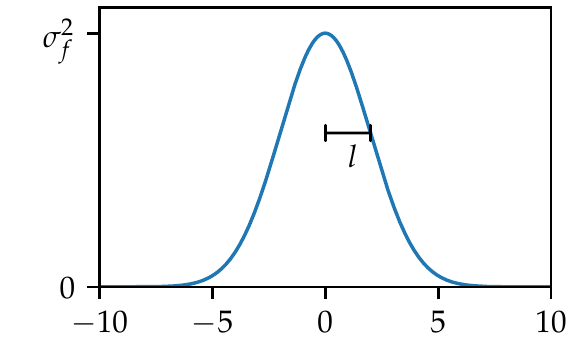}
        \includegraphics[width=\linewidth]{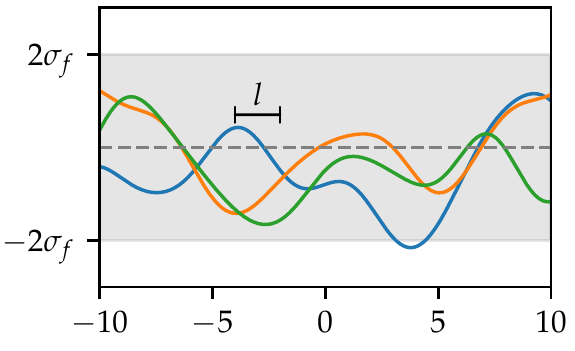}
        \caption{RBF kernel with \(l=2\)}
        \label{fig:background-gp-kernels-parameters-lengthlong}
    \end{subfigure}
    \caption[Parameters of Stationary Kernels]{
        Parameters of Stationary Kernels.
        (a) and (b) visualize the RBF kernel function (top) from \cref{eq:rbf-kernel} and sample functions (bottom) from the corresponding Gaussian process with different length scale parameters \(l\). Observe how the length scale represents the distance up to which the kernel function value is still large, precisely \(\sigma_f^2\cdot k_S(1)\). Afterwards, the kernel function decays rapidly and the points on the sample functions increasingly differ beyond a pairwise distance of \(l\).
        Additionally, one can visually verify that the process standard deviation \(\sigma_f\) corresponds to the average distance of sample functions to the mean function.
    }
    \label{fig:background-gp-kernels-parameters}
\end{figure}

The \emph{RBF kernel}, also called \emph{Squared Exponential kernel}, is defined as
\begin{equation}\label{eq:rbf-kernel}
    k_{RBF}(r) \defeq \sigma_f^2 \exp*{-\frac{r^2}{2l^2}}
    ,
\end{equation}
which results into an infinitely MS differentiable Gaussian process as visualized in \cref{fig:background-gp-kernels-rbf} \autocite[Eq. 4.9]{rasmussen2005gaussian}. This is often too smooth and unrealistic for modeling physical processes \autocite{stein1999interpolation}.

\begin{figure}
    \centering
    \begin{subfigure}{0.45\linewidth}
        \centering
        \includegraphics[width=\linewidth]{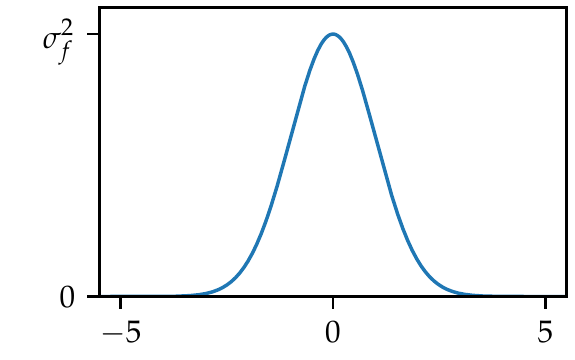}
        \includegraphics[width=\linewidth]{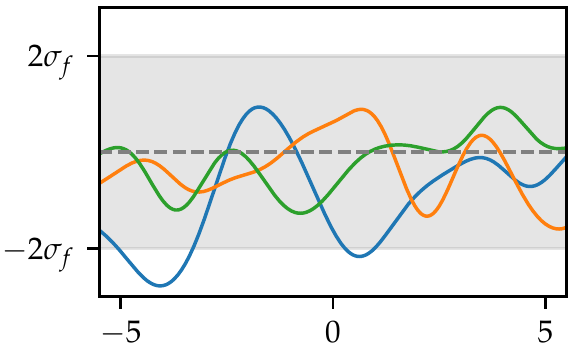}
        \caption{RBF kernel}
        \label{fig:background-gp-kernels-rbf}
    \end{subfigure}%
    \begin{subfigure}{0.45\linewidth}
        \centering
        \includegraphics[width=\linewidth]{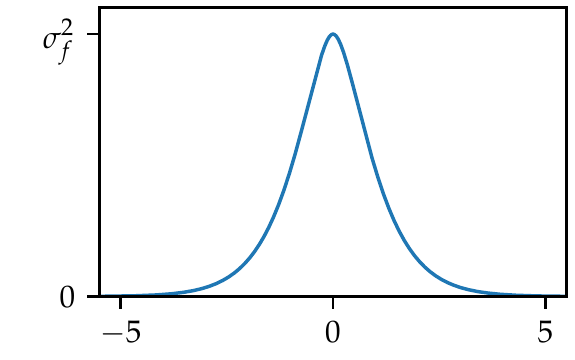}
        \includegraphics[width=\linewidth]{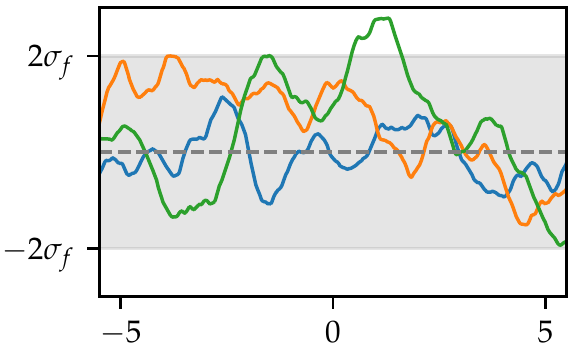}
        \caption{\matern{} kernel with \(\nu=3/2\)}
        \label{fig:background-gp-kernels-matern}
    \end{subfigure}
    \caption[RBF vs.\ \matern{} Kernel]{
        RBF vs.\ \matern{} Kernel.
        (a) shows the RBF kernel function (top) and sample functions (bottom) of the corresponding Gaussian process and similarly (b) for the \matern{} kernel with smoothness parameter \(\nu=3/2\). Observe how the less smooth behavior of the \matern{} kernel around \(r=0\) leads to less smooth sample functions.
    }
    \label{fig:background-gp-kernels-rbf-matern}
\end{figure}

Alternatively, the \emph{\matern{} kernel} is defined as
\begin{equation}\label{eq:matern-kernel}
    k_M(r) \defeq \sigma_f^2 \frac{2^{1-\nu}}{\Gamma(\nu)}\paren*{\sqrt{2\nu}\frac{r}{l}}^\nu K_\nu\paren*{\sqrt{2\nu}\frac{r}{l}}
\end{equation}
with gamma function \(\Gamma\) and modified Bessel function \(K_\nu\) \autocite[Eq. 4.14]{rasmussen2005gaussian}. The additional parameter \(\nu > 0\) allows us to adapt the smoothness of the Gaussian process. It is known that the Gaussian process is \(k\)-times MS differentiable if and only if \(\nu>k\) \autocite[Chapter 4.2.1]{rasmussen2005gaussian}. Hence, for the following choices of \(\nu\) we obtain
\begin{itemize}
    \item \(\nu=1/2\): MS continuous, but not MS differentiable (\ie{} very rough)
    \item \(\nu=3/2\): 1-times MS differentiable
    \item \(\nu=5/2\): 2-times MS differentiable
    \item \(\nu\geq7/2\): no large differences for different \(\nu\) (\ie{} all similar smooth)
\end{itemize}
This powerful kernel allows us to adapt the smoothness to the specific task or our prior knowledge with an example shown in \cref{fig:background-gp-kernels-matern}.

It is also possible to design new kernels based on existing ones as long as they stay positive semi-definite. We refer to \textcite[Chapter 4.2.4]{rasmussen2005gaussian} and \textcite[Chapter 13.1]{scholkopf2002learning} for various design techniques.

\subsection{Gaussian Process Regression}\label{ssec:background-gp-gpr}

The goal of Gaussian processes is not only to sample random functions from it, but also to learn an unknown function \(f\). Given a set of measured function values \(Y\) at the data points \(X\), we can use Bayesian inference
to refine the Gaussian process distribution, such that it takes the information from the dataset \(D=(X,Y)\) into account.

We define \(\Xhat\) as the points, where we want to evaluate the posterior distribution of the Gaussian process. For example, \(\Xhat\) can be a set of uniformly spaced points over \(\Xcal\) which allows us to interpolate and plot sample functions of the posterior Gaussian process distribution.

Then we choose a mean and covariance function which defines our prior Gaussian process distribution. As described in \cref{ssec:background-gp-gp}, the choice of the covariance function encodes most of our prior knowledge about the behavior or shape of the unknown function, while the mean function encodes most information about the magnitude or offset.
Before taking \(D\) into account, the random variables \(\Yhat=f(\Xhat)\) at the evaluation points are distributed with the \emph{prior distribution}
\begin{equation}\label{eq:prior-regression}
    \Yhat \sim \Normal*{\mu(\Xhat),\Sigma(\Xhat)}
\end{equation}
with mean and covariance as defined in \cref{eq:mean-vector-covariance-matrix}.

\subsubsection{Noise-free Gaussian Process Regression}
We first assume that the given set of measurements \(Y = f(X)\) does not contain noise and
\begin{equation*}
    Y \sim \Normal*{\mu(X),\Sigma(X)}
\end{equation*}
with mean and covariance as defined in \cref{eq:mean-vector-covariance-matrix}. The joint Gaussian distribution for the function values \(Y\) and \(\Yhat\) at the data and evaluation points is given as
\begin{equation*}
    \begin{pmatrix}Y \\ \Yhat\end{pmatrix}
    \sim \Normal*{\begin{pmatrix}
        \mu(X) \\ \mu(\Xhat)
    \end{pmatrix}, \begin{pmatrix}
        \Sigma(X) & \Sigma(X,\Xhat) \\
        \Sigma(\Xhat,X) & \Sigma(\Xhat)
    \end{pmatrix}}
    .
\end{equation*}
Since we already know concrete values for \(Y\), we apply Bayesian inference and arrive at the following \emph{posterior distribution}
\begin{equation}\label{eq:posterior-noisefree-regression}
    \Yhat \mid Y \sim \Normal*{\mu_D(\Xhat), \Sigma_D(\Xhat)}
\end{equation}
with closed-form expressions for
\begin{equation*}
    \begin{aligned}
        \mu_D(\Xhat) &= \eqcontrast \mu(\Xhat) \eqchange{+ \Sigma(\Xhat,X)\Sigma(X)^{-1} (Y - \mu(X))} \\
        \Sigma_D(\Xhat) &= \eqcontrast \Sigma(\Xhat) \eqchange{- \Sigma(\Xhat,X)\Sigma(X)^{-1} \Sigma(X,\Xhat)}
        .
    \end{aligned}
\end{equation*}
as shown by \textcite[Eq. 2.19]{rasmussen2005gaussian} and visualized in \cref{fig:background-gp-gpr-prior}. The posterior distribution of \(\Yhat\) provides us the posterior mean vector \(\mu_D(\Xhat)\), which we can use as a regression estimate for \(f\), and we additionally obtain confidence bounds for \(f\) based on the posterior covariance matrix \(\Sigma_D(\Xhat)\) as visualized in \cref{fig:background-gp-gpr-posterior-noisefree}. Observe how the posterior distribution restricts the Gaussian process only to sample functions which perfectly interpolate the dataset \(D\).

\begin{figure}
    \centering
    \begin{subfigure}{0.33\linewidth}
        \centering
        \includegraphics[width=\linewidth]{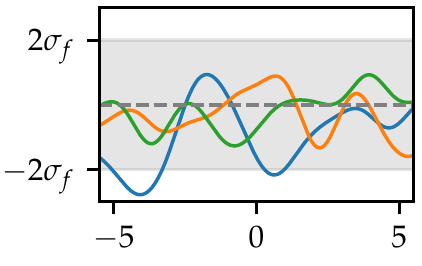}
        \caption{prior}
        \label{fig:background-gp-gpr-prior}
    \end{subfigure}%
    \begin{subfigure}{0.33\linewidth}
        \centering
        \includegraphics[width=\linewidth]{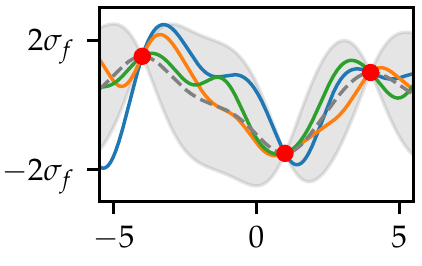}
        \caption{noise-free posterior}
        \label{fig:background-gp-gpr-posterior-noisefree}
    \end{subfigure}%
    \begin{subfigure}{0.33\linewidth}
        \centering
        \includegraphics[width=\linewidth]{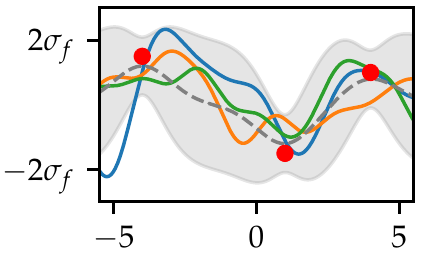}
        \caption{noisy posterior}
        \label{fig:background-gp-gpr-posterior-noisy}
    \end{subfigure}
    \caption[Gaussian Process Regression]{
        Gaussian Process Regression.
        (a) visualizes the prior distribution of the Gaussian process with the mean function initially set to zero \wloog{} and the same uncertainty or standard deviation from the mean for all function values. Given a dataset of three data points (red dots), the noise-free regression in (b) updates the Gaussian process distribution and returns a posterior distribution with the mean function perfectly interpolating the data points and the uncertainty reduced to zero at these points. Consequently, all sample functions from this posterior distribution similarly interpolate the given dataset. In contrast, the posterior distribution returned by the noisy regression model (c) only approximates the data points with its mean function while it keeps some remaining uncertainty about the exact values. This allows sample functions to fluctuate more around the given dataset.
    }
    \label{fig:background-gp-gpr}
\end{figure}

\subsubsection{Noisy Gaussian Process Regression}

To take measurement noise into account, we assume that the given set of measurements \(Y = f(X) + \varepsilon\) contains \iid{} Gaussian noise \(\varepsilon\sim\Normal*{0,\sigmaeps^2I}\) with some noise variance \(\sigmaeps^2\). The \emph{prior distribution} is given as
\begin{equation*}
     \eqcontrast Y \sim \Normal*{\mu(X) \eqchange{+ \sigmaeps^2I},\Sigma(X)}
\end{equation*}
with mean and covariance as defined in \cref{eq:mean-vector-covariance-matrix}. Then the joint Gaussian distribution corresponds to
\begin{equation*}
    \eqcontrast \begin{pmatrix}Y \\ \Yhat\end{pmatrix}
    \sim \Normal*{\begin{pmatrix}
        \mu(X) \\ \mu(\Xhat)
    \end{pmatrix}, \begin{pmatrix}
        \Sigma(X) \eqchange{+ \sigmaeps^2I} & \Sigma(X,\Xhat) \\
        \Sigma(\Xhat,X) & \Sigma(\Xhat)
    \end{pmatrix}}
\end{equation*}
and by applying Bayesian inference we arrive at the following \emph{posterior distribution}
\begin{equation}\label{eq:posterior-noisy-regression}
    \Yhat \mid Y \sim \Normal*{\mu_D(\Xhat), \Sigma_D(\Xhat)}
\end{equation}
with closed-form expressions for
\begin{equation*}
    \begin{aligned}
        \eqcontrast \mu_D(\Xhat) &\eqcontrast = \mu(\Xhat) + \Sigma(\Xhat,X)(\Sigma(X) \eqchange{+ \sigmaeps^2I})^{-1} (Y - \mu(X)) \\
        \eqcontrast \Sigma_D(\Xhat) &\eqcontrast = \Sigma(\Xhat) - \Sigma(\Xhat,X)(\Sigma(X) \eqchange{+ \sigmaeps^2I})^{-1} \Sigma(X,\Xhat)
        .
    \end{aligned}
\end{equation*}
as shown by \textcite[Eq. 2.22 to 2.24]{rasmussen2005gaussian}. \cref{fig:background-gp-gpr-posterior-noisy} visualizes how the posterior distribution preserves part of the initial uncertainty even at the measured data points \(X\) and allows sample functions to not perfectly pass through the data points in \(D\).


\section{Information Theory}\label{sec:background-infotheory}

While we need Gaussian processes for our algorithm, information theory is relevant only for the analysis. We first describe entropy, the key quantity for quantifying the amount of information in a random variable, in \cref{ssec:background-infotheory-entropy}. Then we discuss information gain as the quantity for the mutual information between random variables in \cref{ssec:background-infotheory-infogain}.

We refer to the book \citetitle{murphy2012machine} written by \textcite[Chapter 2.8]{murphy2012machine} for an additional treatment of information theory in the context of machine learning.

\subsection{Information Entropy}\label{ssec:background-infotheory-entropy}

Consider the random variable \(X\) distributed with some probability distribution \(\P_X\). Since \(X\) is random, there is a notion of ``uncertainty'' contained in the \(X\) before we measure it. After measuring \(X\), there is a notion of ``information'' contained in the measured value \(x\in\Xcal\).

The \emph{information content}, also called \emph{Shannon information}, was first introduced by \textcite{shannon1948mathematical} and unifies both notions.
For a specific value \(x\in\Xcal\), it quantifies the amount of uncertainty for measuring this value \(x\) or equivalently the amount of information obtained by measuring \(x\).

\begin{definition}[Information Content]\label{def:information-content}
    Let \(X\) be a random variable and \(p(x)\) the probability of a specific value \(x\in\Xcal\). The information content of \(x\) is defined as%
    \footnote{Originally, \textcite{shannon1948mathematical} proposed three axioms for the definition of information content and showed that \(-\log p(x)\) is the only function satisfying these axioms up to a multiplicative factor.}
    \begin{equation*}
        h(x) \defeq -\log p(x)
        .
    \end{equation*}
\end{definition}

The negative logarithm maps probabilities close to one to the information content zero and very small probabilities to an exponentially increasing information content. Intuitively, values with smaller probabilities are less likely to be measured and therefore contain more uncertainty and provide more information than values with high probabilities.
An alternative way is to describe the information content as the amount of ``surprise'' with less likely values leading to a larger surprise.



The \emph{entropy}, also called \emph{Shannon entropy}, for a given random variable \(X\) is defined as the expected information content over all possible values \(x\in\Xcal\). This means it quantifies the expected uncertainty in the random variable \(X\) or equivalently the expected information obtained when sampling a value from \(X\).

\begin{definition}[Information Entropy]\label{def:information-entropy}
    Let \(X\) and \(Y\) be random variables. The entropy of \(X\) or of its probability distribution \(\P_X\) is defined as
    \begin{equation*}
        H(X) \defeq \E*[X]{-\log p(X)} = \E*[X]{h(X)}
    \end{equation*}
    and similarly the joint and conditional entropy as
    \begin{equation*}
        \begin{aligned}
            H(X,Y) &\defeq \E*[X,Y]{-\log p(X,Y)}\\
            H(Y\mid X) &\defeq \E*[X,Y]{-\log p(Y\mid X)}
            .
        \end{aligned}
    \end{equation*}
\end{definition}

Observe that a random variable \(X\) which in fact is deterministic with \(p(x) = 1\) for some \(x\in\Xcal\) has entropy \(H(X) = 0\), while a random variable with a uniform distribution has maximum entropy, which corresponds to having maximum average uncertainty for its values.
This can be easily verified for a discrete random variable with \(\Xcal = \set{x_1,\dots,x_n}\) using Jensen's inequality
\begin{equation*}
    H(X) = \E*{-\log p(X)} = \E*{\log\frac{1}{p(X)}} \leq \log\E*{\frac{1}{p(X)}} = \log n
\end{equation*}
with \(\log(x)\) being concave. Since Jensen's inequality holds with equality only for \(p(x_1)=\dots=p(x_n)\), the entropy is maximized for uniformly distributed random variables. For a continuous random variable defined on \(\Xcal=[a,b]\), one can proceed similarly using Jensen's inequality for integrals on probability measures \autocite[Theorem 1.6.2]{durrett2019probability}.

In addition, observe that for random variables \(X\) and \(Y\) we obtain
\begin{equation}\label{eq:information-entropy-properties}
    \begin{aligned}
        H(X,Y) &= H(X\mid Y) + H(Y)\\
        &= H(Y\mid X) + H(X)\\
        H(X,Y) &= H(X) + H(Y) \quad\with X,Y \text{ independent}
    \end{aligned}
\end{equation}
by additivity of the logarithm and linearity of expectation. This shows that information or uncertainty is an additive quantity.

The entropy for a Gaussian distribution can be derived in closed-form.
\begin{lemma}[Information Entropy of Gaussian distribution]\label{lem:information-entropy-gaussian}
    \begin{equation*}
        \begin{alignedat}{2}
            H(X) &= \frac{1}{2}\log*{2\pi\sigma^2} + \frac{1}{2}
            &&\quad\with X\sim\Normal*{\mu,\sigma^2}\\
            H(X) &= \frac{1}{2}\log\det*{2\pi e\Sigma}
            && \quad\with X\in\Normal*{\mu,\Sigma}
        \end{alignedat}
    \end{equation*}
\end{lemma}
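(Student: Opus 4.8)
The plan is to compute the entropy directly from its definition $H(X) = \E*{-\log p(X)}$ in \cref{def:information-entropy}, read in the continuous (differential) sense where the expectation is taken against the Gaussian density. Since the scalar formula is exactly the case $n=1$, $\Sigma = \sigma^2$ of the vector-valued one, I would first establish the multivariate identity $H(X) = \frac{1}{2}\log\det*{2\pi e\Sigma}$ for $X\sim\Normal*{\mu,\Sigma}$ on $\Real^n$, and then recover the univariate claim by specialization.

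First I would write out the negative log-density of the Gaussian, which splits cleanly into a deterministic normalization term and a random quadratic form:
\begin{equation*}
    -\log p(x) = \frac{n}{2}\log*{2\pi} + \frac{1}{2}\log\det*{\Sigma} + \frac{1}{2}(x-\mu)^T\Sigma^{-1}(x-\mu).
\end{equation*}
Taking the expectation over $X$, the first two terms are constant and pass through untouched, so by linearity the whole problem reduces to evaluating the single quantity $\E*{(X-\mu)^T\Sigma^{-1}(X-\mu)}$.

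The key step is this quadratic-form expectation, which I would handle with the trace trick: a scalar equals its own trace, and by cyclicity and linearity of the trace together with $\E*{(X-\mu)(X-\mu)^T} = \Sigma$ one gets $\E*{(X-\mu)^T\Sigma^{-1}(X-\mu)} = \tr*{\Sigma^{-1}\Sigma} = \tr*{I_n} = n$. Substituting back yields $H(X) = \frac{n}{2}\log*{2\pi} + \frac{1}{2}\log\det*{\Sigma} + \frac{n}{2}$, and it only remains to repackage this into the compact form using $\det(cA) = c^n\det A$, i.e.\ $\frac{1}{2}\log\det*{2\pi e\Sigma} = \frac{n}{2}\log*{2\pi e} + \frac{1}{2}\log\det*{\Sigma}$, which matches term by term. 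Setting $n=1$ and $\Sigma=\sigma^2$ then collapses this to $\frac{1}{2}\log*{2\pi\sigma^2} + \frac{1}{2}$, proving the scalar case.

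I do not expect a genuine obstacle here, as the computation is routine; the only points requiring mild care are the trace manipulation of the quadratic form (justifying the use of $\E*{(X-\mu)(X-\mu)^T}=\Sigma$ as the definition of the covariance matrix) and the bookkeeping that extracts the factor $(2\pi e)^n$ from the scalar determinant via multilinearity of $\det$ in the $n$ rows.
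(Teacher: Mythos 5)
Your proposal is correct and follows essentially the same route as the paper's proof: expand $-\log p(X)$ into a normalization term plus a quadratic form, evaluate the quadratic-form expectation via the trace trick (cyclicity, linearity, and the definition of the covariance matrix) to obtain $n$, absorb the resulting $\frac{n}{2}$ into the determinant as a factor $e^n$, and read off the univariate case by specializing to $n=1$, $\Sigma=\sigma^2$. The only cosmetic difference is that you split the normalization constant into $\frac{n}{2}\log(2\pi)+\frac{1}{2}\log\det\Sigma$ up front, whereas the paper keeps it bundled as $\frac{1}{2}\log\det(2\pi\Sigma)$ until the final step.
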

\begin{proof}
    \cref{ssec:proofs-lemma-information-entropy-gaussian}
    \noqed
\end{proof}

\subsection{Information Gain}\label{ssec:background-infotheory-infogain}
An important quantity for our analysis is the \emph{information gain} \(I(X;Y)\), which is also known as the \emph{mutual information} between \(X\) and \(Y\). It can be interpreted as the expected amount of information one can gain about one random variable \(X\) by measuring the other random variable \(Y\) or vice versa. Equivalently, it can also be seen the amount of information or uncertainty which is mutually contained in \(X\) and \(Y\). Both perspectives indicate that the information is a measure for the mutual dependence between both random variables.

\begin{definition}[Information Gain]\label{def:information-gain}
    Let \(X\) and \(Y\) be random variables with distributions \(\P_X\) and \(\P_Y\). The information gain or mutual information between \(X\) and \(Y\) is defined as
    \begin{equation*}
        \begin{aligned}
            I(X;Y) &\defeq \DKL{\P_{X,Y}}{\P_{X}\cdot\P_{Y}}
            \\ &= \E*[X,Y\sim\P_{X,Y}]{\log p(X,Y) - \log{p(X)p(Y)}}
        \end{aligned}
    \end{equation*}
    with Kullback-Leibler divergence \(\DKL{\cdot}{\cdot}\).
\end{definition}

It directly follows
\begin{equation}\label{eq:information-gain-property-joint}
    I(X;Y) = H(X) + H(Y) - H(X,Y)
\end{equation}
from \cref{def:information-entropy} by additivity of logarithm and linearity of expectation. We can further derive
\begin{equation}\label{eq:information-gain-property-conditional}
    \begin{aligned}
        I(X;Y) = H(X) - H(X\mid Y)\\
        I(X;Y) = H(Y) - H(Y\mid X)
    \end{aligned}
\end{equation}
using \cref{eq:information-entropy-properties,eq:information-gain-property-joint}.

Observe that for independent random variables \(X\) and \(Y\), we naturally have \(I(X;Y) = 0\) similarly from \cref{eq:information-entropy-properties,eq:information-gain-property-joint}. Maximum amount of mutual information between two random variables \(X\) and \(Y\) is achieved if they are deterministic functions of each other with \(H(X\mid Y) = H(Y\mid X) = 0\). In this case, we can further observe that \(I(X;Y) = H(X) = H(Y)\) from \cref{eq:information-gain-property-conditional}.

\chapter{Related Work}\label{chp:related}

Our work on near-optimal active reconstruction lies at the intersection of two main areas of previous research. It combines the more applied topics from active reconstruction, for which we discuss related work in \cref{sec:related-active-reconstruction}, with the theoretical analysis of near-optimality based on work related to Gaussian process optimization, which we present in \cref{sec:related-gp-optimization}.


\section{Active Reconstruction}\label{sec:related-active-reconstruction}

More generally, the field of \emph{active vision} is concerned with sensor planning strategies which actively select new sensor placements to fulfill vision-based tasks which depend on multiple views on some target object or scene.

Some of the earliest work was done by \textcite{aloimonos1988active}, who formally investigated the advantages of an active observer, who is able to actively explore and sample new visual data, over a passive observer, who is given a fixed set of visual data.
At the same time, \textcite{cowan1988automatic} presented an approach to automate the selection of camera viewpoints based on geometric constraints on the sensor locations.
A survey on various active vision topics in robotic applications was given by \textcite{chen2011active}, who categorized active vision tasks into \emph{model-based tasks}, for which a model of the target is provided, such that all sensor placements can be computed offline, or \emph{model-free tasks}, for which new sensor placements are determined online without prior information about the target.

The goal of \emph{active reconstruction}, a model-free task, is to find a sequence of views for reconstructing a complete model of the target object or scene.
In particular, in each round one aims to find a NBV which is typically defined as the view with the maximum amount of new information about the target. 

\textcite{connolly1985determination} was one of the first who presented two algorithms for determining the NBVs using octree data structures. Given an partial octree with not yet observed nodes labeled ``unseen'', the first algorithm greedily maximizes the number of unseen nodes inside the view plane over a densely sampled view sphere, while the second algorithm additionally takes information about node faces into account to save computation time.
Later, \textcite{banta2000nextbestview} presented three approaches based on human-intuitive heuristics for finding the NBV with the help of an octree-based world occupancy grid. The heuristics include positioning the sensor towards detected edges to reveal the occluded areas behind the edges, towards centroids of previously occluded surface to maximize the amount of newly observed surface, or towards clustered unobserved patches.
\textcite{chen2005vision} proposed a method based on the target's trend surface which predicts the local curvature of the unknown surface area. This trend surface then determines the exploration direction and the parameters of the NBV.

Besides geometrical considerations, methods for finding the NBV from a probabilistic perspective were proposed, where the NBV is commonly defined as the view which maximally reduces the uncertainty about the target.
\textcite{wenhardt2007active} used extended Kalman filters with sensor actions for probabilistic state estimations. This allowed them to adopt the alphabetical optimality criterions from optimal experimental design such as the A-, D- or E-optimality criterion \autocite{pukelsheim2006optimal}.
\textcite{delmerico2018comparison} surveyed different and proposed new volumetric information gain metrics based on a probabilistic world occupancy map. Their metrics include both non-probabilistic counting metrics as well as probabilistic entropy-based metrics for determining the NBV.

Following the classification of \textcite{bissmarck2015efficient}, NBV methods can be categorized into \emph{global methods}, which exploit some global data characteristics, \emph{surface-based methods}, which operate based on occlusion or frontier surfaces, and \emph{volumetric methods}, which evaluate candidate views based on the potential information gain inside the view frustum.

In summary, most of the presented related work discretize their world with an octree-based occupancy grid into occupied and free voxels, each optionally associated with some occupancy probability. Different viewpoints are then evaluated based on ray-casting and certain metrics and the best viewpoint is returned as the NBV.
For our later described 2D setting, we mostly adopt their approaches and discretize our world into a set of pixels, place the camera at a fixed view circle and perform ray-casting to evaluate our metrics for different camera positions. We further combine the incorporation of a trend surface for predicting the object shape a priori \autocite{chen2005vision} with a probabilistic approach by modeling the object with a Gaussian process which provides us confidence bounds on the object shape.

However, in what the described related work differs from our work is that their methods mostly depend on heuristics. None of them addresses the optimality of their methods with respect to the true NBV defined as the best view given full knowledge about the target and infinite computation power for an exhaustive search over the state space.
In particular, \textcite{chen2005vision} stated that it is impossible to give the true NBV without information about the unknown target. We quote \textcite{delmerico2018comparison} that \enquote{based on the current state of the art, it is not clear that there is an optimal way to quantify the volumetric information [...] with respect to choosing views based on maximizing information gain.}
This is exactly our focus to show performance guarantees with respect to the true NBV for our algorithm.



\section{Gaussian Process Optimization}\label{sec:related-gp-optimization}



The goal of \emph{Gaussian process optimization} is to sequentially optimize an unknown function over multiple rounds, which can be formulated as maximizing an unknown reward function in a multi-armed bandit setting.
\textcite{srinivas2012informationtheoretic} published, originally in 2010, the Gaussian process upper confidence bound (GP-UCB) algorithm, which models the unknown function with a Gaussian process and repeatedly samples the reward function at the maximizer of the current upper confidence bound. They were able to show sublinear regret, which implies that their algorithm is guaranteed to converge to a true optimal solution.

This seminal algorithm was further improved and built upon in many later works. 
\textcite{contal2014gaussian} introduced the GP-MI algorithm, which significantly improved the sublinear bounds for the cumulative regret. 
\textcite{krause2011contextual} presented CGP-UCB, which generalized GP-UCB for contextual bandit problems, and \textcite{chen2017interactive} presented SM-UCB, which further generalized CGP-UCB for interactive contextual bandit problems. For the latter problem, the algorithm deals with sequentially constructing a set which jointly maximizes the unknown reward function by interactively querying the marginal reward.
A more complex variation of GP-UCB was developed for safe multi-agent coverage control by \textcite{prajapat2022nearoptimal}, who modeled the unknown density function with Gaussian processes to achieve near-optimal coverage of the density.

The goal of our work is to apply a variant of GP-UCB to the active reconstruction setting, such that we can make use of the strong theoretical guarantees.
The setting of \textcite{chen2017interactive} is closest to our setting, as we similarly sample measurements from the object surface in an interactive manner to maximize the overall observation coverage. After introducing our setting in \cref{sec:problem-general,sec:problem-simplified}, we compare it with the ones of \textcite{srinivas2012informationtheoretic}, \textcite{prajapat2022nearoptimal}, and \textcite{chen2017interactive} in more detail in \cref{sec:problem-comparison}.

\chapter{Problem Formulation}\label{chp:problem}

In this chapter, we introduce the mathematical formulation of the \emph{near-optimal active reconstruction problem}. We provide an overview of our setting in \cref{fig:problem-overview}.

In \cref{sec:problem-general} we start with the most general setting focusing only on the decision making process for the NBV. We formalize the selected camera pose as the algorithm's decision and describe the underlying objective, our precise understanding of a near-optimal decision and the notion of regret with respect to this near-optimal decision.

In \cref{sec:problem-simplified} we instantiate this general setting and formalize the notion of a world, an object and the observation and measurement model of a camera.
To keep the theoretical analysis of our algorithm feasible, we make certain simplifications on the setting, which we list at end of this section in detail.

In \cref{sec:problem-comparison} we compare our described setting to related work from a more technical perspective which provides additional understanding of the similarities and differences to other problem formulations.

\begin{figure}
    \newtcolorbox{boxcontainer}[3][]{
        boxstyle=#3,
        title={#2},
        fonttitle=\footnotesize,
        #1
    }
    \newtcbox{\circlecontainer}[1][]{
        circlestyle=gray,
        #1
    }
    \centering
    \begin{tikzpicture}[
        remember picture,
        every node/.style={inner sep=1mm, outer sep=0mm},
        >/.tip=Stealth,
        node distance=0cm,
        font=\scriptsize,
    ]
        \node (algorithm) {
            \begin{boxcontainer}[width=6.5cm, top=0.75mm, bottom=0.75mm]{Algorithm \(\A\)}{green}
            \centering
            \begin{tikzpicture}[remember picture]
                \node (gp) {
                    \begin{boxcontainer}[width=2.9cm]{}{gray}
                        Gaussian process \\ \(\GP*{m(\varphi),k(\varphi,\varphi')}\)
                    \end{boxcontainer}
                };
                \node (objective) [right=0.5cm of gp] {
                    \begin{boxcontainer}[width=2.1cm]{}{gray}
                        Objective \\ \(\Fu(\theta\mid\theta_{1:t-1})\)
                    \end{boxcontainer}
                };
            \end{tikzpicture}
            \end{boxcontainer}
        };
        \node (camera) [below=0.1cm of algorithm] {
            \begin{boxcontainer}[width=6.5cm, top=0.75mm, bottom=0.75mm]{Camera}{blue}
            \centering
            \begin{tikzpicture}
                \node (positioncam) {
                    \begin{boxcontainer}[width=2.4cm]{}{gray}
                        position camera
                    \end{boxcontainer}
                };
                \node (takeimage) [left=0.5cm of positioncam] {
                    \begin{boxcontainer}[width=2.6cm]{}{gray}
                        take depth image
                    \end{boxcontainer}
                };
            \end{tikzpicture}
            \end{boxcontainer}
        };
        
        \node (input) [left=0.75cm of gp] {
            \circlecontainer[height=1.25cm, bean arc]{
                observed \& \\ measured surface \\ at \(\theta_{1:t-1}\)
            }
        };
        \node (output) [right=0.75cm of objective] {
            \circlecontainer[width=1.15cm, tcbox width=forced center]{
                NBV \\ \(\theta_t\)
            }
        };
        
        \draw[<-] (input.west) -- node[above=0.5mm]{\(t\gets 0\)} +(left:0.75cm);
        \draw[->] (input.east) -- (gp.west);
        \draw[->] (gp.east) -- (objective.west);
        \draw[->] (objective.east) -- (output.west);
        \draw[->] (output.south) .. controls +(down:1cm) and +(right:1cm) .. (positioncam.east);
        \draw[->] (positioncam.west) -- (takeimage.east);
        \draw[->] (takeimage.west) .. controls +(left:1cm) and +(down:1cm) .. node[below,sloped] {\(t\gets t+1\)} (input.south);
    \end{tikzpicture}
    \caption[Overview of the Setting]{
        Overview of the Setting.
        The goal is to maximize the observed object surface through sequential decisions. This is an active learning problem, in which the algorithm \(\A\) interactively queries new locations \(\theta_t\) (data points), for which the camera returns the corresponding measurements (labels). The goal for \(\A\) is to find the most informative locations.
    }
    \label{fig:problem-overview}
\end{figure}


\section{General Setting}\label{sec:problem-general}

We start with introducing the decision in \cref{ssec:problem-general-decision} and objective in \cref{ssec:problem-general-objective}. Based on the objective, we discuss the notion of near-optimal decisions in \cref{ssec:problem-general-near-optimality}. We continue describing the regret with respect to near-optimality in \cref{ssec:problem-general-regret} and finish with highlighting the requirements for convergence to near-optimality as our final goal in \cref{ssec:problem-general-convergence}.

\subsection{NBV Estimate (Decision)}\label{ssec:problem-general-decision}
We define \(\Camspace\) to be the space of camera poses \(\theta\), over which the algorithm searches for the NBV. The \emph{NBV estimate} returned by our algorithm \(\A\) in round \(t\) is denoted with \(\theta_t\). For convenience, we use \(\theta_{1:t}\) for the set of NBV estimates returned by the algorithm in the first \(t\) rounds. Since the algorithm returns \(\theta_t\) based on the measurements of the object surface from previous camera poses \(\theta_{1:t-1}\), we denote the algorithm's decision with
\begin{equation*}
    \theta_t = \A(\theta_{1:t-1})
    .
\end{equation*}
These decisions are typically based on optimizing some objective over \(\Camspace\).

\begin{remark}\label{remark:different-camspaces}
    The space of camera poses \(\Camspace\) can describe any reasonable space uniquely specifying the camera's position and orientation, such as
    \begin{itemize}
        \item \(\Camspace=[0,2\pi]\) (polar angle) for a 2D camera with fixed camera orientation and radial distance. This is what we later use in \cref{ssec:problem-simplified-camera}.
        \item \(\Camspace=[0,2\pi] \times [0,\infty)\) (polar angle, radial distance) for a 2D camera with fixed camera orientation.
        \item \(\Camspace=[0,2\pi] \times [0,\infty)\times[0,2\pi]\) (polar angle, radial distance, orientation) for an unconstrained 2D camera with polar coordinates.
        \item \(\Camspace=\Real^2 \times[0,2\pi]\) (\(xy\)-coordinate, orientation) for an unconstrained 2D camera with Cartesian coordinates.
        \item \(\Camspace=\Real^3 \times[0,\pi]\times[0,2\pi]\) (\(xyz\)-coordinate, polar angle, azimuthal angle) for an unconstrained 3D camera.
    \end{itemize}
    For now, we keep \(\Camspace\) general, since our results in this section do not depend on the specific choice for \(\Camspace\).
\end{remark}

\subsection{Objective}\label{ssec:problem-general-objective}
We define the \emph{utility} as the set function \(F\colon 2^\Camspace \to \Real, \Theta \mapsto F(\Theta)\) which measures the reconstruction progress for a given set of camera poses \(\Theta \subseteq \Camspace\). The returned utility value represents how good the selected set of camera poses is for reconstructing the given object. The objective of the algorithm is to maximize this utility.
We define the \emph{marginal utility}
\begin{equation}\label{eq:marginal-utility}
    F(\theta \mid \Theta) \defeq F(\Theta \cup \set{\theta}) - F(\Theta)
\end{equation}
to be the utility of measuring the object from \(\theta\) after already measuring it from \(\Theta\). It corresponds to the increase in the utility value caused by the additional measurement at \(\theta\).
We naturally have monotonicity
\begin{equation}\label{eq:monotonic-utility}
    F(\Theta_1) \leq F(\Theta_2) \with \Theta_1 \subseteq \Theta_2
    ,
\end{equation}
since the reconstruction progress cannot be reversed by making additional measurements. In addition, we require submodularity 
\begin{equation}\label{eq:submodular-utility}
    F(\Theta_1 \cup \set{\theta}) - F(\Theta_1) \geq F(\Theta_2 \cup \set{\theta}) - F(\Theta_2)
    \text{ for all } \Theta_1 \subseteq \Theta_2,\theta \notin \Theta_2
    .
\end{equation}
This property can be equivalently formulated as \(F(\theta \mid \Theta_1) \geq F(\theta \mid \Theta_2)\) using the marginal utility. It intuitively describes the diminishing returns property that the same element \(\theta\) provides a smaller marginal utility when added to a larger set \(\Theta_2\). The more measurements of the object surface have already been made previously -- in this case set \(\Theta_2\), the smaller the marginal utility of the new measurement taken from \(\theta\).
Note that the utility and marginal utility functions are unknown in practice, since the algorithm has no knowledge about the true object surface. The idea is to design an \emph{upper bound for the marginal utility}
\begin{equation}\label{eq:marginal-utility-upper-bound}
    F(\theta \mid \theta_{1:t-1}) \leq \Fu(\theta \mid \theta_{1:t-1}) \text{ for all } \theta \in \Camspace
    ,
\end{equation}
which at the same time contains enough information about the true marginal utility, such that it serves as a reasonable objective function for \(\A\). Precise requirements and concrete designs for \(\Fu\) are stated in \cref{lem:lemma-2-2,sec:design-objective}.

\subsection{Near-Optimality}\label{ssec:problem-general-near-optimality}
The goal of this work is to devise an algorithm, which returns a NBV estimate \(\theta_t\) for each round \(t=1,\dots,T\) based on the information collected from \(\theta_{1:t-1}\). Ideally, the reconstruction progress of \(\theta_{1:T}\) as measured by the utility function should be as good as for an optimal solution. We define an \emph{optimal solution} to be a set of at most \(T\) camera poses maximizing the utility
\begin{equation}\label{eq:optimal-solution}
    \thetaopt_T \defeq \argmax_{\Theta \subseteq \Camspace,\abs{\Theta}\leq T} F(\Theta)
    .
\end{equation}
Finding an optimal solution is not possible in practice. Even if the space of camera poses \(\Camspace\) was finite and the true marginal utility known, this would require a combinatorial search over \(\Camspace\) due to the cardinality constraint \(\abs{\Theta}\leq T\), which is NP-hard as stated by \textcite[p. 266]{nemhauser1978analysis}. Note that the optimal solutions for different \(T\) do not necessarily have something in common. For example, the optimal camera locations for \(T=8\) can be completely different than for \(T=12\) and we have \(\thetaopt_T \nsubseteq \thetaopt_{T+1}\) in general.

For these reasons, we are satisfied with a \emph{near-optimal solution} \(\theta_{1:T}\) in practice, which is defined as
\begin{equation}
    F(\theta_{1:T}) \geq (1-\alpha) F(\thetaopt_T) \with \alpha \in (0,1)
    .
\end{equation}
This means a near-optimal solution is a \((1-\alpha)\)-approximation of the optimal solution guaranteeing its utility to be at least a constant fraction of the utility of an optimal solution.

Such a near-optimal solution can be theoretically found by a greedy algorithm with knowledge about the true object surface, which is normally not given in practice. The greedy algorithm sequentially constructs a set of camera poses by greedily selecting \(\theta\) in each round which maximizes the true marginal utility given the previous measurements from \(\theta_{1:t-1}\). We define the \emph{greedy decision} as
\begin{equation}\label{eq:greedy-decision}
    \thetagre_t \defeq \argmax_{\theta\in\Camspace} F(\theta \mid \theta_{1:t-1})
    .
\end{equation}
Note the subtle difference in notation between the optimal solution with superscript \(\star\) and greedy solution with superscript \(*\).
\textcite[Theorem 4.2]{nemhauser1978analysis} showed that such a greedy algorithm with a submodular objective function is guaranteed to achieve a \(\paren*{1-\frac{1}{e}} \approx 63\%\) approximation of \(\thetaopt_T\) in the worst-case and it is NP-hard to guarantee a better solution according to \textcite[Theorem 5.3]{feige1998threshold}. We use this greedy algorithm as our theoretical baseline for near-optimality.

One important problem in our setting is that a solution to the reconstruction problem does not only depend on the decision \(\theta_t\) in the current round, but on the sequentially made decisions \(\theta_{1:T}\) over all rounds. Since the algorithm initially starts with no information about the object, it is intuitively not possible to guarantee near-optimal decisions from the beginning on. Since the first few decisions can be arbitrarily bad, it cannot be guaranteed that the final solution \(\theta_{1:T}\) is near-optimal no matter how optimal the later decisions were. However, it is possible to eventually guarantee near-optimality for the individual decisions \(\theta_t\), which do not depend on previously made, less informed decisions. We formalize the notion of near-optimal decisions in the next \cref{ssec:problem-general-regret}.

\begin{remark}\label{remark:solution-vs-decision}
    Note the following difference in our wording. A \emph{solution} refers to all selected camera poses \(\theta_{1:t}\) up to the current round \(t\). It corresponds to a ``solution for the reconstruction problem'', which aims to find the best possible set of camera poses to reconstruct the object. A \emph{decision} refers to the single camera pose \(\theta_t\) selected by the algorithm in the current round \(t\). It corresponds to a ``solution for the NBV problem'', which aims to find the next best possible camera pose to reconstruct the object given the previous camera poses.
\end{remark}

\subsection{Regret}\label{ssec:problem-general-regret}
The \emph{regret} is a common quantity in decision theory and measures the difference between the utility of an optimal decision and the utility of a decision made under uncertainty.
As discussed in \cref{ssec:problem-general-near-optimality}, we can eventually guarantee near-optimal decisions, but not a near-optimal solution to the reconstruction problem due to arbitrarily bad decisions made in the beginning.
Hence, we define the cumulative regret as the difference in utility between a near-optimal solution guaranteed by the greedy algorithm and the solution of our algorithm.
\begin{equation}\label{eq:cumulative-regret}
    R(T) \defeq \paren*{1-\frac{1}{e}} F(\thetaopt_T) - F(\theta_{1:T})
    \quad\eqnote{cumulative}
\end{equation}
Let \(r(t)\) be the regret for the single decision in round \(t\), which we denote as the simple regret. Then the cumulative regret should naturally be the sum of all simple regrets \(R(T) = \sum_{t=1}^T r(t)\). To better understand the simple regret incurred by each decision, we derive from the above cumulative regret definition the following expression for the simple regret:
\begin{equation}\label{eq:simple-regret}
    \begin{aligned}
        r(t) &\defeq R(t) - R(t-1)\\
        &= \paren*{1-\frac{1}{e}} (F(\thetaopt_t) - F(\thetaopt_{t-1})) - F(\theta_t \mid \theta_{1:t-1})
    \end{aligned}
    \quad\eqnote{simple}
\end{equation}
Intuitively, the simple regret measures the difference between the increase in utility of an near-optimal solution when allowing it to make one more decision and the marginal utility of the decision made by our algorithm. Hence, we define a \emph{near-optimal decision} \(\theta_t\) in round \(t\) as
\begin{equation}
    F(\theta_t \mid \theta_{1:t-1}) \geq (1-\alpha) (F(\thetaopt_t) - F(\thetaopt_{t-1})) \with \alpha \in (0,1)
    .
\end{equation}
Since \(\thetaopt_{t-1} \nsubseteq \thetaopt_t\), we cannot refer to the increase in utility \(F(\thetaopt_t) - F(\thetaopt_{t-1})\) of an near-optimal solution as the marginal utility.

\begin{remark}\label{remark:negative-regret}
    Comparing the solution against a near-optimal solution in our specific setting instead of an optimal solution in general decision theory allows the regret to be negative.
    In particular, since the \(\paren*{1-\frac{1}{e}}\)-approximation guarantee only lower bounds the worst-case performance of a greedy algorithm, a reasonably good algorithm typically outperforms this guarantee in practice \autocite{leskovec2007costeffective}.
\end{remark}

Unfortunately, we cannot theoretically reason about the actual regret of our algorithm, because this requires us to know the utility of an optimal solution \(\thetaopt_t\). However, it is essential for us to quantify how the regret evolves over time and how close the algorithm's decision is to a near-optimal decision. As a remedy, we define the \emph{individual regret} as the difference in utility between the greedy decision from \cref{eq:greedy-decision} and the algorithm's decision.
\begin{equation}\label{eq:individual-regret}
    \begin{aligned}
        r_{ind}(t) &\defeq F(\thetagre_t \mid \theta_{1:t-1}) - F(\theta_t \mid \theta_{1:t-1})
        &&\quad\eqnote{simple}
        \\
        R_{ind}(T) &\defeq \sum_{t=1}^T r_{ind}(t)
        &&\quad\eqnote{cumulative}
    \end{aligned}
\end{equation}
Later we show in \cref{lem:lemma-2-1} that we can upper bound the actual regret \(r(t)\) with the help of \(r_{ind}(t)\). The advantage of the individual regret is that we are able to compute the greedy decision \(\thetagre_t\) in theory with the knowledge of the true marginal utility, while this is not even possible for the optimal solution \(\thetaopt_t\) due to NP-hardness. This allows us to reason about the individual regret, which in turn helps us to reason about the actual regret.

For better understanding the different utility and regret formulations, we provide a visualization of them in \cref{fig:problem-utility}.

\begin{figure}
    \centering
    \includegraphics[width=\linewidth]{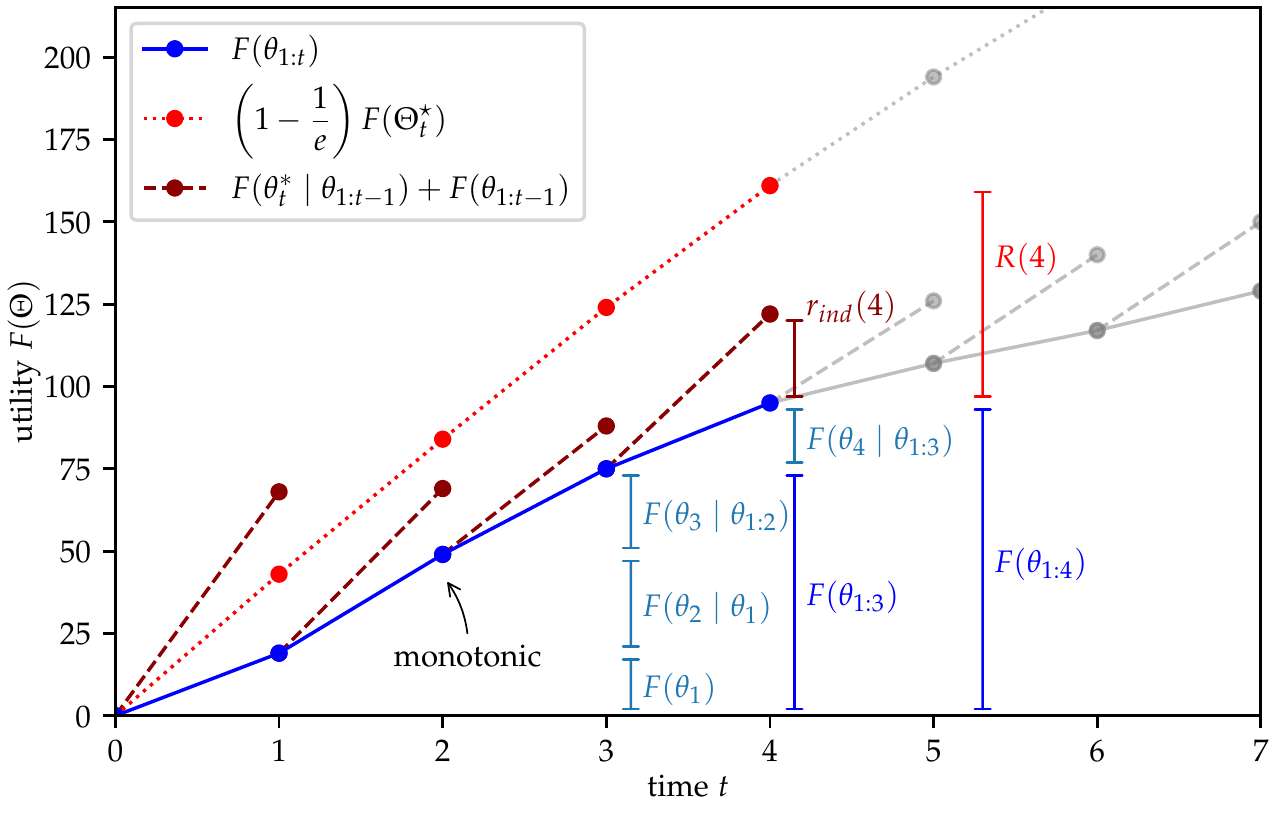}
    \caption[Utility and Regret Diagram]{
        Utility and Regret Diagram.
        Let us go through this time-utility diagram step by step. First focus on the algorithm's total utility (blue) which monotonically increases over the rounds. Observe that the utility of the decisions \(\theta_{1:3}\) is exactly the sum of marginal utilities (light blue) of \(\theta_1\) to \(\theta_3\), which each describe the increase in utility through the current decision relative to the previous decisions.
        \\
        Now focus on the utility of a near-optimal solution (red) which corresponds to \(1-\frac{1}{e}\approx63\%\) of the utility of an optimal solution \(\thetaopt_t\). As discussed above, it is important to remember that the optimal solutions for different \(t\) can be completely different and are not related to each other, which is why we only connect the dots loosely.
        We can observe that the utility of a near-optimal solution is not only larger, but also increases faster than the utility of the algorithm over the rounds. This gap between the algorithm's utility and the near-optimal utility is quantified as the cumulative regret \(R(T)\). The goal is to show that this cumulative regret increases sublinearly over the rounds, as we discuss later.
        \\
        To this end, we define the individual regret \(r_{ind}(t)\) (dark red) as the regret with respect to the greedy decision. In this regard, we specifically emphasize that this greedy decision is always defined relative to the algorithm's previous decisions \(\theta_{1:t-1}\) and not the previous greedy decisions as depicted.
        Since the first greedy decision coincides with the optimal solution for at most \(t=1\) measurement, the utility of a near-optimal solution (red) lies exactly at \(63\%\) of the greedy decision's utility (dark red) at \(t=1\).
    }
    \label{fig:problem-utility}
\end{figure}

\subsection{Convergence to Near-Optimality}\label{ssec:problem-general-convergence}
The ideal goal for \(\A\) is to achieve \emph{convergence to a near-optimal} solution, which is defined as
\begin{alignat}{2}\label{eq:near-optimality}
        &\forall \varepsilon > 0 \exists T_0 \geq 1 \forall T \geq T_0\colon
        r(T) < \varepsilon
        \intertext{or equivalently}
        &\begin{multlined}[0.8\linewidth]
            \forall \varepsilon > 0 \exists T_0 \geq 1 \forall T \geq T_0\colon
            \\ F(\theta_T \mid \theta_{1:T-1}) > \paren*{1-\frac{1}{e}} (F(\thetaopt_T) - F(\thetaopt_{T-1})) - \varepsilon
            .
        \end{multlined} \notag
\end{alignat}
In words, the marginal utility of the decision \(\theta_T\) returned by \(\A\) is guaranteed to be at least as high as the utility of a near-optimal decision up to precision \(\varepsilon\) for all \(T\) after some finite time \(T_0\). Note that \cref{eq:near-optimality} looks very similar to the definition \(\lim_{T\to\infty} r(T) = 0\), but it differs in upper bounding \(r(T)\) instead of \(\abs{r(T)}\) with \(\varepsilon\). Intuitively, this difference only requires the regret to be at most zero asymptotically, but it can be arbitrarily negative and the limit does not necessarily exist. This comes from the fact that the regret is defined with respect to a near-optimal solution in our setting and can be negative as stated in \cref{remark:negative-regret}. If the limit of the regret exists, \cref{eq:near-optimality} is equivalent to \(\lim_{T\to\infty} r(T) \leq 0\) as shown in \cref{ssec:proofs-auxiliary-convergence}.

Convergence to a near-optimal solution for \(\A\) is typically derived by showing that the cumulative regret \(R(T)\) increases at most sublinearly in \(T\). This is commonly referred to as \emph{sublinear regret} and can be written as
\begin{equation}\label{eq:sublinear-regret}
    R(T) \leq \bigO*{T^n}
    \quad\with n < 1
    .
\end{equation}
This implies that the average regret of \(\A\) is asymptotically zero, precisely \(\lim_{T\to\infty} R(T)/T = 0\), which is often referred to as \(\A\) being no-regret. Algorithms with this asymptotic property are guaranteed to converge to a near-optimal solution, since the simple regret in each round converges similarly to zero \autocite{vermorel2005multiarmed, chowdhury2017kernelized}. In our setting, we define \emph{no-regret} as
\begin{equation}\label{eq:no-regret}
    \forall \varepsilon > 0 \exists T_0 \geq 1 \forall T \geq T_0\colon
    \frac{R(T)}{T} < \varepsilon
    ,
\end{equation}
which describes that the average regret is asymptotically non-positive, since the regret is allowed to become negative. If the limit of the average regret exists, \cref{eq:no-regret} is equivalent to \(\lim_{T\to\infty} R(T)/T \leq 0\) as shown in \cref{ssec:proofs-auxiliary-convergence}.
Unfortunately, no-regret does not lead to convergence to near-optimality for our case, but only to a weaker statement as stated in \cref{thm:near-optimality}.

\begin{theorem}[Pseudo-Convergence to Near-Optimality]\label{thm:near-optimality}
    Let \(\A\) be an algorithm with sublinear cumulative regret as defined in \cref{eq:sublinear-regret}. Then \(\A\) makes a decision \(\theta_T\) within some finite time \(T \leq T_0 < \infty\), which is near-optimal up to precision \(\varepsilon\). Precisely,
    \begin{equation*}
        \forall \varepsilon > 0 \exists T_0 \geq 1 \exists T \leq T_0\colon
        r(T) < \varepsilon
        .
    \end{equation*}
\end{theorem}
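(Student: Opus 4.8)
The plan is to argue by contradiction, converting the sublinear cumulative-regret hypothesis into a statement about the \emph{average} simple regret and then closing with a one-line averaging argument. The starting point is the elementary observation that sublinear cumulative regret forces the average regret to vanish: writing the bound from \cref{eq:sublinear-regret} as $R(T) \leq c\,T^n$ for some constant $c$ and exponent $n < 1$, we get $R(T)/T \leq c\,T^{n-1}$, and since $n - 1 < 0$ the right-hand side tends to $0$. Hence for any fixed $\varepsilon > 0$ there is a finite $T_0 \geq 1$ with $R(T)/T < \varepsilon$ for all $T \geq T_0$; this is precisely the no-regret property of \cref{eq:no-regret}, and for the present argument it suffices to fix one such $T_0$.

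Next I would unwind the average. Since $R(T_0) = \sum_{t=1}^{T_0} r(t)$ by the definition of simple regret in \cref{eq:simple-regret}, the inequality $R(T_0)/T_0 < \varepsilon$ reads $\frac{1}{T_0}\sum_{t=1}^{T_0} r(t) < \varepsilon$. The core step is then the averaging argument: if every simple regret $r(t)$ for $t = 1,\dots,T_0$ satisfied $r(t) \geq \varepsilon$, their mean would also be at least $\varepsilon$, contradicting the strict inequality above. Therefore at least one index $T \leq T_0$ must satisfy $r(T) < \varepsilon$, which is exactly the claim $\exists T_0 \geq 1\,\exists T \leq T_0\colon r(T) < \varepsilon$.

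I do not expect a genuine obstacle; the argument is short and essentially combinatorial. The one point worth flagging is that, as noted in \cref{remark:negative-regret}, the simple regret may be negative in our setting, so one must resist the temptation to claim the stronger convergence statement $\forall T \geq T_0\colon r(T) < \varepsilon$. The averaging argument is robust to this, because the contradiction hypothesis assumes all $r(t) \geq \varepsilon > 0$ and hence never relies on sign control of the individual $r(t)$. This is also exactly why the theorem only guarantees the existence of \emph{some} good round $T \leq T_0$ rather than eventual convergence of every round, matching the weaker existential quantifier $\exists T \leq T_0$ in the statement and distinguishing it from the genuine convergence in \cref{eq:near-optimality}.
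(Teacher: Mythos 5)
Your proposal is correct and follows essentially the same route as the paper: first convert sublinear cumulative regret into the no-regret property $R(T)/T \to 0$, then apply an averaging argument at a fixed $T_0$ (the paper phrases it as ``minimum $\leq$ average,'' you phrase it as the contrapositive by contradiction, which is the same one-line step). Your closing remark about negative regret and why only the existential quantifier $\exists T \leq T_0$ survives also matches the paper's discussion in \cref{remark:true-vs-pseudo-near-optimality}.
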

\begin{proof}
    \cref{ssec:proofs-theorem-near-optimality}
    \noqed
\end{proof}

The difference to \cref{eq:near-optimality} is that \cref{thm:near-optimality} is only able to guarantee \(\exists T \leq T_0\) instead of \(\forall T \geq T_0\) as the third quantor. This means that \(\A\) can guarantee to make a near-optimal decision within some finite time \(T_0\), but cannot guarantee to always make near-optimal decisions after \(T_0\).

\begin{remark}\label{remark:true-vs-pseudo-near-optimality}
    Normally, true convergence to near-optimality does follow from no-regret as stated by \textcites[Section 2]{chowdhury2017kernelized}[Section 1]{vermorel2005multiarmed}.
    This is correct for settings, where the optimal decision \(x^\star\) and likewise the utility \(F(x^\star)\) is independent of the time \(t\). This relation is stated in the following corollary.
    
    \begin{corollary}[Convergence to Near-Optimality]\label{cor:near-optimality}
        Let \(\A\) be an algorithm with sublinear cumulative regret as defined in \cref{eq:sublinear-regret} and assume one of the following conditions:
        \begin{enumerate}
            \item \(r(t) \defeq (1-\alpha)F(x^\star) - F(x_t)\) with \(\alpha \in (0,1)\) is defined with respect to a time-independent optimal decision \(x^\star\) and an objective function \(F(x_t)\) monotonically increasing in \(t\).
            \item \(r(t)\) decreases monotonically in \(t\).
        \end{enumerate}
        Then \(\A\) converges towards a near-optimal decision. Precisely,
        \begin{equation*}
            \forall \varepsilon > 0
            \exists T_0 \geq 1
            \forall T \geq T_0\colon
            r(T) < \varepsilon
            .
        \end{equation*}
    \end{corollary}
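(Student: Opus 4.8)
The plan is to notice that both hypotheses collapse to a single monotonicity condition, and then to combine that monotonicity with the no-regret property that sublinear regret already entails. Concretely, I would first argue that condition~1 is just a special case of condition~2, so that in either case the simple regret $r(t)$ forms a monotonically (weakly) decreasing sequence. Under condition~1 the term $(1-\alpha)F(x^\star)$ is a fixed constant because $x^\star$ is time-independent, while $F(x_t)$ is assumed monotonically increasing in $t$; hence $r(t) = (1-\alpha)F(x^\star) - F(x_t)$ is monotonically decreasing, which is precisely the hypothesis of condition~2. This reduction means I only ever need to argue under condition~2.

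Having reduced to condition~2, I would invoke the no-regret property of \cref{eq:no-regret}, which the surrounding discussion derives from sublinear cumulative regret: for every $\varepsilon > 0$ there is some $T_0 \geq 1$ with $R(T)/T < \varepsilon$ for all $T \geq T_0$. This holds because $R(T) \leq \bigO*{T^n}$ with $n < 1$ forces $R(T)/T \leq \bigO*{T^{n-1}} \to 0$. The only remaining ingredient is an elementary averaging inequality that transfers the smallness of the \emph{average} regret to the \emph{final-round} regret: since $r(1) \geq r(2) \geq \dots \geq r(T)$, the last term $r(T)$ is the minimum of the summands of $R(T) = \sum_{t=1}^T r(t)$, so $T\,r(T) \leq \sum_{t=1}^T r(t) = R(T)$, that is, $r(T) \leq R(T)/T$. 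Chaining this with the no-regret bound gives $r(T) \leq R(T)/T < \varepsilon$ for all $T \geq T_0$, which is exactly the claimed $\forall \varepsilon > 0\, \exists T_0 \geq 1\, \forall T \geq T_0\colon r(T) < \varepsilon$.

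There is no serious obstacle here; the entire content is the observation that monotonicity upgrades a bound on the average into a pointwise bound on the tail. The one point worth flagging is that this is precisely where the argument would fail for the paper's actual regret in \cref{eq:simple-regret}: there $x^\star$ is replaced by the time-dependent $\thetaopt_t$ with $\thetaopt_{t-1} \nsubseteq \thetaopt_t$, so $r(t)$ need not be monotone, the averaging inequality breaks down, and one is thrown back onto the weaker existential conclusion of \cref{thm:near-optimality}. For this reason I would make the monotonicity usage explicit rather than hiding it, to clarify exactly which assumption does the work.

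An equivalent route, should one prefer to reuse the just-proved theorem rather than rederive from no-regret, is to apply \cref{thm:near-optimality} to obtain the $T^\star \leq T_0$ it supplies with $r(T^\star) < \varepsilon$, and then observe that monotonicity yields $r(T) \leq r(T^\star) < \varepsilon$ for all $T \geq T^\star$; taking the new threshold to be $T^\star$ again gives the universal statement. Either way the decisive step is the same monotonicity argument, so I would present the self-contained averaging version as the main proof and mention the theorem-based variant as a remark.
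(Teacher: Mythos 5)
Your proposal is correct and matches the paper's own proof in substance: the paper likewise reduces condition~1 to condition~2 via monotonicity, proves the claim from \cref{thm:near-optimality} plus monotone decrease of \(r(t)\), and separately gives the averaging argument (final regret \(\leq\) average regret, following \textcite{srinivas2012informationtheoretic}) as an alternative for condition~1. The only difference is one of emphasis — you lead with the averaging inequality \(r(T) \leq R(T)/T\) under condition~2 and relegate the theorem-based route to a remark, while the paper does the reverse — which is immaterial since both arguments are sound and both appear in the paper.
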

    \begin{proof}
        \cref{ssec:proofs-corollary-near-optimality}
        \noqed
    \end{proof}
    
    As discussed above we are not able to show this convergence guarantee in our setting. Recall the definition of the simple regret in \cref{eq:simple-regret}, 
    in which \(F(x^\star)\) corresponds to the time-dependent ``marginal utility`` \(F(\thetaopt_t) - F(\thetaopt_{t-1})\) of an optimal decision and \(F(x_t)\) corresponds to the not necessarily monotonic marginal utility \(F(\theta_t \mid \theta_{1:t-1})\).%
    \footnote{We only require the utility \(F(\Theta)\) to be monotonic as stated in \cref{eq:monotonic-utility}, but not the marginal utility \(F(\theta\mid\Theta)\).}
    Hence, the stronger assumption 1 of \cref{cor:near-optimality} does not hold.
    The more general assumption 2 does not hold either, because it cannot be guaranteed that the marginal utility \(F(\theta_t \mid \theta_{1:t-1})\) increases faster than the ``marginal utility'' \(F(\thetaopt_t) - F(\thetaopt_{t-1})\) of an optimal decision.
\end{remark}

In order to apply \cref{thm:near-optimality}, we have to show sublinear regret for our algorithms. We end this section with presenting the first two general lemmas which help us in achieving this. A complete overview of the structure of theorems and lemmas is given later in \cref{fig:analysis-overview}.

The first lemma upper bounds the cumulative regret with the cumulative individual regret.

\begin{lemma}\label{lem:lemma-2-1}
    \(R(T) < R_{ind}(T)\) for all \(T \geq 1\).
\end{lemma}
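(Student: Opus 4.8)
The plan is to rewrite the claim so that the algorithm's own utility cancels and only the greedy marginals remain. Write $g_t \defeq F(\thetagre_t \mid \theta_{1:t-1})$ for the greedy marginal in round $t$ and $G_t \defeq \sum_{s=1}^t g_s$ for its running sum. Since the marginal utilities telescope, $F(\theta_{1:T}) = \sum_{t=1}^T F(\theta_t \mid \theta_{1:t-1})$ (with $F(\emptyset)=0$), the definition of the individual regret in \cref{eq:individual-regret} reads $R_{ind}(T) = G_T - F(\theta_{1:T})$, while $R(T)$ is given by \cref{eq:cumulative-regret}. Subtracting, the term $F(\theta_{1:T})$ cancels, so the claim $R(T) < R_{ind}(T)$ is equivalent to the single inequality $G_T > \paren*{1-\frac{1}{e}} F(\thetaopt_T)$. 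It thus suffices to lower-bound the total greedy marginal by a $\paren*{1-\frac{1}{e}}$-fraction of the optimum.

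Next I would establish two pointwise inequalities. First, a Nemhauser-type bound: because $\thetagre_t$ maximizes the marginal utility over $\Camspace$ by \cref{eq:greedy-decision}, monotonicity \cref{eq:monotonic-utility} together with submodularity \cref{eq:submodular-utility} lets me decompose $F(\thetaopt_T) - F(\theta_{1:t-1}) \leq F(\thetaopt_T \cup \theta_{1:t-1}) - F(\theta_{1:t-1}) \leq \sum_{\theta' \in \thetaopt_T} F(\theta' \mid \theta_{1:t-1}) \leq T\, g_t$, where the last step uses $\abs{\thetaopt_T} \leq T$ from \cref{eq:optimal-solution} and greedy maximality; hence $g_t \geq \frac{1}{T}\paren*{F(\thetaopt_T) - F(\theta_{1:t-1})}$. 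Second, since every algorithm marginal is dominated by the corresponding greedy marginal, $F(\theta_{1:t-1}) = \sum_{s<t} F(\theta_s \mid \theta_{1:s-1}) \leq \sum_{s<t} g_s = G_{t-1}$.

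Combining the two bounds gives $g_t \geq \frac{1}{T}\paren*{F(\thetaopt_T) - G_{t-1}}$, which is a self-referential recursion in the partial sums $G_t$. Setting $E_t \defeq F(\thetaopt_T) - G_t$ with $E_0 = F(\thetaopt_T)$, this rearranges to $E_t \leq \paren*{1-\frac{1}{T}} E_{t-1}$, so by induction $E_T \leq \paren*{1-\frac{1}{T}}^T F(\thetaopt_T)$ and therefore $G_T \geq \paren*{1 - \paren*{1-\frac{1}{T}}^T} F(\thetaopt_T)$. Since $\paren*{1-\frac{1}{T}}^T < \frac{1}{e}$ strictly for every finite $T \geq 1$, this exceeds $\paren*{1-\frac{1}{e}} F(\thetaopt_T)$ whenever $F(\thetaopt_T) > 0$, which closes the argument. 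The one conceptual obstacle is that $\thetagre_t$ is greedy with respect to the algorithm's history $\theta_{1:t-1}$ rather than a self-consistent greedy trajectory, so the classical greedy guarantee cannot be invoked directly; the remedy is exactly the second inequality $F(\theta_{1:t-1}) \leq G_{t-1}$, which replaces the algorithm's accumulated utility by the greedy partial sum and restores the clean contraction in $E_t$.
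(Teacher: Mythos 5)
Your proof is correct, and while it rests on the same two pillars as the paper's proof---the Nemhauser-type chain (monotonicity, submodular decomposition of \(F(\thetaopt_T \mid \theta_{1:t-1})\), greedy maximality) yielding \(F(\thetagre_t \mid \theta_{1:t-1}) \geq \frac{1}{T}\paren*{F(\thetaopt_T) - F(\theta_{1:t-1})}\), and the elementary bound \(\paren*{1-\frac{1}{T}}^T < \frac{1}{e}\)---the assembly is genuinely different. The paper tracks the gap \(\delta_t \defeq F(\thetaopt_T) - F(\theta_{1:t})\) of the \emph{algorithm's} utility, derives the noisy recursion \(\delta_t \leq \paren*{1-\frac{1}{T}}\delta_{t-1} + r_{ind}(t)\), and unrolls it, crudely bounding the geometric weights on the \(r_{ind}(\tau)\) terms by one before rearranging. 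You instead cancel \(F(\theta_{1:T})\) from both regret definitions at the outset, which reduces the lemma to a statement purely about the greedy marginals, \(G_T > \paren*{1-\frac{1}{e}}F(\thetaopt_T)\), and you close it with a clean contraction \(E_t \leq \paren*{1-\frac{1}{T}}E_{t-1}\) on the greedy partial sums; your domination step \(F(\theta_{1:t-1}) \leq G_{t-1}\) does the work that the paper's round-by-round bookkeeping of \(r_{ind}(t)\) does. Your route makes explicit that \(R_{ind}(T) - R(T)\) is independent of the algorithm's actual utilities and keeps the recursion free of regret terms; the paper's route exhibits how each individual regret enters per round, which connects more directly to the subsequent use of \cref{lem:lemma-2-2}, where \(R_{ind}(T)\) is bounded term by term.

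One shared caveat concerns strictness: your final step needs \(F(\thetaopt_T) > 0\), exactly as the paper's step \(\paren*{1-\frac{1}{T}}^T\delta_0 < \frac{1}{e}\delta_0\) needs \(\delta_0 > 0\). In the degenerate case \(F(\thetaopt_T) = 0\) all utilities and marginals vanish, so \(R(T) = R_{ind}(T) = 0\) and the strict inequality of the lemma fails under either proof. You flag this condition explicitly, which the paper does not; it is harmless in any non-trivial instance.
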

\begin{proof}
    \cref{ssec:proofs-lemma-2-1}
    \noqed
\end{proof}

The second lemma upper bounds the cumulative individual regret with the sum of upper bounds on the marginal utility for each \(\theta_t\) under an additional assumption.

\begin{lemma}\label{lem:lemma-2-2}
    Assume
    \begin{equation*}
        F(\thetagre_t \mid \theta_{1:t-1}) \leq \Fu(\theta_t \mid \theta_{1:t-1})
        \quad\text{ for all } t \geq 1
        .
    \end{equation*}
    Then we can show
    \begin{equation*}
        R_{ind}(T) \leq \sum_{t=1}^T \Fu(\theta_t \mid \theta_{1:t-1})
        .
    \end{equation*}
\end{lemma}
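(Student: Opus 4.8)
The plan is to unfold the definition of the cumulative individual regret and bound each summand separately, then discard a provably non-negative term. First I would write, directly from the definition in \cref{eq:individual-regret},
\[
    R_{ind}(T) = \sum_{t=1}^T r_{ind}(t) = \sum_{t=1}^T \bigl( F(\thetagre_t \mid \theta_{1:t-1}) - F(\theta_t \mid \theta_{1:t-1}) \bigr).
\]
The structure of the argument is then entirely termwise: I never need to reason about how the summands interact across rounds, so the whole proof reduces to bounding a single generic term.

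Next I would apply the hypothesis of the lemma to each summand, replacing $F(\thetagre_t \mid \theta_{1:t-1})$ by its assumed upper bound $\Fu(\theta_t \mid \theta_{1:t-1})$. This yields
\[
    R_{ind}(T) \leq \sum_{t=1}^T \bigl( \Fu(\theta_t \mid \theta_{1:t-1}) - F(\theta_t \mid \theta_{1:t-1}) \bigr).
\]
It now remains to drop the subtracted term $F(\theta_t \mid \theta_{1:t-1})$. The key observation is that the algorithm's own marginal utility is non-negative: since $\theta_{1:t-1} \subseteq \theta_{1:t-1} \cup \set{\theta_t}$, monotonicity (\cref{eq:monotonic-utility}) applied to the definition of marginal utility (\cref{eq:marginal-utility}) gives $F(\theta_t \mid \theta_{1:t-1}) = F(\theta_{1:t-1} \cup \set{\theta_t}) - F(\theta_{1:t-1}) \geq 0$. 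Subtracting a non-negative quantity from each summand only decreases the right-hand side, so
\[
    R_{ind}(T) \leq \sum_{t=1}^T \Fu(\theta_t \mid \theta_{1:t-1}),
\]
which is the claimed bound.

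There is no genuine obstacle here; the only ingredient beyond mechanically invoking the hypothesis is recognizing that the non-negativity of $F(\theta_t \mid \theta_{1:t-1})$ must be supplied, and that it follows from monotonicity alone rather than from submodularity. I would emphasize that submodularity (\cref{eq:submodular-utility}) plays no role in this particular lemma—it will matter elsewhere in the analysis, but here only the elementary fact that taking more measurements cannot decrease the utility is needed.
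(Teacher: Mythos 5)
Your proof is correct and follows essentially the same route as the paper's: unfold $R_{ind}(T)$ termwise, invoke the assumption to bound $F(\thetagre_t \mid \theta_{1:t-1})$ by $\Fu(\theta_t \mid \theta_{1:t-1})$, and discard $F(\theta_t \mid \theta_{1:t-1})$ because it is non-negative (the paper phrases this as a trivial lower bound $F_l \defeq 0$, which you justify slightly more explicitly via monotonicity). The only difference is the order of the two steps, which is immaterial.
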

\begin{proof}
    \cref{ssec:proofs-lemma-2-2}
    \noqed
\end{proof}

\begin{remark}\label{remark:lemma-2-2-assumption}
    The assumption for \cref{lem:lemma-2-2} typically holds for a reasonable objective function \(\Fu\) and algorithm \(\A\). If \(\A\) is a greedy algorithm and returns the maximizer of \(\Fu\) as the NBV estimate \(\theta_t\), we can show
    \begin{derivation*}
        F(\thetagre_t \mid \theta_{1:t-1})
        &\leq \eqcontrast \eqchange{\Fu}(\thetagre_t \mid \theta_{1:t-1})
        &&\quad\eqnote{since upper bound}
        \\
        &\leq \eqcontrast \Fu(\eqchange{\theta_t} \mid \theta_{1:t-1})
        &&\quad\eqnote{since upper bound maximizer}
        .
    \end{derivation*}
\end{remark}


\section{Simplified 2D Setting}\label{sec:problem-simplified}

After describing the most general setting from a decision theoretical perspective, we continue with describing the actual setting in which we analyze our algorithms. First we discuss our notion of a real world and a polar world in \cref{ssec:problem-simplified-world}. Then we describe how we model our object in \cref{ssec:problem-simplified-object} and how we discretize it in \cref{ssec:problem-simplified-discretization}. Based on the discrete representation, we define the observation and measurement model of the camera in \cref{ssec:problem-simplified-camera}. Finally, we discuss the Gaussian process model in \cref{ssec:problem-simplified-gp}, which is used by the algorithm to model its uncertainty about the unknown target object. Throughout these sections, we make certain simplifications and assumptions on our setting, such that it is feasible for us to analyze our algorithms later. We provide a summary of these simplifications in \cref{ssec:problem-simplified-simplifications} and discuss their implications on the applicability of our results.

\subsection{2D World}\label{ssec:problem-simplified-world}
The main simplification is that we deal with the reconstruction problem in 2D (\cref{item:simp-2D-world}). This means the goal is to reconstruct the boundary of a 2D object given ``1D measurements'' of our camera. In this 2D setting, we use \emph{real world} to refer to the \(x\)-\(y\)-coordinate system with the center of the real world defined as the center of this coordinate system. However, we mostly work with polar coordinates \((\varphi,r)\) and rarely Cartesian coordinates \((x,y)\) to refer to 2D points in the real world. With \emph{polar world} we refer to the polar representation of the real world defined by the \(\varphi\)-\(r\)-coordinate system. We provide visualizations of both world representations in \cref{fig:problem-worlds}.

\begin{figure}
    \centering
    \begin{subfigure}{0.45\linewidth}
        \centering
        \includegraphics[width=\linewidth]{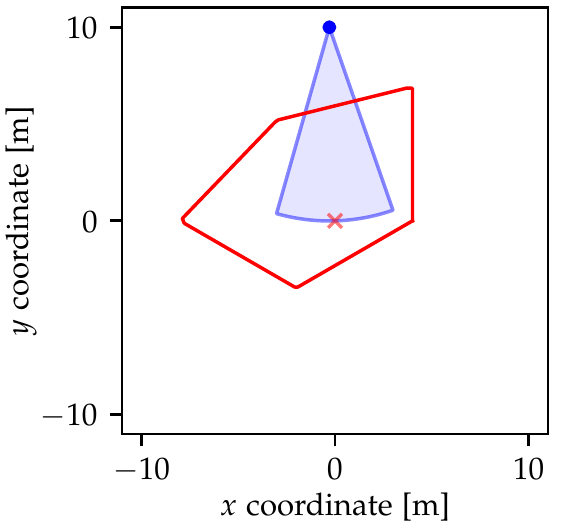}
        \caption{real world}
    \end{subfigure}%
    \quad%
    \begin{subfigure}{0.45\linewidth}
        \centering
        \includegraphics[width=\linewidth]{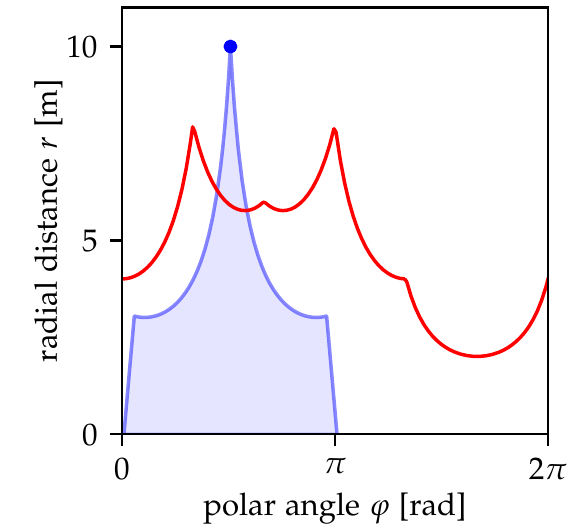}
        \caption{polar world}
    \end{subfigure}
    \caption[Real vs.\ Polar World]{
        Real vs.\ Polar World.
        The real world in (a) is defined as the geometry in which the object (red) and the camera (blue dot) reside. The shape of the camera's FOV is modeled with a sector (blue region).
        The polar world in (b) is obtained through a nonlinear transformation from Cartesian to polar coordinates. Observe how the center of the real world representation (red cross) corresponds to the complete \(\varphi\)-axis in the polar world.
    }
    \label{fig:problem-worlds}
\end{figure}

\subsection{Object}\label{ssec:problem-simplified-object}
The object is represented by a \emph{surface function} \(f\colon \Domain \to \Real, \varphi \mapsto f(\varphi)\) which parameterizes the object surface relative to the center of the real world. In this setting, we use a \(2\pi\)-periodic polar function defined on the domain \(\Domain \defeq [0,2\pi]\) as our surface function. This implicitly limits us to ``well-shaped'' objects with a unique surface point \(f(\varphi)\) for each polar angle \(\varphi \in \Domain\) (\cref{item:simp-object-complexity}).
In addition, the location of the object must be known to the algorithm in practice (\cref{item:simp-object-centered}), since the real world center must lie inside the object for the surface function.
Further we assume the object surface is bounded between \(\dmin \leq f(\varphi) \leq \dmax\) for all \(\varphi \in \Domain\) (\cref{item:simp-object-bounded}) as shown in \cref{fig:problem-object-camera-specification}. Finally, we assume the object surface function is sampled from a Gaussian process \(f \sim \GP*{m(\varphi), k(\varphi,\varphi')}\) (\cref{item:simp-object-sampled}), such that we can model the algorithm's uncertainty regarding the true object shape. Specific details of the Gaussian process used in our setting are discussed in \cref{ssec:problem-simplified-gp}.

\begin{remark}\label{remark:different-domains}
    Depending on the choice for the object surface parameterization, the corresponding parameter space \(\Domain\) differs, such as
    \begin{itemize}
        \item \(\Domain = [0,2\pi]\) (polar angle) for 2D object surfaces parameterized with a \(2\pi\)-periodic polar function. This is our choice of parameterization.
        \item \(\Domain = [0,1]\) (curve parameter) for 2D object surfaces parameterized with a non-self-intersecting, \(1\)-periodic path.
        \item \(\Domain = [0,\pi]\times[0,2\pi]\) (polar angle, azimuthal angle) for 3D object surfaces parameterized with a spherical function.
    \end{itemize}
    Note that in our setting, the space of parameters for the object surface \(\Domain\) coincide with the space of camera poses \(\Camspace=[0,2\pi]\). This is not the general case and we provided examples for other \(\Camspace\) in \cref{remark:different-camspaces}.
    
    In practice, one can think of more complex parameterizations or use more general representations such as point clouds for describing the object surface. The difficulty comes with modeling the object surface with a suitable Gaussian process model as described later in \cref{ssec:problem-simplified-gp}.
\end{remark}

\subsection{Object Discretization}\label{ssec:problem-simplified-discretization}
We consider the real world to be uniformly discretized into a grid of real world pixels of size \(h \times h\), which represents the smallest measurable unit in this setting. Based on the real world discretization, we similarly discretize the object into a set of \emph{surface points} \(\Surface \defeq \set{x_1,\dots,x_N} \subseteq \Domain\), which is a finite set of parameters each uniquely defining a point on the object surface. In our setting, each surface point defines a 2D point \((x_i, f(x_i))\) on the object surface in polar coordinates relative to the real world center. The set of surface points is constructed by finding exactly one surface point in each real world pixel, which contains the object surface as visualized in \cref{fig:problem-object-camera-discretization-observation}.

\subsection{Camera}\label{ssec:problem-simplified-camera}
Given an object of bounded size, we assume the \emph{camera} moves on a fixed view circle with radius \(\dcam\) around the real world center similar to the setup of \textcite{banta2000nextbestview}. To avoid collisions between the camera and the object, we require \(\dcam > \dmax\) (\cref{item:simp-camera-position}). The camera pose \(\theta\in\Camspace=[0,2\pi]\) is defined as the polar angle of the camera relative to the real world center, which uniquely defines the camera position as \((\theta,\dcam)\) in polar coordinates. By assuming the camera is always oriented towards the center of the real world, the camera orientation is given as \(\theta + \pi\) (\cref{item:simp-camera-orientation}). Hence, the full camera pose is completely specified by \(\theta\) only. The camera's field of view (FOV) is modeled with a simple 2D cone described by \(\FOV\) for the FOV angle and \(\DOF\) for the camera's depth of field (DOF) as in \cref{fig:problem-object-camera-specification}. Observe how the shape of the FOV in the polar world changes for different choices of \(\DOF\) which we visualize in \cref{fig:problem-different-fovs}.

\begin{figure}
    \centering
    \begin{subfigure}[t]{0.66\linewidth}
        \centering
        \includegraphics[width=0.5\linewidth]{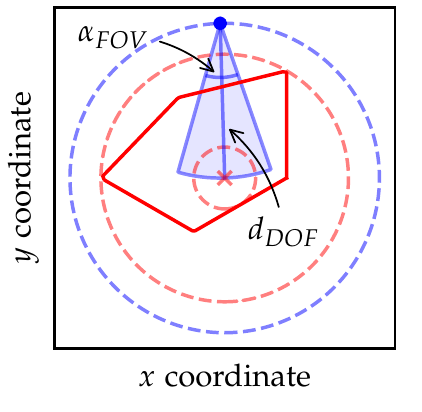}%
        \includegraphics[width=0.5\linewidth]{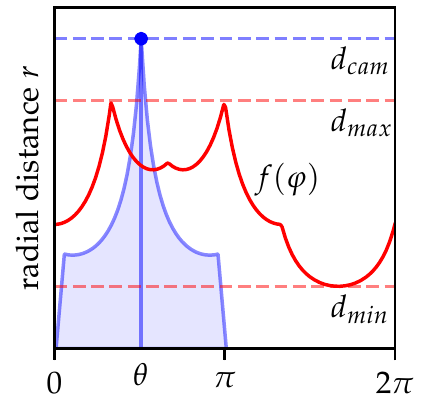}
        \caption{specification of object and camera}
        \label{fig:problem-object-camera-specification}
    \end{subfigure}%
    \begin{subfigure}[t]{0.33\linewidth}
        \centering
        \includegraphics[width=\linewidth]{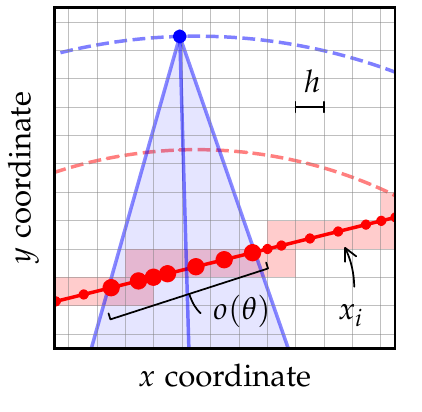}
        \caption{object discretization and camera observation model}
        \label{fig:problem-object-camera-discretization-observation}
    \end{subfigure}
    \caption[Different Aspects of Object and Camera]{
        Different Aspects of Object and Camera.
        (a) visualizes different constants for the object (red) and camera (blue) and shows how the object surface function \(f\) and camera position \(\theta\) are defined. (b) highlights the discretization of the real world into world pixels (grid) and of the object surface into surface points (red dots). In particular, the bold surface points correspond to the observed surface points contained in \(o(\theta)\).
    }
    \label{fig:problem-object-camera}
\end{figure}

\begin{figure}
    \centering
    \begin{subfigure}{0.33\linewidth}
        \centering
        \includegraphics[width=\linewidth]{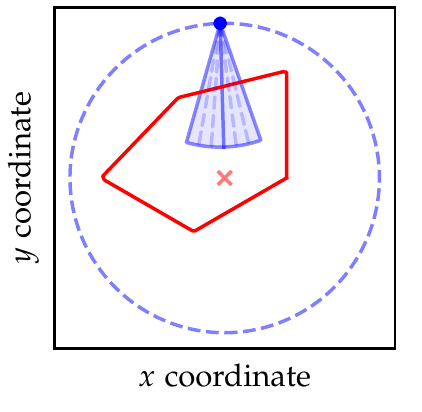}
        \includegraphics[width=\linewidth]{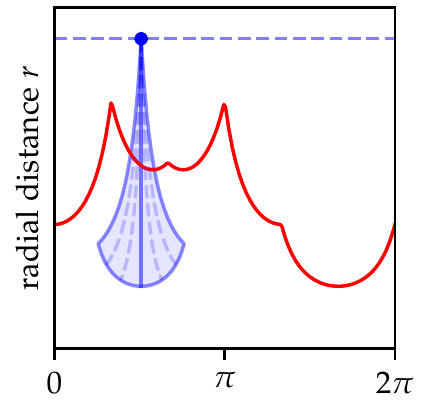}
        \caption{\(\DOF < \dcam\)}
    \end{subfigure}%
    \begin{subfigure}{0.33\linewidth}
        \centering
        \includegraphics[width=\linewidth]{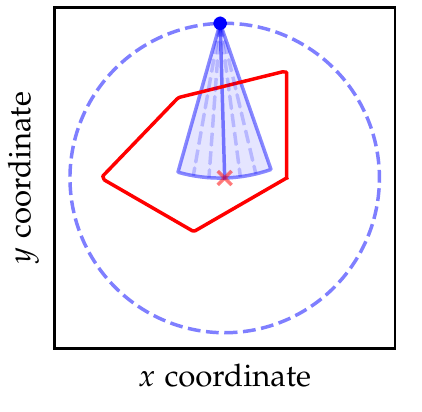}
        \includegraphics[width=\linewidth]{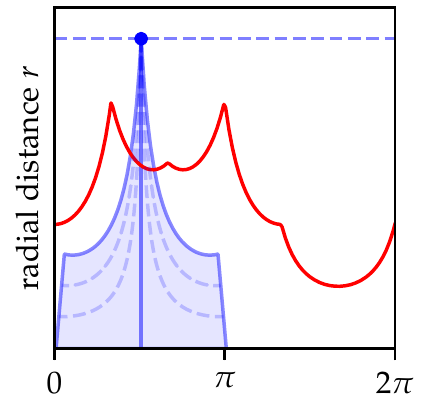}
        \caption{\(\DOF = \dcam\)}
    \end{subfigure}%
    \begin{subfigure}{0.33\linewidth}
        \centering
        \includegraphics[width=\linewidth]{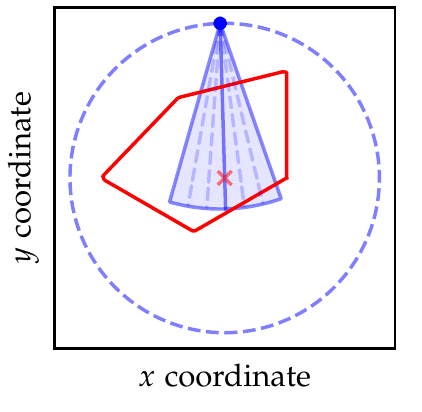}
        \includegraphics[width=\linewidth]{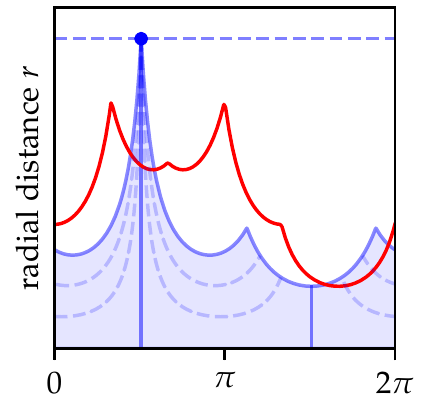}
        \caption{\(\DOF > \dcam\)}
    \end{subfigure}
    \caption[Shape of the FOV for different DOFs]{
        Shape of the FOV for different DOFs.
        The complexity of the FOV shape in the polar world mostly depends on whether it contains the world center or not. This can be illustrated by a case distinction on the relation between \(\DOF\) and \(\dcam\). In addition, observe how the straight rays (dotted lines) casted from the camera in the real world are transformed into bent rays in the polar world, which periodically wrap around every \(2\pi\).
    }
    \label{fig:problem-different-fovs}
\end{figure}

After formalizing the object and camera, we continue defining the \emph{observation function} \(o\colon \Camspace \to 2^\Surface, \theta \mapsto o(\theta)\), which describes the process of observing a subset of surface points \(o(\theta) \subseteq \Surface\) from a camera location \(\theta\) as visualized in \cref{fig:problem-object-camera-discretization-observation}. The observation model is defined as
\begin{equation}\label{eq:observation-function}
    \begin{aligned}
        o(\theta) &\defeq \set*{x_i \in \Surface \mid \text{\(x_i\) within FOV and not occluded\footnotemark}}\\
        o(\Theta) &\defeq \bigcup_{\theta\in\Theta} o(\theta)
        \quad\with \Theta\subseteq\Camspace
        .
    \end{aligned}
\end{equation}
\footnotetext{More precisely, the angle of observing \(x_i\) relative to the camera must lie in \(\brack*{-\frac{\FOV}{2},\frac{\FOV}{2}}\), the distance to the camera must not exceed \(\DOF\), and it must be possible to cast a ray from the camera to \(x_i\) without passing through the world pixels of other surface points.}%
We use the observation function as a black-box abstraction to avoid dealing with the technical details of the camera's FOV and mutual occlusion of surface points.

While the observation function only provides the set of observed surface points, more precisely the set of polar angles defining the surface points, the \emph{measurement function} \(\fm\colon \Domain \to \Real, \varphi \mapsto \fm(\varphi)\) describes the process of measuring the distances between the observed surface points and the real world center, which corresponds to measuring the true surface function \(f\).%
\footnote{Note that the output of the observation function provides an input to the measurement function, since \(\Surface \subseteq \Domain\).}
In practice, we would measure the distance between the observed surface points and the camera instead and compute \(\fm\) from the distance of the camera to the real world center \(\dcam\). To capture the inherent noise in these measurements, we define the measurement model as
\begin{equation}\label{eq:measurement-function}
    \begin{aligned}
        \fm(\varphi) &\defeq f(\varphi) + \varepsilon_\varphi
        &&\with \varepsilon_\varphi \sim \Normal*{0, \sigmaeps^2} \text{ \iid{}}
        \\
        \fm(\Phi) &\defeq \brack*{\fm(\varphi)}_{\varphi\in\Phi}
        &&\with \Phi\subseteq\Domain
    \end{aligned}
\end{equation}
Using this, we not only make the \iid{} assumption on the noise (\cref{item:simp-camera-noise}), but also assume that the noise has the same standard deviation \(\sigmaeps\) for all observed surface points. In practice, measuring surface points farther away typically incur more noise than measuring closer surface points due to the finite resolution of the camera. However, this measurement model implicitly sets the camera resolution to the width \(h\) of a real world pixel and observes all surface points with the same accuracy independently of the distance to the surface point (\cref{item:simp-camera-resolution}). We use this measurement function as an abstraction for the technical details of the camera resolution.

We define the set of surface points observed from \(\theta_{1:t}\) as
\begin{equation}\label{eq:observed-surface}
    X_{1:t} \defeq o(\theta_{1:t})
    \quad\with
    n_{1:t} \defeq \abs{X_{1:t}}
\end{equation}
and the measurements made for these surface points as
\begin{equation}\label{eq:measured-surface}
    \begin{aligned}
        Y_{1:t} &\multicolumn{3}{l}{\(\;\defeq\fm(X_{1:t}) = f_{1:t} + \varepsilon_{1:t}\)}
        \\
        f_{1:t} &\defeq \brack*{f(x)}_{x \in X_{1:t}}
        &&\with f \sim \GP*{m(\varphi), k(\varphi,\varphi')}
        \\
        \varepsilon_{1:t} &\defeq \brack*{\varepsilon_x}_{x \in X_{1:t}}
        &&\with \varepsilon_x \sim \Normal*{0, \sigmaeps^2}
        .
    \end{aligned}
\end{equation}

\begin{remark}
    In practice, distance measurements to the object surface can be obtained from depth images, in which each pixel is associated with a measured distance to the object surface. Such images can be taken with depth cameras, also known as Time-of-Flight (ToF) cameras, or extracted from monocular or stereo RGB images based on depth estimation methods.
    
    For a given discretized world representation, one can compute the set of observed 2D pixels (or 3D voxels) from such a depth image based on the current camera pose. We assume that this set is given as the input to our algorithm.
\end{remark}

The overall model of observing surface points (polar angles) and measuring the surface function (radial distances) can be summarized into
\begin{equation*}
    \theta
    \;\xrightarrow{\;\substack{\text{observing}\\o(\theta)}\;}\;
    X
    \;\xrightarrow{\;\substack{\text{measuring}\\\fm(X)}\;}\;
    Y
    .
\end{equation*}
Note that the observation process is deterministic and depends on the object surface, the camera's FOV and the occlusion of surface points, whereas the measurement process is stochastic and subject to measurement noise.

\subsection{Gaussian Process Model}\label{ssec:problem-simplified-gp}
By assuming that the surface function of the target object is sampled from a \emph{Gaussian process} distribution
\begin{equation*}
    f \sim \GP*{m(\varphi), k(\varphi,\varphi')}
\end{equation*}
with mean and covariance function
\begin{equation}\label{eq:mean-covariance-function-surface}
    \begin{aligned}
        m(\varphi) &\defeq \E*{f(\varphi)} \\
        k(\varphi,\varphi') &\defeq \Cov*{f(\varphi), f(\varphi')}
    \end{aligned}
\end{equation}
as defined in \cref{eq:mean-covariance-function}, we can model the algorithm's uncertainty about the true object shape. Recall from \cref{ssec:background-gp-gp} that the marginal distribution of finitely many points \(f(\Phi)\) on such a sampled function follows a multivariate Gaussian distribution. In a Bayesian setting, this corresponds to the \emph{prior distribution} of points on the surface
\begin{equation}\label{eq:prior-surface}
    f(\Phi) \sim \Normal*{\mu_0(\Phi), \Sigma_0(\Phi)}
\end{equation}
with mean vector and covariance matrix
\begin{equation*}
    \begin{alignedat}{3}
        \mu_0(\Phi) &\defeq \mu(\Phi)
        && \quad\with
        & \mu(\Phi) &= \brack*{m(\varphi)}_{\varphi\in\Phi}\\
        \Sigma_0(\Phi) &\defeq \Sigma(\Phi)
        && \quad\with
        & \Sigma(\Phi,\Phi') &= \brack*{k(\varphi,\varphi')}_{\varphi\in\Phi,\varphi'\in\Phi'}
        .
    \end{alignedat}
\end{equation*}
as defined in \cref{eq:mean-vector-covariance-matrix} and visualized in \cref{fig:problem-perspective-alg-t1}.%
\footnote{Recall that we use the notation \(\Sigma(\Phi) \defeq \Sigma(\Phi,\Phi)\).}
After measuring \(f\) at \(X_{1:t}\), we obtain the \emph{posterior distribution} of points on the surface
\begin{equation}\label{eq:posterior-surface}
    f(\Phi) \mid Y_{1:t} \sim \Normal*{\mu_t(\Phi), \Sigma_t(\Phi)}
\end{equation}
with mean vector and covariance matrix
\begin{equation*}
    \begin{aligned}
        \mu_t(\Phi) &= \mu(\Phi) + \Sigma(\Phi,X_{1:t})(\Sigma(X_{1:t}) + \sigmaeps^2 I)^{-1} (Y_{1:t} - \mu(X_{1:t}))
        \\
        \Sigma_t(\Phi) &= \Sigma(\Phi) - \Sigma(\Phi,X_{1:t})(\Sigma(X_{1:t}) + \sigmaeps^2 I)^{-1} \Sigma(X_{1:t},\Phi)
    \end{aligned}
\end{equation*}
obtained from \cref{eq:posterior-noisy-regression} and visualized in \cref{fig:problem-perspective-alg-t2,fig:problem-perspective-alg-t3}. Since they are given in closed-form, the algorithm is able to compute them. Intuitively, computing the posterior distribution using Bayesian inference is a form of learning the unknown object surface from a given set of observed surface points. The prior and posterior variance
\begin{equation}\label{eq:prior-posterior-variance-surface}
    \begin{aligned}
        \sigma_0(\Phi)^2 &\defeq \diag(\Sigma_0(\Phi)) = \brack*{k(\varphi,\varphi)}_{\varphi\in\Phi} \\
        \sigma_t(\Phi)^2 &\defeq \diag(\Sigma_t(\Phi))
    \end{aligned}
\end{equation}
can be found on the diagonal of the covariance matrix.

With the Gaussian process model the goal is to represent the uncertainty about the true surface function with \emph{confidence bounds} of the form
\begin{equation}\label{eq:confidence-region-ideal}
    l_t(\varphi) \leq f(\varphi) \leq u_t(\varphi)
    \quad\text{ for all \(\varphi \in \Domain\), \(t \geq 1\) \whp{}}
\end{equation}
which should ideally envelop the true surface function with high probability (\whp{}). This information can then be used by \(\A\) in its objective function to infer an estimate \(\theta_t\) for the NBV.
The upper and lower confidence bounds based on \(Y_{1:t-1}\) and used by \(\A\) in round \(t\) are symmetrically defined as
\begin{equation}\label{eq:confidence-bounds-ideal}
    \begin{aligned}
        u_t(\varphi) &\defeq \mu_{t-1}(\varphi) + \beta_t^{\ffrac{1}{2}}\sigma_{t-1}(\varphi)\\
        l_t(\varphi) &\defeq \mu_{t-1}(\varphi) - \beta_t^{\ffrac{1}{2}}\sigma_{t-1}(\varphi)\\
    \end{aligned}
    .
\end{equation}
with \emph{confidence parameter} \(\beta_t\), which is responsible for scaling the width of the confidence region. In \cref{lem:lemma-2-4} we show how to appropriately choose \(\beta_t\) in each round to provide a guarantee similar to \cref{eq:confidence-region-ideal} with arbitrarily high probability \(1-\delta\).

\begin{figure}
    \centering
    \begin{subfigure}{0.33\linewidth}
        \centering
        \includegraphics[width=\linewidth]{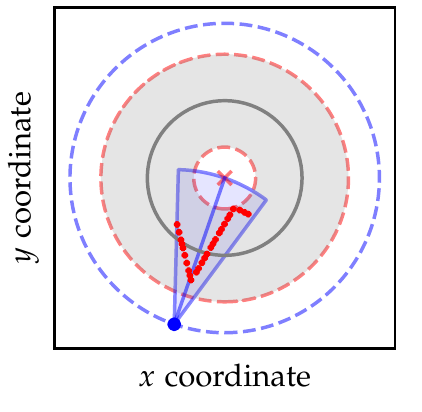}
        \includegraphics[width=\linewidth]{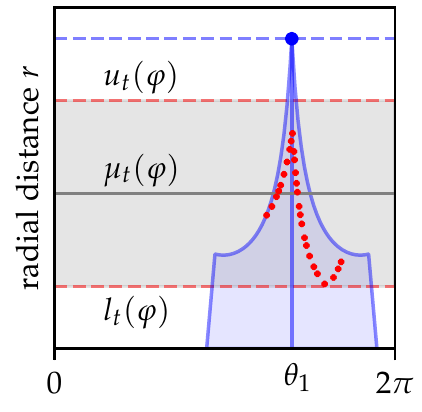}
        \caption{\(t = 1\)}
        \label{fig:problem-perspective-alg-t1}
    \end{subfigure}%
    \begin{subfigure}{0.33\linewidth}
        \centering
        \includegraphics[width=\linewidth]{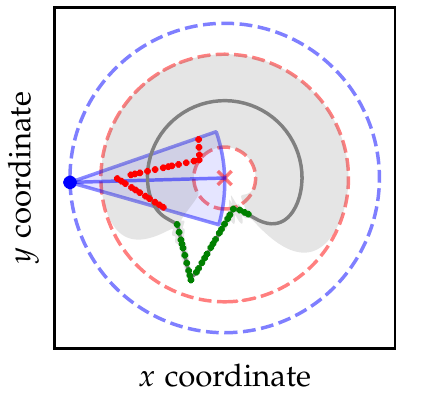}
        \includegraphics[width=\linewidth]{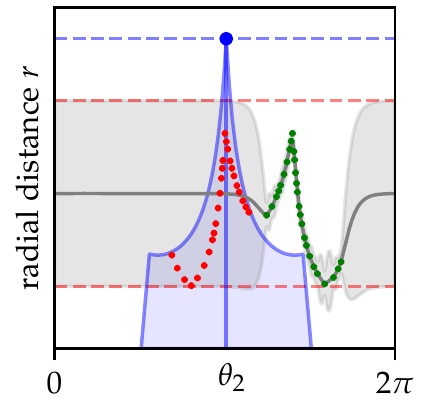}
        \caption{\(t = 2\)}
        \label{fig:problem-perspective-alg-t2}
    \end{subfigure}%
    \begin{subfigure}{0.33\linewidth}
        \centering
        \includegraphics[width=\linewidth]{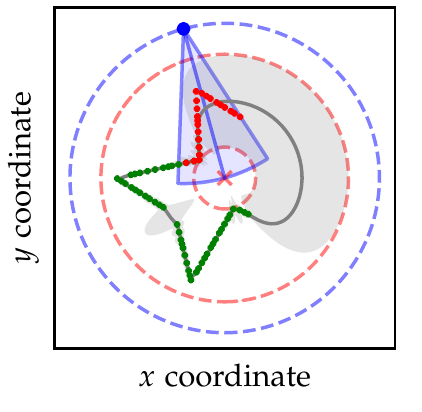}
        \includegraphics[width=\linewidth]{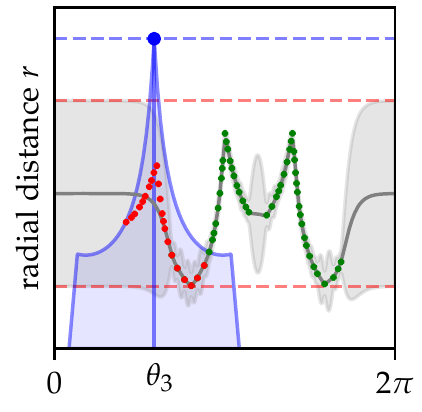}
        \caption{\(t = 3\)}
        \label{fig:problem-perspective-alg-t3}
    \end{subfigure}
    \caption[Perspective of the Algorithm]{
        Perspective of the Algorithm.
        In the beginning, algorithm \(\A\) has no information about the object except for the confidence region (gray) representing our prior knowledge and the surface points observed from the current location (red dots). After each measurement at \(\theta_1\) and \(\theta_2\), \(\A\) obtains an updated posterior distribution through Bayesian inference, where the uncertainty is reduced at the measured locations.
    }
    \label{fig:problem-perspective-alg}
\end{figure}

What remains is to choose a suitable mean function \(m(\varphi)\) and covariance function \(k(\varphi,\varphi')\) for the Gaussian process model used by \(\A\). This is discussed in \cref{sec:design-gp}.

\subsection{True Objective Function}\label{ssec:problem-simplified-objective}

Now we can define the notion of reconstruction progress for a set of camera poses as quantified by the utility function in \cref{ssec:problem-general-objective} more precisely.

The utility function \(F(\theta_{1:t})\) evaluates the overall reconstruction progress for \(\theta_{1:t}\), which we measure in terms of the total number of observed surface points as visualized in \cref{fig:problem-true-objective-full} and written as
\begin{equation}\label{eq:true-utility}
    F(\theta_{1:t}) \defeq \abs{o(\theta_{1:t})}
    .
\end{equation}
This is the true objective with respect to the reconstruction problem.

Since \(\A\) makes its decisions sequentially, we defined the marginal utility function \(F(\theta \mid \theta_{1:t-1})\) in \cref{eq:marginal-utility} to evaluate the reconstruction progress made by a single decision \(\theta\). This corresponds to the number of newly observed surface pixels as visualized in \cref{fig:problem-true-objective-marginal} and formally defined as
\begin{equation}\label{eq:true-marginal-utility}
    F(\theta \mid \theta_{1:t-1}) = \abs{o(\theta) \setminus o(\theta_{1:t-1})}
    .
\end{equation}
This corresponds to the true objective with respect to the NBV problem.
Because of the sequential decision making process, the marginal utility naturally becomes the \emph{true objective function} for \(\A\) in each round.

\begin{figure}
    \centering
    \begin{subfigure}{0.45\linewidth}
        \centering
        \includegraphics{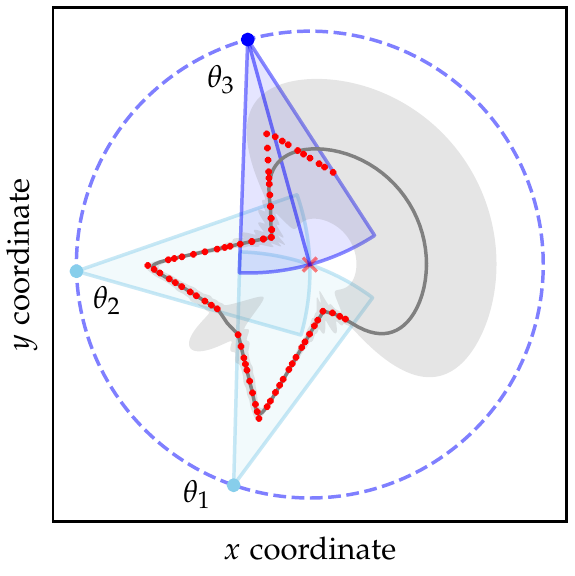}
        \caption{\(F(\theta_{1:t})\)}
        \label{fig:problem-true-objective-full}
    \end{subfigure}%
    \quad%
    \begin{subfigure}{0.45\linewidth}
        \centering
        \includegraphics{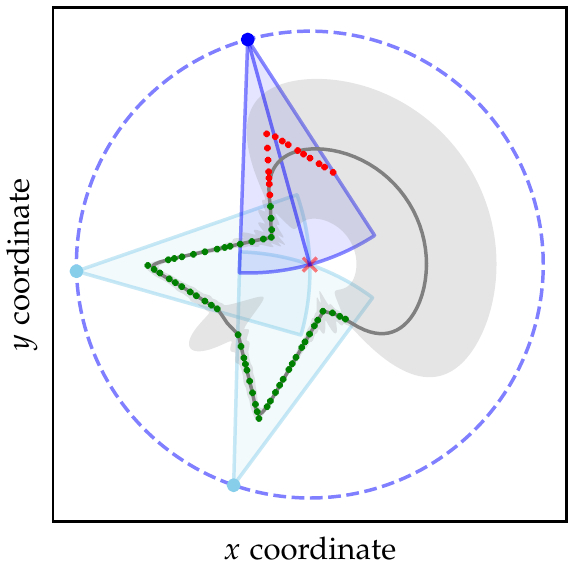}
        \caption{\(F(\theta \mid \theta_{1:t})\)}
        \label{fig:problem-true-objective-marginal}
    \end{subfigure}
    \caption[True Objective Functions]{
        True Objective Functions.
        (a) visualizes the utility function, which measures the total number of observed surface points (red) from all chosen camera locations \(\theta_{1:t}\). This is the true objective to be maximized for solving the reconstruction problem optimally. (b) visualizes the marginal utility, which only counts the newly observed surface points (red) from the current camera location \(\theta\). Previously observed surface points (green) do not contribute anymore to the marginal utility. This is the true objective to be maximized for solving the NBV decision problem greedily.
    }
    \label{fig:problem-simplified-objective}
\end{figure}

However, in practice the shape of the target object is unknown and \(o(\theta)\) can only be evaluated if the camera makes a measurement from \(\theta\). Without global knowledge about the true objective function \(F(\theta\mid\theta_{1:t-1})\), \(\A\) must somehow infer information about the object shape based on previous measurements of the object surface to make informed decisions. This is where the Gaussian process model described in \cref{ssec:problem-simplified-gp} comes into play, which captures the information from previous measurements and provides an upper and lower confidence bound on the surface function with high probability as defined in \cref{eq:confidence-bounds-ideal}. Later in \cref{sec:design-objective}, we use these confidence bounds to design objective functions \(\Fu(\theta\mid\theta_{1:t-1})\) which estimate the true objective function well and can be globally evaluated by \(\A\) to find the NBV estimate.%
\footnote{The subscript \(u\) refers to the fact that \(\Fu(\theta\mid\theta_{1:t-1})\) must be an upper bound of the marginal utility as defined in \cref{eq:marginal-utility-upper-bound} and later refined in \cref{item:req-necessary-bound}.}

\subsection{List of Simplifications}\label{ssec:problem-simplified-simplifications}

In this section we summarize all previously made simplifications and assumptions and briefly discuss their implications on our results. We further provide ideas on how to relax or completely lift them. We distinguish between \textit{strong} and \textit{weak} assumptions depending on how strong it restricts the applicability of our results.

\begin{enumerate}[label=(S\arabic*), ref=Simp.~\arabic*]
    \item \label{item:simp-2D-world} We assume to live in a 2D world.
    \begin{discussion}{strong}
        This simplification is the main limiting factor of our work and prevents the results from being directly applied in the 3D world.
        
        This simplification can be lifted by extending the used methods to 3D. This includes the object and its corresponding Gaussian process model and likewise the camera pose and FOV. In \cref{remark:different-domains,remark:different-camspaces}, we briefly presented ways to parameterize 3D object surfaces and to define the camera pose in 3D.
        
        An interesting approach, which saves us from searching for a suitable surface parameterization, is to define the Gaussian process model on the camera space \(\Camspace\) (\eg{} camera sphere in 3D) instead of the surface function domain \(\Domain\).%
        \footnote{Credits for this idea go to Viacheslav Borovitskiy.}
        The idea is to project the distance measured at the current location \(\theta\) to the camera sphere, where the Gaussian process model is updated. The advantage is we do not restrict us to certain classes of objects. However, it completely relies on \cref{item:simp-camera-position}.
    \end{discussion}
    
    \item \label{item:simp-object-complexity} We assume the object surface can be modeled with a polar function \(f\).
    \begin{discussion}{strong}
        This simplification limits the complexity of target objects to ``well-shaped'' objects in the sense that every \(\varphi\in\Domain\) must uniquely identify a point on the surface. For example, it is not possible to describe an object with the shape of a horseshoe using a polar function.
        
        This simplification can be lifted by using a periodic parametric function \(f\colon\Domain \to \Real^2\) with \(\Domain = [0,1]\) as described in \cref{remark:different-domains}. However, the difficulty is to find an appropriate Gaussian process model for such functions.
    \end{discussion}
    
    \item \label{item:simp-object-centered} We assume the object is roughly centered in the real world.
    \begin{discussion}{weak}
        This simplification requires \(\A\) to know the location of the target object, such that it can model it with a polar surface function. This is typically the case in practice, since the object reconstruction problem is concerned about reconstructing the object and not about localizing the object.
    \end{discussion}
    
    \item \label{item:simp-object-bounded} We assume the object surface is bounded between \(\dmin\) and \(\dmax\) relative to the center of the real world.
    \begin{discussion}{weak/strong}
        This simplification limits the size of the object and prevents our methods from being applied to too large or too small objects. In practice, having an upper bound for the object size is the limiting factor, since the usual goal is to efficiently reconstruct arbitrarily large objects.
        
        This simplification can be relaxed by changing \(\dmin\) or \(\dmax\) to allow objects of smaller or larger size. The difficulty is \cref{item:simp-camera-position}, which requires the camera to move on a circle with fixed radius outside of \(\dmax\). By setting \(\dmax\) too large, the camera might not be able to fully observe the object anymore due to its limited DOF and resolution. However, if we can weaken \cref{item:simp-camera-position} sufficiently enough, we bounded size simplification will not be a significant restriction anymore.
    \end{discussion}
    
    \item \label{item:simp-object-sampled} We assume the object surface is sampled from a Gaussian process.
    \begin{discussion}{weak}
        This simplification limits us to objects which can be sampled from the Gaussian process. It typically comes along with restrictions on the smoothness of the object shape depending on the used covariance function. 
        
        This simplification can be relaxed by using an appropriate covariance function, such that almost any realistic object can be sampled from the Gaussian process. The difficulty is to preserve the theoretical guarantees.
    \end{discussion}
    
    \item \label{item:simp-camera-position} We assume the camera moves on a circle with fixed radius \(\dcam > \dmax\) around the center of the real world.
    \begin{discussion}{strong}
        This simplification limits the motion range of the camera, which plays an important role in the reconstruction problem. Especially varying the distance between camera and object represents a crucial tradeoff between observed surface area and measurement accuracy in practice. Being farther away from the object allows the camera to observe more of the object surface at the cost of larger noise. In addition, a finite DOF limits the measurement range of the camera and it might not be able to observe the complete object from a fixed circular orbit as remarked in \cref{item:simp-object-bounded}.
        
        This simplification can be relaxed by specifying the camera pose with more parameters to increase its motion range. Different suggestions for the parameter space of the camera are given in \cref{remark:different-camspaces}. However, this increases the difficulty for our algorithm, since it enlarges the space of decisions in which the algorithm must search for the NBV estimate.
    \end{discussion}
    
    \item \label{item:simp-camera-orientation} We assume the camera is always directed towards the center of the real world.
    \begin{discussion}{strong}
        This simplification limits the camera to a single view direction for each camera position. In practice, this fixed direction might be non-optimal for a given camera position.
        
        This simplification can be relaxed by similarly increasing the parameter space of the camera as for \cref{item:simp-camera-position}.
    \end{discussion}
    
    \item \label{item:simp-camera-noise} We assume the camera measurements of the object surface are subject to \iid{} Gaussian noise.
    \begin{discussion}{weak}
        This simplification is a classical assumption in statistics. Typically, the \iid{} assumption has only negligible influence on the overall result and the distributional assumption is justified by the Central Limit Theorem, which states that the average noise of a large sample size is close to a normal distribution.
    \end{discussion}
    
    \item \label{item:simp-camera-resolution} We assume the camera measures all surface points with the same accuracy.
    \begin{discussion}{strong}
        This simplification implicitly assumes that the camera resolution matches the granularity \(h\) of the real world discretization, since it observes all surface points no matter how far they are. From the perspective of an image taken by such a camera, this means that the image resolution is dynamically adapted to how far the surface points are. The farther a part of the object surface is located from the camera, the finer the image resolution will become locally where this part of the surface is depicted, and vice versa. This model is not completely realistic.
        
        This simplification can be relaxed by dynamically adapting the noise standard deviation \(\sigmaeps\) based on the true distance between the camera and object. A simple linear relationship can be \(\sigmaeps(\varphi) \propto \dcam - f(\varphi)\) for \(\varepsilon_\varphi \sim \Normal*{0,\sigmaeps^2(\varphi)}\). This ensures that surface points further away are measured with higher uncertainty, which implicitly reflects the reality that the taken image provides less information about these surface points.
    \end{discussion}
\end{enumerate}


\section{Comparison to other Settings}\label{sec:problem-comparison}

Many previous related work applied Gaussian processes in a sequential decision making framework to maximize some unknown reward function, which is expensive to evaluate. Among three of them, we want to highlight similarities and differences with our setting to provide some additional insights in certain problem formulations and subsequent design choices.

We start comparing our setting with the most general one in the foundational work of \textcite{srinivas2012informationtheoretic} in \cref{ssec:problem-comparison-gpucb} and then continue with \textcite{prajapat2022nearoptimal} in \cref{ssec:problem-comparison-macopt} and \textcite{chen2017interactive} in \cref{ssec:problem-comparison-smucb} ordered by increasing similarity to our setting. For clarity and comparability, we present their work in a simplified and stripped-down version and try to slightly unify their notations with ours.

Before we start, we first briefly summarize our own setting to facilitate the comparison.

\subsubsection{Problem}
The problem of object reconstruction is to sequentially construct a set of camera poses \(\theta_{1:T}\) from which the unknown surface function \(f\) of an object can be maximally observed with \(T\) measurements.
We assume that the surface function is sampled from a Gaussian process
\begin{equation*}
    f\colon\Domain\to\Real
    \quad\with f\sim\GP*{m(\varphi),k(\varphi,\varphi')}
\end{equation*}
as described in \cref{ssec:problem-simplified-object} and the utility or reward function is defined as the number of surface points on \(f\) observed from \(\theta_{1:T}\)
\begin{equation*}
    F(\theta_{1:T}) = \abs*{o(\theta_{1:T};f)}
\end{equation*}
as described in \cref{ssec:problem-simplified-objective}. We use the different notation \(o(\cdot;f)\) to highlight the natural dependence of the observation function on \(f\).
An optimal solution for \(T\) measurements is then defined as
\begin{equation*}
    \thetaopt_T = \argmax_{\Theta\subseteq\Camspace,\abs{\Theta}\leq T} F(\Theta)
\end{equation*}
as described in \cref{ssec:problem-general-near-optimality}.

\subsubsection{Decisions \& Regret}
An algorithm \(\A\) tries to find such a solution by sequentially deciding, from which camera location
\begin{equation*}
    \theta_t = \A(\theta_{1:t-1})
\end{equation*}
the object surface should be observed next.
The set of all locations \(\theta_{1:T}\) after round \(T\) is then returned as a solution to the object reconstruction problem.
The simple and cumulative regret are defined as
\begin{equation*}
    \begin{aligned}
        r(t) &= \paren*{1-\frac{1}{e}} (F(\thetaopt_t) - F(\thetaopt_{t-1})) - F(\theta_t \mid \theta_{1:t-1})\\
        R(T) &= \sum_{t=1}^T r(t) = \paren*{1-\frac{1}{e}} F(\thetaopt_T) - F(\theta_{1:T})
    \end{aligned}
\end{equation*}
as described in \cref{ssec:problem-general-regret}.

\subsection{Gaussian Process Optimization \autocite{srinivas2012informationtheoretic}}\label{ssec:problem-comparison-gpucb}

The setting of Gaussian process optimization is the most general setting and forms the foundation of many related work based on upper-confidence bound (UCB) algorithms.

\subsubsection{Problem}
The problem of Gaussian process optimization is to find a location \(x\) which maximizes an unknown reward function \(F\).
This reward function is assumed to be sampled from a Gaussian process
\begin{equation*}
    F\colon\Domain\to\Real
    \quad\with F\sim\GP*{m(x),k(x,x')}
\end{equation*}
with optimal solution defined as
\begin{equation*}
    x^\star = \argmax_{x\in \Domain} F(x)
    .
\end{equation*}

\subsubsection{Decisions \& Regret}
The proposed algorithm \GPUCB{} tries to find such a solution by sequentially deciding, at which location \(x_t\) the reward function \(F(x)\) should be evaluated next. The strategy is to use the Gaussian process model to obtain an upper confidence bound \(F_u(x)\) on the unknown reward function \(F(x)\). The next location is then chosen as the maximizer of this upper bound which can be written as
\begin{equation*}
    \begin{aligned}
        x_t &= \argmax_{x\in \Domain} F_u(x)\\
        &= \argmax_{x\in \Domain} \mu_{t-1}(x) + \beta_t^{1/2}\sigma_{t-1}(x)
        .
    \end{aligned}
\end{equation*}
The final location \(x_T\) at round \(T\) is then returned as a solution to the Gaussian process optimization problem. The simple and cumulative regret are defined as
\begin{equation*}
    \begin{aligned}
        r(t) &= F(x^\star) - F(x_t)\\
        R(T) &= \sum_{t=1}^T r(t)
        .
    \end{aligned}
\end{equation*}

\subsubsection{Comparison}
Different from our setting is that the sample function from the Gaussian process directly corresponds to the reward function. This means that maximizing the reward corresponds to ``exploiting'' the unknown sample function by maximizing \(\mu_{t-1}(x)\), which finds the location with largest expected reward. As noted by \textcite{srinivas2012informationtheoretic} that only maximizing \(\mu_{t-1}(x)\) is too greedy and results into bad performance, it requires a tradeoff with ``exploring'' the unknown sample function. Exploration corresponds to maximizing \(\sigma_{t-1}(x)\) to find the location with largest expected uncertainty. This exploration-exploitation tradeoff can be seen in the decision rule of \GPUCB{}, which maximizes a linear combination of \(\mu_{t-1}(x)\) and \(\sigma_{t-1}(x)\).

However, in our setting the sample function enters the reward function through the black-box observation function \(o(\cdot;f)\) and a large surface function value does not directly correlate with a large reward. In particular, we are not concerned about finding the maximum of the surface function \(f\), but about the observation coverage of \(f\), which corresponds to pure exploration. We show later in \cref{sec:design-objective} that our designed objective functions, in fact, mostly depend on the uncertainty \(\sigma_{t-1}(x)\). Besides this conceptual difference, it is also more difficult for us to make use of the confidence bounds for \(f\), which is encapsulated by the black-box observation function.

Another difference is that a solution to the Gaussian process optimization problem consists of a single location \(x_T\), which coincides with the decisions made in a single round. This means, that an optimal solution to this problem corresponds to an optimal decision in a single round, which is why their simple regret is defined with respect to the time-independent \(x^\star\).
In contrast, the solution \(\theta_{1:T}\) to the reconstruction problem contains the decisions made in all rounds. Hence, we defined our cumulative regret of all decisions instead of the simple regret with respect to \(\thetaopt_T\). This is the reason, why we only analyze the NBV decision problem instead of the complete object reconstruction problem. In addition, due to the dependence of the optimal solution \(\thetaopt_T\) on the total number of rounds \(T\), we can only show pseudo-convergence to near-optimality as formalized in \cref{thm:near-optimality}, since it is unknown to us, how fast \(\thetaopt_T\) improves with increasing \(T\).

We also want to highlight the difference that their solution consists of a single decision \(x_T\), while our solution consists of multiple decisions \(\theta_{1:T}\). Constructing a solution with multiple decisions is typically done greedily to avoid the combinatorial search. Hence, our regret for the set of decisions is defined with respect to the greedy \(1-\frac{1}{e}\) approximation guarantee of an optimal solution, which is not needed for the regret of \textcite{srinivas2012informationtheoretic}. This is the reason, why we only show near-optimality.

\subsection{Multi-Agent Coverage Control \autocite{prajapat2022nearoptimal}}\label{ssec:problem-comparison-macopt}

The setting of multi-agent coverage control is tied to a more specific application which exhibits more similarities with our setting.
Note that \textcite{prajapat2022nearoptimal} additionally present safe multi-agent coverage control, but for comparability with our setting we only focus on coverage control without safety constraints.

\subsubsection{Problem}
The problem of multi-agent coverage control is to find a set of positions \(x^{1:N}\) for \(N\) agents which maximizes the coverage of some unknown density function \(f\). 
which is assumed to be sampled from a Gaussian process
\begin{equation*}
    f\colon \Domain\to\Real
    \quad\with f\sim\GP*{m(x),k(x,x')}
    .
\end{equation*}
The reward function is defined as the sum of densities \(f(x)\) covered by all \(N\) agents
\begin{equation*}
    F(x^{1:N}) = \sum_{x\in D(x^{1:N})} f(x)
\end{equation*}
with \(D(x^{1:N})\) defined as the union of sensing regions of the agents located at \(x^{1:N}\).
An optimal solution is defined as
\begin{equation*}
    X^\star = \argmax_{X\subseteq\Domain, \abs{X}\leq N} F(X)
    .
\end{equation*}

\subsubsection{Decisions \& Regret}
The proposed algorithm \MACOPT{} tries to find such a solution by sequentially deciding, at which locations \(x^{1:N}_t\) the agents should jointly measure the unknown density \(f\) next. The strategy is to use the Gaussian process model to obtain an upper confidence bound \(f_u(x)\) on the unknown density function \(f(x)\), which directly translates into an upper confidence bound \(F_u(X)\) on the reward. 
The next locations are then chosen as the maximizer of this upper bound which can be written as
\begin{equation*}
    \begin{aligned}
        x^{1:N}_t &= \arggre_{x^{1:N}\gets x^1,\dots,x^N} F_u(x^{1:N}) \\
        &= \arggre_{x^{1:N}\gets x^1,\dots,x^N} \sum_{x\in D(x^{1:N})} \paren*{\mu_{t-1}(x) + \beta_t^{1/2}\sigma_{t-1}(x)}
        .
    \end{aligned}
\end{equation*}
This is done greedily for the \(N\) agents in each round,%
\footnote{We use \(\arggre_{X\gets x^1,\dots,x^n} f(X)\) only as an abstract notation to refer to greedy maximization. The intuition is that \(X\) is constructed by sequentially choosing \(x^i\), such that \(f(X)\) is maximally increased by each \(x^i\). For the precise formulation, we refer to \textcite[Section 4.1]{prajapat2022nearoptimal}.}
since maximizing the reward over all possible sets of agent locations is a combinatorial problem.
The final set of locations \(x^{1:N}_T\) at round \(T\) is then returned as a solution to the multi-agent coverage control problem.
The simple and cumulative regret are defined as
\begin{equation*}
    \begin{aligned}
        r(t) &= \paren*{1-\frac{1}{e}}F(X^\star) - F(x^{1:N}_t)\\
        R(T) &= \sum_{t=1}^T r(t)
        .
    \end{aligned}
\end{equation*}

\subsubsection{Comparison}
In this setting, the sampled density function from the Gaussian process corresponds to the reward function for covering each individual point \(x\in\Domain\). The difference to the setting of \textcite{srinivas2012informationtheoretic} in \cref{ssec:problem-comparison-gpucb} is that the rewards of all points \(D(x^{1:N})\) inside the agents' sensing regions are collected instead of the reward only at the current points \(x^{1:N}\). Note the similarity to our setting, where the goal is to maximize the observation coverage of the object surface and for which the reward function can be rewritten as 
\begin{equation*}
    F(\theta_{1:T}) = \abs{o(\theta_{1:T};f)} = \sum_{\varphi\in o(\theta_{1:T};f)} 1
    .
\end{equation*}
Basically, each surface point on the object has reward one and the rewards of all visible points \(o(\theta_{1:T};f)\) inside the FOV are collected. The difference to our setting is how the sample function \(f\) from the Gaussian process enters the reward function. In the setting of \textcite{prajapat2022nearoptimal}, a large density directly relates to a large reward and they can apply the upper confidence bound in their decision rule for balancing exploration and exploitation of the density function as for \GPUCB{}. In \cref{ssec:problem-comparison-gpucb}, we discussed that our setting is only concerned about exploration.

Regarding the solution to the multi-agent coverage control, note that the locations \(x^{1:N}_T\) of the \(N\) agents coincide with the decisions made in a single round as in \cref{ssec:problem-comparison-gpucb}, which is different from our setting. Hence, their simple regret is defined with respect to a time-independent \(X^*\), whereas in our setting the cumulative regret is defined with respect to \(\thetaopt_T\).

Similar to our setting is that the solution consists of multiple greedily made decisions \(x^1_T,\dots,x^N_T\). Hence, the regret is similarly defined with respect to the greedy \(1-\frac{1}{e}\) approximation guarantee of an optimal solution. This is why our cumulative regret over all rounds looks identical to their simple regret, which is the regret accumulated over all agents, but in a single round.

\subsection{Interactive Bandit Optimization \autocite{chen2017interactive}}\label{ssec:problem-comparison-smucb}

The setting of interactive bandit optimization comes closest to our setting.
Note that \textcite{chen2017interactive} focus on interactive \emph{contextual} bandits with \(m\) distinct reward functions \(F_{\phi_1},\dots,F_{\phi_m}\) each with additional context information \(\phi_i\). For comparability with our setting, we ignore the context and assume that we only encounter \(m=1\) distinct function.

\subsubsection{Problem}
The problem of interactive bandit optimization is to sequentially construct a set of items \(x_{1:T}\) over \(T\) rounds which maximizes an unknown reward function \(F(x_{1:T})\) by ``interacting'' with the marginal reward function \(F(x_t\mid x_{1:t-1})\) in each round, which is assumed to be sampled from a Gaussian process
\begin{equation*}
    F(\cdot\mid\cdot)\colon\Domain\times 2^\Domain\to\Real
    \quad\with F(\cdot\mid\cdot)\sim\GP*{m\paren[\Big]{x,X},k\paren[\Big]{(x,X),(x',X')}}
    .
\end{equation*}
This marginal reward function is related to the overall reward function with
\begin{equation*}
    F(x_{1:T}) = \sum_{t=1}^T F(x_t\mid x_{1:t-1})
    .
\end{equation*}
An optimal solution for \(T\) rounds is then defined as
\begin{equation*}
    X^*_T = \argmax_{X\subseteq\Domain,\abs{X}\leq T} F(X)
    .
\end{equation*}

\subsubsection{Decision \& Regret}
The proposed algorithm \SMUCB{} tries to find such an optimal solution by sequentially deciding, which item \(x_t\) should be selected next to reveal its marginal reward. The strategy is to use the Gaussian process model to obtain an upper confidence bound \(F_u(x\mid x_{1:t-1})\) for the unknown marginal reward function \(F(x\mid x_{1:t-1})\) given the previous items \(x_{1:t-1}\). The next item is then chosen as the maximizer of this upper bound which can be written as
\begin{equation*}
    \begin{aligned}
        x_t &= \argmax_{x\in\Domain} F_u(x\mid x_{1:t-1})\\
        &= \argmax_{x\in\Domain} \mu_{t-1}(x)+\beta_t^{1/2}\sigma_{t-1}(x)
    \end{aligned}
\end{equation*}
The set of all items \(x_{1:T}\) after round \(T\) is then returned as a solution to the interactive bandit optimization problem. The cumulative regret is defined as
\begin{equation*}
    R(T) = \paren*{1-\frac{1}{e}} F(X^*_T) - F(x_{1:T})
    .
\end{equation*}

\subsubsection{Comparison}
Different from our setting is that the sample function from the Gaussian process corresponds to the marginal reward function which directly contributes to the overall reward of all items \(x_{1:T}\). This difference to our setting is therefore the same as in \cref{ssec:problem-comparison-gpucb,ssec:problem-comparison-macopt}. While they can use the upper confidence bound in their decision rule similar to \GPUCB{} for balancing exploration and exploitation, we are only concerned about exploration.

Identical to our setting is that the solution to the interactive bandit optimization problem consists of the decisions \(x_{1:T}\) made in all rounds. Hence, they defined the cumulative regret of all decisions with respect to the time-dependent optimal solution \(X^*_T\) as it is the case for us. In fact, alternating between choosing a camera location \(\theta_t\) and receiving the marginal reward \(F(\theta_t\mid\theta_{1:t-1})\) is exactly what \textcite{chen2017interactive} described as interacting with the marginal reward function.

Similarly, the solution in their setting consists of multiple decisions \(x_1,\dots,x_T\) greedily made over all rounds. Hence, their and our cumulative regrets are defined with respect to the greedy \(1-\frac{1}{e}\) approximation guarantee of an optimal solution as explained in \cref{ssec:problem-comparison-gpucb}.

\subsection{Summary}

For the comparison, we focused on two major differences in the settings.

First, the relation between the sample function of the Gaussian process and the reward function determines the design of objective functions for the algorithm.
If the sample function is monotonically related to the reward function, maximizing the reward function corresponds to maximizing the sample function by balancing exploration and exploitation. A suitable objective function can
then be obtained by replacing the sample function
with its upper confidence bound in the reward function as seen in \cref{ssec:problem-comparison-gpucb,ssec:problem-comparison-macopt,ssec:problem-comparison-smucb}. If the relationship between sample function \(f\) and reward function is more complex like in our setting through the observation function \(o(\cdot;f)\), one has to design new objective functions.

Second, the definition of a solution to the considered problem determines the formulation of the regret and the convergence guarantees.
\begin{itemize}
    \item If a solution coincides with a single decision made in a single round, then the simple regret can be formulated with respect to an optimal solution and no-regret leads to convergence to an optimal solution.
    \item If a solution consists of multiple decisions made in a single round, then the simple regret can be formulated with respect to a near-optimal solution and no-regret leads to convergence to a near-optimal solution.
    \item If a solution consists of multiple decisions made over multiple rounds, then the cumulative regret is formulated with respect to a near-optimal solution and no-regret leads to pseudo-convergence to a near-optimal decision.
\end{itemize}

\chapter{Algorithm Design}\label{chp:design}

In this chapter, we discuss different design choices for our algorithm and present the final candidates for the analysis in \cref{chp:analysis}. We provide an overview of the algorithm design in \cref{fig:design-overview}.

In \cref{sec:design-gp} we discuss the design of the Gaussian process model, which consists of the choice for the mean and covariance function. Due to the need of \(2\pi\)-periodic surface functions, we discuss different periodization techniques to obtain periodic covariance functions.

In \cref{sec:design-objective} we focus on the design of suitable objective functions and formulate specific requirements and heuristics for them based on our insights. We design multiple different types of objective functions and highlight their advantages and disadvantages.

In \cref{sec:design-algorithm} we introduce the greedy and the two-phase algorithm design as two ways for making decisions based on predefined objective functions.

Finally, in \cref{sec:design-summary} we filter all possible designs choices into a small set of candidate algorithms for the final analysis.

\begin{figure}
    \newtcolorbox{boxcontainer}[3][]{
        boxstyle={#3},
        adjusted title={#2},
        toptitle=0.75mm, bottomtitle=0.75mm,
        fonttitle=\scriptsize,
        #1
    }
    \newtcbox{\circlecontainer}[2][]{
        circlestyle=#2,
        #1
    }
    \centering
    \begin{tikzpicture}[
        remember picture,
        every node/.style={inner sep=1mm, outer sep=0mm},
        >/.tip=Stealth,
        node distance=0cm,
        font=\scriptsize,
    ]
        \node (algorithm) {
            \begin{boxcontainer}[width=10.25cm, top=0.75mm, bottom=0.75mm]{Algorithm}{green}
            \centering
            \begin{tikzpicture}[remember picture]
            \node (gp) {
                \begin{boxcontainer}[width=2.5cm, top=1mm, bottom=1mm]{Gaussian Process}{gray}
                \centering
                \begin{tikzpicture}
                \node (posterior) {
                    \begin{boxcontainer}[width=1.9cm, notitle]{}{gray}
                        \renewcommand{\baselinestretch}{1.2}\selectfont
                        posterior \\ \(\Normal*{\mu_{t-1},\Sigma_{t-1}}\)
                    \end{boxcontainer}
                };
                \node (prior) [below=0.6cm of posterior] {
                    \begin{boxcontainer}[width=1.9cm, notitle]{}{gray}
                        \renewcommand{\baselinestretch}{1.2}\selectfont
                        prior \\ \(\Normal*{\mu_0,\Sigma_0}\)
                    \end{boxcontainer}
                };
                \end{tikzpicture}
                \end{boxcontainer}
            };
            \node (confidence-bounds) [right=0.45cm of posterior] {
                \circlecontainer[height=1.4cm, bean arc]{gray}{
                    confidence \\ bounds \\ \(u_t(\varphi),l_t(\varphi)\)
                }
            };
            \node (objective) [right=0.35cm of confidence-bounds] {
                \begin{boxcontainer}[width=1.8cm]{Objective}{gray}
                    \(\Fu(\theta\mid\theta_{1:t-1})\)
                \end{boxcontainer}
            };
            \node (decision) [right=0.35cm of objective] {
                \begin{boxcontainer}[width=2.2cm]{Decision Maker}{gray}
                    \(\A(\theta_{1:t-1})\)
                \end{boxcontainer}
            };
            
            \node (mean) [below left=0.35cm and 0cm of prior.south] {
                \circlecontainer[width=0.7cm, tcbox width=forced center]{blue}{
                    \(m\)
                }
            };
            \node (kernel) [below right=0.4cm and 0cm of prior.south] {
                \circlecontainer[width=0.7cm, tcbox width=forced center]{blue}{
                    \(k\)
                }
            };
            \node (objective-design) [below=0.4cm of objective.south] {
                \circlecontainer[width=0.7cm, tcbox width=forced center]{blue}{
                    \(F_u\)
                }
            };
            \node (decision-design) [below=0.4cm of decision.south] {
                \circlecontainer[width=0.7cm, tcbox width=forced center]{blue}{
                    \(\A\)
                }
            };
            \end{tikzpicture}
            \end{boxcontainer}
        };
        \node (input) [left=0.6cm of posterior] {
            \circlecontainer{gray}{
                \(X_{1:t-1}\) \\ \(Y_{1:t-1}\)
            }
        };
        \node (output) [right=0.45cm of decision] {
            \circlecontainer[width=0.7cm, tcbox width=forced center]{gray}{
                \(\theta_t\)
            }
        };
        
        \draw[<-] (prior) -- (mean);
        \draw[<-] (prior) -- (kernel);
        \draw[<-] (decision) -- (decision-design);
        \draw[<-] (objective) -- (objective-design);
        \draw[->] (prior) -- node[right=0.5mm, align=center, font=\tiny] {Bayesian \\ inference} (posterior);
        \draw[->] (input) -- (posterior);
        \draw[->] (posterior) -- (confidence-bounds);
        \draw[->] (confidence-bounds) -- (objective);
        \draw[->] (objective) -- (decision);
        \draw[->] (decision) -- (output);
    \end{tikzpicture}
    \caption[Overview of the Algorithm Design]{
        Overview of the Algorithm Design.
        This figure expands the structure of the algorithm visualized in \cref{fig:problem-overview} with more details and highlights the main design choices (blue). Given observed and measured surface points \((X_{1:t-1},Y_{1:t-1})\), the algorithm updates its Gaussian process model with Bayesian inference to obtain confidence bounds \(u_t\) and \(l_t\) for the surface function from the posterior distribution. These are fed into an objective function \(F_u\) which is maximized by \(\A\) based on certain decision rules to obtain the NBV estimate \(\theta_t\).}
    \label{fig:design-overview}
\end{figure}


\section{Design of Gaussian Process}\label{sec:design-gp}

In this section we discuss our specific design choices for the Gaussian process model used by \(\A\) to model the uncertainty in the true surface function. In \cref{ssec:design-gp-mean} we define a straightforward mean function and in \cref{ssec:design-gp-covariance} we focus on the more complex covariance function. In particular, we present different periodization methods for the covariance function, which is necessary to obtain \(2\pi\)-periodic sample functions from the Gaussian process.

We refer back to \cref{sec:background-gp} for the necessary background knowledge on Gaussian processes and to \cref{ssec:problem-simplified-gp} in which we discussed how we use the Gaussian process model in our setting. This section is devoted to the design only. The design choices we make in this section play an important role, since they encode our prior assumptions on the unknown object shape.

\subsection{Mean Function}\label{ssec:design-gp-mean}
The mean function as defined in \cref{eq:mean-covariance-function-surface} encodes our prior knowledge on the shape and particularly the size of the object. Since we know that the object is bounded in size and the surface function must lie between \(\dmin\) and \(\dmax\), a natural choice for the mean function is
\begin{equation}\label{eq:mean-function}
    m(\varphi) = \frac{1}{2}(\dmin + \dmax)
    .
\end{equation}
This means we assume that the average object surface is close to the center between the bounds. This might not hold in practice if we set \(\dmin\) too small or \(\dmax\) too large and only deal with objects of the same size far off the average of \(\dmin\) and \(\dmax\). But without this particular prior knowledge, this mean function is a reasonable choice.

\subsection{Covariance Function}\label{ssec:design-gp-covariance}
The covariance function or \emph{kernel} of the Gaussian process as defined in \cref{eq:mean-covariance-function-surface} encodes our remaining prior knowledge on the object shape, since it measures the similarity between different points \(f(\varphi)\) and \(f(\varphi')\) on the surface at \(\varphi\) and \(\varphi'\) as discussed in \cref{ssec:background-gp-kernels}.
This is important, since it allows \(\A\) to infer information about other, potentially unknown surface points \(\varphi'\) from some known surface point \(\varphi\) based on their similarity.

\subsubsection{Design Choices}
One important class are \emph{stationary} kernels \(k(\varphi,\varphi') = k_s(\varphi-\varphi')\) as defined in \cref{eq:stationary-kernel}, which measure the similarity only based on the points relative to each other and not on the points themself. For modeling the object shape, it makes sense to use a stationary kernel, since rotating or translating the object changes the absolute position of a surface point, but not its similarity to other surface points.

In our case, we want the similarity of surface points to not only depend on \(\varphi-\varphi'\), but more specifically only on the absolute difference \(\abs{\varphi-\varphi'}\) between the polar angles of surface points. Whether one surface point is to the left or to the right of another surface point should not influence the similarity, since objects with a mirrored shape of another object can also exist. These kind of desired kernel functions with \(k(\varphi,\varphi') = k_s(\abs{\varphi-\varphi'})\) are called \emph{isotropic} as defined in \cref{eq:isotropic-kernel} and form a subclass of stationary kernels.%
\footnote{Note that stationary kernel functions defined on one-dimensional inputs like in our setting are always isotropic due to symmetry \(k(\varphi,\varphi')=k(\varphi',\varphi)\).}

The last and most important property for our kernel function is periodicity. In particular, we want a \emph{\(2\pi\)-periodic} kernel function with \(k(\varphi,\varphi') = k(\varphi,\varphi'+2\pi i)\) for all \(i\in\Integer\), since our Gaussian process should model \(2\pi\)-periodic polar functions.%
\footnote{Of course, we also want \(k\) to be continuous to model continuous surface functions. Hence, we cannot just copy \([0,2\pi]\) of an existing kernel to \([2\pi,4\pi]\).}
The intuition behind periodic kernels is that the similarity between surface points is periodically ``wrapped around'' every \(2\pi\) as seen in \cref{fig:design-gp-periodic}. 
For example, the surface point at \(\varphi = 0\) should not only be similar to \(\varphi' = 0.1\), but also to \(\varphi' = 2\pi - 0.1\). In fact, since these two pairs of surface points are equidistant from each other, we want an \(2\pi\)-periodic isotropic stationary kernel to return the same similarity value for them.

\begin{figure}
    \centering
    \begin{subfigure}{\linewidth}
        \centering
        \includegraphics[width=0.33\linewidth]{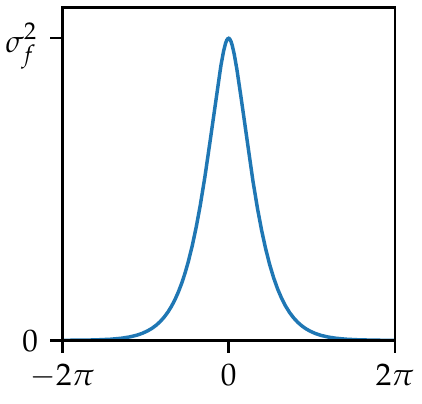}%
        \includegraphics[width=0.33\linewidth]{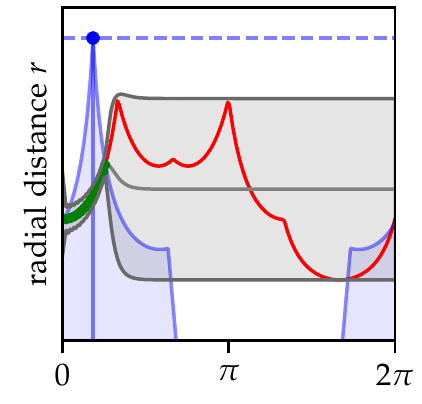}%
        \includegraphics[width=0.33\linewidth]{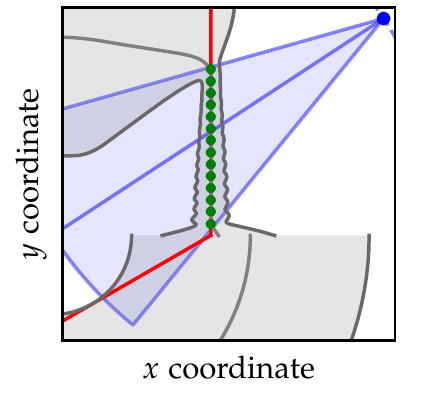}%
        \caption{non-periodic kernel}
        \label{fig:design-gp-nonperiodic}
    \end{subfigure}
    \par\smallskip
    \begin{subfigure}{\linewidth}
        \centering
        \includegraphics[width=0.33\linewidth]{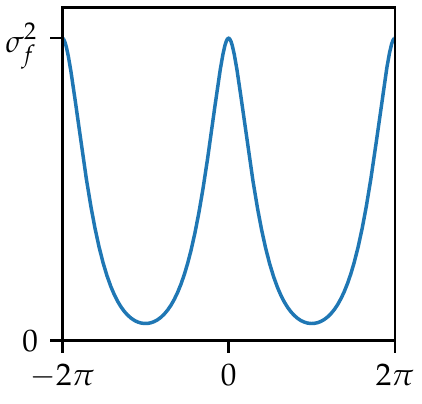}%
        \includegraphics[width=0.33\linewidth]{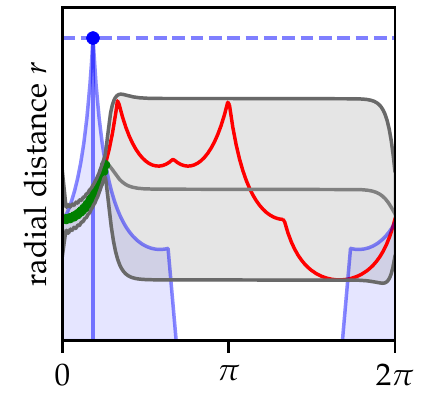}%
        \includegraphics[width=0.33\linewidth]{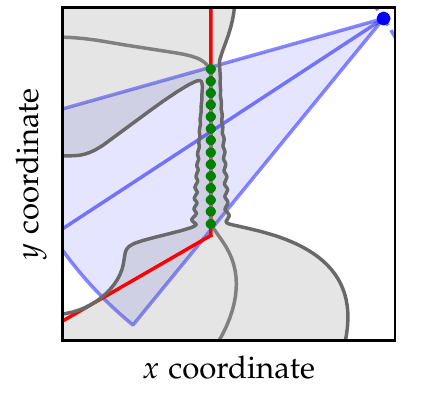}%
        \caption{periodic kernel}
        \label{fig:design-gp-periodic}
    \end{subfigure}
    \caption[Non-Periodic vs.\ Periodic Kernel]{
        Non-Periodic vs.\ Periodic Kernel.
        The left figures show the kernel functions plotted over the distance \(r=\varphi-\varphi'\) instead of the absolute distance \(\abs{\varphi-\varphi'}\) as we described in \cref{ssec:background-gp-kernels} for stationary kernels. The middle figures show the object surface function (red) and the posterior confidence region (gray) in the polar world after making measurements (green) of the object surface close to the wrap-around at \(0\to2\pi\). The right figures show the same confidence region in the real world focused on the wrap-around. Observe in (a) that the confidence bounds at \(2\pi\) are not influenced by the measurements at \(0\) due to the vanishing similarity between points close to \(2\pi\) and close to \(0\) as imposed by the non-periodic kernel. This is resolved in (b) by using a periodic kernel function which wraps around the similarity and causes the confidence bounds to adapt according to the measurements at \(0\).
    }
    \label{fig:design-gp-nonperiodic-periodic}
\end{figure}

In summary, we are looking for a stationary, more specifically isotropic, and \(2\pi\)-periodic kernel. We define
\begin{equation}\label{eq:distance-euclidean}
    r \defeq \norm{\varphi - \varphi'}_2 = \abs{\varphi - \varphi'}
\end{equation}
and simplify our kernel function to a single-argument function \(k(r)\) with \(r\in[0,2\pi]\) as in \cref{eq:stationary-kernel-univariate}.
The length scale parameter \(l\) and standard deviation of the Gaussian process \(\sigma_f\) are explained in \cref{eq:stationary-kernel-parameters}.

\subsubsection{Non-periodic Kernels}
We first discuss non-periodic kernels, because they are the most commonly used ones and are backed up with extensive theory \autocites[Chapter 13]{scholkopf2002learning}[Chapter 4]{rasmussen2005gaussian}. Later, we use them as the foundation for designing periodic kernels.

The \emph{RBF kernel} from \cref{eq:rbf-kernel} is defined as
\begin{equation*}
    k_{RBF}(r) \defeq \sigma_f^2 \exp*{-\frac{r^2}{2l^2}}
    .
\end{equation*}
The infinite smoothness of the RBF kernel leads to overly smooth sample functions, which restricts us to smooth target objects due to \cref{item:simp-object-sampled}. Other realistic, but non-smooth object shapes such as rectangles are not captured by this kernel function as shown in \cref{fig:design-gp-rbf}.

A more suitable candidate is the \emph{\matern{} kernel}
\begin{equation*}
    k_M(r) \defeq \sigma_f^2 \frac{2^{1-\nu}}{\Gamma(\nu)}\paren*{\sqrt{2\nu}\frac{r}{l}}^\nu K_\nu\paren*{\sqrt{2\nu}\frac{r}{l}}
\end{equation*}
from \cref{eq:matern-kernel}. The additional parameter \(\nu > 0\) allows us to adapt the smoothness of the functions sampled from the corresponding Gaussian process and is more discussed in \cref{ssec:background-gp-kernels}. By choosing \(\nu\) small enough, it is possible to model highly non-smooth functions with sharp surface edges as depicted in \cref{fig:design-gp-matern}.

\begin{figure}
    \centering
    \begin{subfigure}{\linewidth}
        \centering
        \includegraphics[width=0.33\linewidth]{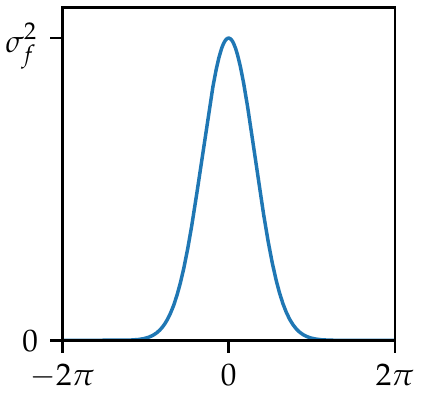}%
        \includegraphics[width=0.33\linewidth]{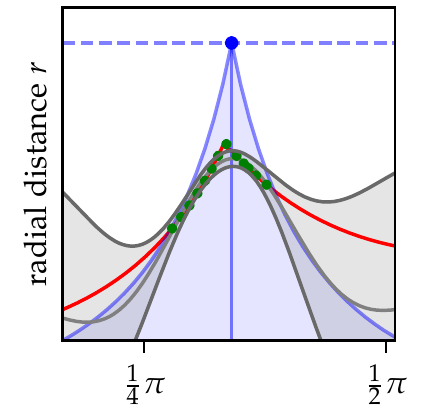}%
        \includegraphics[width=0.33\linewidth]{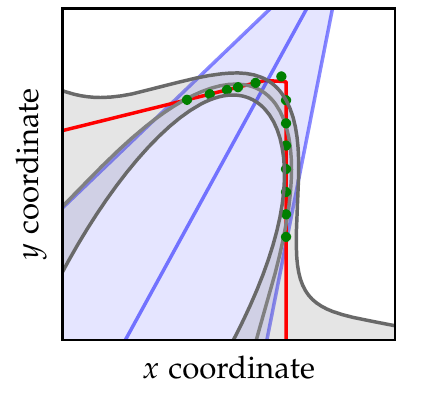}%
        \caption{RBF kernel}
        \label{fig:design-gp-rbf}
    \end{subfigure}
    \par\smallskip
    \begin{subfigure}{\linewidth}
        \centering
        \includegraphics[width=0.33\linewidth]{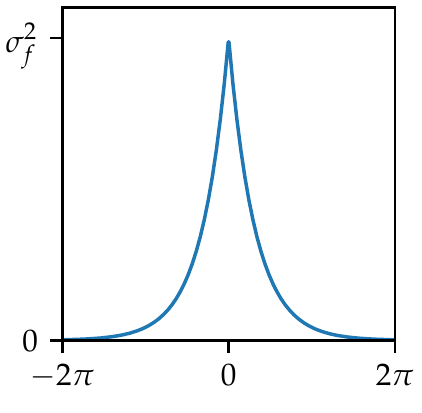}%
        \includegraphics[width=0.33\linewidth]{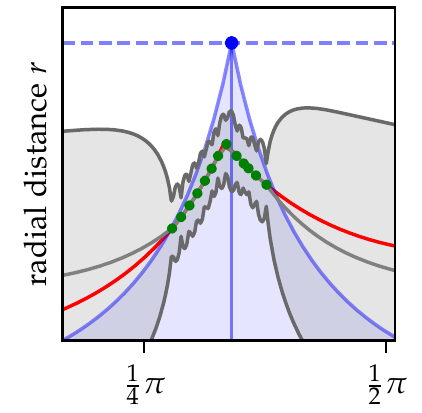}%
        \includegraphics[width=0.33\linewidth]{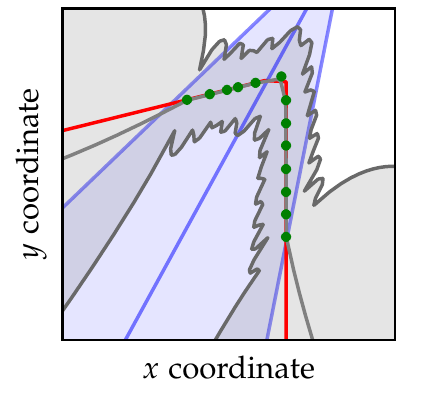}%
        \caption{\matern{} kernel with \(\nu=1/2\)}
        \label{fig:design-gp-matern}
    \end{subfigure}
    \caption[RBF vs.\ \matern{} Kernel]{
        RBF vs.\ \matern{} Kernel. 
        The left figures show the kernel functions plotted over the distance \(r=\varphi-\varphi'\). The middle figures show the object surface function (red) and the posterior confidence region (gray) in the polar world after measuring measurements (green) of the sharp edge of the object surface. The right figures show the same confidence region in the real world. Observe in (a) that the confidence bounds are very smooth under the assumption that the surface functions are sampled from a Gaussian process based on the smooth RBF kernel. Hence, they do not properly encapsulate the actual, less smooth surface function. This is resolved in (b) by using the non-smooth \matern{} kernel with \(\nu=1/2\).
    }
    \label{fig:design-gp-rbf-matern}
\end{figure}

Hence, the goal is to periodize the \matern{} kernel to capture realistic \(2\pi\)-periodic surface functions. In the following, we discuss different periodization techniques to transform existing non-periodic kernels into periodic ones.

\subsubsection{Periodization by Warping}

This method was proposed by \textcite[Chapter 5.4.3]{mackay1998introduction} and uses an arbitrary non-linear mapping \(u\) to define a new kernel function
\begin{equation*}
    k_u(\varphi,\varphi') \defeq k(u(\varphi),u(\varphi')) \quad\with u\colon\Real^n \to \Real^m,\varphi\mapsto u(\varphi)
    .
\end{equation*}
This simply corresponds to feature composition and preserves the positive definiteness of the kernel. To periodize a kernel defined for one-dimensional \(\varphi\in\Real\), \citeauthor{mackay1998introduction} used
\begin{equation*}
    u(\varphi) = \begin{pmatrix} \cos(\varphi) \\ \sin(\varphi) \end{pmatrix}
\end{equation*}
to map each point on the real line to a point on the unit circle. The warped kernel \(k_u\) naturally is periodic, since the input \(u(\varphi)\) and \(u(\varphi')\) to the kernel function becomes \(2\pi\)-periodic. An isotropic stationary kernel, which only depends on \(r = \norm{\varphi-\varphi'}_2\), depends now on
\begin{equation}\label{eq:distance-on-circle}
    r_p = \norm{u(\varphi)-u(\varphi')}_2
    = 2\abs*{\sin*{\frac{\varphi-\varphi'}{2}}}
    = 2\abs*{\sin*{\frac{r}{2}}}
    .
\end{equation}
Intuitively, instead of computing the similarity between two points based on their absolute distance on \(\Real\) as defined in \cref{eq:distance-euclidean}, it is computed based on their Euclidean distance on the unit circle as in \cref{eq:distance-on-circle}. Hence, the similarity naturally wraps-around every \(2\pi\) as visualized in \cref{fig:design-gp-warping-wraparound}.

\begin{figure}
    \centering
    \includegraphics[width=0.6\linewidth]{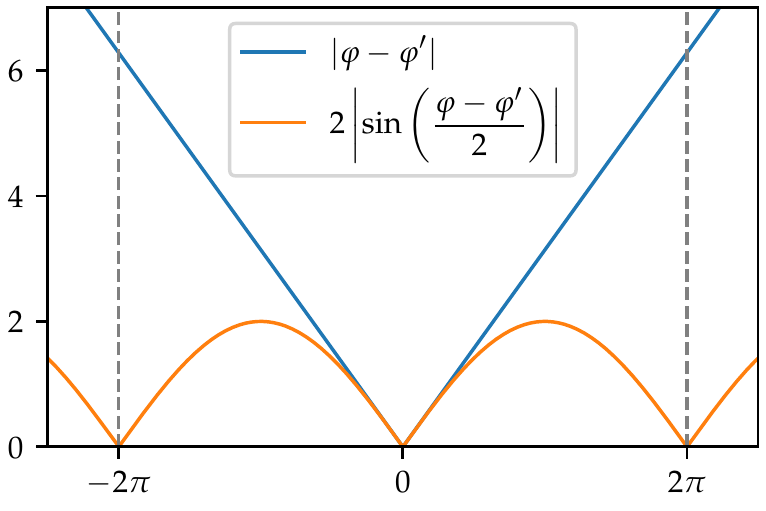}
    \caption[Absolute Distance on \(\Real\) vs.\ Euclidean Distance on Unit Circle]{
        Absolute Distance on \(\Real\) vs.\ Euclidean Distance on Unit Circle.
        Observe how the absolute distance (blue) is monotonically increasing to the left and right side, while the Euclidean distance on the unit circle (orange) naturally reaches its maximum at \(\pi\) and decreases again towards \(2\pi\).
        The dotted lines (gray) indicate our sampling region \(\varphi-\varphi'\in[-2\pi,2\pi]\).
    }
    \label{fig:design-gp-warping-wraparound}
\end{figure}

\begin{definition}[Periodization by Warping]\label{def:periodization-warping}
    Let \(k(r)\) be a stationary kernel defined on \(\Domain \subseteq \Real\).%
    \footnote{This periodization technique is only possible on \(\Real\) as far as we know.}
    Its \(2\pi\)-periodization by warping is defined as
    \begin{equation*}
        \kpwarp(r) \defeq k\paren*{2\abs*{\sin*{\frac{r}{2}}}}
        .
    \end{equation*}
\end{definition}

\citeauthor{mackay1998introduction} used this method to derive the most commonly found periodic kernel
\begin{equation*}
    \kpwarp[RBF](r) = k_{RBF}\paren*{2\abs*{\sin*{\frac{r}{2}}}}
    = \sigma_f^2 \exp*{-\frac{2\sin*{\frac{r}{2}}^2}{l^2}}
\end{equation*}
from the RBF kernel defined in \cref{eq:rbf-kernel}. Confusingly for our setting, this is called \emph{the periodic kernel}. We however use ``periodic kernel'' to refer to a general periodic kernel.

We continue applying this transformation to the \matern{} kernel defined in \cref{eq:matern-kernel} and obtain the \emph{periodic \matern{} kernel by warping} given as
\begin{equation}\label{eq:periodic-matern-kernel-warped}
    \kpwarp[M](r) = k_M\paren*{2\abs*{\sin*{\frac{r}{2}}}}
\end{equation}
which is visualized in \cref{fig:design-gp-periodic-matern}.

\subsubsection{Periodization by Periodic Summation}
A straightforward periodization technique was proposed by \textcite[Chapter 4.4.4]{scholkopf2002learning} based on the infinite sum of periodically shifted kernels.

\begin{definition}[Periodization by Periodic Summation]\label{def:periodization-infinite-sum}
    Let \(k(r)\) be a stationary kernel defined on \(\Domain \subseteq \Real\). Its \(2\pi\)-periodization by periodic summation is defined as
    \begin{equation*}
        \kpsuminf(r) \defeq \frac{\sigma_f^2}{C} \sum_{i\in\Integer} k(r+2\pi i)
    \end{equation*}
    with \(C\) ensuring \(\kpsuminf(0) = \sigma_f^2\).
\end{definition}

Compared to the previous method, periodicity is not introduced in the kernel function argument by feature composition, but by the additive combination of shifted kernels as visualized in \cref{fig:design-gp-periodic-summation}.
According to \textcite[Section 3]{borovitskiy2020matern}, positive definiteness is preserved by periodic summation, since it does not change the positivity of the Fourier transform of the kernel function, which is equivalent to positive definiteness by Bochner's theorem.%
\footnote{A discussion on Bochner's theorem and the Fourier transform of a kernel function is provided in \cref{remark:stationary-kernel-eigenfunctions} in another context.}
\TODO{REMOVABLE CHECK because of linearity of FT (summation) and additional exponential factor (from shifting)? And hence S(w) is positive, i.e. positive finite measure, i.e. positive definite by Bochner. But exponential factor is complex and can be negative...hmmm}

Unfortunately, a closed-form solution for the infinite summation is not always given and we have to approximate it by truncating the sum after finite terms.

\begin{figure}
    \centering
    \includegraphics[width=0.6\linewidth]{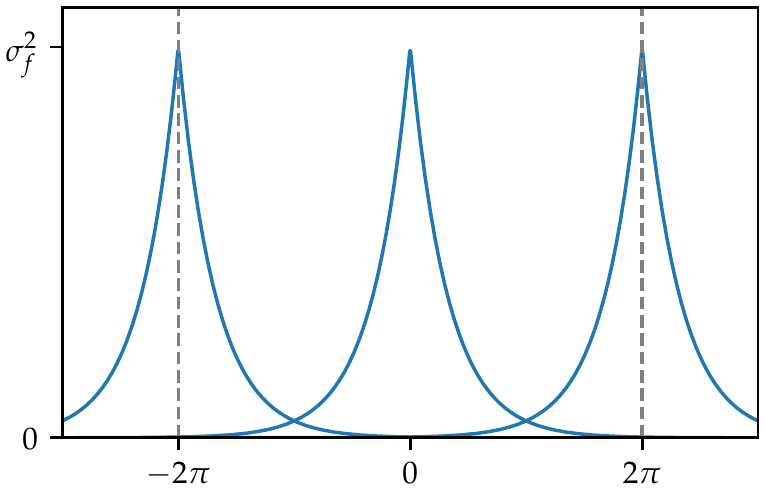}
    \caption[Periodic Summation of Kernels]{
        Periodic Summation of Kernels.
        This figure visualizes the idea of periodizing a stationary kernel through periodic summation. By placing copies of the stationary kernel function at multiples of \(2\pi\), one can simulate the periodical wrap-around of the similarity.
        The dotted lines (gray) indicate our sampling region \(\varphi-\varphi'\in[-2\pi,2\pi]\).
    }
    \label{fig:design-gp-periodic-summation}
\end{figure}

\begin{definition}[\(\kappa\)-approximative Periodization by Periodic Summation]\label{def:periodization-finite-sum}
    Let \(k(r)\) be a stationary kernel defined on \(\Domain \subseteq \Real\). Its \(\kappa\)-approximative \(2\pi\)-periodization by periodic summation is defined as
    \begin{equation*}
        \kpsumfin{\kappa}(r) \defeq \frac{\sigma_f^2}{C}\sum_{i=-\kappa}^\kappa k(r+2\pi i)
    \end{equation*}
    with \(C\) ensuring \(\kpsumfin{\kappa}(0) = \sigma_f^2\).
\end{definition}
This approximation is close to a truly periodized kernel, when the to be periodized kernel function decays fast enough. For example, consider the periodized \matern{} kernel
\begin{equation*}
    \begin{alignedat}{7}
        &\kpsuminf[M](0)\\
        &= \frac{\sigma_f^2}{C}\paren{\dots &&+ k_M(-4\pi) &&+ k_M(-2\pi) &&+ k_M(0) &&+ k_M(2\pi) &&+ k_M(4\pi) &&+ \dots}\\
        &= \frac{\sigma_f^2}{C}\paren{\dots &&+ 8.0 \cdot 10^{-9} &&+ 2.2 \cdot 10^{-4} &&+ \quad 1 &&+ 2.2 \cdot 10^{-4} &&+ 8.0 \cdot 10^{-9} &&+ \dots}
    \end{alignedat}
\end{equation*}
with \(\sigma_f=1, l=1\) and \(\nu=1.5\). Since each of the terms only significantly contribute to the sum within \(\pm2\pi\) around their own peak, it allows us to drop all additional terms with negligibly small values within the domain \([0,2\pi]\) of \(r\) given in \cref{eq:distance-euclidean}. However, we keep the finite sum symmetric for technical reasons as described in \cref{remark:symmetric-periodic-summation}. We suspect that the finite sum of periodically shifted kernels preserves positive semi-definiteness as it is the case for the infinite sum from \cref{def:periodization-infinite-sum}, but it remains an open question for us.
\TODO{REMOVABLE CHECK maybe similar reasoning as above?}

The \emph{periodic \matern{} kernel by 1-approximative periodic summation} is given as
\begin{equation}\label{eq:periodic-matern-kernel-finite-sum}
    \kpsumfin[M]{1}(r) = \frac{\sigma_f^2}{C}\paren*{k_M(r-2\pi) + k_M(r) + k_M(r+2\pi)}
\end{equation}
which is visualized in \cref{fig:design-gp-periodic-matern}.

\begin{remark}\label{remark:symmetric-periodic-summation}
    Kernel functions must satisfy symmetry and positive semi-definiteness to form a valid covariance function as defined in \cref{eq:mean-covariance-function}. Given a stationary kernel function \(k(r)\), symmetry is satisfied when periodizing the kernel by periodic summation even with a non-symmetric sum
    \begin{equation*}
        \kpsumfin{\kappa}(x,x') = \sum_{i=0}^\kappa k(\abs{x-x'}+2\pi i)
    \end{equation*}
    with \(\kpsumfin{\kappa}(x,x') = \kpsumfin{\kappa}(x',x)\).%
    \footnote{In fact, \(k_p(r) \defeq k(r) + k(r+2\pi)\) is a perfectly periodic kernel with \(r\in[0,2\pi]\).}
    However, it is often the case that libraries only provide an interface for \(k(x,x')\) instead of \(k(r)\).%
    \footnote{For example, the implementation of the \matern{} kernel in the \href{https://scikit-learn.org/stable/modules/generated/sklearn.gaussian_process.kernels.Matern.html}{scikit-learn library}.}
    To still use their implementation, one can define the periodic kernel as
    \begin{equation*}
        \kpsumfin{\kappa}(x,x') = \sum_{i=-\kappa}^\kappa k(x+2\pi i,x') = \sum_{i=-\kappa}^\kappa k(\abs{x-x'+2\pi i})
        .
    \end{equation*}
    One can verify by case distinction on the sign of \(x-x'\) that
    \begin{equation*}
        \begin{aligned}
            k(x+2\pi i,x') + k(x-2\pi i,x')
            &= k(\abs{x-x'+2\pi i}) + k(\abs{x-x'-2\pi i})
            \\ &= k(\abs{x-x'}+2\pi i) + k(\abs{x-x'}-2\pi i)
        \end{aligned}
    \end{equation*}
    is satisfied for all \(i\). Therefore, it follows that the sum of this expression over \(i\) is symmetric.
\end{remark}


To our luck, \textcite[Eq. 47]{borovitskiy2020matern} provide closed-form expressions for the infinite sum of \matern{} kernels
\begin{equation}\label{eq:periodic-matern-kernel-infinite-sum}
    \kpsuminf[M_\nu](r) = \frac{\sigma_f^2}{C} \sum_{i\in\Integer} k_{M_\nu}(r+2\pi i)
\end{equation}
as in \cref{def:periodization-infinite-sum} for half-integer smoothness parameters \(\nu=n+\frac{1}{2}, n\in\Natural\). We refer to them as \emph{periodic \matern{} kernels by periodic summation} and list only the most important ones:%
\footnote{In the referenced paper, the \matern{} kernel is periodized on \([0,1]\). To obtain a \matern{} kernel periodized on \([0,2\pi]\), we instantiate \(\inst{x}{\frac{\varphi}{2\pi}}\) and \(\inst{\kappa}{\frac{l}{2\pi}}\).}
\begingroup
    \allowdisplaybreaks
    \begin{align*}
        \kpsuminf[M_{1/2}](r) &= \frac{\sigma_f^2}{C_{1/2}} \cosh(u)\\
        u &=\textstyle \frac{r-\pi}{l}\\
        \kpsuminf[M_{3/2}](r) &= \frac{\sigma_f^2}{C_{3/2}} \paren*{a_0\cosh(u) + a_1 \sinh(u)}\\
        u &=\textstyle \sqrt{3}\frac{r-\pi}{l}\\
        a_0 &=\textstyle \frac{\pi l}{6}\paren*{\frac{l}{\pi} + \sqrt{3}\coth*{\frac{\sqrt{3}\pi}{l}}}\\
        a_1 &=\textstyle -\frac{l^2}{6}
        \refstepcounter{equation}\tag{\theequation}\label{eq:periodic-matern-kernel-infinite-sum-examples}\\
        \kpsuminf[M_{5/2}](r) &= \frac{\sigma_f^2}{C_{5/2}} \paren*{a_0\cosh(u) + a_1 u\sinh(u) + a_2 u^2\cosh(u)}\\
        u &=\textstyle \sqrt{5}\frac{r-\pi}{l}\\
        a_0 &=\textstyle -\frac{\pi^2 l^2}{200}\paren[\Big]{-5+\frac{3l^2}{\pi^2}+\frac{3\sqrt{5}l}{\pi}\coth*{\frac{\sqrt{5}\pi}{l}} + 10\coth*{\frac{\sqrt{5}\pi}{l}}^2}\\
        a_1 &=\textstyle \frac{\pi l^3}{100}\paren*{\frac{3l}{200} + \sqrt{5}\coth*{\frac{\sqrt{5}\pi}{l}}}\\
        a_2 &=\textstyle -\frac{l^4}{200}
    \end{align*}
\endgroup
with \(C_\nu\) ensuring \(\kpsuminf[M_\nu](0) = \sigma_f^2\) and \(\kpsuminf[M_{1/2}](r)\) visualized in \cref{fig:design-gp-periodic-matern}.

\subsubsection{Periodization by Truncation}
The last technique is inspired by the work of \textcite{bachmayr2018representations} and continues the idea behind approximative periodic summation from \cref{def:periodization-finite-sum}. The approximation error comes from the infinite support%
\footnote{The support of a function \(f\colon\Domain\to\Real\) is defined as \(\supp(f) \defeq \set{x\in\Domain\mid f(x)\neq0}\).}
of the periodized kernel functions, although the error is negligibly small. This error can be completely eliminated when using kernels with finite support, which allows us to write the infinite sum as a finite sum of shifted kernels. Such a kernel with finite support can be obtained by truncating existing kernels to a finite domain. The difficulty is to preserve the existing smoothness in the kernel. \textcite[Section 5.1]{bachmayr2018representations} provides a suitable truncation function
\newcommand{\omegafunc}[1]{\omega\paren*{\frac{#1}{\Delta c}}}
\begin{equation*}
    \begin{aligned}
        t_{c_1 \to c_2}(r) \defeq \frac{\omegafunc{c_2 - \abs{r}}}{\omegafunc{c_2 - \abs{r}} + \omegafunc{\abs{r} - c_1}}
        &\quad\with \omega(r) \defeq \begin{cases}
            \exp*{-r^{-1}}, &r > 0\\
            0, &r \leq 0
        \end{cases}
        \\ &\quad\text{and } \Delta c = c_2 - c_1
    \end{aligned}
\end{equation*}
which smoothly shrinks from 1 to 0 from \(c_1\) to \(c_2\) and \(-c_1\) to \(-c_2\) as depicted in \cref{fig:design-gp-truncation-function}. It guarantees smoothness \(t_{c_1 \to c_2} \in C^\infty(\Real)\) and ensures finite support \(\supp(t_{c_1 \to c_2}) = [-c_2,c_2]\).

\begin{figure}
    \centering
    \begin{subfigure}{0.45\linewidth}
        \centering
        \includegraphics{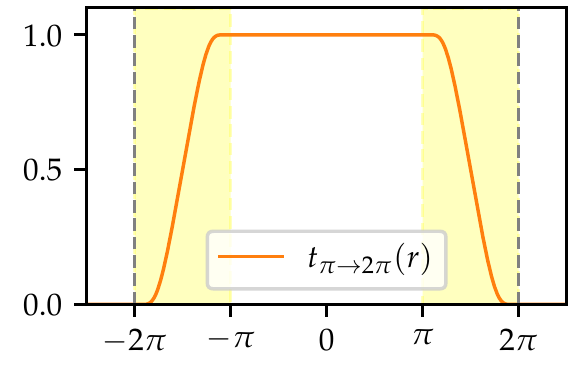}%
        \caption{truncation function}
        \label{fig:design-gp-truncation-function}
    \end{subfigure}%
    \quad%
    \begin{subfigure}{0.45\linewidth}
        \centering
        \includegraphics{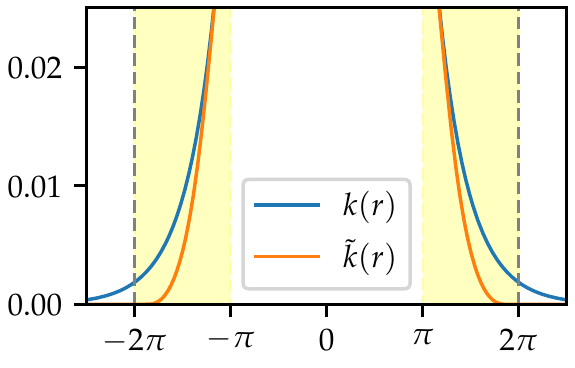}
        \caption{original vs.\ truncated kernel}
        \label{fig:design-gp-truncated-kernel}
    \end{subfigure}
    \caption[Truncation of Kernel Function]{
        Truncation of Kernel Function.
        (a) shows the truncation function (orange) smoothly transitioning between \(\pi\) and \(2\pi\) and similarly between \(-\pi\) and \(-2\pi\) (yellow) from \(1\) to \(0\). (b) compares the truncated kernel function (orange) with the original kernel function (blue) from a strongly zoomed-in perspective. Observe how the truncated kernel achieves finite support \([-2\pi,2\pi]\) and at the same time only marginally differs from the original kernel.
        The dotted lines (gray) indicate our sampling region \(\varphi-\varphi'\in[-2\pi,2\pi]\).
    }
\end{figure}

\begin{definition}[Periodization by Truncation]\label{def:periodization-truncation}
    Let \(k(r)\) be a stationary kernel defined on \(\Domain \subseteq \Real\). Its \(2\pi\)-periodization by truncation is defined as
    \begin{equation*}
        \kptrunc{c_1 \to c_2}(r) \defeq \frac{\sigma_f^2}{C}\sum_{i=-\kappa}^\kappa \ktrunc(r+2\pi i)
        \quad\with \ktrunc(r) \defeq t_{c_1 \to c_2}(r)k(r)
    \end{equation*}
    with \(C\) ensuring \(\kptrunc{c_1 \to c_2}(0) = \sigma_f^2\) and \(\kappa = \ceil{\frac{c_2}{2\pi}}\) ensuring perfect periodization.
\end{definition}

Besides being perfectly periodic, this kernel hardly differs from the one obtained by \(\kappa\)-approximative periodic summation given in \cref{eq:periodic-matern-kernel-finite-sum}, since the difference between the original and truncated kernel function is negligible as seen in \cref{fig:design-gp-truncated-kernel}.
However, it is unknown to us whether the sum of periodically shifted and truncated kernel functions remains positive semi-definite on complete \([0,2\pi]\).
\TODO{REMOVABLE CHECK}

The \emph{periodic \matern{} kernel by truncation} is defined as
\begin{equation}\label{eq:periodic-matern-kernel-truncated}
    \kptrunc[M]{\pi\to2\pi}(r) \defeq \frac{\sigma_f^2}{C} \paren*{\ktrunc_M(r-2\pi) + \ktrunc_M(r+2\pi i) + \ktrunc_M(r+2\pi)}
    .
\end{equation}

\subsubsection{Summary}
We discussed three different techniques and one approximation to periodize non-periodic kernel functions. \cref{def:periodization-warping} uses feature composition to map the kernel function inputs to the unit circle before feeding them to the kernel function. \cref{def:periodization-infinite-sum} is based on the periodic summation of kernels, while \cref{def:periodization-finite-sum} approximates this infinite sum with finitely many terms. Finally, \cref{def:periodization-truncation} eliminates the approximation error of the finite sum by truncating the kernel function to a finite support.

\begin{figure}
    \centering
    \includegraphics[width=0.7\linewidth]{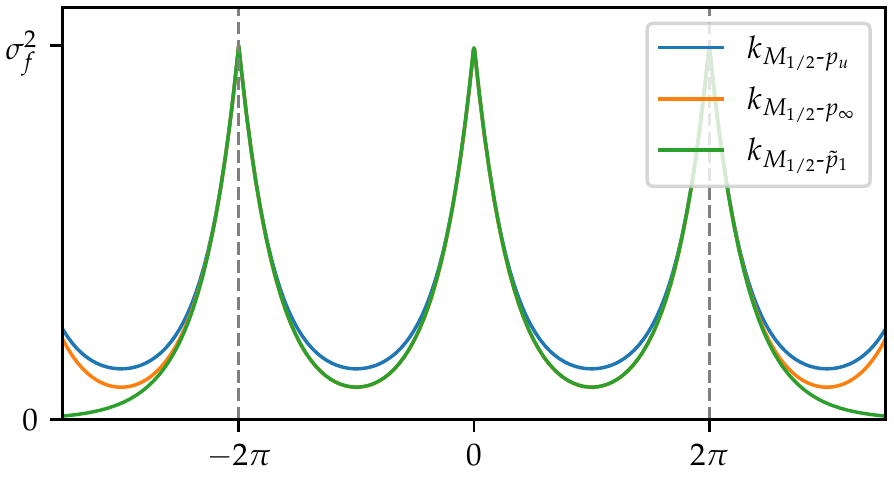}
    \caption[Comparison of Periodic \matern{} Kernels]{
        Comparison of Periodic \matern{} Kernels.
        This figure visualizes the periodic \matern{} kernels obtained by warping (blue), infinite periodic summation (orange) and finite periodic summation (green). The one obtained by truncation is not visualized here, since there is no visible difference to the one obtained by finite periodic summation.
        Observe how warping produces a periodic kernel which is a bit ``smoother'' between multiples of \(2\pi\). The reason is that periodicity is introduced in the input to the original kernel function through the smooth, sinusoidal Euclidean distance on the unit circle given in \cref{eq:distance-on-circle} and shown in \cref{fig:design-gp-warping-wraparound}, whereas periodic summation introduces periodicity by additive combination of the outputs.
        One can also see that the finite periodic summation only ensures (1-approximative) periodicity on \([-2\pi,2\pi]\). This is sufficient to us, since our sampling region is \(\varphi-\varphi'\in[-2\pi,2\pi]\) indicated by the dotted lines (gray).
    }
    \label{fig:design-gp-periodic-matern}
\end{figure}

They all preserve or most likely preserve positive definiteness \TODO{REMOVABLE CHECK with above} and provide similar behavior as shown in \cref{fig:design-gp-periodic-matern}, but for our final analysis, we use the periodic \matern{} kernel \(\kpsuminf[M_\nu]\) with \(\nu=n+\frac{1}{2},n\in\Natural\) obtained by infinite summation as defined in \cref{eq:periodic-matern-kernel-infinite-sum}. The reason is that \textcite{borovitskiy2020matern} provides further properties such as the spectral density for these kernels, which we require for our analysis.


\section{Design of Objective Functions}\label{sec:design-objective}

This section is devoted to various designs of the objective function and an initial screening of them to find suitable candidate objective functions for the later analysis. In the previous \cref{ssec:problem-simplified-objective} we described the unknown true objective function, which should be estimated by our designed objective function. \cref{ssec:design-objective-requirements} provides a list of formal requirements and important heuristics for efficient and theoretically founded objective functions, which we use for the initial screening. In \cref{ssec:design-objective-observation}, we start designing an objective function in the naive way based on the observed length of the confidence bounds. \cref{ssec:design-objective-length-vs-area} discusses the issues in this design and contrasts it with area-based objective functions. Based on our insights, we present new designs in \cref{ssec:design-objective-intersection,ssec:design-objective-confidence,ssec:design-objective-uncertainty} each with a different tradeoff between accuracy and simplicity and summarize them in \cref{ssec:design-objective-summary}.

\subsection{Requirements}\label{ssec:design-objective-requirements}

Through the design of multiple candidate objective functions, their theoretical analysis in \cref{chp:analysis} and their evaluation in a simulation framework described in \cref{chp:experiments}, we gained valuable insights into the requirements for an efficient and theoretically founded objective function. We categorize these requirements into \textit{required}, which are properties required by the theoretical analysis, \textit{important}, which are heuristics important for the efficiency of an objective function in practice, and \textit{useful}.

\begin{enumerate}[label=(R\arabic*), ref=Req.~\arabic*]
    \item \label{item:req-necessary-bound} Necessary upper bound
    \begin{discussion}{required}
        We define a \emph{necessary upper bound} as the condition
        \begin{equation*}
            F(\thetagre_t \mid \theta_{1:t-1}) \leq \Fu(\thetagre_t \mid \theta_{1:t-1})
            \quad\text{ for all } t \geq 1
        \end{equation*}
        which is required to guarantee the assumption of \cref{lem:lemma-2-2} together with a reasonable \(\A\) which finds the NBV estimate by maximizing \(\Fu\) as discussed in \cref{remark:lemma-2-2-assumption}. A \emph{sufficient upper bound}, which refers to
        \begin{equation*}
            F(\theta \mid \theta_{1:t-1}) \leq \Fu(\theta \mid \theta_{1:t-1})
            \quad\text{ for all } \theta\in\Camspace \text{ and } t \geq 1
            ,
        \end{equation*}
        is a sufficient criterion for being a necessary upper bound.
    \end{discussion}
    
    \item \label{item:req-closed-form} Closed-form expression
    \begin{discussion}{required}
        A closed-form expression of \(\Fu\), which is ideally simple in its form, is required to derive theoretical guarantees for the behavior of \(\A\).
    \end{discussion}
    
    \item \label{item:req-real-world} Real world information
    \begin{discussion}{important}
        Real world information is important for objective functions which are designed based on the geometry of the real world. More specifically, they must be aware of the significant deformations between the polar world, in which the surface function and its confidence bounds are defined, and the real world, in which the actual object resides, as visualized earlier in \cref{fig:problem-worlds}.
        For example, objective functions defined in terms of the area between the confidence bounds must compute this area in the real world. If such objective functions are defined in the polar world, they do not have real world information. Later, we show in \cref{ssec:design-objective-length-vs-area} how FOV information can be included using the area of circle sectors.
    \end{discussion}
    
    \item \label{item:req-marginal} Marginal information
    \begin{discussion}{important}
        Marginal information refers to the ability of objective functions to take previously observed surface points into account. This is important to estimate the number of only newly observed points and to properly estimate the marginal utility, which is our true objective function. An objective function which returns an upper bound for the number of all instead of newly observed surface points does not have marginal information.
    \end{discussion}
    
    \item \label{item:req-fov} FOV information
    \begin{discussion}{important}
        FOV information refers to the ability of objective functions to take the shape of the camera's FOV into account. This is important to only estimate the number of surface points inside the FOV. In particular, it captures the information that the farther the camera is from the object, the more surface points can be potentially observed. Without this information an objective function might excessively overestimate the actual number of observed surface points and provide false information to \(\A\).
    \end{discussion}
    
    \item \label{item:req-occlusion} Occlusion information
    \begin{discussion}{useful}
        Occlusion information refers to the ability of objective functions to take the occlusion between different surface points into account. This is useful for providing more accurate estimates to \(\A\). The difficulty is to find a closed-form expression required by \cref{item:req-closed-form} which captures this information. 
    \end{discussion}
\end{enumerate}

\subsection{Observation-based Objective Functions}\label{ssec:design-objective-observation}

We start designing objective functions based on the number of \textit{observed} points on the confidence bounds. This seems to be the natural way to estimate the number of newly observed points on the surface function, but it appears to be the naive way as described afterwards in \cref{ssec:design-objective-length-vs-area}.

\subsubsection{\algOS{}}\label{sssec:design-objective-os}
This objective function counts the number of observed points on the object surface and is defined as
\begin{equation*}
    \Fu[OS](\theta \mid \theta_{1:t-1})
    \defeq \abs{o(\theta)} = \text{number of points on \(f\) observed from \(\theta\)}
    .
\end{equation*}
This actually corresponds to the utility \(F(\set{\theta})\) as defined in \cref{eq:true-utility} and requires complete knowledge about the shape of the target object similar to the marginal utility, the true objective function. Hence, \algOS{} is only of theoretical interest.

\begin{itemize}
    \item[\cmark] (\cref{item:req-necessary-bound}) It provides a sufficient upper bound, since the number of observed points is always larger than the number of newly observed points.
    \item[\xmark] (\cref{item:req-closed-form}) A closed-form expression is not known to us due to its dependence on the black-box observation function from \cref{eq:observation-function} and in particular the unknown true object shape.
    \item[\cmark] (\cref{item:req-real-world}) It has real world information by definition.
    \item[\xmark] (\cref{item:req-marginal}) It does not have marginal information, since it counts all observed surface points and not only the newly observed ones.
    \item[\cmark] (\cref{item:req-fov}) It has FOV information by definition.
    \item[\cmark] (\cref{item:req-occlusion}) It has occlusion information by definition.
\end{itemize}

\subsubsection{\algOCU{} \& \algOCL{}}\label{sssec:design-objective-ocu-ocl}
These objective functions count the number of observed points on the upper and lower confidence bounds, respectively,%
\footnote{Similar to the object surface, the upper and lower confidence bounds are discretized into points based on the real world discretization.}
and are defined as
\begin{equation*}
    \begin{aligned}
        F_l^{(OCU)}(\theta \mid \theta_{1:t-1})
        &\defeq \text{number of points on \(u_t\) observed from \(\theta\)}
        \\
        \Fu[OCL](\theta \mid \theta_{1:t-1})
        &\defeq \text{number of points on \(l_t\) observed from \(\theta\)}
        .
    \end{aligned}
\end{equation*}
Because of the shape of the FOV, more points can potentially be observed if the camera is farther away as seen in \cref{fig:design-objective-observation-ocl-ocu}. Since the lower confidence bound for the object surface is farther away from the camera than the upper confidence bound, \algOCU{} would intuitively provide a lower bound and \algOCL{} an upper bound for the true objective function.

Unfortunately, we observed that these bounds are not always guaranteed. In particular, for a surface function which oscillates arbitrarily inside the FOV region as in \cref{fig:design-objective-observation-degenerate}, it is possible to observe less points on the lower confidence bound than on the surface function. 

\begin{itemize}
    \item[\xmark] (\cref{item:req-necessary-bound}) It does not provide a necessary upper bound as discussed above.
    \item[\xmark] (\cref{item:req-closed-form}) A closed-form expression is not known to us due to the dependence on the black-box observation function from \cref{eq:observation-function} defined with respect to the lower confidence bound.
    \item[\cmark] (\cref{item:req-real-world}) It has real world information by definition.
    \item[\xmark] (\cref{item:req-marginal}) It does not have marginal information, since the number of points on the lower confidence bound does not tell, where the camera has already measured the surface. For example, once the camera measured a part of the surface function and the lower confidence bound fits itself to the measured surface, this objective function still includes the points on the lower confidence bound located at these positions as seen in \cref{fig:design-objective-observation-counterexample}.
    \item[\cmark] (\cref{item:req-fov}) It has FOV information, since it computes its estimates based on the camera's observation model.
    \item[\cmark] (\cref{item:req-occlusion}) It has occlusion information, since it computes its estimates based on the camera's observation model.
\end{itemize}

\begin{figure}
    \centering
    \begin{subfigure}{0.33\linewidth}
        \centering
        \includegraphics[width=\linewidth]{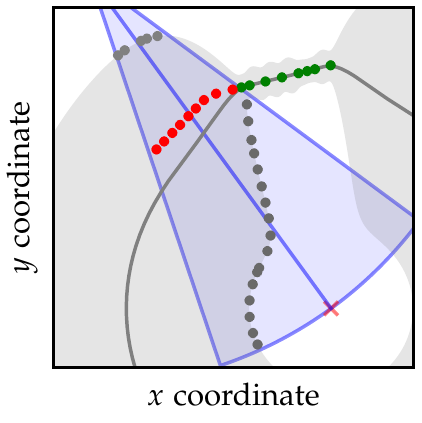}
        \caption{\(F_l^{(OCU)}\) and \(\Fu[OCL]\)}
        \label{fig:design-objective-observation-ocl-ocu}
    \end{subfigure}%
    \begin{subfigure}{0.33\linewidth}
        \centering
        \includegraphics[width=\linewidth]{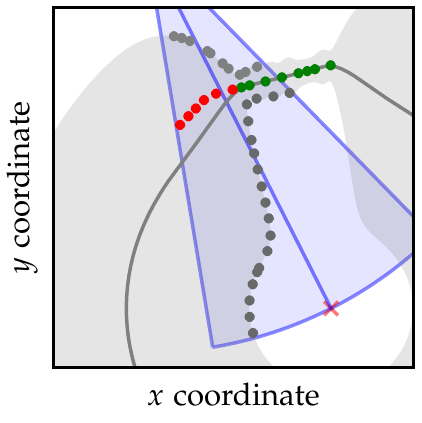}
        \caption{\vphantom{\(\Fu[OCU]\)}no marginal information}
        \label{fig:design-objective-observation-counterexample}
    \end{subfigure}%
    \begin{subfigure}{0.33\linewidth}
        \centering
        \includegraphics[width=\linewidth]{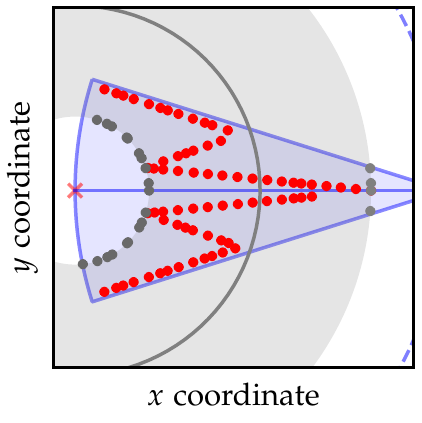}
        \caption{\vphantom{\(\Fu[OCU]\)}not an upper bound}
        \label{fig:design-objective-observation-degenerate}
    \end{subfigure}
    \caption[Observation-based Objective Functions]{
        Observation-based Objective Functions.
        (a) illustrates the intuition that more surface points can be seen if the camera is further away from them. Hence, the number of observed points on the lower confidence boundary (dark gray) would correspond to an upper bound on the number of observed surface points (red) and vice versa for the upper confidence boundary (light gray). However, this intuition is incorrect. One reason is depicted in (b) which shows that both functions do not take previously observed surface points (green) into account and therefore do not provide information about the marginal utility \(F(\theta\mid\theta_{1:t-1})\). In addition, (b) provides a counterexample for OCU being a lower bound. Finally, (c) shows the extreme case of a counterexample for OCL being an upper bound.
    }
    \label{fig:design-objective-observation}
\end{figure}

\subsection{Length-based vs.\ Area-based Objective Functions}\label{ssec:design-objective-length-vs-area}

Counting the number of observed points can be intuitively interpreted as estimating the length of the surface function inside the FOV.
In particular, this is the case if we shrink the pixel width of the real world discretization \(h\to0\). Hence, we call the observation-based objective functions in \cref{ssec:design-objective-observation} \emph{length-based objective functions}.

However, the counterexample of ill-shaped oscillating objects given for \hyperref[sssec:design-objective-ocu-ocl]{\algOCL{}} shows that the observed length of the lower confidence bound does not properly upper bound the observed length of potential surface functions.
In theory, if we take the number of oscillations of such an object inside the FOV to infinity, the surface function would ``fill out'' the complete area of the FOV. We realized that the \textit{observed area} between the confidence bounds is a better indicator than the \textit{observed length} of the lower confidence bound. This area corresponds to counting the number of real world pixels in the intersection of the FOV and confidence region. It is naturally guaranteed that there is no surface function inside the confidence region of which a camera can observe more surface points than this number.

This realization motivates the design of \emph{area-based objective functions}. The basic idea is to compute the area between the confidence bounds \(l_t(\varphi)\) and \(u_t(\varphi)\) given in \cref{eq:confidence-bounds-ideal} by integration and divide it by the area of a real world pixel. Since the Gaussian process can only be evaluated on a finite set of points, we approximate the integral with the sum
\begin{equation*}
    \begin{aligned}
        \Fu(\theta\mid\theta_{1:t-1})
        &= \frac{1}{h^2}\int_{\varphi_1}^{\varphi_2} g(l_t(\varphi),u_t(\varphi)) \d{\varphi}
        \quad\text{ with some function } g\\
        &\approx \frac{1}{h^2}\sum_{\varphi\in[\varphi_1,\varphi_2]}^{\Delta\varphi} g(l_t(\varphi),u_t(\varphi)) \Delta\varphi
        .
    \end{aligned}
\end{equation*}
We use this sum notation to describe a sum from \(\varphi_1\) to \(\varphi_2\) with step size \(\Delta\varphi\).%
\footnote{More formally, \(\sum_{x\in[a,b]}^{\Delta x} f(x) \defeq \sum_{k=0}^{\floor{\frac{b-a}{\Delta x}}} f(a+k\cdot\Delta x)\).}
This sum also requires us to define a summation interval \([\varphi_1,\varphi_2]\), over which the area should be computed approximately. For that reason, we define two kinds of summation intervals which we use later for the design of area-based objective functions. 

Beforehand, we introduce a notation for the \emph{left-right FOV boundary} which corresponds to the polar function parameterizing the rays cast by the camera at viewing angle \(\frac{\FOV}{2}\) and \(-\frac{\FOV}{2}\) as shown in \cref{fig:design-objective-fov-boundary}. The closed-form expression is given as
\begin{equation}\label{eq:fov-boundary}
    \begin{aligned}
        \fov{\theta}{\varphi} &\defeq \begin{cases}
            \ray{\theta,\frac{\FOV}{2}}{\varphi}, & \varphi \in_{2\pi} [\theta - \Delta\varphi, \theta]\\
            \ray{\theta,-\frac{\FOV}{2}}{\varphi}, & \varphi \in_{2\pi} [\theta, \theta + \Delta\varphi]\\
        \end{cases}\\
        \with \Delta\varphi &= \arctan*{\frac{\DOF\sin(\FOV/2)}{\dcam - \DOF\cos(\FOV/2)}}\\
    \end{aligned}
\end{equation}
and \(\ray{\theta,\alpha}{\varphi}\) derived in \cref{ssec:simulation-background-camray}. The notation \(\in_{2\pi}\) denotes interval membership modulo \(2\pi\). The constant \(\Delta\varphi\) corresponds to the polar angle offset of the left and right endpoint of the FOV boundary. We refer to \cref{ssec:simulation-background-fovendpoint} for its derivation. 

\begin{figure}
    \centering
    \begin{subfigure}{0.33\linewidth}
        \centering
        \includegraphics[width=\linewidth]{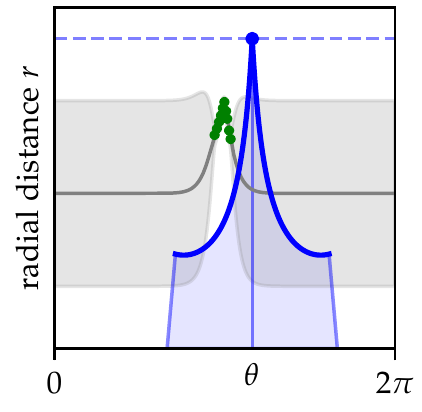}
        \caption{\vphantom{\(\PhiI_t\)}\(\fov{\theta}{\varphi}\)}
        \label{fig:design-objective-fov-boundary}
    \end{subfigure}%
    \begin{subfigure}{0.33\linewidth}
        \centering
        \includegraphics{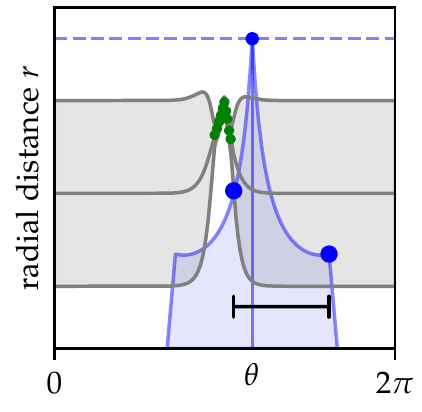}
        \caption{\(\PhiI_t(\theta)\)}
        \label{fig:design-objective-suminterval-intersect}
    \end{subfigure}%
    \begin{subfigure}{0.33\linewidth}
        \centering
        \includegraphics{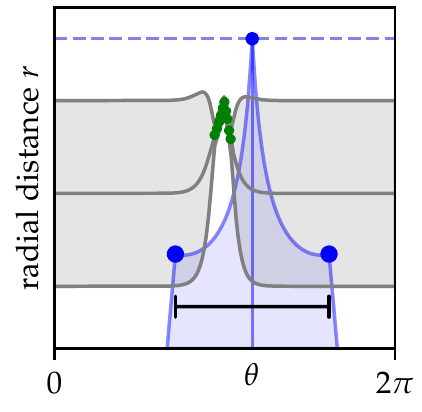}
        \caption{\(\PhiS(\theta)\)}
        \label{fig:design-objective-suminterval-simple}
    \end{subfigure}
    \caption[Left-Right FOV Boundary and Summation Intervals]{
        Left-Right FOV Boundary and Summation Intervals.
        The current setting in these figures is that the algorithm made a measurement (green) of the unknown object surface (not shown) and represents its remaining uncertainty about the object with the confidence region (gray) obtained from the Gaussian process. For the design of objective functions, we want to formalize multiple geometric quantities in this setting.
        The bold line in (a) describes the FOV boundary to the left and right side of the current camera position \(\theta\) as a polar function of the variable \(\varphi\). We use this function to include FOV information in our objective functions as recommended in \cref{item:req-fov}.
        In (b), we visualize the summation interval (black bar) based on the FOV-confidence intersection points (blue dots), which are each defined as the first intersection point between the FOV boundary and the lower confidence bound, or the endpoint of the FOV boundary in case of no intersection. This summation interval is used to include occlusion information in our objective functions as recommended in \cref{item:req-occlusion}.
        In (c), we visualize the summation interval (black bar) based on the simple FOV endpoints (blue dots), which are simply defined as the endpoints of the FOV boundary without considering potential occlusions. As a side remark, note that \(\fov{\theta}{\varphi}\) is precisely defined on  \(\PhiS(\theta)\).
    }
    \label{fig:design-objective-fov-suminterval}
\end{figure}

The first summation interval, which we denote as \emph{FOV-confidence intersection}, is bounded by the intersection points of the lower confidence bound and left-right FOV boundary. We define it as
\begin{equation}\label{eq:sum-interval-intersection}
    \begin{aligned}
        \PhiI_t(\theta) &\defeq [\varphi_1,\varphi_2]\\
        \varphi_1 &= \varphi \text{ such that } l_t(\varphi) = \fov{\theta}{\varphi}, \varphi \text{ closest to the left of } \theta\\
        \varphi_2 &= \varphi \text{ such that } l_t(\varphi) = \fov{\theta}{\varphi}, \varphi \text{ closest to the right of } \theta
    \end{aligned}
\end{equation}
and provide a visualization in \cref{fig:design-objective-suminterval-intersect}. Since multiple intersections are possible, the second constraint ensures that the closest one is taken. This implicitly handles occlusion, since intersection points farther away are occluded by the closest one. As the definition already suggests, no closed-form solution is known to us.

This summation interval can be simplified to \emph{Simple FOV endpoint}, which takes the endpoints of the FOV boundary function at \((\frac{\FOV}{2},\DOF)\) and \((-\frac{\FOV}{2},\DOF)\) given in polar coordinates relative to the camera as the left and right boundary. The definition is
\begin{equation}\label{eq:sum-interval-simple}
    \begin{aligned}
        \PhiS(\theta) &\defeq [\varphi_1,\varphi_2]\\
        \varphi_1 &= \theta - \arctan*{\frac{\DOF\sin(\FOV/2)}{\dcam - \DOF\cos(\FOV/2)}}\\
        \varphi_2 &= \theta + \arctan*{\frac{\DOF\sin(\FOV/2)}{\dcam - \DOF\cos(\FOV/2)}}
    \end{aligned}
\end{equation}
with the derivation provided in \cref{ssec:simulation-background-fovendpoint} and a visualization in \cref{fig:design-objective-suminterval-simple}. Besides the closed-form expression, observe that \(\PhiS(\theta)\) is independent of \(t\) and in addition the width \(\abs{\PhiS(\theta)}\) independent of \(\theta\). This is useful for the later analysis.

Referring back to \cref{item:req-real-world}, we want to focus on the difference between the real world and the polar world once more. In particular, a rectangle \([\varphi_1,\varphi_2]\times[0,r]\) with area \(\Delta\varphi r\) in the polar world is transformed into a \emph{circle sector} with area \(\frac{\Delta\varphi}{2\pi} \pi r^2 = \frac{1}{2}\Delta\varphi r^2\) in the real world. Similarly, a rectangle \([\varphi_1,\varphi_2]\times[l,u]\) in the polar world has area \(\frac{1}{2}\Delta\varphi(u^2 - l^2)\) in the real world. The latter expression is used frequently in the definitions of area-based objective functions.

\subsection{Intersection-based Objective Functions}\label{ssec:design-objective-intersection}

Motivated by \cref{ssec:design-objective-length-vs-area}, we design objective functions based on intersection area of the camera's FOV and the confidence region.

\subsubsection{\algIOA{}}\label{sssec:design-objective-ioa}
This objective function counts the number of real world pixels in the \textit{intersection} of the camera's FOV and the confidence region which are \textit{not occluded} by the already measured object surface. We define it as
\begin{equation*}
    \Fu[IOA](\theta\mid\theta_{1:t-1})
    \defeq \text{number of points in intersection and visible from \(\theta\)}
    .
\end{equation*}
This objective function is the most accurate area-based objective function, since it only covers the visible area in which the surface function can reside and nothing more.

\begin{itemize}
    \item[\cmark] (\cref{item:req-necessary-bound}) It provides a sufficient upper bound, since it completely covers the visible area in which the surface function can lie as shown in \cref{fig:design-objective-intersection-ioa}.
    \item[\xmark] (\cref{item:req-closed-form}) A closed-form expression is not known to us.
    \item[\cmark] (\cref{item:req-real-world}) It has real world information by definition.
    \item[\cmark] (\cref{item:req-marginal}) Interestingly, it implicitly contains marginal information, since wherever the surface is measured, the area of the confidence region shrinks towards zero and so does this objective function at these measured locations. This can be observed in \cref{fig:design-objective-intersection-ioa}.
    \item[\cmark] (\cref{item:req-fov}) It has FOV information by definition.
    \item[\cmark] (\cref{item:req-occlusion}) It has occlusion information by definition.
\end{itemize}

\begin{figure}
    \centering
    \begin{subfigure}{0.33\linewidth}
        \centering
        \includegraphics[width=\linewidth]{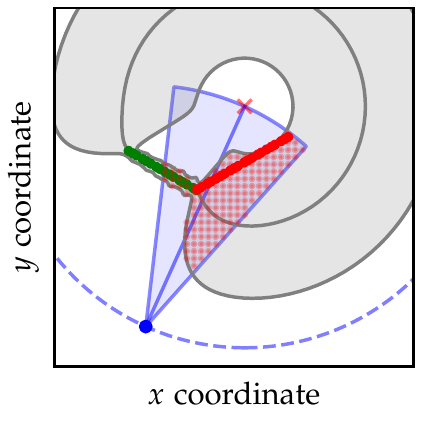}
        \includegraphics[width=\linewidth]{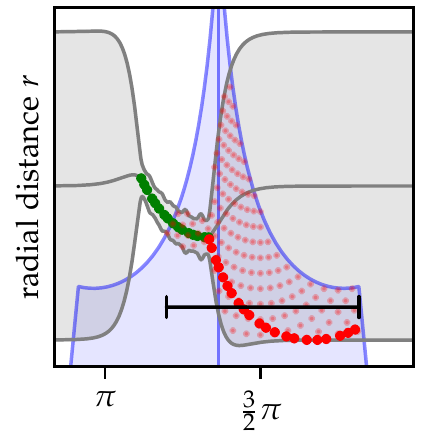}
        \caption{\(\Fu[IOA]\)}
        \label{fig:design-objective-intersection-ioa}
    \end{subfigure}%
    \begin{subfigure}{0.33\linewidth}
        \centering
        \includegraphics[width=\linewidth]{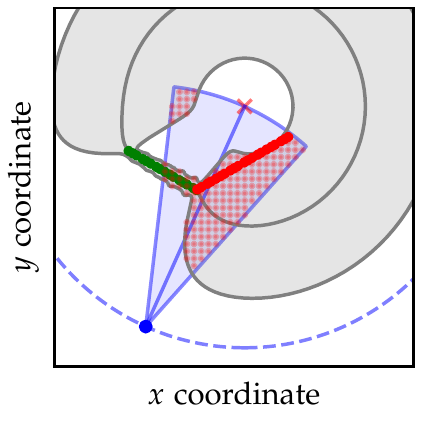}
        \includegraphics[width=\linewidth]{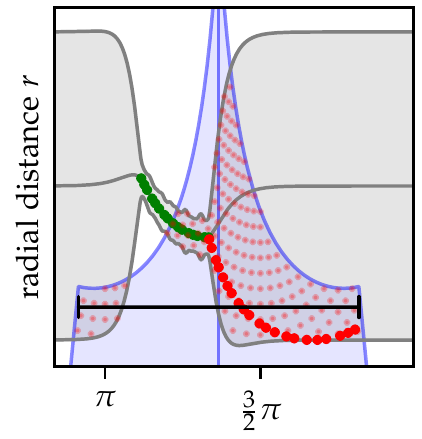}
        \caption{\(\Fu[I]\)}
        \label{fig:design-objective-intersection-i}
    \end{subfigure}
    \caption[Intersection-based Objective Functions]{
        Intersection-based Objective Functions.
        (a) visualizes the points (light red) inside the occlusion-aware intersection of the FOV (blue) and the confidence region (gray) in the real world (top) and polar world (bottom). This intersection is essentially defined as the intersection over \(\PhiI(\theta)\) (black bar). In contrast, the intersection in (b) is defined over the simpler interval \(\PhiS(\theta)\)  (black bar) without the awareness of occlusion as shown in \cref{fig:design-objective-suminterval-simple}. Observe that in both cases all newly visible surface points (red) inside the FOV are covered by both intersections. Hence, the proposed objective functions form valid upper bounds.
    }
    \label{fig:design-objective-intersection}
\end{figure}

\subsubsection{\algI{}}\label{sssec:design-objective-i}
This objective function estimates the number of real world pixels in the \textit{intersection} of the camera's FOV and the confidence region and is defined as
\begin{equation*}
    \Fu[I](\theta\mid\theta_{1:t-1})
    \defeq \frac{1}{h^2} \sum_{\varphi\in\PhiS(\theta)}^{\Delta\varphi}
    \frac{1}{2} \max*{\min*{u_t(\varphi),\fov{\theta}{\varphi}}^2 - l_t(\varphi)^2, 0} \Delta\varphi
    .
\end{equation*}
It trades off the occlusion awareness of \nameref{sssec:design-objective-ioa} for the sake of a closed-form expression. Since the visible area containing the surface function is both upper bounded by \(\fov{\theta}{\varphi}\) and \(u_t(\varphi)\), we take the minimum of both as the overall upper bound. Since \(l_t(\varphi)\) can also be above \(\fov{\theta}{\varphi}\), we take the maximum of the difference and zero to avoid negative summation terms.
This term does not perfectly compute the number of pixels in the intersection, since it ignores the finite DOF of the camera as seen from its definition. Including knowledge about the DOF in the objective function would cause more technical difficulties (see \cref{fig:problem-different-fovs}) than it would potentially benefit us. So we decided to keep \(\Fu[I]\) relatively simple. 

\begin{itemize}
    \item[\cmark] (\cref{item:req-necessary-bound}) It provides a sufficient upper bound, since it covers more than the visible area in which the surface function can lie as seen in \cref{fig:design-objective-intersection-i}.
    \item[\omark] (\cref{item:req-closed-form}) A closed-form expression is known to us, which however is too complex to analyze due to the usage of \(\max\), \(\min\) and \(\fov{\theta}{\varphi}\).
    \item[\cmark] (\cref{item:req-real-world}) It has real world information same as \nameref{sssec:design-objective-ioa}.
    \item[\cmark] (\cref{item:req-marginal}) It has marginal information same as \nameref{sssec:design-objective-ioa}.
    \item[\cmark] (\cref{item:req-fov}) It has FOV information same as \nameref{sssec:design-objective-ioa}.
    \item[\xmark] (\cref{item:req-occlusion}) It does not have occlusion information as stated above.
\end{itemize}

\subsection{Confidence-based Objective Functions}\label{ssec:design-objective-confidence}

Demotivated by the lack of a useful closed-form expression in \cref{ssec:design-objective-intersection}, which mostly comes from \(\fov{\theta}{\varphi}\), we design objective functions based on the area of the confidence region and only take the FOV into account for determining the summation boundaries.

\subsubsection{\algC{}}\label{sssec:design-objective-c}
This objective function estimates the number of real world pixels in the \textit{confidence} region on the interval \(\PhiI_t(\theta)\) from \cref{eq:sum-interval-intersection}. We define it as
\begin{equation*}
    \Fu[C](\theta\mid\theta_{1:t-1})
    \defeq \frac{1}{h^2} \sum_{\varphi\in\PhiI_t(\theta)}^{\Delta\varphi}
    \frac{1}{2}\paren*{u_t(\varphi)^2 - l_t(\varphi)^2} \Delta\varphi
    .
\end{equation*}

\begin{itemize}
    \item[\cmark] (\cref{item:req-necessary-bound}) It provides a sufficient upper bound, since it covers a lot more than the visible area in which the surface function can lie, which we visualize in \cref{fig:design-objective-confidence-c}.
    \item[\xmark] (\cref{item:req-closed-form}) A closed-form expression for \(\PhiI_t(\theta)\) is not known to us.
    \item[\cmark] (\cref{item:req-real-world}) It has real world information by definition.
    \item[\cmark] (\cref{item:req-marginal}) It has marginal information similar to \nameref{sssec:design-objective-ioa}, since the area of the confidence region shrinks wherever measurements were made.
    \item[\xmark] (\cref{item:req-fov}) It does not have FOV information, since it only considers the endpoints of the left-right FOV boundary, but does not take the shape of the FOV into account as visualized in \cref{fig:design-objective-confidence-c}. 
    \item[\cmark] (\cref{item:req-occlusion}) It has occlusion information through the use of \(\PhiI_t(\theta)\), which bounds the summation interval with the closest intersection point of the left-right FOV boundary and the lower confidence bound.
\end{itemize}

\subsubsection{\algCS{}}\label{sssec:design-objective-cs}
This objective function estimates the number of real world pixels in the \textit{confidence} region on the \textit{simple} interval \(\PhiS(\theta)\) from \cref{eq:sum-interval-simple}. We define it as
\begin{equation*}
    \Fu[CS](\theta\mid\theta_{1:t-1})
    \defeq \frac{1}{h^2} \sum_{\varphi\in\PhiS(\theta)}^{\Delta\varphi}
    \frac{1}{2}\paren*{u_t(\varphi)^2 - l_t(\varphi)^2} \Delta\varphi
    .
\end{equation*}
Similar to \nameref{sssec:design-objective-i}, it trades off the occlusion awareness contained in \(\PhiI_t(\theta)\) for the sake of a closed-form expression provided by the simpler summation interval \(\PhiS(\theta)\).

\begin{itemize}
    \item[\cmark] (\cref{item:req-necessary-bound}) It provides a sufficient upper bound, since it covers a lot more than the visible area in which the surface function can lie as shown in \cref{fig:design-objective-confidence-cs}.
    \item[\cmark] (\cref{item:req-closed-form}) A simple closed-form expression is known to us.
    \item[\cmark] (\cref{item:req-real-world}) It has real world information same as \nameref{sssec:design-objective-c}.
    \item[\cmark] (\cref{item:req-marginal}) It has marginal information same as \nameref{sssec:design-objective-c}.
    \item[\xmark] (\cref{item:req-fov}) It does not have FOV information same as \nameref{sssec:design-objective-c}.
    \item[\xmark] (\cref{item:req-occlusion}) It does not have occlusion information as stated above.
\end{itemize}

\begin{figure}
    \centering
    \begin{subfigure}{0.33\linewidth}
        \centering
        \includegraphics[width=\linewidth]{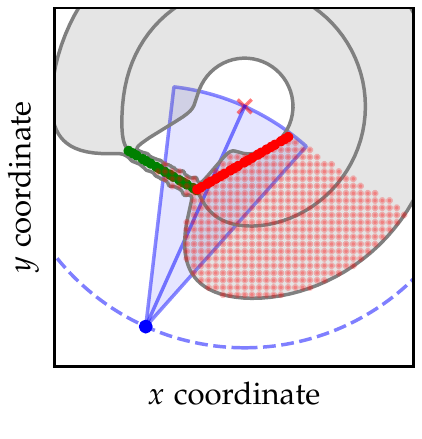}
        \includegraphics[width=\linewidth]{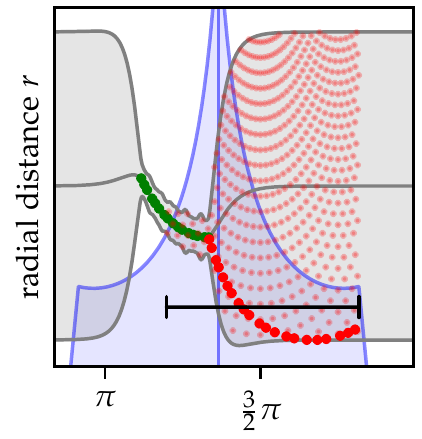}
        \caption{\(\Fu[C]\)}
        \label{fig:design-objective-confidence-c}
    \end{subfigure}%
    \begin{subfigure}{0.33\linewidth}
        \centering
        \includegraphics[width=\linewidth]{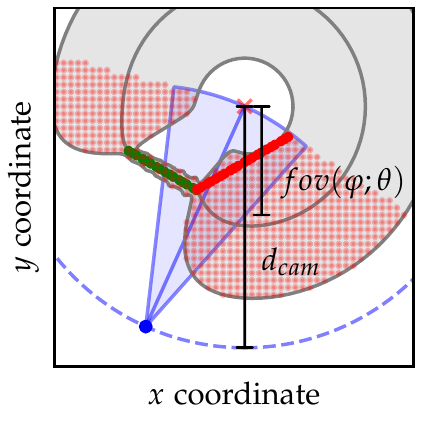}
        \includegraphics[width=\linewidth]{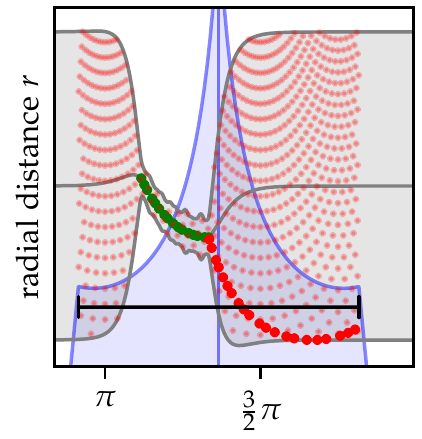}
        \caption{\(\Fu[CS]\) (and \(\Fu[CSW]\))}
        \label{fig:design-objective-confidence-cs}
    \end{subfigure}%
    \begin{subfigure}{0.33\linewidth}
        \centering
        \includegraphics[width=\linewidth]{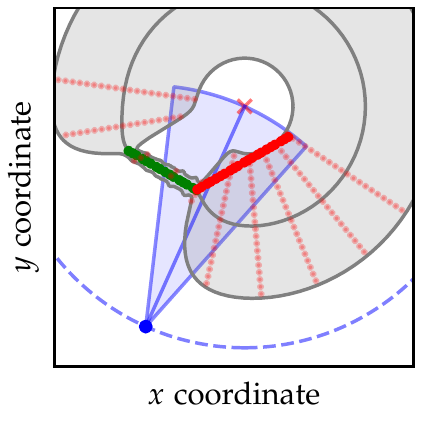}
        \includegraphics[width=\linewidth]{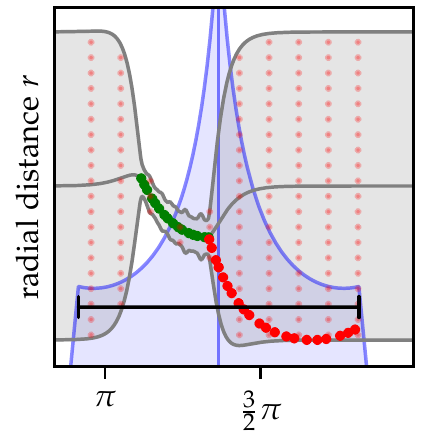}
        \caption{\(\Fu[CSP]\)}
        \label{fig:design-objective-confidence-csp}
    \end{subfigure}%
    \caption[Confidence-based Objective Functions]{
        Confidence-based Objective Functions.
        These figures visualize the different upper bounds for the number of observed surface points (red dots) provided by the confidence-based objective functions. Besides the already discussed main differences, observe in the polar world representation how the density of real world pixels (light red dots) inside some range of polar angles \(\Delta\varphi\) increases with the distance from the world center. In particular, the majority of these points are not inside the FOV, which results into heavy overestimations of the actual observations. As a side note, the larger horizontal spacings between the polar world points in (c) comes from different axis scales.
        \\
        In addition, we provide a visualization for the weight factor \(\frac{\fov{\theta}{\varphi}}{\dcam}\) of the \nameref{sssec:design-objective-csw} objective function in the real world (top) of (b). The shorter bar indicates \(\fov{\theta}{\varphi}\) while the longer one represents \(\dcam\). This should help in understanding how the ratio of both lengths estimates the number of world pixels inside the FOV.
    }
    \label{fig:design-objective-confidence}
\end{figure}

\subsubsection{\algCSP{}}\label{sssec:design-objective-csp}
This objective function estimates the number of \textit{polar} world pixels in the \textit{confidence} region on the \textit{simple} interval \(\PhiS(\theta)\) from \cref{eq:sum-interval-simple}. We define it as
\begin{equation*}
    \Fu[CSP](\theta\mid\theta_{1:t-1})
    \defeq \frac{1}{h^2} \sum_{\varphi\in\PhiS(\theta)}^{\Delta\varphi}
    \paren*{u_t(\varphi) - l_t(\varphi)} \Delta\varphi
    .
\end{equation*}
It corresponds to \nameref{sssec:design-objective-cs}, but defined in the polar world. Intuitively, this objective function counts the number of polar pixels in the \(\varphi\)-\(r\)-coordinate system, which certainly does not provide an upper bound for the number of real world pixels. Hence, this objective function is only of theoretical interest.%
\footnote{We are interested in this objective function, since it allows us to analyze \nameref{sssec:design-objective-cs} without the squared upper and lower confidence bounds as an initial step.}

\begin{itemize}
    \item[\xmark] (\cref{item:req-necessary-bound}) It does not provide a necessary upper bound as discussed above.
    \item[\cmark] (\cref{item:req-closed-form}) A simple closed-form expression is known to us.
    \item[\xmark] (\cref{item:req-real-world}) It does not have real world information as discussed above.
    \item[\cmark] (\cref{item:req-marginal}) It has marginal information same as \nameref{sssec:design-objective-cs}.
    \item[\xmark] (\cref{item:req-fov}) It does not have FOV information same as \nameref{sssec:design-objective-cs}.
    \item[\xmark] (\cref{item:req-occlusion}) It does not have occlusion information same as \nameref{sssec:design-objective-cs}.
\end{itemize}

\subsubsection{\algCSW{}}\label{sssec:design-objective-csw}
The lack of FOV information in \nameref{sssec:design-objective-c} and its two variants has proven to be serious in our simulation experiments. A more detailed explanation is given later in \cref{ssec:experiments-results-deficiency}. In short, without taking the shape of the camera's FOV into account, these objective functions excessively overestimate the actually observable region and prevents the algorithm from convergence. This motivates the following design of a variant of \nameref{sssec:design-objective-c} which takes the FOV into account by weighting.

This objective function estimates the number of real world pixels in the \textit{confidence} region on the \textit{simple} interval \(\PhiS(\theta)\) from \cref{eq:sum-interval-simple} \textit{weighted} based on the shape of FOV. We define it as
\begin{equation*}
    \Fu[CSW](\theta\mid\theta_{1:t-1})
    \defeq \frac{1}{h^2} \sum_{\varphi\in\PhiS(\theta)}^{\Delta\varphi}
    \frac{\fov{\theta}{\varphi}}{\dcam} \cdot \frac{1}{2}\paren*{u_t(\varphi)^2 - l_t(\varphi)^2} \Delta\varphi
    .
\end{equation*}
The additional weight factor tries to approximate the shape of the FOV by estimating the relative amount of each infinitesimal circle sector area inside the visible FOV region for different \(\varphi\). The circle sector area at \(\varphi=\theta\) is completely preserved due to \(\fov{\theta}{\theta}=\dcam\), while the further away \(\varphi\) is from \(\theta\), the more the contribution of the circle sector area is damped. A more visual explanation for the weight factor is provided in \cref{fig:design-objective-confidence-cs}. The idea is closely related to \nameref{sssec:design-objective-i}, which is a more accurate, but also more complex version of this objective function.

\begin{itemize}
    \item[\omark] (\cref{item:req-necessary-bound}) It does not provide a necessary upper bound, since it is possible to design very degenerate counterexamples, which however hardly appear in practice. For example, assume the current confidence region has the shape of the camera's FOV and the worst-case surface function completely fills out this confidence region by infinite oscillations. Then all circle sector areas completely contribute to the visible area, but the additional weighting factor incorrectly scales down the circle sector areas at all \(\varphi\neq\theta\).
    \item[\omark] (\cref{item:req-closed-form}) A closed-form expression is known to us, but might be difficult to analyze due to the use of \(\fov{\theta}{\varphi}\).
    \item[\cmark] (\cref{item:req-real-world}) It has real world information same as \nameref{sssec:design-objective-cs}.
    \item[\cmark] (\cref{item:req-marginal}) It has marginal information same as \nameref{sssec:design-objective-cs}.
    \item[\cmark] (\cref{item:req-fov}) It has FOV information through the weighting factor.
    \item[\xmark] (\cref{item:req-occlusion}) It does not have occlusion information same as \nameref{sssec:design-objective-cs}.
\end{itemize}

\subsection{Uncertainty-based Objective Functions}\label{ssec:design-objective-uncertainty}

The last class of objective functions we designed are based on the difference between upper and lower confidence bound only at \(\theta\), which corresponds to the uncertainty of the surface function at \(\theta\).

\subsubsection{\algU{}}\label{sssec:design-objective-u}
This objective function estimates the number of real world pixels in the difference of two circle sectors based on the current \textit{uncertainty} and with angle \(\abs{\PhiS}\). We define it as
\begin{equation*}
    \begin{aligned}
        \Fu[U](\theta\mid\theta_{1:t-1})
        &\defeq \frac{1}{h^2} \sum_{\varphi\in\PhiS(\theta)}^{\Delta\varphi}
        \frac{1}{2}\paren*{u_t(\theta_t)^2 - l_t(\theta_t)^2} \Delta\varphi\\
        &= \frac{1}{h^2} \cdot \frac{1}{2}\abs*{\PhiS}\paren*{u_t(\theta)^2 - l_t(\theta)^2}
    \end{aligned}
    .
\end{equation*}
It corresponds to a simplified version of \nameref{sssec:design-objective-cs} which does not take all uncertainties within \(\PhiS(\theta)\) into account, but only the current uncertainty at \(\theta\). This allows us to eliminate the sum, which is helpful for the beginning.

\begin{itemize}
    \item[\cmark] (\cref{item:req-necessary-bound}) This objective function is unique in the sense that it only provides a necessary upper bound, but not a sufficient upper bound as visualized in \cref{fig:design-objective-uncertainty-u,fig:design-objective-uncertainty-u-max}. The reason is that this objective function only depends on the uncertainty at the current location and provides a valid upper bound only if this uncertainty upper bounds all uncertainties at other locations.
    \item[\cmark] (\cref{item:req-closed-form}) A simple closed-form expression is known to us.
    \item[\cmark] (\cref{item:req-real-world}) It has real world information by definition.
    \item[\cmark] (\cref{item:req-marginal}) It has marginal information similar to \nameref{sssec:design-objective-cs}.
    \item[\xmark] (\cref{item:req-fov}) It does not have FOV information, since it only depends on the current uncertainty. In contrast to \nameref{sssec:design-objective-cs} and its variants, the lack of FOV information does not prevent convergence of \(\A\) in our simulation experiments, since this objective function still allows \(\A\) to find the positions with largest uncertainty, which is not the case for \nameref{sssec:design-objective-cs} as explained later in \cref{ssec:experiments-results-deficiency}.
    \item[\xmark] (\cref{item:req-occlusion}) It does not have occlusion information, since it only depends on the current uncertainty.
\end{itemize}

\begin{figure}
    \centering
    \begin{subfigure}{0.33\linewidth}
        \centering
        \includegraphics{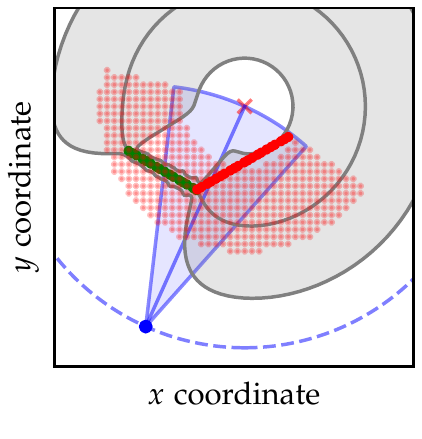}
        \includegraphics{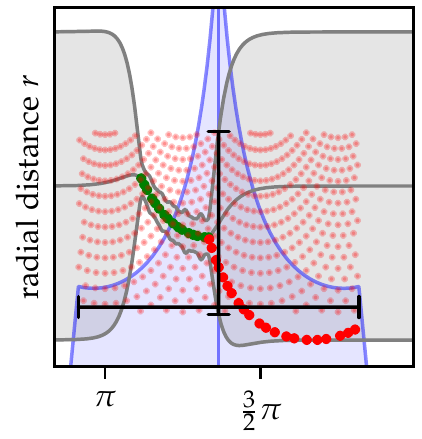}
        \caption{\(\Fu[U]\)}
        \label{fig:design-objective-uncertainty-u}
    \end{subfigure}%
    \begin{subfigure}{0.33\linewidth}
        \centering
        \includegraphics{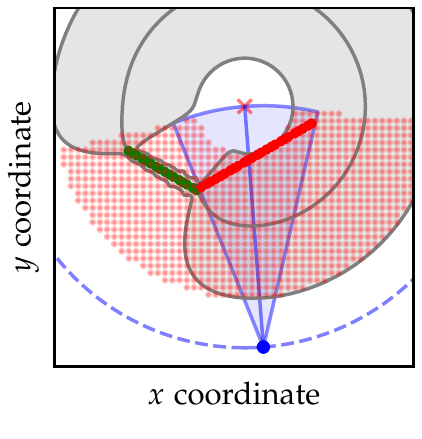}
        \includegraphics{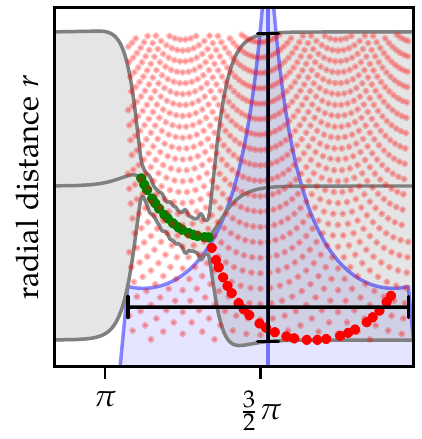}
        \caption{\(\Fu[U]\) at \(\argmax\)}
        \label{fig:design-objective-uncertainty-u-max}
    \end{subfigure}%
    \begin{subfigure}{0.33\linewidth}
        \centering
        \includegraphics{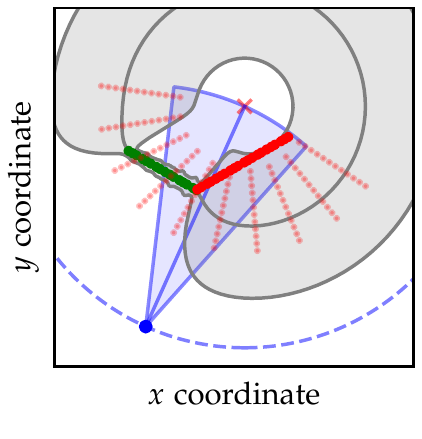}
        \includegraphics{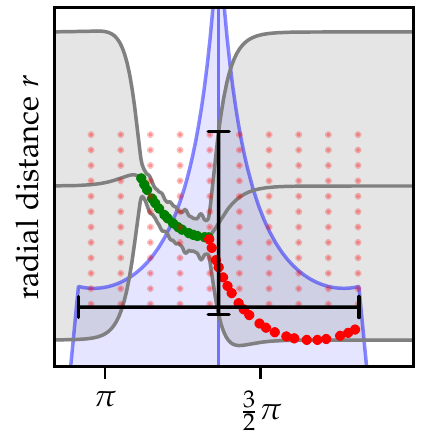}
        \caption{\(\Fu[UP]\)}
        \label{fig:design-objective-uncertainty-up}
    \end{subfigure}
    \caption[Uncertainty-based Objective Functions]{
        Uncertainty-based Objective Functions.
        The idea of these objective functions is to upper bound the number of visible surface points (red dots) with the number of real world points (light red dots) inside a rectangle in the polar world which is determined by the uncertainty at the current position (vertical bar) as the height and the length of the summation interval \(\PhiS(\theta)\) (horizontal bar) as the width. This upper bound is only valid if the current uncertainty is the maximum uncertainty over all locations, as one can easily see between (a) and (b).
    }
    \label{fig:design-objective-uncertainty}
\end{figure}

\subsubsection{\algUP{}}\label{sssec:design-objective-up}
This objective function estimates the number of \textit{polar} world pixels in the rectangle based on the current \textit{uncertainty} and with width \(\abs{\PhiS}\). We define it as
\begin{equation*}
    \begin{aligned}
        \Fu[UP](\theta\mid\theta_{1:t-1})
        &\defeq \frac{1}{h^2} \sum_{\varphi\in\PhiS(\theta)}^{\Delta\varphi}
        \paren*{u_t(\theta) - l_t(\theta)} \Delta\varphi\\
        &= \frac{1}{h^2} \cdot \abs*{\PhiS} \paren*{u_t(\theta) - l_t(\theta)}
    \end{aligned}
    .
\end{equation*}
It corresponds to \nameref{sssec:design-objective-u}, but defined in the polar world in the same way as \nameref{sssec:design-objective-csp} was defined. Hence, it is again only of theoretical interest, since it certainly does not provide an upper bound for the number of real world pixels.

\begin{itemize}
    \item[\xmark] (\cref{item:req-necessary-bound}) It does not provide a necessary upper bound as stated above.
    \item[\cmark] (\cref{item:req-closed-form}) A simple closed-form expression is known to us.
    \item[\xmark] (\cref{item:req-real-world}) It does not have real world information as stated above.
    \item[\cmark] (\cref{item:req-marginal}) It has marginal information same as \nameref{sssec:design-objective-u}.
    \item[\xmark] (\cref{item:req-fov}) It does not have FOV information same as \nameref{sssec:design-objective-u}.
    \item[\xmark] (\cref{item:req-occlusion}) It does not have occlusion information same as \nameref{sssec:design-objective-u}.
\end{itemize}

\subsection{Summary}\label{ssec:design-objective-summary}

In order to provide an overview on the design of all objective functions, we briefly summarize their main design idea below and the met requirements in \cref{tab:design-objective-requirements}.

Observation-based objective functions (\cref{ssec:design-objective-observation}) are length-based objective functions and hard to evaluate due to the black-box observation function.
\begin{itemize} 
    \item \nameref{sssec:design-objective-os} counts the number of observed points on the object surface.
    \item \hyperref[sssec:design-objective-ocu-ocl]{\algOCL{}} counts the number of observed points on the lower confidence bound.
\end{itemize}

Intersection-based objective functions (\cref{ssec:design-objective-intersection}) are the most accurate area-based objective functions, but lack simple closed-form expressions.
\begin{itemize}
    \item \nameref{sssec:design-objective-ioa} counts the number of visible real world pixels in the intersection of the FOV and the confidence region.
    \item \nameref{sssec:design-objective-i} counts the number of real world pixels in the intersection of the FOV and the confidence region.
\end{itemize}

Confidence-based objective functions (\cref{ssec:design-objective-confidence}) provide simple closed-form expressions, but excessively overestimate the visible area due to the lack of FOV information (\cref{item:req-fov}) except for the weighted variant.
\begin{itemize}
    \item \nameref{sssec:design-objective-c} counts the number of real world pixels in the confidence region on \(\PhiI_t(\theta)\).
    \item \nameref{sssec:design-objective-cs} counts the number of real world pixels in the confidence region on \(\PhiS(\theta)\).
    \item \nameref{sssec:design-objective-csp} counts the number of polar world pixels in the confidence region on \(\PhiS(\theta)\).
    \item \nameref{sssec:design-objective-csw} counts the number of real world pixels in the confidence region on \(\PhiS(\theta)\) weighted based on the FOV shape.
\end{itemize}

Uncertainty-based objective functions (\cref{ssec:design-objective-uncertainty}) provide even simpler closed-form expressions and at the same time do not overestimate the visible area.
\begin{itemize}
    \item \nameref{sssec:design-objective-u} counts the number of real world pixels in the circle sector area difference with radius based on the current uncertainty and angle \(\abs{\PhiS}\).
    \item \nameref{sssec:design-objective-up} counts the number of polar world pixels in the rectangle with height based on the current uncertainty and width \(\abs{\PhiS}\).
\end{itemize}

After we presented different ways to form a decision based on a given objective function in \cref{sec:design-algorithm}, we finalize our list of candidate objective functions in \cref{sec:design-summary} based on the insights gained here.

\begin{table}
    \centering
    \renewcommand{\arraystretch}{1.3}
    \newcommand{\sepspace}{\addlinespace[0.4cm]}
    
    \newcommand{\rot}{\rotatebox{90}}
    \let\xmarkold\xmark
    \renewcommand{\xmark}{\textcolor{lightgray}{\xmarkold}}
    \begin{tabular}{@{}lcccccc@{}}
        \toprule
        Objective function
        & \rot{\shortstack[l]{Upper bound \\ (\cref{item:req-necessary-bound})}}
        & \rot{\shortstack[l]{Closed-form \\ (\cref{item:req-closed-form})}}
        & \rot{\shortstack[l]{Real world \\ (\cref{item:req-real-world})}}
        & \rot{\shortstack[l]{Marginal \\ (\cref{item:req-marginal})}}
        & \rot{\shortstack[l]{FOV \\ (\cref{item:req-fov})}}
        & \rot{\shortstack[l]{Occlusion \\ (\cref{item:req-occlusion})}}
        \\
        \midrule
        \nameref{sssec:design-objective-os}
        & \cmark & \xmark & \cmark & \xmark & \cmark & \cmark \\
        \hyperref[sssec:design-objective-ocu-ocl]{\algOCL{}}
        & \xmark & \xmark & \cmark & \xmark & \cmark & \cmark \\
        \sepspace
        \nameref{sssec:design-objective-ioa}
        & \cmark & \xmark & \cmark & \cmark & \cmark & \cmark \\
        \nameref{sssec:design-objective-i}
        & \cmark & \omark & \cmark & \cmark & \cmark & \xmark \\
        \sepspace
        \nameref{sssec:design-objective-c}
        & \cmark & \xmark & \cmark & \cmark & \xmark & \cmark \\
        \nameref{sssec:design-objective-cs}
        & \cmark & \cmark & \cmark & \cmark & \xmark & \xmark \\
        \nameref{sssec:design-objective-csp}
        & \xmark & \cmark & \xmark & \cmark & \xmark & \xmark \\
        \nameref{sssec:design-objective-csw}
        & \omark & \omark & \cmark & \cmark & \cmark & \xmark \\
        \sepspace
        \nameref{sssec:design-objective-u}
        & \cmark & \cmark & \cmark & \cmark & \xmark & \xmark \\
        \nameref{sssec:design-objective-up}
        & \xmark & \cmark & \xmark & \cmark & \xmark & \xmark \\
        \bottomrule
    \end{tabular}
    \caption{Overview of Requirements and Objective Functions.}
    \label{tab:design-objective-requirements}
\end{table}


\section{Design of Algorithms}\label{sec:design-algorithm}

In this section, we focus on how to find the NBV estimate based on the above designed objective functions. A natural way is a greedy algorithm described in \cref{ssec:design-algorithm-greedy}. We address an issue of this design in combination with a certain type of objective functions and present the design of two-phase algorithms in \cref{ssec:design-algorithm-twophase}, which tries to combine the strengths of different objective functions.

\subsection{Greedy Algorithm Design}\label{ssec:design-algorithm-greedy}

As discussed in \cref{ssec:problem-general-near-optimality}, the greedy decision in \cref{eq:greedy-decision}, which maximizes the true objective function \(F(\theta\mid\Theta)\), forms our theoretical baseline for near-optimality. Hence, a natural and simple design of an algorithm is a greedy algorithm which finds the maximizer of one of our designed objective functions \(\Fu(\theta\mid\Theta)\) and returns it as the NBV estimate \(\theta_t\).

\begin{definition}[Greedy Algorithm]\label{def:greedy-algorithm}
    Given an objective function \(\Fu\), the greedy algorithm \(\A(\cdot;\Fu)\) finds the global maximizer
    \begin{equation*}
        \A(\Theta;\Fu) \defeq \argmax_{\theta\in\Camspace} \Fu(\theta\mid\Theta)
    \end{equation*}
    for a given set of previous camera poses \(\Theta\).
    We use \(\A[X](\Theta)\) to denote the greedy algorithm with respect to objective \(\Fu[X]\).
\end{definition}

In \cref{remark:lemma-2-2-assumption}, we showed that such a greedy algorithm together with a necessary upper bound \(\Fu\) (see \cref{item:req-necessary-bound}) satisfies the assumption of \cref{lem:lemma-2-2}, which is required for our theoretical analysis.

\subsection{Two-phase Algorithm Design}\label{ssec:design-algorithm-twophase}

The problem of the greedy algorithm design in combination with objective functions, which excessively overestimate the visible confidence region, is that it leads to inaccurate NBV estimates and prevents the algorithm from convergence. Confidence-based objective functions described in \cref{ssec:design-objective-confidence} are an example for such objective functions and their deficient behavior is explained later in \cref{ssec:experiments-results-deficiency}. However, the idea behind their design is not completely wrong. Since the camera does not only observe a single surface point, but multiple ones in each round, the NBV estimate should not only maximize the uncertainty at the current location \(\theta_t\), but also at all surrounding locations in some interval \(\Phi(\theta_t)\). The confidence-based objective functions precisely do this by summing over the total uncertainty in such an interval. This provides a NBV estimate close to a large, but not necessarily observable area of uncertainty.

On the other hand, uncertainty-based objective functions described in \cref{ssec:design-objective-uncertainty} maximize the uncertainty only at the current location, which is guaranteed to be visible. In general, this does not correlate with a large close-by area of uncertainty as visualized in \cref{fig:design-algorithm-confidence-vs-uncertainty}.

\begin{figure}
    \centering
    \includegraphics[width=0.7\linewidth]{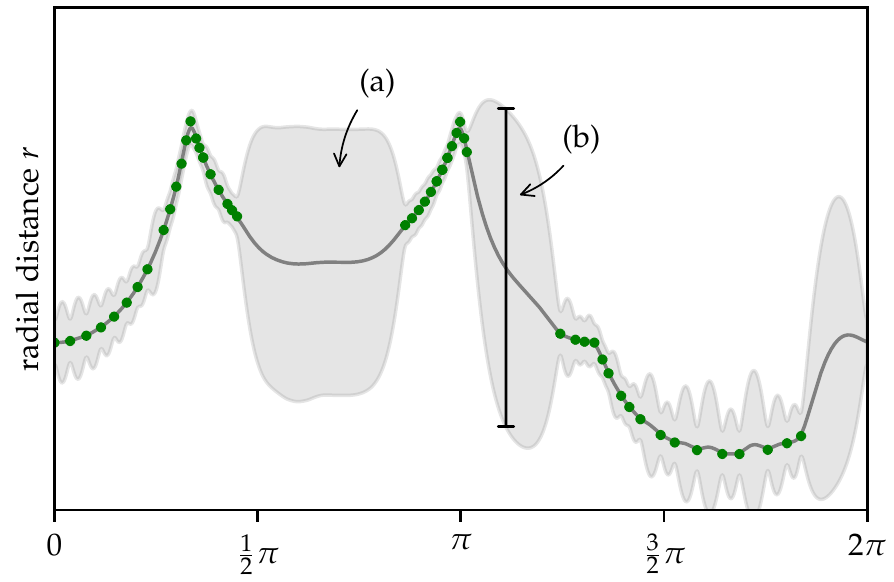}
    \caption[Confidence-based vs.\ Uncertainty-based Objective Functions]{
        Confidence-based vs.\ Uncertainty-based Objective Functions.
        This figure visualizes the confidence region (gray) after making measurements (green) of the object surface function. Observe that the location with the largest surrounding area of uncertainty (a), which is maximized by confidence-based objective functions, does not correspond to the location with largest current uncertainty (b), which is maximized by uncertainty-based objective functions.
    }
    \label{fig:design-algorithm-confidence-vs-uncertainty}
\end{figure}

To combine the strengths of both types, one can first use a confidence-based objective function to find a large area of uncertainty, which does not have to be completely visible, and then use a uncertainty-based objective function to find a location with largest uncertainty inside this area, which is visible. This motivates the design of two-phase algorithms, a variant of the single-phase greedy algorithms. The general idea is to find some interval \(\Phi^{(1)}\) in phase 1, which maximizes some objective \(\Fu[1]\), and then to run the greedy algorithm in phase 2 only on \(\Phi^{(1)}\) to maximize some other objective \(\Fu[2]\).

\begin{definition}[Two-phase Algorithm]\label{def:twophase-algorithm}
    Given an area-based objective function \(\Fu[1]\) defined over a summation interval \(\Phi^{(1)}(\theta)\) and an objective function \(\Fu[2]\), the two-phase algorithm \(\A(\Theta;\Fu[1],\Fu[2])\) returns
    \begin{equation*}
        \A(\Theta;\Fu[1],\Fu[2])
        \defeq \argmax_{\theta\in\Phi^{(1)}\paren[\Big]{\argmax\limits_{\theta\in\Camspace} \Fu[1](\theta\mid\Theta)}} \Fu[2](\theta\mid\Theta)
    \end{equation*}
    for a given set of previous camera poses \(\Theta\).
    We use \(\A[X\hyphen Y](\Theta)\) to denote the two-phase algorithm with respect to objective \(\Fu[X]\) in phase 1 and objective \(\Fu[Y]\) in phase 2.
\end{definition}

From the definition, one can easily see the difference to \cref{def:greedy-algorithm}, which consists only of the domain over which the phase 2 objective function is maximized to find the NBV estimate.

The algorithm \(\A[\TCSU]\) captures our previous idea and uses \nameref{sssec:design-objective-cs} in phase 1 to find an area with large uncertainty and \nameref{sssec:design-objective-u} in phase 2 to find the location with maximum uncertainty.


\section{Summary}\label{sec:design-summary}

We finish this chapter with a summary on the final design choices for our Gaussian process model, objective functions and algorithms, and we present the candidates for the analysis in \cref{chp:analysis}.

For the Gaussian process model, we use the mean function from \cref{eq:mean-function} and for the covariance function the periodic \matern{} kernels by periodic summation \(\kpsuminf[M_\nu](r)\) with \(\nu=n+\frac{1}{2},n\in\Natural\) from \cref{eq:periodic-matern-kernel-infinite-sum}. The choice of these kernel functions allows us to use the existing closed-form expressions and theoretical properties provided by \textcite{borovitskiy2020matern}.

In combination with the greedy algorithm design from \cref{def:greedy-algorithm}, we favor the objective functions \nameref{sssec:design-objective-up}, \nameref{sssec:design-objective-u} and \nameref{sssec:design-objective-cs} for their simplicity. We first analyze \(\A[UP]\) despite its objective function not being a necessary upper bound (see \cref{item:req-necessary-bound}). The reason is that we can extend a large part of the theoretical results to the analysis of \(\A[U]\) and \(\A[CS]\).

Together with the two-phase algorithm design from \cref{def:twophase-algorithm}, we analyze \(\A[\TCSU]\) based on \nameref{sssec:design-objective-cs} in phase 1 and \nameref{sssec:design-objective-u} in phase 2. This allows us to compare it with the previous analysis of the individual objective functions combined with the greedy algorithm design.

\chapter{Theoretical Analysis}\label{chp:analysis}

In this chapter, the goal is to show sublinear regret for the final candidates described in \cref{sec:design-summary}. The structure of this analysis is illustrated in \cref{fig:analysis-overview}.

In \cref{sec:analysis-tools} we present results, which are applicable to all candidates and help us to show sublinear regret. They include the choice of the confidence parameter \(\beta_t\) to ensure the validity of the confidence bounds and relating the measured uncertainties with an information-theoretic quantity.

In \cref{sec:analysis-greedy-up} we proceed with the proof for \(\A[UP]\). Despite this candidate not fulfilling the necessary requirement \cref{item:req-necessary-bound}, which prevents us from showing sublinear regret for \(\A[UP]\), it provides us with a framework for the following candidates, which are closely related to \(\A[UP]\).

In the \cref{sec:analysis-greedy-u,sec:analysis-greedy-cs,sec:analysis-twophase-csu} we show sublinear regret for \(\A[U],\A[CS]\) and \(\A[\TCSU]\) and summarize our findings in \cref{sec:analysis-summary}.

\begin{figure}
    \newtcolorbox{boxcontainer}[3][]{
        boxstyle=#3,
        title={#2},
        fonttitle=\scriptsize,
        #1
    }
    \newtcbox{\circlecontainer}[1][]{
        circlestyle=gray,
        width=0.7cm, tcbox width=forced center,
        #1
    }
    \centering
    \begin{tikzpicture}[
        every node/.style={inner sep=1.5mm, outer sep=0mm},
        circlenode/.style={inner sep=0mm, circle},
        >/.tip=Stealth,
        node distance=0cm,
        font=\tiny,
    ]
        \node (theorem-2) {
            \begin{boxcontainer}[width=2.6cm]{Theorem 6.1 *}{green}
                sublinear regret \\ \(R(T) \leq \bigO*{T^n}, n<1\)
            \end{boxcontainer}
        };
        \node (text-combinetheorem) [above left=-0.45cm and 0.1cm of theorem-2.west, font=\scriptsize] {
            \&
        };
        \node (theorem-1) [left=0.75cm of theorem-2] {
            \begin{boxcontainer}[width=3cm]{\cref{thm:near-optimality}}{blue}
                sublinear regret \(\implies\) near-optimality
            \end{boxcontainer}
        };
        \node (text-nearoptimality) [above right=-0.45cm and 0.1cm of theorem-2.east, font=\scriptsize] {
            \(\implies\) near-optimality
        };

        \node (lemma-2-3) [below right=0.75cm and 0cm of theorem-2.south west] {
            \begin{boxcontainer}[width=2.6cm]{Lemma 6.4 *}{green}
                 \(\sum \Fu \leq \bigO*{\sqrt{T\beta_T\gamma_T}}\)
            \end{boxcontainer}
        };
        
        \node (text-equals2) [below left=0.73cm and -0.27cm of lemma-2-3.north west, font=\scriptsize] {
            \(=\)
        };
        \node (lemma-2-2) [below left=of lemma-2-3.north west] {
            \begin{boxcontainer}[width=2.2cm]{\cref{lem:lemma-2-2}}{blue}
                 \(R_{ind}(T) \leq \sum \Fu\)
                 \tcblower
                 assuming \\ \(F(\thetagre_t \vert \cdot) \leq \Fu(\theta_t \vert \cdot)\)
            \end{boxcontainer}
        };
        \node (text-equals1) [below left=0.71cm and -0.27cm of lemma-2-2.north west, font=\scriptsize] {
            \(=\)
        };
        \node (lemma-2-1) [below left=of lemma-2-2.north west] {
            \begin{boxcontainer}[width=1.9cm]{\cref{lem:lemma-2-1}}{blue}
                \(R(T) < R_{ind}(T)\)
            \end{boxcontainer}
        };
        
        \node (lemma-2-4b) [below right=of lemma-2-3.north east] {
            \begin{boxcontainer}[width=2cm]{\cref{lem:lemma-2-4}}{ForestGreen}
                 \(\beta_T \leq \bigO*{\log T}\)
            \end{boxcontainer}
        };
        \node (lemma-2-6) [below right=of lemma-2-4b.north east] {
            \begin{boxcontainer}[width=2.7cm]{\cref{lem:lemma-2-6}}{ForestGreen}
                \(\gamma_T \leq \bigO*{T^\alpha\log(T)^{1-\alpha}}\)
            \end{boxcontainer}
        };
        \node[circlenode] (design-kernel) [below=1cm of lemma-2-6] {
            \circlecontainer{\(k\)}
        };

        \node[circlenode] (design-objective) [below=1cm of lemma-2-3] {
            \circlecontainer{\(\Fu\)}
        };
        \node[circlenode] (design-algorithm) [left=0.1cm of design-objective] {
            \circlecontainer{\(\A\)}
        };
        \node (lemma-2-5) [right=0.1cm of design-objective] {
            \begin{boxcontainer}[width=2.8cm]{\cref{lem:lemma-2-5}}{ForestGreen}
                 \(\sum \sigma(X_t)^2 \leq I(Y_{1:T};f_{1:T})\)
            \end{boxcontainer}
        };
        \node (lemma-2-4a) [below=0.5cm of design-objective] {
            \begin{boxcontainer}[width=2.6cm]{\cref{lem:lemma-2-4}}{ForestGreen}
                 choice for \(\beta_t\)
            \end{boxcontainer}
        };

        \node (text-general) [below right=2.7cm and 0cm of lemma-2-1.south west] {
            \begin{boxcontainer}[width=3cm]{}{blue}
                results for \\ general setting
            \end{boxcontainer}
        };
        \node (text-simplified) [below left=2.7cm and 0cm of lemma-2-6.south east] {
            \begin{boxcontainer}[width=3cm]{}{green}
                tools \& results for \\ simplified 2D setting
            \end{boxcontainer}
        };
        \node (text-star) [above left=-0.1cm and 0cm of text-simplified.north east, align=left] {
            * results specific to \\ \hphantom{* }choice of \(\A\) and \(\Fu\)
        };

        \draw[<-] (theorem-2) -- (lemma-2-3);
        \draw[<-] (theorem-2) -- (lemma-2-4b);
        \draw[<-] (theorem-2) -- (lemma-2-6.north west);
        
        \draw[dotted] (lemma-2-2) -- (design-algorithm);
        \draw[dotted] (lemma-2-2) -- (design-objective);

        \draw[<-] (lemma-2-3) -- (design-algorithm);
        \draw[<-] (lemma-2-3) -- (design-objective);
        \draw[<-] (lemma-2-3) -- (lemma-2-5);

        \draw[<-] (design-objective) -- (lemma-2-4a);
        
        \draw[<-] (lemma-2-6) -- (design-kernel);
        \draw[dotted] (lemma-2-4b) -- (design-kernel);
    \end{tikzpicture}
    \caption[Overview of the Theoretical Analysis]{
        Overview of the Theoretical Analysis.
        Initially, \cref{thm:near-optimality} shows for the general setting that pseudo-convergence to near-optimality follows from sublinear regret.
        Hence, we want to show sublinear regret for each of our algorithm candidates with the final results given in \cref{thm:no-sublinear-greedy-up,,thm:sublinear-greedy-u,,thm:sublinear-greedy-cs,,thm:sublinear-twophase-csu}. On our way towards sublinear regret, we use the \cref{lem:lemma-2-1,lem:lemma-2-2} from the general setting and the corresponding \cref{lem:lemma-2-3-greedy-up,lem:lemma-2-3-greedy-u,lem:lemma-2-3-greedy-cs,lem:lemma-2-3-twophase-csu} to upper bound the cumulative regret with \(\bigO*{\sqrt{T\beta_T\gamma_T}}\). Together with \cref{lem:lemma-2-4,lem:lemma-2-6}, we can show sublinear regret for most candidates depending on the design choices for \(\A\), \(\Fu\) and \(k\), which then implies near-optimality.
    }
    \label{fig:analysis-overview}
\end{figure}


\section{Tools for the Analysis}\label{sec:analysis-tools}

Before starting the analysis, we present some additional results which help us in showing sublinear regret and are applicable to all algorithm candidates. In \cref{ssec:analysis-tools-confidence} we present the specific choice for the confidence parameter \(\beta_t\) which ensures that the unknown surface function lies between the upper and lower confidence bound with high probability. In \cref{ssec:analysis-tools-information} we define an additional information theoretic quantity to describe the maximum information gain through each measurement, which allows us to derive the final sublinear bound on the regret.

\subsection{Choice of Confidence Parameter}\label{ssec:analysis-tools-confidence}
The ideal goal is to show that the surface function \(f\) deterministically lies between the upper and lower confidence bounds \(u_t\) and \(l_t\) by choosing a suitable confidence parameter \(\beta_t\) as described in \cref{eq:confidence-bounds-ideal}. However, since \(f\) is sampled from \(\GP*{m,k}\), there can always be outlier functions which escape the confidence bounds locally at some \(\varphi\) with very small probability. Hence, we can only show that \(f\) lies between \(u_t\) and \(l_t\) with probability of at least \(1-\delta\) as written in \cref{eq:confidence-region-ideal}, where the \emph{failure probability} \(\delta \in (0,1)\) can be chosen arbitrarily low.

The following lemma is obtained from \textcite[Theorem 2]{srinivas2012informationtheoretic} and adapted to our setting with domain \(\Domain = [0,2\pi]\).

\begin{lemma}[Confidence Parameter]\label{lem:lemma-2-4}
    Let \(k(r)\) be a stationary kernel defined on \(\Domain \subseteq \Real\). In addition, assume that the derivatives of \(f\sim\GP*{m,k}\) are bounded with probability
    \begin{equation*}
        \Pr*{\sup_{\varphi\in\Domain} \abs*{\deriv{f}{\varphi}(\varphi)} \leq L} \geq 1 - ae^{-L^2/b^2}
        \quad\text{ for some } a,b > 0
        .
    \end{equation*}
    By choosing the confidence parameter
    \begin{equation*}
        \beta_t = 2\log*{\frac{2\pi^3b}{3}\sqrt{\log*{\frac{2a}{\delta}}}\frac{t^4}{\delta}} \leq \bigO*{\log*{\frac{t}{\delta}}}
    \end{equation*}
    for an arbitrary small \(\delta\in(0,1)\), we can show the following high probability bound
    \begin{equation*}
        \Pr*{
            \forall t \geq 1
            \forall \varphi \in \Domain\colon
            \abs*{f(\varphi)-\mu_{t-1}([\varphi]_t)} \leq \beta_t^{1/2}\sigma_{t-1}([\varphi]_t) + \frac{1}{t^2}
        } \geq 1-\delta
        .
    \end{equation*}
    with \([\varphi]_t\) defined as the closest point to \(\varphi\) inside a uniform discretization \(\Domain_t\) of size \(\abs{\Domain_t} = 2\pi b\sqrt{\log*{\frac{2a}{b}}}\cdot t^2\).
\end{lemma}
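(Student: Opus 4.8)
The plan is to follow the continuous-domain confidence argument of \textcite[Theorem 2]{srinivas2012informationtheoretic}, specialising the constants to the one-dimensional domain $\Domain = [0,2\pi]$. The target inequality is first split by the triangle inequality
\[
    \abs*{f(\varphi) - \mu_{t-1}([\varphi]_t)} \leq \abs*{f(\varphi) - f([\varphi]_t)} + \abs*{f([\varphi]_t) - \mu_{t-1}([\varphi]_t)},
\]
so it suffices to control two error sources separately: the \emph{confidence error} $\abs{f([\varphi]_t) - \mu_{t-1}([\varphi]_t)}$ at the finitely many grid points $[\varphi]_t \in \Domain_t$, and the \emph{discretization error} $\abs{f(\varphi) - f([\varphi]_t)}$ incurred by snapping $\varphi$ to its nearest grid point. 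The whole argument rests on splitting the failure budget $\delta$ equally between these two terms.

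For the confidence error, I would exploit that the posterior is Gaussian: conditioned on $Y_{1:t-1}$ we have $f(x) \sim \Normal*{\mu_{t-1}(x), \sigma_{t-1}(x)^2}$, so the standard Gaussian tail bound $\Pr*{\abs{z} > c} \leq \exp*{-c^2/2}$ for $z\sim\Normal*{0,1}$ yields, for each fixed grid point $x$ and round $t$,
\[
    \Pr*{\abs*{f(x) - \mu_{t-1}(x)} > \beta_t^{1/2}\sigma_{t-1}(x)} \leq \exp*{-\beta_t/2}.
\]
A union bound over the $\abs{\Domain_t}$ grid points and over all rounds $t \geq 1$ then caps the failure probability by $\sum_{t\geq 1} \abs{\Domain_t}\exp*{-\beta_t/2}$. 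With the stated choice of $\beta_t$ one computes $\exp*{-\beta_t/2} = \frac{3\delta}{2\pi^3 b\sqrt{\log(2a/\delta)}\, t^4}$, and with $\abs{\Domain_t} = 2\pi b\sqrt{\log(2a/\delta)}\,t^2$ every summand collapses to $\frac{3\delta}{\pi^2 t^2}$; since $\sum_{t\geq1} t^{-2} = \pi^2/6$, the total is exactly $\delta/2$.

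For the discretization error, I would invoke the derivative hypothesis at the specific level $L = b\sqrt{\log(2a/\delta)}$, for which it guarantees $\sup_{\varphi\in\Domain}\abs*{\deriv{f}{\varphi}} \leq L$ with probability at least $1 - a\exp*{-L^2/b^2} = 1 - \delta/2$. On this event $f$ is $L$-Lipschitz, so $\abs{f(\varphi)-f([\varphi]_t)} \leq L\abs{\varphi-[\varphi]_t}$; since a uniform grid of $\abs{\Domain_t}$ points on $[0,2\pi]$ places every $\varphi$ within spacing $2\pi/\abs{\Domain_t}$ of a grid point, this is at most $L\cdot\frac{2\pi}{\abs{\Domain_t}} = \frac{1}{t^2}$ for the chosen grid size. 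Taking a union over the two failure events of probability $\delta/2$ each, both bounds hold simultaneously with probability $\geq 1-\delta$, and feeding them into the triangle inequality above delivers the claim.

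The main obstacle is bookkeeping rather than conceptual: one must choose $L$, the grid size $\abs{\Domain_t}$, and $\beta_t$ in a mutually consistent way so that (i) the Lipschitz error lands exactly at the target slack $1/t^2$, (ii) the derivative-bound failure probability equals $\delta/2$, and (iii) the summed union bound over grid and time also equals $\delta/2$. These three requirements simultaneously pin down $L = b\sqrt{\log(2a/\delta)}$, $\abs{\Domain_t} = 2\pi L t^2$, and the displayed $\beta_t$; verifying that they close the $\delta$-budget is the crux of the calculation. A minor technical subtlety to treat carefully is the conditioning in the pointwise Gaussian bound, since $\mu_{t-1}$ and $\sigma_{t-1}$ themselves depend on the random history, but this is absorbed into the uniformity over $t$ furnished by the union bound.
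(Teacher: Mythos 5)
Your proposal is correct and follows essentially the same route as the paper's own proof (itself an adaptation of Srinivas et al., Theorem 2): the identical triangle-inequality decomposition, the Gaussian tail bound $\Pr\brack*{\abs{z}>c}\leq e^{-c^2/2}$ with union bounds over the grid $\Domain_t$ and over $t\geq1$ consuming one $\delta/2$, and the derivative/Lipschitz argument at level $L=b\sqrt{\log(2a/\delta)}$ with grid spacing $2\pi/\abs{\Domain_t}$ consuming the other $\delta/2$. As a minor remark, your consistent use of $\sqrt{\log(2a/\delta)}$ in the grid size is actually the internally consistent choice; the paper's statement writes $\sqrt{\log(2a/b)}$ there, which appears to be a typo relative to its own derivation.
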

\begin{proof}
    \cref{ssec:proofs-lemma-2-4}
    \noqed
\end{proof}

This lemma can be rewritten into
\begin{equation}\label{eq:confidence-region-actual}
    l_t([\varphi]_t) \leq f(\varphi) \leq u_t([\varphi]_t)
    \quad\text{ for all \(\varphi \in \Domain, t \geq 1\) \whp{}}
\end{equation}
with the following refined upper and lower confidence bounds
\begin{equation}\label{eq:confidence-bounds-actual}
    \begin{aligned}
        u_t(\varphi) &= \mu_{t-1}(\varphi) + \beta_t^{1/2}\sigma_{t-1}(\varphi) + \frac{1}{t^2}\\
        l_t(\varphi) &= \mu_{t-1}(\varphi) - \beta_t^{1/2}\sigma_{t-1}(\varphi) - \frac{1}{t^2}
        .
    \end{aligned}
\end{equation}
The difference to our initial version in \cref{eq:confidence-region-ideal,eq:confidence-bounds-ideal} is that we bound the unknown surface function \(f\) only with discretized upper and lower confidence bounds based on the discretized domain \(\Domain_t\). The additional uncertainty of \(\frac{1}{t^2}\) ensures that \(f(\varphi)\) also lies between the discretized confidence bounds at the remaining  points \(\varphi\notin\Domain_t\).

Intuitively, showing that \(f(\varphi)\) is bounded for all \(\varphi\in\Domain\) corresponds to infinitely many probabilistic statements, one for each \(\varphi\). Since each of these statements only hold with probability less than 1, it is impossible to lower bound the probability of all these statements together with a nonzero probability. This becomes possible with the union bound if we show the statement only for a finite set of points \(\Domain_t\).

The additional uncertainty of \(\frac{1}{t^2}\) due to the discretization depends on the assumption that the derivatives of functions sampled from \(\GP*{m,k}\) are bounded with high probability (\ie{} probabilistically Lipschitz-continuous). This allows us to prevent sample functions from escaping the confidence bounds between the discretization point with high probability. By increasing the discretization granularity with \(\abs{\Domain_t}\leq\bigO*{t^2}\) in each round, this additional uncertainty of \(\frac{1}{t^2}\) due to the discretization error shrinks towards zero. The formal reasoning can be found in the proof in \cref{ssec:proofs-lemma-2-4}.

As a final remark regarding the bounded derivatives assumption, we want to point out that this is satisfied for stationary Gaussian processes with four times differentiable kernel functions as stated by \textcite{srinivas2012informationtheoretic}. This guarantees that the sample functions \(f\) are almost surely continuously differentiable and the corresponding derivatives \(\deriv{f}{\varphi}\) are distributed with a Gaussian process again, which provides the exponentially decreasing probability bound for derivatives larger than \(L\) \autocite[Theorem 5]{ghosal2006posterior}. It is known that RBF and \matern{} kernels with \(\nu > 2\) have derivatives up to fourth order \autocite{stein1999interpolation}.

\subsection{Relation between Uncertainty and Information Gain}\label{ssec:analysis-tools-information}

In \cref{ssec:background-infotheory-infogain}, we defined the \emph{information gain} \(I(X;Y)\) as the amount of information gained about the random variable \(Y\) by observing \(X\) or vice versa due to symmetry. The ``information amount'' can also be understood as the amount of uncertainty contained in \(Y\) which is removed by observing \(X\) or vice versa. The precise notion is described in \cref{ssec:background-infotheory-entropy}.

In our setting, we are interested in \(I(Y_{1:t};f_{1:t})\), which quantifies the information gained about the unknown surface function \(f_{1:t}\) through the noisy measurements \(Y_{1:t}\). The following lemma obtained from \textcite[Lemma 5-7]{prajapat2022nearoptimal} relates the sum of all uncertainties \(\sigma_{t-1}(X_t)\) at the measured surface points \(X_{1:t}\) with this quantity.

\begin{lemma}[Uncertainty and Information Gain]\label{lem:lemma-2-5}
    Let \(X_t\) describe the observed surface points and \(Y_t\) the corresponding measurements at these points as defined in \cref{eq:observed-surface,eq:measured-surface}. Assume the uncertainties \(\sigma_{t-1}(\varphi)\) are obtained from \(\GP*{m,k}\) with a kernel function satisfying
    \begin{equation*}
        \abs{k(\varphi,\varphi')}\leq1
        \quad\text{ for all } \varphi,\varphi'\in\Domain
        .
    \end{equation*}
    \begin{enumerate}[label=\alph*),ref=\cref*{lem:lemma-2-5}\alph*]
        \item \label{lem:lemma-2-5a} If \(X_t = o(\theta_t)\) in each round, then the sum of all uncertainties at \(X_{1:T}\) can be upper bounded with
        \begin{equation*}
            \frac{1}{2} \sum_{t=1}^T \sum_{i=1}^{n_t} \sigma_{t-1}(X_{t,i})^2
            \leq \frac{N_T}{\log*{\sigmaeps^{-2}+1}} \cdot I(Y_{1:T};f_{1:T})
        \end{equation*}
        with \(\abs{X_{1:T}} = \sum_{t=1}^T n_t\) and \(N_T \defeq \max_{t=1,\dots,T} n_t\).
        \item \label{lem:lemma-2-5b} If \(X_t = \set{\theta_t}\) in each round, then the sum of all uncertainties at \(X_{1:T}\) can be upper bounded with
        \begin{equation*}
            \frac{1}{2} \sum_{t=1}^T \sigma_{t-1}(\theta_t)^2
            \leq \frac{1}{\log*{\sigmaeps^{-2}+1}} \cdot I(\fm(\theta_{1:T});f(\theta_{1:T}))
        \end{equation*}
        with \(\abs{X_{1:T}} = T\).
    \end{enumerate}
\end{lemma}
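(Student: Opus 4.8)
The plan is to reduce both statements to the classical information-gain identity for Gaussian processes and then convert a trace into a log-determinant by a scalar monotonicity argument. First I would apply the chain rule for mutual information to obtain the per-round decomposition
\begin{equation*}
    I(Y_{1:T};f_{1:T}) = \sum_{t=1}^T I(Y_t;f_t \mid Y_{1:t-1})
\end{equation*}
and evaluate each conditional term in closed form. Conditioned on $Y_{1:t-1}$, the vector $f_t = f(X_t)$ is Gaussian with covariance $\Sigma_{t-1}(X_t)$, and $Y_t = f_t + \varepsilon_{1:t}$ adds independent noise with covariance $\sigmaeps^2 I$, so subtracting the two entropies computed via \cref{lem:information-entropy-gaussian} gives
\begin{equation*}
    I(Y_t;f_t \mid Y_{1:t-1}) = \tfrac{1}{2}\log\det*{I + \sigmaeps^{-2}\Sigma_{t-1}(X_t)}.
\end{equation*}
This is the standard expression I would import from the structure of \textcite{srinivas2012informationtheoretic}.

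Next I would observe that the left-hand side of each claim is a trace: $\sum_{i=1}^{n_t}\sigma_{t-1}(X_{t,i})^2 = \tr(\Sigma_{t-1}(X_t))$, since the posterior variances sit on the diagonal of $\Sigma_{t-1}(X_t)$. The bounded-kernel assumption $\abs{k(\varphi,\varphi')}\leq 1$ forces every diagonal entry $\sigma_{t-1}^2 \leq 1$, hence $\tr(\Sigma_{t-1}(X_t)) \leq n_t \leq N_T$ and every eigenvalue $\lambda_j$ of $\Sigma_{t-1}(X_t)$ lies in $[0,N_T]$. Writing the log-determinant as $\sum_j \log(1+\sigmaeps^{-2}\lambda_j)$, the reduction rests on the scalar inequality $u \leq \frac{a}{\log(1+a)}\log(1+u)$ valid for $0 \leq u \leq a$ (because $u \mapsto u/\log(1+u)$ is increasing). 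Applying it with $u = \sigmaeps^{-2}\lambda_j$ and $a = \sigmaeps^{-2}N_T$, dividing by $\sigmaeps^{-2}$ and summing over $j$ yields $\tr(\Sigma_{t-1}(X_t)) \leq \frac{N_T}{\log(1+\sigmaeps^{-2}N_T)}\log\det(I+\sigmaeps^{-2}\Sigma_{t-1}(X_t))$; monotonicity of the logarithm in $N_T \geq 1$ then loosens the prefactor to $\frac{N_T}{\log(\sigmaeps^{-2}+1)}$. Summing over $t$ and plugging in the identity from the first step proves \cref{lem:lemma-2-5a}.

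For \cref{lem:lemma-2-5b} I would simply specialize to $n_t = 1$: the matrix $\Sigma_{t-1}(X_t)$ degenerates to the scalar $\sigma_{t-1}(\theta_t)^2 \leq 1$, so $N_T = 1$, the log-determinant becomes $\log(1+\sigmaeps^{-2}\sigma_{t-1}(\theta_t)^2)$, and the eigenvalue step collapses to the single scalar inequality with $a = \sigmaeps^{-2}$, recovering the constant $\frac{1}{\log(\sigmaeps^{-2}+1)}$ against $I(\fm(\theta_{1:T});f(\theta_{1:T}))$.

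The main obstacle is the batch case \cref{lem:lemma-2-5a}: the variances $\sigma_{t-1}(X_{t,i})$ on the left are all computed from the round-$(t-1)$ posterior and are \emph{not} refreshed between the $n_t$ simultaneous observations within round $t$, whereas the information gain naturally accounts for sequential conditioning. Bridging this gap is exactly what the trace-to-log-determinant inequality accomplishes, and the price paid for the stale variances is the factor $N_T$. I would be careful to justify the eigenvalue bound $\lambda_j \leq N_T$ through the trace rather than through the diagonal entries directly, since the eigenvalues of a correlated covariance block can exceed its diagonal entries.
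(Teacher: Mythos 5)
Your proposal is correct and follows essentially the same route as the paper's proof: express \(I(Y_{1:T};f_{1:T})\) as a sum of per-round log-determinants \(\tfrac{1}{2}\log\det(I+\sigmaeps^{-2}\Sigma_{t-1}(X_t))\) (the paper obtains this by recursively expanding \(H(Y_{1:T})-H(Y_{1:T}\mid f_{1:T})\), which is the same computation as your chain-rule decomposition), bound the eigenvalues of \(\Sigma_{t-1}(X_t)\) through the trace by \(n_t\leq N_T\), and convert trace to log-determinant with the scalar inequality \(x\leq\frac{c}{\log(c+1)}\log(x+1)\) on \([0,c]\). The only point to note is that your claim that the bounded kernel forces the diagonal entries \(\sigma_{t-1}(X_{t,i})^2\leq 1\) additionally relies on posterior variances never exceeding prior variances — a fact the paper proves explicitly — but this is standard, and your remaining deviations (justifying the scalar inequality by monotonicity of \(u/\log(1+u)\) rather than concavity, and using the uniform constant \(a=\sigmaeps^{-2}N_T\) instead of the paper's per-round \(c=\sigmaeps^{-2}n_t\) before loosening) are immaterial.
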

\begin{proof}
    \cref{ssec:proofs-lemma-2-5}
    \noqed
\end{proof}

Note that the general statement \cref{lem:lemma-2-5a} does not depend on the specific choice for \(X_t\) and can be instantiated with any set of measurement locations. This is how we obtained \cref{lem:lemma-2-5b} by assuming a camera model which only measures the single surface point \(\theta_t\) at the line of sight (LOS) instead of all visible surface points \(o(\theta_t)\) inside the FOV of the camera. This statement becomes useful when analyzing the uncertainty-based objective functions described in \cref{ssec:design-objective-uncertainty}.

We also want to remark that we use the inequality provided by \cref{lem:lemma-2-5} only as a mathematical tool to show sublinear regret. The set of measurement locations \(X_t\) might not exactly match the true set of measurement locations in practice and can be just an estimate of it. Hence, \(I(Y_{1:t};f_{1:t})\) does not necessarily reflect the real information gain of our algorithm either, but is similarly just an estimate.

The importance of this lemma comes from connecting both ends of the overall proof towards sublinear regret. On the one side, we start with \(R(T)\), which can be related to the measured uncertainties \(\sigma_{t-1}(X_t)\) through \cref{lem:lemma-2-1,lem:lemma-2-2} and the choice of the objective function \(\Fu\). On the other side, we can derive a sublinear regret bound based on the maximum information gain as we show next in \cref{lem:lemma-2-6}. The relation between the measured uncertainties and the information gain is established by this \cref{lem:lemma-2-5}.

The maximum amount of information one can gain about an unknown function by measuring it at \(T\) locations is commonly defined as the \emph{information capacity} \(\gamma_T\).

\begin{definition}[Information Capacity]\label{def:information-capacity}
    Let \(f\) be a function sampled from \(\GP*{m,k}\) and \(\fm\) the corresponding noisy measurement function as defined in \cref{eq:measurement-function}. The information capacity is defined as
    \begin{equation*}
        \gamma_T
        \defeq \sup_{\Phi\subseteq\Domain,\abs{\Phi}=T} I(\fm(\Phi);f(\Phi))
    \end{equation*}
    with \(T\geq1\).
\end{definition}

As part of \cref{lem:lemma-2-5}, we showed that
\begin{equation*}
    I(\fm(\Phi);f(\Phi)) = \frac{1}{2} \sum_{t=1}^T \log\det*{I_T + \sigmaeps^{-2}K_\varphi}
\end{equation*}
with \(K_\Phi \defeq \brack{k(\varphi,\varphi')}_{\varphi,\varphi'\in\Phi}\). By defining the information capacity as the maximum over the information gain, the dependence on the set of measurement locations \(\Phi\) is removed and \(\gamma_T\) becomes a kernel-specific quantity. This sounds intuitive, since the kernel quantifies the similarity between different surface points and therefore influences the amount of information one can obtain by measuring these surface points.

Previous work already derived various upper bounds on \(\gamma_T\) \autocite{srinivas2012informationtheoretic, vakili2021information}, but they instantiate their results only for the non-periodic RBF and \matern{} kernel. Since we require periodic kernels, we derive an upper bound for the periodic \matern{} kernel \(\kpsuminf[M_\nu]\) based on the work of \textcite{borovitskiy2020matern, vakili2021information}.

\begin{lemma}[Bound on Information Gain]\label{lem:lemma-2-6}
    Let \(\gamma_T\) be the information capacity from \cref{def:information-capacity} defined with respect to the periodic \matern{} kernel by periodic summation \(\kpsuminf[M_\nu]\) with \(\nu=n+\frac{1}{2},n\in\Natural\) from \cref{eq:periodic-matern-kernel-infinite-sum}. Assume
    \begin{equation*}
        \abs{\kpsuminf[M_\nu](r)} \leq 1
        \quad\text{ for all } r\in\Real
        .
    \end{equation*}
    Then the information capacity can be upper bounded with
    \begin{equation*}
        \gamma_T \leq \bigO*{T^\frac{1}{2\nu+1}\log(T)^{\frac{2\nu}{2\nu+1}}}
        .
    \end{equation*}
\end{lemma}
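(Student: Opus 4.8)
The plan is to reduce the bound on $\gamma_T$ to a statement about the polynomial decay rate of the Mercer eigenvalues of the periodic \matern{} kernel, and then to invoke the general information-gain bound of \textcite{vakili2021information}. Recall from \cref{lem:lemma-2-5} that the information gain takes the form of a log-determinant of the kernel Gram matrix,
\begin{equation*}
    I(\fm(\Phi);f(\Phi)) = \frac{1}{2}\log\det*{I_T + \sigmaeps^{-2}K_\Phi},
\end{equation*}
so that $\gamma_T$ is governed entirely by the spectrum of the matrices $K_\Phi$, and ultimately by the eigenvalues of the kernel integral operator. The result of \textcite{vakili2021information} makes this precise: under the normalization $\abs{\kpsuminf[M_\nu](r)}\leq1$ (assumed in the lemma, and exactly what licenses their bound), and whenever the Mercer eigenvalues $\lambda_m$ of the kernel with respect to the uniform measure on $\Domain$ decay polynomially as $\lambda_m \leq C\,m^{-\beta}$ for some $\beta>1$, the information capacity obeys $\gamma_T \leq \bigO*{T^{1/\beta}\log(T)^{1-1/\beta}}$. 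Matching the claimed exponents, it suffices to establish the decay rate $\beta = 2\nu+1$.

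The central step is therefore to compute this eigendecay. Since $\kpsuminf[M_\nu]$ is stationary on the circle $\Domain=[0,2\pi]$, its Mercer eigenfunctions under the uniform measure are exactly the Fourier basis $\set{e^{\i m\varphi}}_{m\in\Integer}$, and the associated eigenvalues are its Fourier series coefficients. The plan is to relate these coefficients to the spectral density $S(\omega)$ of the underlying non-periodic \matern{} kernel: the periodic-summation construction of \cref{def:periodization-infinite-sum} is precisely the operation whose Fourier series coefficients, by the Poisson summation formula, equal the continuous transform $S$ sampled on the integer lattice. Using the closed-form spectral density supplied by \textcite{borovitskiy2020matern}, namely
\begin{equation*}
    S(\omega) \propto \paren*{\frac{2\nu}{l^2} + \omega^2}^{-\paren*{\nu+\frac{1}{2}}},
\end{equation*}
this yields $\lambda_m \propto S(m) \sim m^{-(2\nu+1)}$ as $m\to\infty$, which is the required polynomial decay with $\beta = 2\nu+1$.

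With $\beta=2\nu+1$ in hand, substituting into the \textcite{vakili2021information} bound gives exactly $\gamma_T \leq \bigO*{T^{1/(2\nu+1)}\log(T)^{2\nu/(2\nu+1)}}$, since $1-\frac{1}{2\nu+1}=\frac{2\nu}{2\nu+1}$. The main obstacle I expect is the bookkeeping in the middle step: reconciling the manifold/Laplacian parameterization of the spectral density in \textcite{borovitskiy2020matern} (where the decay is expressed through the Laplacian eigenvalues $m^2$ on the circle) with the Mercer eigenvalues demanded by \textcite{vakili2021information}, and upgrading the asymptotic relation $\lambda_m \sim m^{-(2\nu+1)}$ into a clean one-sided bound $\lambda_m \leq C m^{-(2\nu+1)}$ valid for all $m$ so that the hypotheses of the information-gain theorem are literally met. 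The restriction to half-integer $\nu = n+\frac{1}{2}$ is what keeps the coefficients of \textcite{borovitskiy2020matern} available in closed form and makes this tail estimate explicit rather than merely asymptotic.
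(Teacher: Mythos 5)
Your proposal is correct and follows essentially the same route as the paper's own proof: verify the hypotheses of Vakili et al.'s Corollary 1, extract the polynomial eigendecay \(\lambda_m \leq C\, m^{-(2\nu+1)}\) from the Fourier-coefficient/spectral-density characterization of the periodic \matern{} kernel (the paper via the closed-form spectral density of Borovitskiy et al., you via Poisson summation, which is the same mechanism), and substitute \(\beta = 2\nu+1\) into the information-gain bound. The ``bookkeeping'' obstacle you flag at the end, upgrading the asymptotic decay to a uniform one-sided bound, is exactly what the paper resolves with its auxiliary inequality \((c+x^2)^{-\beta} \leq \paren*{1+\tfrac{1}{c}}^{\beta}(1+x)^{-2\beta}\), together with the observation that the bound is only needed for \(m \geq 2\).
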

\begin{proof}
    \cref{ssec:proofs-lemma-2-6}
    \noqed
\end{proof}

In particular, we obtain
\begin{equation*}
    \begin{aligned}
        \gamma_T &\leq \bigO*{T^\ffrac{1}{2}\log(T)^{\ffrac{1}{2}}}
        &&\quad\with \nu = \ffrac{1}{2}\\
        \gamma_T &\leq \bigO*{T^\ffrac{1}{4}\log(T)^{\ffrac{3}{4}}}
        &&\quad\with \nu = \ffrac{3}{2}\\
        \gamma_T &\leq \bigO*{T^\ffrac{1}{6}\log(T)^{\ffrac{5}{6}}}
        &&\quad\with \nu = \ffrac{5}{2}
        .
    \end{aligned}
\end{equation*}
As one can see, the smoother the Gaussian process is with larger \(\nu\), the slower the maximum information gain \(\gamma_T\) increases with the number of measurements \(T\).
The intuition is that a smoother Gaussian process leads to smoother sample functions, for which most information about the function can be already obtained with the first few measurements, while more measurements do not provide significantly more information.
When reducing the smoothness parameter \(\nu\), the sample functions become rougher and more measurements still allow us to gain new information about them.

Since \cref{lem:lemma-2-4} requires \matern{} kernel with \(\nu>2\) and \cref{lem:lemma-2-6} provides bounds on the information gain only for \(\kpsuminf[M_\nu]\) with \(\nu=n+\frac{1}{2},n\in\Natural\), we continue with the kernel \(\kpsuminf[M_\nu]\) with \(\nu=\frac{5}{2}\) as our selected kernel function for the Gaussian process in the following sections.%
\footnote{However, the choices \(\nu=\frac{1}{2}\) and \(\nu=\frac{3}{2}\) still remain of practical relevance when dealing with objects with less smooth surface functions.}
Any larger \(\nu\) would unnecessarily restrict the smoothness of considered surface functions.


\section{\algGUP{}}\label{sec:analysis-greedy-up}

This section presents the results specific to the greedy algorithm \(\A[UP]\) based on the \nameref{sssec:design-objective-up} objective function.

Based on the result from \cref{lem:lemma-2-4} the upper and lower confidence bounds used by the objective function must be adapted to \cref{eq:confidence-bounds-actual} to ensure that the surface function lies inside the confidence region with probability \(1-\delta\), which is assumed by the objective function.%
\footnote{Although \nameref{sssec:design-objective-up} does not provide a necessary upper bound as required by \cref{item:req-necessary-bound} and this adaption does not matter to the overall result, we still proceed in preparation for the \nameref{sssec:design-objective-u} objective function.}

\begin{lemmavar}{UP}\label{lem:lemma-2-3-greedy-up}
    Consider surface functions sampled from a Gaussian process \(\GP*{m,k}\). Assume that the assumptions for \cref{lem:lemma-2-4,lem:lemma-2-5} are satisfied. Let \(u_t(\varphi)\) and \(l_t(\varphi)\) be defined according to \cref{eq:confidence-bounds-actual} with confidence parameter \(\beta_t\) chosen as in \cref{lem:lemma-2-4}.
    
    Consider the greedy algorithm \(\A[UP]\) from \cref{def:greedy-algorithm} with the objective function \nameref{sssec:design-objective-up}.
    Let \(\theta_{1:T}\) be arbitrary.
    Then we can show
    \begin{equation*}
        \sum_{t=1}^T \Fu[UP]\paren*{\theta_t \mid \theta_{1:t-1}}
        \leq C_1 \sqrt{T\beta_T\gamma_T} + C_2
    \end{equation*}
    with \(C_1 = \frac{2\abs*{\PhiS}}{h^2}\sqrt{\frac{3}{\log[]{\sigmaeps^{-2}+1}}}\) and \(C_2 = \frac{\abs{\PhiS}}{h^2}\frac{\pi}{\sqrt{3}}\).
\end{lemmavar}
\begin{proof}
    \cref{ssec:proofs-lemma-2-3-greedy-up}
    \noqed
\end{proof}

Combining all previous results as visualized in the overview in \cref{fig:analysis-overview}, we arrive at the final theorem for \(\A[UP]\).

\begin{theoremvar}[Sublinear Regret?]{UP}\label{thm:no-sublinear-greedy-up}
    Consider surface functions sampled from a Gaussian process with kernel function \(\kpsuminf[M_\nu]\) and \(\nu=\frac{5}{2},\sigma_f=1\) as defined in \cref{eq:periodic-matern-kernel-infinite-sum-examples}. Choose some failure probability \(\delta\in(0,1)\) and the confidence bounds according to \cref{eq:confidence-bounds-actual} with confidence parameter \(\beta_t\) as in \cref{lem:lemma-2-4}.
    
    Consider the greedy algorithm \(\A[UP]\) from \cref{def:greedy-algorithm} based on the objective function \nameref{sssec:design-objective-up} and the sequence of NBV estimates \(\theta[UP]_t \defeq \A[UP](\theta_{1:t-1})\) with \(t=1,\dots,T\).
    This does \textbf{not} suffice to show
    \begin{equation*}
        R(T) \leq \bigO*{T^{\frac{2\nu+2}{4\nu+2}}\log(T)^{\frac{4\nu+1}{4\nu+2}}}
        .
    \end{equation*}
\end{theoremvar}
\begin{proof}
    \cref{ssec:proofs-theorem-no-sublinear-greedy-up}
    \noqed
\end{proof}

If we were able to show sublinear regret for \(\A[UP]\), we would obtain
\begin{equation*}
    \begin{aligned}
        R(T) &\leq \bigO*{T^\ffrac{3}{4}\log(T)^{\ffrac{3}{4}}}
        &&\quad\with \nu = \ffrac{1}{2}\\
        R(T) &\leq \bigO*{T^\ffrac{5}{8}\log(T)^{\ffrac{7}{8}}}
        &&\quad\with \nu = \ffrac{3}{2}\\
        R(T) &\leq \bigO*{T^\ffrac{7}{12}\log(T)^{\ffrac{11}{12}}}
        &&\quad\with \nu = \ffrac{5}{2}
        .
    \end{aligned}
\end{equation*}


\section{\algGU{}}\label{sec:analysis-greedy-u}


This section presents the results specific to the greedy algorithm \(\A[U]\) based on the \nameref{sssec:design-objective-u} objective function.

Similar to \cref{sec:analysis-greedy-up}, we adapt the upper and lower confidence bounds used by the objective function to \cref{eq:confidence-bounds-actual} based on the result from \cref{lem:lemma-2-4}.

Note that the only difference to \nameref{sssec:design-objective-up} is that \nameref{sssec:design-objective-u} depends on \(\frac{1}{2} (u_t(\theta)^2 - l_t(\theta)^2)\) instead of \(u_t(\theta) - l_t(\theta)\) to take the circle sector area in the real world into account (see \cref{item:req-real-world}). This leads to
\begin{equation*}
    \begin{aligned}
        \Fu[U](\theta_t\mid\theta_{1:t-1})
        &= \mu_{t-1}(\theta_t) \cdot \Fu[UP](\theta_t\mid\theta_{1:t-1})
        \\ &\leq \dmax \cdot \Fu[UP](\theta_t\mid\theta_{1:t-1})
    \end{aligned}
\end{equation*}
for arbitrary \(\theta_{1:t}\) as shown in the proof for the following \cref{lem:lemma-2-3-greedy-u}. The additional factor \(\mu_{t-1}(\theta)\) introduced in the objective function can be constantly upper bounded with \(\dmax\) based on \cref{item:simp-object-bounded}. This allows us to almost directly derive the following sublinear regret bound:

\begin{lemmavar}{U}\label{lem:lemma-2-3-greedy-u}
    Consider surface functions sampled from a Gaussian process \(\GP*{m,k}\). Assume that the assumptions for \cref{lem:lemma-2-4,lem:lemma-2-5} are satisfied. Let \(u_t(\varphi)\) and \(l_t(\varphi)\) be defined according to \cref{eq:confidence-bounds-actual} with confidence parameter \(\beta_t\) chosen as in \cref{lem:lemma-2-4}.
    
    Consider the greedy algorithm \(\A[U]\) from \cref{def:greedy-algorithm} with the objective function \nameref{sssec:design-objective-u}.
    Let \(\theta_{1:T}\) be arbitrary.
    Then we can show
    \begin{equation*}
        \sum_{t=1}^T \Fu[U]\paren*{\theta_t \mid \theta_{1:t-1}}
        \leq C_1 \sqrt{T\beta_T\gamma_T} + C_2
    \end{equation*}
    with \(C_1 = \frac{2\dmax\abs*{\PhiS}}{h^2}\sqrt{\frac{3}{\log[]{\sigmaeps^{-2}+1}}}\) and \(C_2 = \frac{\dmax\abs{\PhiS}}{h^2}\frac{\pi}{\sqrt{3}}\).
\end{lemmavar}
\begin{proof}
    \cref{ssec:proofs-lemma-2-3-greedy-u}
    \noqed
\end{proof}

Combining all previous results as visualized in the overview in \cref{fig:analysis-overview}, we arrive at the final theorem for \(\A[U]\).

\begin{theoremvar}[Sublinear Regret]{U}\label{thm:sublinear-greedy-u}
    Consider surface functions sampled from a Gaussian process with kernel function \(\kpsuminf[M_\nu]\) and \(\nu=\frac{5}{2},\sigma_f=1\) as defined in \cref{eq:periodic-matern-kernel-infinite-sum-examples}. Choose some failure probability \(\delta\in(0,1)\) and the confidence bounds according to \cref{eq:confidence-bounds-actual} with confidence parameter \(\beta_t\) as in \cref{lem:lemma-2-4}.
    
    Consider the greedy algorithm \(\A[U]\) from \cref{def:greedy-algorithm} based on the objective function \nameref{sssec:design-objective-u} and the sequence of NBV estimates \(\theta[U]_t \defeq \A[U](\theta_{1:t-1})\) with \(t=1,\dots,T\).
    Then we can show
    \begin{equation*}
        R(T) \leq \bigO*{T^{\frac{2\nu+2}{4\nu+2}}\log(T)^{\frac{4\nu+1}{4\nu+2}}}
    \end{equation*}
    with probability at least \(1-\delta\).
\end{theoremvar}
\begin{proof}
    \cref{ssec:proofs-theorem-sublinear-greedy-u}
    \noqed
\end{proof}

Interestingly, the additional mean factor \(\mu_{t-1}(\theta)\) is contra-productive for our algorithm, since it rewards camera locations close to the object surface in practice.%
\footnote{The larger the mean at \(\theta\), the further away the surface function from the world center and the closer the surface function to the camera at \(\dcam\).}
Being closer to the surface mostly results into less observed surface points due to the shape of the FOV. The reason for this additional mean factor comes from counting the number of world pixels inside a range of polar angles \(\Delta\varphi\) instead of the number of polar pixels. Whereas the number of polar pixels stays constant with the distance to the world center, the number of world pixels becomes larger the further one moves away from the world center as previously visualized in \cref{fig:design-objective-uncertainty}. Later, we observe in our experiments that \nameref{sssec:design-objective-up} without the additional mean factor indeed performs better than \nameref{sssec:design-objective-u} as discussed in \cref{ssec:experiments-results-comparison}.

Without the information about the FOV shape (see \cref{item:req-fov}), the objective function assumes that all surface points within \(\Delta\varphi\) can be observed by the camera. Since \nameref{sssec:design-objective-u} only depends on the uncertainty at the current camera location \(\theta_t\), which corresponds to an infinitesimal small \(\Delta\varphi\), the assumption is satisfied and the lack of FOV information is not severe. Since \nameref{sssec:design-objective-cs} depends on the uncertainty inside a much larger range \(\Delta\varphi = \abs{\PhiS}\), we show in \cref{ssec:experiments-results-deficiency} how this results into bad performance with the lack of FOV information.


\section{\algGCS{}}\label{sec:analysis-greedy-cs}

This section presents the results specific to the greedy algorithm \(\A[CS]\) based on the \nameref{sssec:design-objective-cs} objective function.

Based on the result from \cref{lem:lemma-2-4} the objective function \nameref{sssec:design-objective-cs} must be adapted to ensure that the surface function lies between the used upper and lower confidence boundary with probability \(1-\delta\), which is assumed by the objective function. We redefine it as
\begin{equation}\label{eq:objective-function-cs-new}
    \Fu[CS](\theta\mid\theta_{1:t-1})
    \defeq \eqcontrast \frac{1}{h^2} \sum_{\varphi\in\eqchange{\brack[\big]{\PhiS(\theta)}_t}} \frac{1}{2}\paren*{u_t(\varphi)^2 - l_t(\varphi)^2} \eqchange{\frac{\abs{\PhiS}}{\abs[\big]{\brack[\big]{\PhiS}_t}}}
    \eqnocontrast
\end{equation}
with the summation interval restricted to
\begin{equation}\label{eq:sum-interval-simple-discretized}
    \brack[\big]{\PhiS(\theta)}_t \defeq \PhiS(\theta) \cap \Domain_t
    \quad\with \abs*{\brack[\big]{\PhiS}_t} = \frac{\abs{\PhiS}}{2\pi/\abs{\Domain_t}}
\end{equation}
based on the uniform discretization \(\Domain_t\) with granularity \(\frac{2\pi}{\abs{\Domain_t}}\) as defined in \cref{lem:lemma-2-4}.
The surface function is guaranteed to lie between the confidence boundaries at points in \(\brack[\big]{\PhiS(\theta)}\) with probability \(1-\delta\).

Note that the only difference to \nameref{sssec:design-objective-u} is that \nameref{sssec:design-objective-cs} as defined in \cref{eq:objective-function-cs-new} depends on \(\sum_{\varphi\in\brack{\PhiS(\theta)}_t} \paren*{u_t(\varphi)^2-l_t(\varphi)^2}\frac{\abs{\PhiS}}{\abs{\brack{\PhiS}_t}}\) instead of \(\abs{\PhiS}\paren*{u_t(\theta)^2 - l_t(\theta)^2}\) to also take the surrounding uncertainties at \(\brack{\PhiS(\theta)}_t\) into account (see \cref{eq:sum-interval-simple,eq:sum-interval-simple-discretized}). This leads to
\begin{equation*}
    \begin{aligned}
        \Fu[CS](\theta_t\mid\theta_{1:t-1})
        &\leq \max_{\theta\in\brack{\PhiS(\theta_t)}_t} \Fu[U](\theta\mid\theta_{1:t-1})
        \\ &\leq \max_{\substack{\theta\in\Domain \\ \hphantom{\theta\in\brack{\PhiS(\theta_t)}_t}}} \Fu[U](\theta\mid\theta_{1:t-1})
        \\ &= \Fu[U]\paren*{\theta[U]_t\mid\theta_{1:t-1}}
    \end{aligned}
\end{equation*}
for arbitrary \(\theta_{1:t}\) as shown in the proof for \cref{lem:lemma-2-3-greedy-cs}. The first inequality is obtained by upper bounding the sum with the maximum summation term and the second inequality by expanding the maximization from the summation interval to the whole domain.
We can immediately derive the same sublinear regret bound as in \cref{lem:lemma-2-3-greedy-u}.

\begin{lemmavar}{CS}\label{lem:lemma-2-3-greedy-cs}
    Consider surface functions sampled from a Gaussian process \(\GP*{m,k}\). Assume that the assumptions for \cref{lem:lemma-2-4,lem:lemma-2-5} are satisfied. Let \(u_t(\varphi)\) and \(l_t(\varphi)\) be defined according to \cref{eq:confidence-bounds-actual} with confidence parameter \(\beta_t\) chosen as in \cref{lem:lemma-2-4}.
    
    Consider the greedy algorithm \(\A[CS]\) from \cref{def:greedy-algorithm} with the objective function \nameref{sssec:design-objective-cs}.
    Let \(\theta_{1:T}\) be arbitrary.
    Then we can show
    \begin{equation*}
        \sum_{t=1}^T \Fu[CS]\paren*{\theta_t \mid \theta_{1:t-1}}
        \leq C_1 \sqrt{T\beta_T\gamma_T} + C_2
    \end{equation*}
    with \(C_1 = \frac{2\dmax\abs*{\PhiS}}{h^2}\sqrt{\frac{3}{\log[]{\sigmaeps^{-2}+1}}}\) and \(C_2 = \frac{\dmax\abs{\PhiS}}{h^2}\frac{\pi}{\sqrt{3}}\).
\end{lemmavar}
\begin{proof}
    \cref{ssec:proofs-lemma-2-3-greedy-cs}
    \noqed
\end{proof}

Combining all previous results as visualized in the overview in \cref{fig:analysis-overview}, we arrive at the final theorem for \(\A[CS]\).

\begin{theoremvar}[Sublinear Regret]{CS}\label{thm:sublinear-greedy-cs}
    Consider surface functions sampled from a Gaussian process with kernel function \(\kpsuminf[M_\nu]\) and \(\nu=\frac{5}{2},\sigma_f=1\) as defined in \cref{eq:periodic-matern-kernel-infinite-sum-examples}. Choose some failure probability \(\delta\in(0,1)\) and the confidence bounds according to \cref{eq:confidence-bounds-actual} with confidence parameter \(\beta_t\) as in \cref{lem:lemma-2-4}.
    
    Consider the greedy algorithm \(\A[CS]\) from \cref{def:greedy-algorithm} based on the objective function \nameref{sssec:design-objective-cs} and the sequence of NBV estimates \(\theta[CS]_t \defeq \A[CS](\theta_{1:t-1})\) with \(t=1,\dots,T\).
    Then we can show
    \begin{equation*}
        R(T) \leq \bigO*{T^{\frac{2\nu+2}{4\nu+2}}\log(T)^{\frac{4\nu+1}{4\nu+2}}}
    \end{equation*}
    with probability at least \(1-\delta\).
\end{theoremvar}
\begin{proof}
    \cref{ssec:proofs-theorem-sublinear-greedy-cs}
    \noqed
\end{proof}


\section{\algTCSU{}}\label{sec:analysis-twophase-csu}

This section presents the results specific to the two-phase algorithm \(\A[\TCSU]\) based on the \nameref{sssec:design-objective-cs} objective function in phase 1 and the \nameref{sssec:design-objective-u} objective function in phase 2.

Similar to \cref{sec:analysis-greedy-up} and \cref{sec:analysis-greedy-cs} we adapt the upper and lower confidence bounds used by the objective functions to \cref{eq:confidence-bounds-actual} and use the refined version of \nameref{sssec:design-objective-cs} from \cref{eq:objective-function-cs-new} based on the result from \cref{lem:lemma-2-4}.

Recall from \cref{def:twophase-algorithm} that the phase 1 objective function determines an interval over which phase 2 objective function is maximized. Hence, \(\A[\TCSU]\) is a variant of \(\A[U]\) with
\begin{equation*}
    \theta[\TCSU]_t
    = \argmax_{\theta\in\PhiS\paren{\theta[CS]_t}} \Fu[U](\theta\mid\theta_{1:t-1})
    ,
\end{equation*}
since \(\Fu[U]\) is the final objective function to be maximized, whereas \(\Fu[CS]\) only restricts the maximization to a certain interval around its own maximizer \(\theta[CS]\). Note that this is the only difference to \nameref{sssec:design-objective-u}, which instead maximizes \(\Fu[U]\) over complete \(\Domain\). This leads to
\begin{equation*}
    \begin{aligned}
        \Fu[U](\theta[\TCSU]_t\mid\theta_{1:t-1}) \leq \Fu[U](\theta[U]_t\mid\theta_{1:t-1})
    \end{aligned}
\end{equation*}
for arbitrary \(\theta_{1:t-1}\) as shown in the proof for \cref{lem:lemma-2-3-twophase-csu}. We can immediately derive the same sublinear regret bound as in \cref{lem:lemma-2-3-greedy-u}.

\begin{lemmavar}{CS-U}\label{lem:lemma-2-3-twophase-csu}
    Consider surface functions sampled from a Gaussian process \(\GP*{m,k}\). Assume that the assumptions for \cref{lem:lemma-2-4,lem:lemma-2-5} are satisfied. Let \(u_t(\varphi)\) and \(l_t(\varphi)\) be defined according to \cref{eq:confidence-bounds-actual} with confidence parameter \(\beta_t\) chosen as in \cref{lem:lemma-2-4}.
    
    Consider the two-phase algorithm \(\A[\TCSU]\) from \cref{def:twophase-algorithm} with phase 1 objective function \nameref{sssec:design-objective-cs} and phase 2 objective function \nameref{sssec:design-objective-u}.
    Let \(\theta_{1:T}\) be arbitrary.
    Then we can show
    \begin{equation*}
        \sum_{t=1}^T \Fu[U]\paren*{\theta[\TCSU]_t \mid \theta_{1:t-1}}
        \leq C_1 \sqrt{T\beta_T\gamma_T} + C_2
    \end{equation*}
    with \(C_1 = \frac{2\dmax\abs*{\PhiS}}{h^2}\sqrt{\frac{3}{\log[]{\sigmaeps^{-2}+1}}}\) and \(C_2 = \frac{\dmax\abs{\PhiS}}{h^2}\frac{\pi}{\sqrt{3}}\).

\end{lemmavar}
\begin{proof}
    \cref{ssec:proofs-lemma-2-3-twophase-csu}
    \noqed
\end{proof}

Combining all previous results as visualized in the overview in \cref{fig:analysis-overview}, we arrive at the final theorem for \(\A[\TCSU]\).

\begin{theoremvar}[Sublinear Regret]{CS-U}\label{thm:sublinear-twophase-csu}
    Consider surface functions sampled from a Gaussian process with kernel function \(\kpsuminf[M_\nu]\) and \(\nu=\frac{5}{2},\sigma_f=1\) as defined in \cref{eq:periodic-matern-kernel-infinite-sum-examples}. Choose some failure probability \(\delta\in(0,1)\) and the confidence bounds according to \cref{eq:confidence-bounds-actual} with confidence parameter \(\beta_t\) as in \cref{lem:lemma-2-4}.
    
    Consider the two-phase algorithm \(\A[\TCSU]\) from \cref{def:twophase-algorithm} with phase 1 objective function \nameref{sssec:design-objective-cs} and phase 2 objective function \nameref{sssec:design-objective-u} and the sequence of NBV estimates \(\theta[\TCSU]_t \defeq \A[\TCSU](\theta_{1:t-1})\) with \(t=1,\dots,T\).
    Then we can show
    \begin{equation*}
        R(T) \leq \bigO*{T^{\frac{2\nu+2}{4\nu+2}}\log(T)^{\frac{4\nu+1}{4\nu+2}}}
    \end{equation*}
    with probability at least \(1-\delta\).
\end{theoremvar}
\begin{proof}
    \cref{ssec:proofs-theorem-sublinear-twophase-csu}
    \noqed
\end{proof}


\section{Summary}\label{sec:analysis-summary}

To summarize the analysis, we found that all objective functions exhibit the same asymptotic behavior with the same sublinear regret bound. Measuring uncertainties at neighbor points as done by \(\A[CS]\) and analyzed in \cref{sec:analysis-greedy-cs} or restricting the maximization domain by a phase 1 objective as done by \(\A[\TCSU]\) and analyzed in \cref{sec:analysis-twophase-csu} does not differ from measuring the uncertainty only at the current position as done by \(\A[U]\) and analyzed in \cref{sec:analysis-greedy-u}. One reason is that in our setting all improvements such as measuring the uncertainty at additional additional points only contribute constantly more information, since the number of observed surface points can always be bounded by the finite size of the FOV.
In \cref{chp:experiments}, we however observe drastic differences in the performance of the three candidates \(\A[U]\), \(\A[CS]\) and \(\A[\TCSU]\), although we are able to show sublinear regret for all of them.

Finally, we summarize the relations of the three algorithm candidates.
\begin{equation*}
    \begin{alignedat}{3}
        \MoveEqLeft \hphantom{\leq} \Fu[CS](\theta_t\mid\theta_{1:t-1}) \span\span
        \\ &= \frac{1}{h^2} \cdot{} &&\sum_{\varphi\in\brack{\PhiS(\theta_t)}_t} \frac{1}{2} \frac{\abs{\PhiS}}{\abs{\brack{\PhiS}_t}} \paren*{u_t(\varphi)^2-l_t(\varphi)^2}
        &&\quad \color{gray} \stackrel{*}{=} \Fu[CS](\theta[CS]_t\mid\theta_{1:t-1})
        \\\\
        &\leq \frac{1}{h^2} \cdot{} &&\max_{\varphi\in\brack{\PhiS(\theta_t)}_t} \frac{1}{2} \abs*{\PhiS} \paren*{u_t(\varphi)^2-l_t(\varphi)^2}
        \\ &= &&\max_{\theta\in\brack{\PhiS(\theta_t)}_t} \Fu[U](\theta\mid\theta_{1:t-1})
        &&\quad \color{gray} \stackrel{*}{=} \Fu[U](\theta[\TCSU]_t\mid\theta_{1:t-1})
        \\\\
        &\leq &&\max_{\substack{\theta\in\Domain \\ \hphantom{\varphi\in\brack{\PhiS(\theta_t)}_t}}} \Fu[U](\theta\mid\theta_{1:t-1})
        &&\quad \color{gray} = \Fu[U](\theta[U]_t\mid\theta_{1:t-1})
        \\
        & && &&\quad \color{gray} \text{* only with \(\theta_t = \theta[CS]_t\)}
    \end{alignedat}
\end{equation*}



\chapter{Experimental Results}\label{chp:experiments}

In this chapter, we present the experimental results obtained from our simulation framework for the simplified 2D setting.

In \cref{sec:experiments-framework} we define our experiment framework including the setting, in which the experiments are conducted, the evaluation objects, on which the algorithms are tested, and the evaluation metrics, with which the performance is evaluated. We also specify conditions for the algorithm to terminate, after which the performance is then evaluated.

In \cref{sec:experiments-results} we present our experimental results in a condensed version and try to shed light into the obtained evaluation data.

In \cref{sec:experiments-summary} we provide a brief summary of our results.


\section{Experiment Framework}\label{sec:experiments-framework}

We first address the framework in which we conducted the experiments. This specify the different parameters of our simplified 2D setting and discuss the choices we make differently from what theory suggest in \cref{ssec:experiments-framework-settings}. Then we describe the objects against which we evaluate our algorithms in \cref{ssec:experiments-framework-objects} and finally define the evaluation metrics in \cref{ssec:experiments-framework-metrics}.

\subsection{Experiment Setting}\label{ssec:experiments-framework-settings}

Regarding the simplified 2D setting described in \cref{sec:problem-simplified}, we use the following parameters:
\begin{equation*}
    \begin{alignedat}{2}
        h &= 0.1\mathrm{m} &&\enspace\eqnote{world pixel width}
        \\ \dmax &= 8\mathrm{m} &&\enspace\eqnote{max. object size}
        \\ \dmin &= 2\mathrm{m} &&\enspace\eqnote{min. object size}
    \end{alignedat}
    \quad
    \begin{alignedat}{2}
        \dcam &= 10\mathrm{m} &&\enspace\eqnote{camera distance}
        \\ \DOF &= 10\mathrm{m} &&\enspace\eqnote{camera DOF}
        \\ \FOV &= 35^{\circ} &&\enspace\eqnote{camera FOV}
        \\ \sigmaeps &= 0.2 &&\enspace\eqnote{camera noise}
        .
    \end{alignedat}
\end{equation*}
The tradeoff in choosing the granularity \(h\) of the world discretization is to keep an as accurate model of the world as possible and at the same time ensuring computational efficiency. Making the discretization finer results into more surface points observed by each measurement and a cubic increase in computation time when updating the Gaussian process model. By setting \(\dcam = \DOF\), we ensure that the complete object surface is guaranteed to be observable by the camera while avoiding complex FOV shapes in the polar world as shown in \cref{fig:problem-different-fovs}. The observation noise \(\sigmaeps\) can be interpreted as the precision of the camera measuring the observed surface points with an average accuracy of \(\pm 0.2\mathrm{m}\).

In addition, we choose the following parameters for the kernel and confidence bounds universally for all algorithms:
\begin{equation*}
    \begin{alignedat}{2}
        \sigma_f &= 1.5 \mathcolor{gray}{\nleq 1} {} &&\enspace\eqnote{kernel deviation}
        \\ l &= 0.2 &&\enspace\eqnote{kernel length scale}
        \\ \nu &= 3/2 \mathcolor{gray}{\ngtr 2} {} &&\enspace\eqnote{kernel smoothness}
    \end{alignedat}
    \quad
    \begin{alignedat}{2}
        &\delta \text{ not chosen} &&\enspace\eqnote{failure prob.}
        \\ &\beta_t^{1/2} = 2 &&\enspace\eqnote{confidence param.}
    \end{alignedat}
\end{equation*}
Most notably, we choose a static confidence parameter \(\beta_t\) which defines the confidence bounds from \cref{eq:confidence-bounds-ideal} as twice the standard deviation around the mean. In contrast, \cref{lem:lemma-2-4} suggests to dynamically scale \(\beta_t\) with \(t\) to theoretically guarantee with a failure probability of at most \(\delta\) that the confidence boundaries enclose the surface function. This scaling is undesired in practice, since most of our objective functions depend on the geometric relation between the confidence region and the camera's FOV. Hence, increasing the confidence region in each round falsifies their estimates. We observed that choosing a less smooth \matern{} kernel through a smaller \(\nu\), modeling the surface functions with a larger standard deviation \(\sigma_f\) and assuming a smaller length scale \(l\) ensure that the confidence boundaries stay valid in most cases, despite violating the assumptions of \cref{lem:lemma-2-4,lem:lemma-2-5}.

Since the Gaussian process is defined on polar functions, the length scale parameter \(l\) roughly describes the polar angle instead of Euclidean distance to some \(\varphi\) before the function value changes significantly as described in \cref{ssec:background-gp-kernels}.
Hence, the choice for \(l\) implicitly depends on the considered size of objects, since the surface function for larger objects changes at a smaller scale in polar angles than for small objects. Specifically, the conversion rate from a Cartesian length scale \(l_c\) to a polar length scale \(l_p\) is
\begin{equation*}
    l_p = \frac{l_c}{2\pi r} \cdot 2\pi = \frac{l_c}{r}
\end{equation*}
for points at distance \(r\) to the world center.%
\footnote{To be correct, we should note that the length scale \(l_c\) is defined in terms of the geodesic distance on a circle with radius \(r\). Assuming \(l_c \ll r\) this roughly corresponds to the Euclidean distance.}

The reasons for presenting these specific choices is that due to our rather small sample size of experiments and evaluations the found results might have large variance with respect to different settings. For example, choosing \(h=0.1\) or \(h=0.2\) for the width of a real world pixel can already change the ranking of performances of our algorithm candidates. Similarly, different choices for the object bounds \(\dmax\) and \(\dmin\), the camera distance \(\dcam\) or the FOV shape given by \(\FOV\) and \(\DOF\) influence the level of observability of the object through the camera, with which the one or other objective function can cope better.

\subsection{Evaluation Objects}\label{ssec:experiments-framework-objects}

Next, we introduce the set of objects, on which we evaluate our algorithm candidates. The goal is to capture a large variety of real world characteristics and at the same time keep the modeling process through the polar function feasible. These characteristics do not only include different smoothness, but also varying distance of the object surface to the camera and different surface complexities, which leads to different potentials for self-occlusion.

\begin{figure}
    \centering
    \begin{subfigure}{0.25\linewidth}
        \centering
        \includegraphics[width=\linewidth]{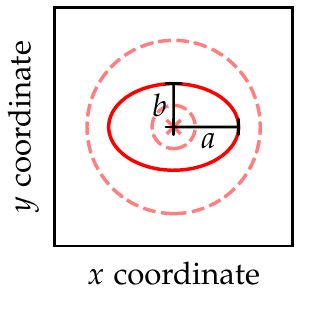}
        \includegraphics[width=\linewidth]{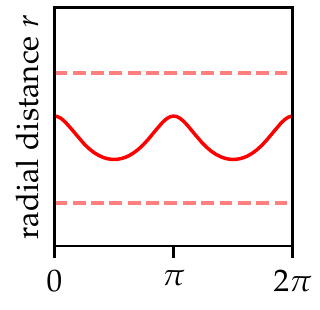}
        \caption{ellipse objects}
        \label{fig:experiments-framework-objects-ellipse}
    \end{subfigure}%
    \begin{subfigure}{0.25\linewidth}
        \centering
        \includegraphics[width=\linewidth]{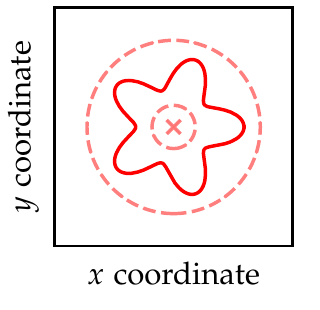}
        \includegraphics[width=\linewidth]{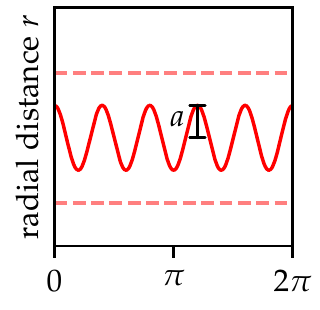}
        \caption{flower objects}
        \label{fig:experiments-framework-objects-flower}
    \end{subfigure}%
    \begin{subfigure}{0.25\linewidth}
        \centering
        \includegraphics[width=\linewidth]{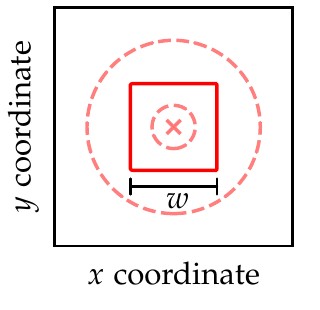}
        \includegraphics[width=\linewidth]{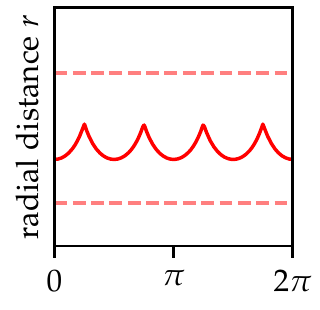}
        \caption{square objects}
        \label{fig:experiments-framework-objects-square}
    \end{subfigure}%
    \begin{subfigure}{0.25\linewidth}
        \centering
        \includegraphics[width=\linewidth]{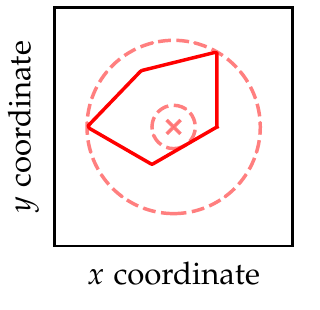}
        \includegraphics[width=\linewidth]{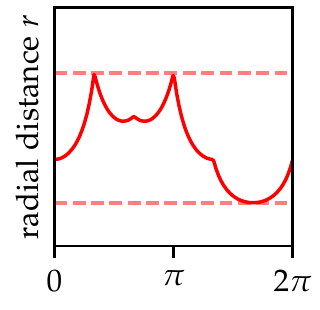}
        \caption{polygon objects}
        \label{fig:experiments-framework-objects-polygon}
    \end{subfigure}
    \caption[Different Classes of Objects]{
        Different Classes of Objects.
        (a) to (d) visualize the four different main classes of objects in the real (top) and polar world (bottom) which we use to evaluate our algorithms. Detailed description of the object parameters are given in \cref{ssec:experiments-framework-objects}. The dotted lines represent the object boundaries \(\dmax\) and \(\dmin\) as described in \cref{ssec:problem-simplified-object}.
    }
    \label{fig:experiments-framework-objects}
\end{figure}

The first type of objects are smooth objects represented by a smooth surface function. The simplest class consists of \emph{circle objects} corresponding to uniform polar functions. They can be generalized to the class of \emph{ellipse objects} with semi-major axis length \(a\) and semi-minor axis length \(b\) as visualized in \cref{fig:experiments-framework-objects-ellipse}. Another surface function for more complex object shapes is a cosine function oscillating around \(\frac{1}{2}(\dmax+\dmin)\) with frequency \(f\) and amplitude \(a\). We refer to them as \emph{flower objects} and one example is provided in \cref{fig:experiments-framework-objects-flower}.

The second type of objects are formed by straight edges and sharp corners corresponding to highly non-smooth surface functions, which can be modeled with piecewise polar functions. A simple class consists of \emph{square objects} with width \(w\) as shown in \cref{fig:experiments-framework-objects-square}. \emph{Polygon objects}, which are defined in terms of a sequence of vertices, are a more general and an example is given in \cref{fig:experiments-framework-objects-polygon}.

The complete set of objects for evaluating our algorithms is given in \cref{sec:simulation-objects}.

\subsection{Evaluation Metrics}\label{ssec:experiments-framework-metrics}

\newcommand{\rec}{\textsl{rec}}
\newcommand{\mea}{\ensuremath{T}}
\newcommand{\meathr}{\ensuremath{T_{\geq0.95}}}
\newcommand{\rmea}{\ensuremath{\widetilde{T}}}
\newcommand{\rmeathr}{\ensuremath{\widetilde{T}_{\geq0.95}}}
\newcommand{\reg}{\ensuremath{\overline{r}_{ind}}}
\newcommand{\rkrec}{\ensuremath{\#_{REC}}}
\newcommand{\rknbv}{\ensuremath{\#_{NBV}}}

\newcommand{\higherbetter}{\({}^\uparrow\)}
\newcommand{\lowerbetter}{\({}^\downarrow\)}

One unmentioned piece of information about our algorithms is the termination condition defining the time after which we evaluate and compare their performances. We define it as
\begin{equation*}
    T \defeq \text{smallest \(t\) with }\begin{cases}
        o(\theta_{1:t}) = \Surface &\eqnote{full reconstruction} \\
        o(\theta_{1:t}) = o(\theta_{1:t+1}) &\eqnote{early termination}
        .
    \end{cases}
\end{equation*}
Naturally, once the complete object surface \(\Surface\) is fully observed, the algorithm terminates. However, in case no new surface points are observed in some round \(t\) corresponding to \(F(\theta_{t+1}\mid\theta_{1:t}) = \abs{o(\theta_{t+1})\setminus o(\theta_{1:t})} = 0\), \emph{early termination} happens. The reason is that without new information about the object surface the algorithm keeps recommending the same NBV estimate \(\theta_{t+1}\), which is deterministically computed based on \(o(\theta_{1:t})\) up to measurement noise, and therefore the algorithm does not progress.%
\footnote{This is only the case, because our true objective function from \cref{eq:true-marginal-utility} considers the binary case, whether a surface point is observed or not. In reality, already observed surface points might still be associated with some uncertainty coming from large measurement noise and measuring these again can provide new information.}

To compare the performances of the algorithms not only over the same object, but also over different ones, we ideally want to find object-independent metrics, which quantify the general performance of an algorithm. In the following, we define metrics of two types. The first type measures the performance with respect to the reconstruction problem, while the second one measures the performance with respect to the NBV decision problem. We then deduce ranking schemes from these metrics for metric-independent comparison between the algorithms.

\subsubsection{Metrics for Reconstruction}
Since the total number of observed surface points varies depending on the object size, it is not suitable for measuring the reconstruction performance. Hence, we define the relative \emph{reconstruction amount} at termination time \(T\) as
\begin{equation*}
    \rec \defeq \frac{\text{total number of observed surface points}}{\text{total number of surface points}}
    = \frac{\abs{o(\theta_{1:T})}}{\abs{\Surface}} 
    .
\end{equation*}
The corresponding \emph{number of measurements} required to achieve this amount of reconstruction is equivalent to \(T\) and we write it as
\begin{equation*}
    \mea = \text{number of measurements until } \begin{cases}
        \text{rec \(=\) 100\%}\\
        \text{early termination}
    \end{cases}
    .
\end{equation*}
The caveat of this definition is that early terminating algorithms typically make less number of measurements than algorithms achieving full reconstruction. This means that one can only compare the number of measurements in combination with the achieved reconstruction amount. Therefore, we define the joint metric 
\begin{equation*}
    \meathr \defeq \begin{cases}
        \text{number of measurements until \rec{} \(\geq\) 95\%}\\
        \text{N/A for early termination with \rec{} \(<\) 95\%}
    \end{cases}
\end{equation*}
combining the number of measurements with a certain reconstruction threshold. Since the number of required measurements depends on the size of the object, the metrics \mea{} and \meathr{} are not suitable for comparisons over different objects. We solve this by defining them \emph{relative} to the number of measurements \(\meathr^*\) of the optimal greedy algorithm \(\A[*]\) as
\begin{equation*}
    \rmea \defeq \frac{\mea}{\mea^*} 
    \quad\text{and}\quad
    \rmeathr \defeq \frac{\meathr}{\meathr^*}
    ,
\end{equation*}
where \(\A[*]\) has knowledge about the true object shape.
Note that all above defined metrics measure the algorithm's performance with respect to the reconstruction problem, \ie{} maximally observing the object surface with minimal number of measurements.

\subsubsection{Metrics for NBV}
A metric which measures the performance with respect to the NBV decision problem, \ie{} finding the best decision in the current round, is
\begin{equation*}
    \reg \defeq \text{average individual regret}
    = \frac{1}{T}\sum_{t=1}^T r_{ind}(t)
    .
\end{equation*}
It corresponds to the average number of surface points additionally observed by a greedy decision \(\thetagre_t\) compared to the actual decision \(\theta_t\) given the same previous camera positions \(\theta_{1:t-1}\). It basically shows which algorithms make decisions closest to the optimal greedy decision.
Note that this evaluation metric is, in fact, mostly independent from the object size despite being a counting metric on surface points. Due to the finite size of the FOV, the deviation from the greedy decision is constantly upper bounded and does not scale with the object size, but rather depends on the algorithm itself. We only observed that the deviation from the greedy decision is typically larger for more complex objects which allow the optimal greedy algorithm to maximally exploit its knowledge about the true object shape.

However, a misconception is that small average individual regret is directly correlated to good reconstruction performance, which does not hold in general. Since the individual regret only measures the deviation from the current greedy decision individual to each round, it is possible that starting from some non-optimal previous decisions the greedy decision itself cannot perform much better than the actual decision. Although this can lead to low average individual regret, it only implies that the current decision is close to the (greedy-)optimum given the previous decisions, but not that the overall set of decisions is close to optimum.

\subsubsection{Ranking}
To compare the algorithms' performances over different objects, we typically compute the average of their metrics for these objects. For metrics such as \reg{}, more complex objects often lead to generally larger values for this metric than simpler objects. Hence, the average of such metrics over multiple objects is implicitly weighted based on the dependence of the metric on the object, which might or might not be desired.

A completely object-independent way for comparing algorithms is to derive a ranking scheme from the defined metrics which orders the algorithms on a uniform scale according to their values.
The advantage is that the average rank takes the performance of the algorithms for each object uniformly into account. However, it eliminates the information about the relative performances between the algorithms by placing them on a uniform scale.

For comparing the performances with respect to the reconstruction problem, we define the ranking of algorithms as
\begin{equation*}
    \rkrec \defeq \text{dense ranking\footnotemark{} of algorithms ordered by \meathr{}}
\end{equation*}
\footnotetext{A dense ranking assigns equal items the same rank and subsequent items the immediately following rank.}%
with ties resolved based on \mea{} and \rec{} in the given order. As discussed above it is preferable to use \meathr{} as the main comparison criterion, since \mea{} and \rec{} should be compared jointly.%
\footnote{Even then the order is unclear whether one should rank a 99\% reconstruction above a 95\% reconstruction if the 99\% reconstruction requires two more measurements, for example.}

For comparing the performances with respect to the NBV decision problem, we define the ranking of algorithms as
\begin{equation*}
    \rknbv \defeq \text{dense ranking of algorithms ordered by \reg}
\end{equation*}
with ties resolved based on \meathr{}, \mea{} and \rec{} in the given order. As discussed above, a high ranking with respect to the NBV decision problem does not directly correlate with a high ranking for the reconstruction problem.


\section{Experiment Results}\label{sec:experiments-results}

We first explain the issues of confidence-based objective functions in \cref{ssec:experiments-results-deficiency} and why we exclude them from further experimental analysis. Consequently, it remains to compare the intersection-based and uncertainty-based objective functions, which is done in \cref{ssec:experiments-results-comparison}.

Among the presented results, we give our best to only highlight generally applicable results which do not stem from some edge case of our specific framework.

\subsection{Deficiency of Confidence-based Objective Functions}\label{ssec:experiments-results-deficiency}

We observed during our experiments that the confidence-based objective functions are not able to reconstruct the object except for \nameref{sssec:design-objective-csw}. Due to the lack of FOV information as discussed in \cref{ssec:design-objective-confidence}, these objective functions excessively overestimate the actually observable region through the camera by falsely assuming that the camera's FOV completely covers a range of polar angles such as \(\PhiI_t(\theta)\) or \(\PhiS(\theta)\) as visualized in \cref{fig:experiments-results-deficiency-visited-location}. This leads to the degenerate problem that these objective functions have local maxima at already visited locations, which can be best explained by \cref{fig:experiments-results-deficiency-objective-function}. Hence, the algorithm starts recommending already visited locations at some point in time resulting into early termination, which in our experiments typically happens between 30\% to 60\% of reconstruction depending on the object complexity. For that reason, we exclude \nameref{sssec:design-objective-c}, \nameref{sssec:design-objective-cs} and \nameref{sssec:design-objective-csp} from our following discussion.

\begin{figure}
    \centering
    \begin{subfigure}{0.33\linewidth}
        \centering
        \includegraphics[width=\linewidth]{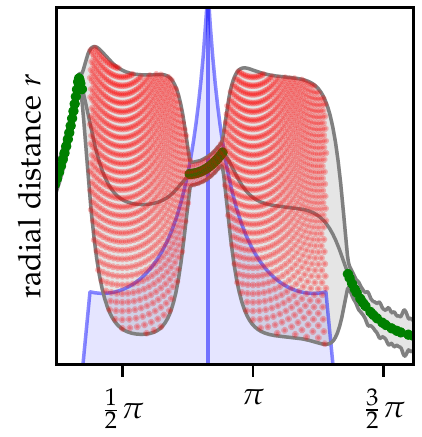}
        \caption{\vphantom{\(\Fu[CS]\)}camera at visited location}
        \label{fig:experiments-results-deficiency-visited-location}
    \end{subfigure}%
    \begin{subfigure}{0.33\linewidth}
        \centering
        \includegraphics[width=\linewidth]{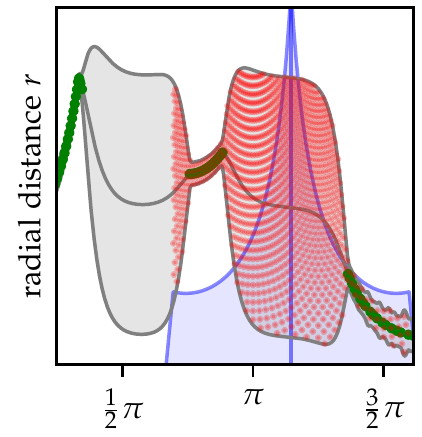}
        \caption{\vphantom{\(\Fu[CS]\)}camera at new location}
        \label{fig:experiments-results-deficiency-new-location}
    \end{subfigure}%
    \begin{subfigure}{0.33\linewidth}
        \centering
        \includegraphics[width=\linewidth]{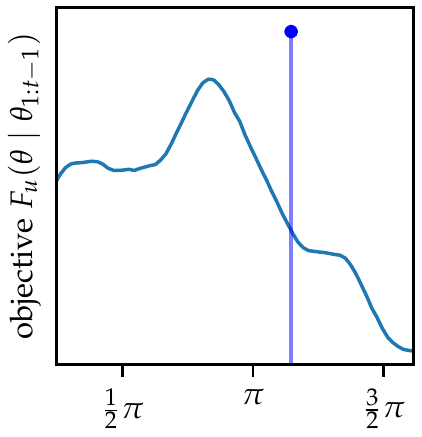}
        \caption{\(\Fu[CS](\theta\mid\theta_{1:t-1})\)}
        \label{fig:experiments-results-deficiency-objective-function}
    \end{subfigure}
    \caption[Degenerate Problem of Confidence-based Objective Functions]{
        Degenerate Problem of Confidence-based Objective Functions.
        The \nameref{sssec:design-objective-cs} objective function is visualized at the recently visited location in (a), where the object surface was observed and measured (green), and at a new location in (b), where the object surface has not been observed yet as represented by the confidence region (gray). From the comparison of the covered areas through the objective function at both locations (red dots), it should become apparent that the objective function recommends the recently visited location again, since this location yields a larger objective function value. Figure (c) visualizes the objective function over different camera positions \(\theta\) and clearly shows a maximum at the previously visited location.
        Similar issues arise with \nameref{sssec:design-objective-c} and \nameref{sssec:design-objective-csp} which are not further discussed.
    }
    \label{fig:experiments-results-deficiency}
\end{figure}

\subsection{Intersection- vs.\ Uncertainty-based Objective Functions}\label{ssec:experiments-results-comparison}

From a top-down approach, we first present the most general results summarizing the experiments over all objects. Afterwards, we analyze the results for each object class separately to provide some additional thoughts.

\subsubsection{Results for all Objects}
The following discussion refers to \cref{tab:experiments-results-all} which compares the average performance of the algorithms on all used objects depicted in \cref{fig:simulation-experiment-objects}.

\begin{table}
    \centering
    \renewcommand{\arraystretch}{1.3}
    \begin{subtable}{0.6\linewidth}
        \centering
        \begin{tabular}{@{}lccccc@{}}
            \toprule
            Algo. & \rkrec{} & \rmeathr{} \lowerbetter{} & \color{gray} \rmea{} \lowerbetter{} & \color{gray} \rec{} \higherbetter{} \\
            \midrule
            \color{black} \(\A[*]\)     & 1.2 & 100\% & \color{gray} 100\% & \color{gray} 100\% \\
            \color{Green} \(\A[CSW]\)   & 3.3 & 127\% & \color{gray} 112\% & \color{gray}  99\% \\
            \color{Green} \(\A[IOA]\)   & 3.3 & 127\% & \color{gray} 105\% & \color{gray}  98\% \\
            \color{Green} \(\A[I]\)     & 3.6 & 129\% & \color{gray} 107\% & \color{gray}  97\% \\
            \color{blue}  \(\A[U]\)     & 3.9 & 128\% & \color{gray} 116\% & \color{gray} 100\% \\
            \color{blue}  \(\A[UP]\)    & 4.0 & 128\% & \color{gray} 113\% & \color{gray} 100\% \\
            \color{blue}  \(\A[\TCSU]\) & 4.2 & 130\% & \color{gray} 116\% & \color{gray} 100\% \\
            \bottomrule
        \end{tabular}
        \caption{results ranked by \rkrec{}}
        \label{tab:experiments-results-all-rec}
    \end{subtable}%
    \begin{subtable}{0.4\linewidth}
        \centering
        \begin{tabular}{@{}lcc@{}}
            \toprule
            Algo. & \rknbv{} & \reg{} \lowerbetter{} \\
            \midrule
            \color{black} \(\A[*]\)     & 1.0 &  0.00 \\
            \color{blue}  \(\A[UP]\)    & 3.3 &  9.01 \\
            \color{Green} \(\A[IOA]\)   & 3.9 & 11.10 \\
            \color{blue}  \(\A[U]\)     & 4.0 & 11.91 \\
            \color{Green} \(\A[I]\)     & 4.5 & 12.28 \\
            \color{Green} \(\A[CSW]\)   & 4.6 & 11.63 \\
            \color{blue}  \(\A[\TCSU]\) & 4.7 & 12.14 \\
            \bottomrule
        \end{tabular}
        \caption{results ranked by \rknbv{}}
        \label{tab:experiments-results-all-nbv}
    \end{subtable}
    \caption[Averaged Results over all Objects]{
        Averaged Results over all Objects.
        These tables summarize the results from the experiments on all objects depicted in \cref{fig:simulation-experiment-objects} by averaging the corresponding ranks and metric values. (a) is ordered based on the average rank with respect to the reconstruction problem, while (b) is ordered based on the average rank with respect to the NBV decision problem. We provide explanations for the metrics in \cref{ssec:experiments-framework-metrics} and indicate whether higher \higherbetter{} or lower \lowerbetter{} values are better. For visual support, we color algorithms using intersection-based objective functions (green) different from algorithms using uncertainty-based objective functions (blue). Note that \(\A[CSW]\) can be seen as an approximation of an intersection-based objective function as described in \cref{sssec:design-objective-csw}, while \(\A[\TCSU]\) is a variant of an uncertainty-based objective function as explained in \cref{sec:analysis-twophase-csu}. The metrics \rmea{} and \rec{} (gray) are only provided for information and not for comparison as explained previously.
    }
    \label{tab:experiments-results-all}
\end{table}

In \cref{tab:experiments-results-all-rec}, we can clearly see that intersection-based objective functions score higher rankings on average than the uncertainty-based objective functions based on the colors. This reflects how we gradually designed new objective functions with less accurate upper bounds to obtain simpler closed-form expressions as discussed in \cref{ssec:design-objective-summary}. As we can see, using more accurate objective functions helps in achieving good rankings with respect to the reconstruction problem.

At the same time, we can observe that intersection-based objective functions tend to terminate early without achieving full reconstruction seen from the \rec{} metric. A potential reason is that they maximize the observed uncertainty area, which towards the final rounds can be maximal at already visited locations, where the uncertainty of many surface points, which still remains from the previous measurement noise, can be observed all at once.
In contrast, uncertainty-based objective functions almost always achieve full reconstruction, since they only maximize the uncertainty at the current location which normally is larger at new locations than at visited locations.

Note that the average ranking of the optimal greedy algorithm \(\A[*]\) underlines the fact that the greedy algorithm is optimal in the sense of finding a greedy solution based on the true object, but not finding an overall optimal solution. Hence, it is possible to sometimes achieve better performance without knowledge about the true object, but usually only with luck.

In \cref{tab:experiments-results-all-nbv}, we can observe that the rankings \rknbv{} are different from \rkrec{}. In particular, the greedy algorithm based on \nameref{sssec:design-objective-up} achieves significantly lower average individual regret than the others. One can carefully conclude that uncertainty-based objective functions tend to agree more with the optimal greedy decision than the more complex intersection-based objective functions and therefore solve the NBV decision problem better. 
Reasons for why they lag behind in the performance with respect to the reconstruction problem are vague, but recall that the greedy decision is always made with respect to the previously made decisions. Hence, agreeing with the greedy decisions after \(t\) rounds might be already too late if the first \(t\) decisions were selected too badly.

Note that the order of \rkrec{} does not necessarily coincide with the order given by \reg{}, since the average ranking takes all objects uniformly into account, while the average of \reg{} is weighted depending on the kinds of objects.

\subsubsection{Results per Object Class}
The following discussion refers to \cref{tab:experiments-results-perobject-rec,tab:experiments-results-perobject-nbv} which compare the average performances of the algorithms on each object class.

\begin{table}
    \centering
    \renewcommand{\arraystretch}{1.3}
    \renewcommand{\*}{\bfseries}
    \begin{subtable}{\linewidth}
        \centering
        \begin{small}
            \begin{tabular}{@{}lcclcclcclcc@{}}
                \toprule
                & \multicolumn{2}{c}{ellipse} && \multicolumn{2}{c}{flower} && \multicolumn{2}{c}{square} && \multicolumn{2}{c}{polygon} \\
                \cmidrule{2-3} \cmidrule{5-6} \cmidrule{8-9} \cmidrule{11-12}
                Algo. & \rkrec{} & \rmeathr{} && \rkrec{} & \rmeathr{} && \rkrec{} & \rmeathr{} && \rkrec{} & \rmeathr{} \\
                \midrule
                \color{black} \(\A[*]\)     &   1.2 &   100\% &&   1.2 &   100\% &&   1.0 &   100\% &&   1.2 &   100\% \\
                \color{Green} \(\A[CSW]\)   & \*3.2 &   132\% && \*3.4 & \*117\% &&   3.2 &   141\% && \*3.2 & \*134\% \\
                \color{Green} \(\A[IOA]\)   & \*2.8 & \*122\% && \*3.8 &   123\% &&   2.8 &   112\% &&   4.0 &   183\% \\
                \color{Green} \(\A[I]\)     &   5.2 &   140\% && \*3.8 &   127\% && \*2.2 &   114\% && \*3.0 & \*142\% \\
                \color{blue}  \(\A[U]\)     &   4.5 &   133\% &&   4.3 &   125\% && \*2.5 & \*109\% &&   3.5 &   149\% \\
                \color{blue}  \(\A[UP]\)    &   3.5 & \*121\% &&   4.7 &   127\% && \*2.5 & \*106\% &&   4.0 &   158\% \\
                \color{blue}  \(\A[\TCSU]\) &   5.5 &   137\% &&   4.1 & \*117\% &&   3.0 &   140\% &&   4.2 &   148\% \\
                \bottomrule
            \end{tabular}
        \end{small}
        \caption{results ranked by \rkrec{} for the reconstruction problem}
        \label{tab:experiments-results-perobject-rec}
    \end{subtable}
    \par\medskip
    \begin{subtable}{\linewidth}
        \centering
        \begin{small}
            \begin{tabular}{@{}lcclcclcclcc@{}}
                \toprule
                & \multicolumn{2}{c}{ellipse} && \multicolumn{2}{c}{flower} && \multicolumn{2}{c}{square} && \multicolumn{2}{c}{polygon} \\
                \cmidrule{2-3} \cmidrule{5-6} \cmidrule{8-9} \cmidrule{11-12}
                Algo. & \rknbv{} & \reg{} && \rknbv{} & \reg{} && \rknbv{} & \reg{} && \rknbv{} & \reg{} \\
                \midrule
                \color{black} \(\A[*]\)     &   1.0 &   0.00 &&   1.0 &   0.00 &&   1.0 &   0.00 &&   1.0 &    0.00 \\
                \color{blue}  \(\A[UP]\)    & \*2.0 & \*5.40 && \*3.8 & \*8.88 && \*2.5 & \*2.00 && \*4.0 & \*19.94 \\
                \color{Green} \(\A[IOA]\)   &   3.8 &   8.87 && \*3.8 &   9.43 &&   4.5 &   7.42 && \*4.0 &   21.21 \\
                \color{blue}  \(\A[U]\)     & \*3.0 & \*7.72 &&   4.9 &  12.17 && \*3.0 & \*2.73 && \*4.0 &   24.61 \\
                \color{Green} \(\A[I]\)     &   5.8 &  10.09 &&   4.7 &  10.48 &&   3.5 &   7.24 && \*4.0 &   24.00 \\
                \color{Green} \(\A[CSW]\)   &   6.2 &  11.44 &&   4.2 & \*8.88 &&   4.5 &   9.57 &&   4.2 & \*20.75 \\
                \color{blue}  \(\A[\TCSU]\) &   5.0 &   9.17 &&   4.3 &  10.03 &&   5.5 &   9.88 &&   4.5 &   22.63 \\
                \bottomrule
            \end{tabular}
        \end{small}
        \caption{results ranked by \rknbv{} for the NBV decision problem}
        \label{tab:experiments-results-perobject-nbv}
    \end{subtable}
    \caption[Averaged Results over each Object Class]{
        Averaged Results over each Object Class.
        These tables average the corresponding ranks and metric values over each object class and are ordered as in \cref{tab:experiments-results-all}. Explanations for the metrics are given in \cref{ssec:experiments-framework-metrics}. Among the algorithm candidates, we highlight for each metric the top two values (bold).
    }
    \label{tab:experiments-results-perobject}
\end{table}

In \cref{tab:experiments-results-perobject-rec,tab:experiments-results-perobject-nbv}, we observe that \rmea{} and \reg{} are significantly larger for polygons on average than for other object classes. The reason is that straight edges and sharp corners of polygons often provide favorable camera locations with large observation coverage of the surface. Hence, knowing the true object surface provides a much larger advantage for complex objects than for simple objects. This explains why the deviation from the optimal greedy algorithm in terms of performance is generally larger for high object complexities.

In \cref{tab:experiments-results-perobject-nbv}, it is notably that the greedy algorithm \(\A[UP]\) based on \nameref{sssec:design-objective-up} is ranked first with respect to the NBV decision problem for all object classes, while the variant \(\A[U]\) does not exhibit this performance. This underlines our statement in \cref{sec:analysis-greedy-u} that the additional mean factor \(\mu_{t-1}(\theta)\) in the objective function \(\Fu[U]\) is contra-productive, since it incentivizes instead of penalizes small distances between camera and object surface.

Interestingly, when ranking the algorithms based on the average \(\reg\) over each object class, \(\A[U]\) is placed second for the ellipse and square object classes, while it is on the last place for the flower and polygon object classes. This suggests that \(\A[U]\) only performs well on simple objects without self-occlusions, while it disagrees more with the optimal greedy decisions for more complex objects.

Finally, we want to remark that we are disappointed by the performance of the two-phase algorithm \(\A[\TCSU]\), as it is ranked last for both the reconstruction and NBV decision problems. We assume that the \nameref{sssec:design-objective-cs} objective function still negatively impacts the NBV estimates despite finding the \(\Fu[U]\) maximizer in the second phase.


\section{Summary}\label{sec:experiments-summary}

Despite many vague statements, we try to summarize the ones, for which we feel the most confident.
Clearly, all confidence-based algorithms which do not take the FOV shape into account demonstrate poor reconstruction performance as seen in \cref{fig:experiments-results-deficiency}.
For the reconstruction problem, we found out in \cref{tab:experiments-results-all-rec} that the average performance ranking follows the order of accuracy of the objective functions with intersection-based objective functions being the most accurate ones.
For the NBV decision problem, it seems very much that uncertainty-based objective functions tend to perform better and achieve lower average individual regret. In particular, the algorithm based on \nameref{sssec:design-objective-up}, for which we even did not show sublinear regret in \cref{sec:analysis-greedy-up}, performs remarkably good with respect to the NBV decision problem as seen in \cref{tab:experiments-results-perobject-nbv}. In contrast, the two-phase algorithm \(\A[\TCSU]\), for which we showed sublinear regret, does solve the issues of confidence-based algorithms, but does not do so sufficiently to compete with the other candidates.

\chapter{Conclusion}\label{chp:conclusion}

To conclude our work on near-optimal active reconstruction, we want to highlight the most important observations from the different chapters.

In \cref{chp:problem,chp:design}, we faced the challenges of transferring the approaches for Gaussian process optimization to the active object reconstruction setting.

From the practical perspective, the main difficulty was the transformation between the real world, in which the target object and the camera reside, and the polar world, in which we defined our Gaussian process model and consequently our objective functions.
With the design of \(2\pi\)-periodic kernels for modeling the polar surface functions and various types of objective functions for estimating the NBV, we were able to find some candidate algorithms, which are simple enough for the analysis, but still performing sufficiently well in practice.
Based on our insights, we collected a list of requirements and heuristics for the design of objective functions in \cref{ssec:design-objective-requirements}.

From a theoretical perspective, we investigated the differences to settings of related work in \cref{sec:problem-comparison}, which appeared minor, but ultimately caused major problems for our results. Since a solution to the reconstruction problem ranges over the decisions of all rounds instead of a single round, it is not possible for us to show near-optimality for this problem. We conjecture that this is generally the case, since the performance in the first few rounds can be arbitrarily bad and the optimal solution constantly improves with increasing number of rounds. Therefore, we relaxed our initial goals to finding a near-optimal decision in each round or equivalently a near-optimal solution to the NBV problem. For the same reason as above and the resulting time-dependence of a near-optimal decision, we are only able to show pseudo-convergence to near-optimality as stated in \cref{thm:near-optimality}.

In \cref{chp:analysis}, we precisely showed under reasonable assumptions that our algorithm candidates have asymptotically zero or negative average regret with probability \(1-\delta\), which implies that they are guaranteed to make at least 
one decision with marginal utility of at least \(1-\frac{1}{e}\approx 63\%\) of an optimal decision up to some precision \(\varepsilon>0\) within some finite time \(T\) and with probability \(1-\delta\).
Interestingly, since the additional number of observed surface points per measurement is constantly upper bounded, any heuristic improvement of maximizing the number of surface points inside the FOV only leads to a constant improvement of the regret bounds.

In \cref{chp:experiments}, we however observed that these heuristics typically matter more than the asymptotic behavior of our algorithms. From the averaged results over all objects in \cref{tab:experiments-results-all-rec}, we conclude that algorithms equipped with more accurate objective functions such as \(\A[IOA]\), \(\A[I]\) or \(\A[CSW]\) generally exhibit better performance with respect to the reconstruction problem. But it is also notable that the uncertainty-based objective function \(\A[UP]\) significantly outperforms the other algorithms with respect to the NBV decision problem in terms of the individual average regret as seen in \cref{tab:experiments-results-all-nbv}.

This let us conclude that theory and practice are not always perfectly aligned and showing sublinear regret in theory does not directly correlate with superior performance in practice. However, theory did provide us with valuable insights for the design and understanding of our objective functions.

\subsubsection{Contributions}
The contributions of this thesis might not be as initially envisioned, but we showed how to apply Gaussian process optimization rigorously in a complex, novel setting, what difficulties can arise and how they can be solved.
We hope that we provided valuable insights with the comparison of our setting with previous work, our thoughts on the design of objective functions and periodic kernels, and finally our theoretical and experimental results. 

\subsubsection{Future Work}
Based on the list of simplifications in \cref{ssec:problem-simplified-simplifications}, much work can be done in gradually relaxing or lifting these simplifications, for which we provided initial food for thought. Most importantly, extending our methods to 3D and relaxing the restrictions on the camera pose would be major contributions to our work. This would allow one to conduct real-world experiments outside the simulation framework with robotic systems, for example.

An interesting idea is to interpret the reconstruction of a 3D room as an inverted 3D object reconstruction problem with the camera located ``inside'' the object and oriented towards outside.%
\footnote{Credits for this idea go to Manish Prajapat.}
Although not much thought has been given to this idea yet, it would be great to see how near-optimality results can be extended to even more complex tasks such as 3D scene reconstruction.

\printbibliography

\appendix

\chapter{Proofs}\label{chp:proofs}

Here we provide all rigorous proofs for the previous chapters. To facilitate the parsing of the derivations, we highlight all changes made from the previous to the current derivation step.


\section{Proofs for \cref{chp:background}}

\subsection{\fullref{lem:information-entropy-gaussian}}\label{ssec:proofs-lemma-information-entropy-gaussian}

The goal of this proof is to derive the entropy of the Gaussian distribution.

\begin{proof}
    Let \(X\in\Normal*{\mu,\Sigma}\) with \(\mu\in\Real^n,\Sigma\in\Real^{n,n}\).
    \begin{derivation}
        H(X) &= \eqcontrast \eqchange{\E*{-\log p(X)}}
        \\ &= \eqcontrast \E*{-\log*{\eqchange{\frac{1}{\sqrt{(2\pi)^n\det*{\Sigma}}}e^{-\frac{1}{2}(X-\mu)^T\Sigma^{-1}(X-\mu)}}}}
        \\ &= \eqcontrast \E*{-\log*{\frac{1}{\sqrt{(2\pi)^n\det*{\Sigma}}}} \eqchange{+ \frac{1}{2}(X-\mu)^T\Sigma^{-1}(X-\mu)}}
        \\ &= \eqcontrast \eqchange{\frac{1}{2}\log\det*{2\pi\Sigma}} + \eqchange{\frac{1}{2}}\E*{(X-\mu)^T\Sigma^{-1}(X-\mu)}
        \\ &= \eqcontrast \frac{1}{2}\log\det*{2\pi\Sigma} + \frac{1}{2}
        \E*{\eqchange{\tr*{\eqnochange{(X-\mu)^T\Sigma^{-1}(X-\mu)}}}}
        \\ &= \eqcontrast \frac{1}{2}\log\det*{2\pi\Sigma} + \frac{1}{2}
        \E*{\tr*{\eqchange{\Sigma^{-1}}(X-\mu)(X-\mu)^T}}
        \\ &= \eqcontrast \frac{1}{2}\log\det*{2\pi\Sigma} + \frac{1}{2}
        \tr*{\eqchange{\E*{\eqnochange{\Sigma^{-1}(X-\mu)(X-\mu)^T}}}}
        \\ &= \eqcontrast \frac{1}{2}\log\det*{2\pi\Sigma} + \frac{1}{2}
        \tr*{\Sigma^{-1}\eqchange{\E*{\eqnochange{(X-\mu)(X-\mu)^T}}}}
        \\ &= \eqcontrast \frac{1}{2}\log\det*{2\pi\Sigma} + \frac{1}{2}
        \tr*{\Sigma^{-1}\eqchange{\Sigma}}
        \\ &= \eqcontrast \frac{1}{2}\log\det*{2\pi\Sigma} + \frac{1}{2}
        \tr*{\eqchange{I}} \notag
        \\ &= \eqcontrast \frac{1}{2}\log\det*{2\pi\Sigma} + \frac{1}{2}\eqchange{n} \tag{*}\label[derivationstep]{deriv:lemma-information-entropy-gaussian-univariate}
        \\ &= \eqcontrast \frac{1}{2}\log*{\eqchange{e^n}\det*{2\pi\Sigma}}
        \\ &= \eqcontrast \frac{1}{2}\log\det*{2\pi\eqchange{e}\Sigma}
    \end{derivation}
    \begin{justification}
        \item by \cref{def:information-entropy} (definition of information entropy)
        \item by definition of multivariate Gaussian distribution
        \item by property of logarithm
        \item by linearity of expectation and properties of logarithm and determinant
        \item since \(x = \tr*{x}\) with scalar \(x\)
        \item by cyclicity of trace
        \item by linearity of trace
        \item by linearity of expectation
        \item by definition of covariance matrix
        \item by property of logarithm
        \item by property of determinant
    \end{justification}
    The entropy for the univariate Gaussian distribution follows from \cref{deriv:lemma-information-entropy-gaussian-univariate}.
\end{proof}


\section{Proofs for \cref{chp:problem}}

\subsection{Quantor- vs.\ Limit-based Convergence}\label{ssec:proofs-auxiliary-convergence}

For the sake of completeness, we show the following result
\begin{equation}
    \begin{alignedat}{2}
        && \forall \varepsilon > 0 \exists N \geq 1 \forall n \geq N\colon
        & a_n < \varepsilon
        \quad\text{ and }\quad
        \lim_{n\to\infty} a_n \defeq a \text{ exists}
        \\
        \equivalent\quad
        && \forall \varepsilon > 0 \exists N \geq 1 \forall n \geq N\colon
        & \abs{a_n} < \varepsilon
        \quad\eqnote{or equivalently \(\lim_{n\to\infty} a_n \leq 0\)}
    \end{alignedat}
\end{equation}
which is intuitive and might appear trivial. This formally justifies that the definitions of convergence to near-optimality in \cref{eq:near-optimality} and of no-regret in \cref{eq:no-regret} are as strong as one would normally define them in terms of \(\lim_{T\to\infty} \ffrac{R(T)}{T} = 0\) and \(\lim_{t\to\infty} r(t) = 0\) for \textit{non-negative} regret functions -- up to the existence of the limit.

\begin{proof}
    We show ``\(\Longrightarrow\)'' in \cref{ps:auxiliary-convergence-ps1} and ``\(\Longleftarrow\)'' in \cref{ps:auxiliary-convergence-ps2}.
    
    \begin{proofstep}[\(\Longrightarrow\)]\label{ps:auxiliary-convergence-ps1}
        \begin{derivation}
            && & \lim_{n\to\infty} a_n \defeq a \text{ exists}
            \quad\text{ and }\quad
            \forall \varepsilon > 0 \exists N \geq 1 \forall n \geq N\colon a_n < \varepsilon \notag
            \\
            \implies\quad
            && & \eqcontrast \eqchange{\forall \varepsilon > 0 \exists N \geq 1 \forall n \geq N\colon
            \abs{a_n - a} < \varepsilon}
            \\
            \implies\quad
            && & \eqcontrast \forall \varepsilon > 0 \exists N \geq 1 \forall n \geq N\colon
            \eqchange{a - \varepsilon < a_n < a + \varepsilon} \notag
            \\
            \implies\quad
            && & \eqcontrast \forall \varepsilon > 0 \exists N \geq 1 \forall n \geq N\colon
            a - \varepsilon < a_n < \eqchange{\varepsilon}
            \\
            \implies\quad
            && & \eqcontrast \forall \varepsilon > 0\colon
            a < \eqchange{2\varepsilon \notag}
            \\
            \implies\quad
            && & \eqcontrast a \eqchange{\leq 0}
        \end{derivation}
        \begin{justification}
            \item by definition of limit
            \item since \(\forall \varepsilon > 0 \exists N \geq 1 \forall n \geq N\colon a_n < \varepsilon\)
            \item since \(\varepsilon\) can be arbitrarily small
        \end{justification}
    \end{proofstep}
    
    \begin{proofstep}[\(\Longleftarrow\)]\label{ps:auxiliary-convergence-ps2}
        \begin{derivation}
            && &\lim_{n\to\infty} a_n \defeq a \text{ exists}
            \quad\text{ and }\quad
            a \leq 0 \notag
            \\
            \quad\implies
            && & \eqcontrast \eqchange{\forall \varepsilon > 0 \exists N \geq 1 \forall n \geq N\colon
            \abs{a_n - a} < \varepsilon}
            \\
            \quad\implies
            && & \eqcontrast \forall \varepsilon > 0 \exists N \geq 1 \forall n \geq N\colon
            \abs{a_n - a} \eqchange{+ a} < \varepsilon \eqchange{+ a} \notag
            \\
            \quad\implies
            && & \eqcontrast \forall \varepsilon > 0 \exists N \geq 1 \forall n \geq N\colon
            \eqchange{a_n} < \eqchange{\varepsilon}
        \end{derivation}
        \begin{justification}
            \item by definition of limit
            \item since
            \begin{alignat*}{2}
                a_n - a \leq \abs{a_n - a}
                & \quad\equivalent\quad
                a_n \leq \abs{a_n-a}+a
                && \quad\eqnote{left side}
                \\
                a \leq 0
                & \quad\equivalent\quad
                \varepsilon + a \leq \varepsilon
                && \quad\eqnote{right side}
                \tag*{\qedhere}
            \end{alignat*}
        \end{justification}
    \end{proofstep}
\end{proof}

\subsection{\fullref{thm:near-optimality}}\label{ssec:proofs-theorem-near-optimality}

A similar proof was given by \textcite[Lemma 8]{prajapat2022nearoptimal}. The main difference in our setting is the dependence of the optimal solution \(\thetaopt_T\) on the number of measurements \(T\), which allows us to only show pseudo-convergence to near-optimality as stated in \cref{remark:true-vs-pseudo-near-optimality}. A more detailed discussion is provided in \cref{sec:problem-comparison}.

\begin{proof}
    In \cref{ps:theorem-near-optimality-ps1} we assume sublinear regret as defined in \cref{eq:sublinear-regret} and derive no-regret as defined in \cref{eq:no-regret}. In \cref{ps:theorem-near-optimality-ps2} we continue with no-regret and the desired pseudo-convergence to near-optimality.
    
    \begin{proofstep}\label{ps:theorem-near-optimality-ps1}
        Besides the technical difference of taking negative regret into account, the proof for showing no-regret from sublinear regret is straightforward.
        \begin{derivation}
            && &R(T) \leq \bigO*{T^n} \text{ with } n < 1 \notag
            \\
            \implies\quad
            && &\eqcontrast \eqchange{\forall c > 0\exists T_0 \geq 1 \forall T \geq T_0\colon} R(T) \leq \eqchange{c \cdot T^n}
            \\
            \implies\quad
            && &\eqcontrast \forall c > 0\exists T_0 \geq 1 \forall T \geq T_0\colon \eqchange{\frac{\eqnochange{R(T)}}{T}} \leq c \cdot T^{n \eqchange{- 1}} \eqchange{< c}
            \\
            \implies\quad
            && &\eqcontrast \forall \eqchange{\varepsilon} > 0\exists T_0 \geq 1 \forall T \geq T_0\colon \frac{R(T)}{T} < \eqchange{\varepsilon}
        \end{derivation}
        \begin{justification}
            \item by definition of \(\bigO\)-notation
            \item since \(T^{n-1} = \frac{1}{T^{1-n}} < 1\) with \(T\geq T_0 \geq 1\) and \(1 - n > 0\)
            \item by instantiating \(c\) with value smaller \(\varepsilon\)
        \end{justification}
    \end{proofstep}
    
    \begin{proofstep}\label{ps:theorem-near-optimality-ps2}
        The main idea for this proof is that the average regret upper bounds the minimum regret up to time \(T\). Hence, if the average regret is non-positive asymptotically, there must be one round within finite time where the simple regret is non-positive up to some precision \(\varepsilon\).
        \begin{derivation}
            && &\forall \varepsilon > 0\exists T_0 \geq 1 \forall T \geq T_0\colon \frac{R(T)}{T} < \varepsilon
            \\
            \implies\quad
            && &\eqcontrast \forall \varepsilon > 0\exists T_0 \geq 1 \forall T \geq T_0\colon \frac{1}{T} \eqchange{\sum_{t=1}^T r(t)} < \varepsilon \label{deriv:theorem-near-optimality-ps2-fork}
            \\
            \implies\quad
            && &\eqcontrast \forall \varepsilon > 0\exists T_0 \geq 1 \forall T \geq T_0\colon \eqchange{\min_{t=1,\dots,T} r(t)} < \varepsilon
            \\
            \implies\quad
            && &\eqcontrast \forall \varepsilon > 0\exists T_0 \geq 1\colon \min_{t=1,\dots,\eqchange{T_0}} r(t) < \varepsilon
            \\
            \implies\quad
            && &\eqcontrast \forall \varepsilon > 0\exists T_0 \geq 1 \eqchange{\exists T \leq T_0}\colon r(\eqchange{T}) < \varepsilon
        \end{derivation}
        \begin{justification}
            \item by \cref{ps:theorem-near-optimality-ps1}
            \item by \cref{eq:simple-regret} (definition of simple regret)
            \item since minimum \(\leq\) average
            \item by instantiating \(\inst{T_0}{T}\)
            \item by definition of minimum
        \end{justification}

        Hence, we have shown pseudo-convergence to near-optimality from no-regret. However, this is not possible for true convergence to near-optimality due to the counterexample
        \begin{equation*}
            x_n \defeq \begin{cases}
                \sqrt{n}, & \text{\(n = k^3\) with \(k\in\Natural\)}\\
                0,        & \text{otherwise}
                ,
            \end{cases}
        \end{equation*}
        whose average converges to zero, but the series itself does not due to sparks which become asymptotically sparser \autocite{quasi2018answer}. Depending on the setting, true convergence can be shown under additional conditions as described in \cref{cor:near-optimality}.
    \end{proofstep}
\end{proof}

\subsection{\fullref{cor:near-optimality}}\label{ssec:proofs-corollary-near-optimality}

\begin{proof}
    We first show the statement for the more general assumption 2 and then proceed with proving the statement under the stronger assumption 1.
    
    Assume that condition 2 is satisfied and \(r(t)\) decreases monotonically in \(t\). Given the result from \cref{thm:near-optimality}, we can immediately conclude
    \begin{derivation}
        && &\forall \varepsilon > 0 \exists T_0 \geq 1 \exists T \leq T_0\colon r(T) < \varepsilon
        \\ \implies\quad && & \eqcontrast \forall \varepsilon > 0 \exists T_0 \geq 1 \exists T \leq T_0 \eqchange{\forall t \geq T}\colon \eqchange{r(t) \leq} r(T) < \varepsilon
        \\ \implies\quad && & \eqcontrast \forall \varepsilon > 0 \exists T_0 \geq 1 \eqchange{\forall t \geq T_0}\colon r(t) < \varepsilon
    \end{derivation}
    \begin{justification}
        \item by \cref{thm:near-optimality}
        \item by \textit{assumption}
        \item by weakening the statement from \(t\geq T\) to \(t\geq T_0\)
    \end{justification}
    
    Assume that condition 1 is satisfied. Observe that \(r(t)\) decreases monotonically, since \(F(x^\star)\) is constant in \(t\) and \(F(x_t)\) increases monotonically in \(t\). Hence, condition 2 is satisfied and the statement follows.

    Alternatively for condition 1, we can follow the argumentation of \textcite[Section II]{srinivas2012informationtheoretic} that the maximum utility \(\max_{t\leq T} F(x_t)\) must be closer to the time-independent optimal utility \(F(x^\star)\) than the average utility \(\frac{1}{T}\sum_{t=1}^T F(x_t)\). Since the maximum utility corresponds to the final utility by monotonicity, the final regret must be smaller than the average regret.
    \begin{derivation}
        \frac{1}{T} \sum_{t=1}^T r(t)
        &= \eqcontrast \frac{1}{T} \sum_{t=1}^T \eqchange{\paren*{(1-\alpha)F(x^\star) - F(x_t)}}
        \\ &= \eqcontrast \eqchange{(1-\alpha)F(x^\star)} - \frac{1}{T} \sum_{t=1}^T F(x_t)
        \\ &\geq \eqcontrast (1-\alpha)F(x^\star) - \eqchange{\max_{t=1} F(x_t)}
        \\ &= \eqcontrast (1-\alpha)F(x^\star) - \eqchange{F(x_T)}
        \\ &= \eqcontrast \eqchange{r(T)}
    \end{derivation}
    \begin{justification}
        \item by condition 2 (\textit{assumption on definition or regret})
        \item by condition 2 (\textit{assumption on time-independence of optimal decision})
        \item since maximum \(\geq\) average
        \item by condition 2 (\textit{assumption on monotonicity of utility})
        \item by condition 2 (\textit{assumption on definition or regret})
    \end{justification}
    Hence, continuing at \cref{deriv:theorem-near-optimality-ps2-fork} in the proof for \cref{thm:near-optimality}, we can directly derive true convergence for the simple regret based on the convergence of the average regret.
    \begin{derivation*}
        && &\forall \varepsilon > 0\exists T_0 \geq 1 \forall T \geq T_0\colon \frac{1}{T} \sum_{t=1}^T r(t) < \varepsilon
        \\
        \implies\quad
        && &\eqcontrast \forall \varepsilon > 0\exists T_0 \geq 1 \forall T \geq T_0\colon \eqchange{r(T)} < \varepsilon
        \tag*{\qedhere}
    \end{derivation*}
\end{proof}

\subsection{\cref{lem:lemma-2-1}}\label{ssec:proofs-lemma-2-1}

The goal of this proof is to relate the optimal solution \(\thetaopt_T\) and the set of greedy decisions \(\thetagre_{1:T}\), such that we can upper bound \(R(T)\) from \cref{eq:cumulative-regret} defined in terms of \(\thetaopt_T\) with \(R_{ind}(T)\) from \cref{eq:individual-regret} defined with respect to \(\thetagre_t\).

The idea of this proof is to monitor the \emph{gap to optimality}
\begin{equation}\label{eq:gap-to-optimality}
    \delta_t \defeq F(\thetaopt_T) - F(\theta_{1:t})
    \quad\with t \geq 1
\end{equation}
\TODO{REMOVABLE CHECK \(t \leq T\) required?}
for a fixed \(T \geq 1\). Since \(F(\thetaopt_T)\) is constant and \(F(\theta_{1:t})\) monotonic increasing in \(t\) by \cref{eq:monotonic-utility}, this gap naturally decreases. By analyzing how \(\delta_t\) reduces over time, we can observe that the amount of decrease in each round can be related to \(r_{ind}(t)\). Since \(\delta_T\) implicitly reflects \(R(T)\), we can establish the upper bound relation between \(R(T)\) and \(R_{ind}(T)\).

This proof is taken from \textcite[Lemma 1 and 2]{prajapat2022nearoptimal} and adapted to our setting. Similar proofs were given by \textcites[Lemma 1 and 2]{yue2011linear}[Theorem 1]{chen2017interactive}.

\begin{proof}
    In \cref{ps:lemma-2-1-ps1}, we derive a recursive formula relating \(\delta_t\) with \(\delta_{t-1}\) and provides the insight that the gap from round \(t-1\) to \(t\) is reduced by at most \(r_{ind}(t)\). In \cref{ps:lemma-2-1-ps2}, we repeatedly apply this recursive formula to relate \(\delta_T\) with the initial \(\delta_0\). Finally, it is straightforward to show the desired result in \cref{ps:lemma-2-1-ps3}.
    
    \begin{proofstep}\label{ps:lemma-2-1-ps1}
        The goal is to show
        \begin{equation*}
            \delta_t \leq \paren*{1-\frac{1}{T}} \delta_{t-1} + r_{ind}(t)
        \end{equation*}
        for all \(t\geq1\) and \(T\geq1\). \TODO{REMOVABLE CHECK condition \(t\leq T\) required?}

        To make the derivation easier to parse at one point, we first define the marginal utility for a set of decisions as
        \begin{equation}\label{eq:marginal-utility-set}
            F(\theta_{t:t'} \mid \theta_{1:t-1}) \defeq \sum_{\tau=t}^{t'} F(\theta_\tau \mid F_{1:\tau-1})
            .
        \end{equation}
        The important relation to the utility is
        \begin{equation}\label{eq:marginal-utility-set-relation}
            \begin{alignedat}{2}
                F(\theta_{t:t'} \mid \theta_{1:t-1})
                &= \sum_{\tau=t}^{t'} (F(\theta_{1:\tau}) - F(\theta_{1:\tau-1}))
                &&\quad\eqnote{by \crefnosort{eq:marginal-utility-set,eq:marginal-utility}}\\
                &= F(\theta_{1:t'}) - F(\theta_{1:t-1})
                &&\quad\eqnote{since telescoping sum}
                ,
            \end{alignedat}
        \end{equation}
        which matches the intuition of marginal utility.
        \begin{derivation}
            && \delta_{t-1}
            &= \eqcontrast \eqchange{F(\thetaopt_T) - F(\theta_{1:t-1})}
            \\ && &\leq \eqcontrast F(\thetaopt_T \eqchange{\cup \theta_{1:t-1}}) - F(\theta_{1:t-1}) \label{deriv:lemma-2-1-ps1-monotonicity}
            \\ && &= \eqcontrast \eqchange{F(\thetaopt_T \mid \theta_{1:t-1})}
            \\ && &= \eqcontrast \eqchange{\sum_{\tau=1}^T} F(\eqchange{(\thetaopt_T)_\tau} \mid \theta_{1:t-1} \cup \eqchange{(\thetaopt_T)_{1:\tau-1}})
            \\ && &\leq \eqcontrast \sum_{\tau=1}^T F((\thetaopt_T)_\tau \mid \theta_{1:t-1}) \label{deriv:lemma-2-1-ps1-submodularity}
            \\ && &\leq \eqcontrast \sum_{\tau=1}^T F(\eqchange{\thetagre_\tau} \mid \theta_{1:t-1}) \label{deriv:lemma-2-1-ps1-greedy}
            \\ && &= \eqcontrast \eqchange{T \cdot} F(\thetagre_\tau \mid \theta_{1:t-1}) \notag
            \\
            \implies\quad    
            && \eqcontrast \eqchange{\frac{1}{T}} \delta_{t-1}
            &\leq \eqcontrast F(\thetagre_t \mid \theta_{1:t-1}) \tag{*}\label[derivationstep]{deriv:lemma-2-1-ps1-intuition}
            \\ && &= \eqcontrast \eqchange{r_{ind}(t) + F(\theta_t \mid \theta_{1:t-1})}
            \\ && &= \eqcontrast r_{ind}(t) + \eqchange{F(\theta_{1:t}) - F(\theta_{1:t-1})}
            \\ && &= \eqcontrast r_{ind}(t) + (\eqchange{F(\thetaopt_T)} - F(\theta_{1:t-1})) - (\eqchange{F(\thetaopt_T)} + F(\theta_{1:t})) \notag
            \\ && &= \eqcontrast r_{ind}(t) + \eqchange{\delta_{t-1}} - \eqchange{\delta_t}
            \\
            \implies\quad
            &&\eqcontrast \eqchange{\delta_t} &\leq \eqcontrast \paren*{1\eqchange{-\frac{1}{T}}} \delta_{t-1} + r_{ind}(t) \notag
        \end{derivation}
        \begin{justification}
            \item by \cref{eq:gap-to-optimality} (definition of gap to optimality)
            \item by \cref{eq:monotonic-utility} (monotonicity of utility)
            \item by \cref{eq:marginal-utility-set-relation} (relation between utility and marginal utility)
            \item by \cref{eq:marginal-utility-set} (definition of marginal utility for sets)
            \item by \cref{eq:submodular-utility} (submodularity of utility)
            \item by \cref{eq:greedy-decision} (definition of greedy decision)
            \item by \cref{eq:individual-regret} (definition of simple individual regret)
            \item by \cref{eq:marginal-utility} (definition of marginal utility)
            \item by \cref{eq:gap-to-optimality} (definition of gap to optimality)
        \end{justification}
        
        The insight is if we evaluate the marginal utility of each of the \(T\) optimal decisions \((\thetaopt_T)_1,\dots,(\thetaopt_T)_T\) individually with respect to \(\theta_{1:t-1}\) (see \cref{deriv:lemma-2-1-ps1-submodularity}), it cannot exceed the marginal utility of the greedy decision \(\thetagre_t\) with respect to \(\theta_{1:t-1}\) (see \cref{deriv:lemma-2-1-ps1-greedy}) by definition of the greedy decision in \cref{eq:greedy-decision}.
        Together with monotonocity and submodularity defined in \cref{deriv:lemma-2-1-ps1-submodularity,deriv:lemma-2-1-ps1-monotonicity}, this provides us the guarantee that the marginal utility of the greedy decision closes at least one \(T\)-th of the previous gap to optimality \(\delta_{t-1}\) in each round as stated in \cref{deriv:lemma-2-1-ps1-intuition}.
        With this relation between optimal and greedy decisions, we can upper bound the regret with the individual regret.
        
    \end{proofstep}
    
    \begin{proofstep}\label{ps:lemma-2-1-ps2}
        The goal is to show
        \begin{equation*}
            \delta_T < \frac{1}{e} \delta_0 + R_{ind}(T)
        \end{equation*}
        for all \(T\geq1\) by recursively applying \cref{ps:lemma-2-1-ps1}.
        \begin{derivation}
            && \delta_t
            &\leq \paren*{1-\frac{1}{T}} \delta_{t-1} + r_{ind}(t)
            \\ && &\leq \eqcontrast \paren*{1-\frac{1}{T}} \eqchange{\paren*{\paren*{1-\frac{1}{T}} \delta_{t-2} + r_{ind}(t-1)}} + r_{ind}(t)
            \\ && &= \eqcontrast \paren*{1-\frac{1}{T}}^{\eqchange{2}} \eqchange{\delta_{t-2}} + \paren*{1-\frac{1}{T}} \eqchange{r_{ind}(t-1)} + r_{ind}(t) \notag
            \\ && &\leq \eqcontrast \cdots \notag
            \\ && &\leq \eqcontrast \paren*{1-\frac{1}{T}}^{\eqchange{t}} \eqchange{\delta_0} + \eqchange{\sum_{\tau=1}^t \paren*{1-\frac{1}{T}}^{t-\tau} r_{ind}(\tau)}
            \\ && &\leq \eqcontrast \paren*{1-\frac{1}{T}}^t \delta_0 + \sum_{\tau=1}^t r_{ind}(\tau)
            \eqnocontrast \quad\text{ for all \(t,T\geq 1\)}
            \\
            \implies\quad
            &&\eqcontrast \delta_{\eqchange{T}}
            &\leq \eqcontrast \paren*{1-\frac{1}{T}}^{\eqchange{T}} \delta_0 + \sum_{\tau=1}^{\eqchange{T}} r_{ind}(\tau)
            \\ && &< \eqcontrast \eqchange{\frac{1}{e}} \delta_0 + \sum_{\tau=1}^T r_{ind}(\tau)
            \\ && &= \eqcontrast \frac{1}{e} \delta_0 + \eqchange{R_{ind}(T)}
        \end{derivation}
        \begin{justification}
            \item by \cref{ps:lemma-2-1-ps1}
            \item by \cref{ps:lemma-2-1-ps1}
            \item by \cref{ps:lemma-2-1-ps1} applied recursively
            \item since \(1 - \frac{1}{T} \leq 1\) for all \(T \geq 1\) and \(t-\tau \geq 0\)
            \item by instantiating \(\inst{t}{T}\)
            \item since \(\paren*{1-\frac{1}{T}}^T < \frac{1}{e}\) for all \(T \geq 1\)
            \item by \cref{eq:individual-regret} (definition of cumulative individual regret)
        \end{justification}
    \end{proofstep}
    
    \begin{proofstep}\label{ps:lemma-2-1-ps3}
        Using \cref{ps:lemma-2-1-ps2} it is straightforward to show the final result \(R(T) < R_{ind}(T)\) for all \(T\geq1\).
        \begin{derivation}
            && \delta_T
            &< \frac{1}{e} \delta_0 + R_{ind}(T)
            \\
            \implies\quad
            &&\eqcontrast \eqchange{F(\thetaopt_T) - F(\theta_{1:T})}
            &< \eqcontrast \frac{1}{e} \eqchange{F(\thetaopt_T)} + R_{ind}(T)
            \\
            \implies\quad
            &&\eqcontrast \paren*{1\eqchange{-\frac{1}{e}}} F(\thetaopt_T) - F(\theta_{1:T})
            &< \eqcontrast R_{ind}(T) \notag
            \\
            \implies\quad
            &&\eqcontrast \eqchange{R(T)}
            &< \eqcontrast R_{ind}(T)
        \end{derivation}
        \begin{justification}
            \item by \cref{ps:lemma-2-1-ps2}
            \item by \cref{eq:gap-to-optimality} (definition of gap to optimality)
            \item by \cref{eq:cumulative-regret} (definition of cumulative regret)
        \end{justification}
        
        Now it becomes obvious that the \(\paren*{1-\frac{1}{e}}\) approximation guarantee for the greedy algorithm comes from \cref{deriv:lemma-2-1-ps1-intuition} in \cref{ps:lemma-2-1-ps1}, which states that the greedy decisions close at least one \(T\)-th of the previous gap to optimality in every round. Hence, after all \(T\) rounds this gap is reduced to at most \(\paren*{1-\frac{1}{T}}^T < \frac{1}{e}\).
    \end{proofstep}
\end{proof}

\subsection{\cref{lem:lemma-2-2}}\label{ssec:proofs-lemma-2-2}

The basic idea of this proof is to upper and lower bound the marginal utility. This allows us to obtain an upper bound on the individual regret defined as the difference in marginal utility with respect to the greedy decision. The lower bound \(F_l\) is naively set to 0, while the upper bound is ensured by the assumption on the objective function \(\Fu\).

\begin{proof}
    We assume
    \begin{equation}\label{eq:proofs-lemma-2-2-assumption}
        F(\thetagre_t \mid \theta_{1:t-1}) \leq \Fu(\theta_t \mid \theta_{1:t-1})
        \quad\text{ for all } t \geq 1
        .
    \end{equation}
    
    This allows us to show:
    \begin{derivation}
        R_{ind}(T)
        &= \eqcontrast \eqchange{\sum_{t=1}^T r_{ind}(t)}
        \\ &= \eqcontrast \sum_{t=1}^T \eqchange{(F(\thetagre_t \mid \theta_{1:t-1}) - F(\theta_t \mid \theta_{1:t-1}))}
        \\ &\leq \eqcontrast \sum_{t=1}^T F(\thetagre_t \mid \theta_{1:t-1})
        \\ &\leq \eqcontrast \sum_{t=1}^T \eqchange{\Fu}(\eqchange{\theta_t} \mid \theta_{1:t-1})
    \end{derivation}
    \begin{justification}
        \item by \cref{eq:individual-regret} (definition of cumulative individual regret)
        \item by \cref{eq:individual-regret} (definition of simple individual regret)
        \item since \(F(\theta \mid \theta_{1:t-1}) \geq F_l(\theta \mid \theta_{1:t-1}) \defeq 0\) for all \(\theta \in \Camspace\)
        \item by \cref{eq:proofs-lemma-2-2-assumption} (\textit{assumption})
        \qedhere
    \end{justification}
\end{proof}


\section{Proofs for \cref{chp:analysis}}

\subsection{\fullref{lem:lemma-2-4}}\label{ssec:proofs-lemma-2-4}

The idea of this proof is to first show that the upper and lower bound holds for a single \(\varphi\in\Domain\) \whp{} and then to apply \textit{union bound} over all \(\varphi\in\Domain\). The problem is that \(\Domain = [0,2\pi]\) consists of infinitely many \(\varphi\) and union bound over \(\Domain\) does not lead to the desired result. The trick is to discretize \(\Domain\) into a finite set of points \(\Domain_t\), for which the confidence bounds can be ensured with the union bound. With the additional assumption of probabilistically bounded derivatives
\begin{equation}\label{eq:proofs-lemma-2-4-assumption}
    \Pr*{\sup_{\varphi\in\Domain} \abs*{\deriv{f}{\varphi}(\varphi)} \leq L} \geq 1 - ae^{-L^2/b^2}
    \quad\text{ for some } a,b > 0
    ,
\end{equation}
or equivalently Lipschitz-continuous functions, we can exclude very wild \(f\sim\GP*{m,k}\) which would be able to escape the confidence bounds between the discretization points. This allows us to ensure that for the chosen \(\beta_t\) the confidence bounds also hold for the remaining points \(\varphi\notin\Domain_t\), although with a small discretization error of \(\frac{1}{t^2}\) in the final bound.

This proof is taken from \textcite[Theorem 2]{srinivas2012informationtheoretic} and adapted to our setting.

\begin{proof}
    We choose the confidence parameter as
    \begin{align}
        \beta_t &= 2 \log*{\frac{\abs{\Domain_t}\pi_t}{\delta/2}} = 2\log*{\frac{2\pi^3b}{3}\sqrt{\log*{\frac{2a}{\delta}}}\frac{t^4}{\delta}} \label{eq:proofs-lemma-2-4-beta}\\
        \shortintertext{with}
        \pi_t &= \frac{\pi^2}{6}t^2 \label{eq:proofs-lemma-2-4-pi}\\
        \abs{\Domain_t} &= 2\pi b\sqrt{\log*{\frac{2a}{b}}}\cdot t^2 \label{eq:proofs-lemma-2-4-discretization}
    \end{align}
    and parameters \(a,b > 0\) specified in the assumption from \cref{eq:proofs-lemma-2-4-assumption} and \(\delta \in (0,1)\) selected at your own discretion. The reasons for these specific choices can be found below in the proof.
    
    This proof is divided in three steps. In \cref{ps:lemma-2-4-ps1}, we show that the confidence bounds hold for all \(\varphi\in\Domain_t\) with probability at least \(1-\frac{\delta}{2}\). In \cref{ps:lemma-2-4-ps2}, we show that the discretization error can be upper bounded with \(\frac{1}{t^2}\) with probability at least \(1-\frac{\delta}{2}\). \cref{ps:lemma-2-4-ps3} combines both of them and shows the desired result.

    \paragraph{Discretization}
    For a given discretization \(\Domain_t \subseteq \Domain\), we define
    \begin{equation*}
        [\varphi]_t \defeq \argmin_{\tilde{\varphi} \in \Domain_t} \abs{\tilde{\varphi} - \varphi}
    \end{equation*}
    to be the closest point in \(\Domain_t\) to \(\varphi\). We choose the discretized domain \(\Domain_t \subseteq \Domain\), such that
    \begin{equation}\label{eq:proofs-lemma-2-4-closest-neighbor}
         \abs{\varphi-[\varphi]_t} \leq \frac{2\pi}{\abs{\Domain_t}}
    \end{equation}
    is satisfied. For example, the uniform discretization of \(\Domain\) satisfies this condition. The specific choice for the discretization granularity is given by \cref{eq:proofs-lemma-2-4-discretization}.
    
    \begin{proofstep}\label{ps:lemma-2-4-ps1}
        The goal is to show
        \begin{equation*}
            \Pr*{\forall t \geq 1\forall\varphi\in\Domain_t\colon \abs{f(\varphi)-\mu_{t-1}(\varphi)} \leq \beta_t^{1/2}\sigma_{t-1}(\varphi)}
            \geq 1 - \frac{\delta}{2}
            .
        \end{equation*}
        We first show a general result based on the exponential decay of the Gaussian probability density function, which is also referred to as concentration guarantees. For some \(c > 0\) and an arbitrary \(z\sim\Normal(0,1)\), the probability of sampling outliers with \(\abs{z} > c\) can be upper bounded with an exponentially decreasing bound the larger we choose \(c\). This becomes useful, since points on the sampled surface function follow a Gaussian distribution.
        \begin{derivation}
            && \Pr*{z>c}
            & = \eqcontrast \eqchange{\int_c^\infty \frac{1}{\sqrt{2\pi}} e^{-\frac{1}{2}z^2} \d{z}}
            \eqnocontrast \quad\with z\sim\Normal*{0,1}
            \\ && & = \eqcontrast \int_c^\infty \frac{1}{\sqrt{2\pi}} e^{-\frac{1}{2} \eqchange{\paren*{(z-c)^2+2zc-c^2}}} \d{z}
            \\ && & = \eqcontrast \int_c^\infty \frac{1}{\sqrt{2\pi}} e^{\eqchange{-\frac{1}{2}}(z-c)^2 \eqchange{-}zc \eqchange{+ \frac{1}{2}}c^2} \d{z} \notag
            \\ && & = \eqcontrast \int_c^\infty \frac{1}{\sqrt{2\pi}} e^{-\frac{1}{2}(z-c)^2 - c(z\eqchange{-c}) \eqchange{- \frac{1}{2}c^2}} \d{z} \notag
            \\ && & = \eqcontrast \eqchange{e^{-\frac{1}{2}c^2} \frac{1}{\sqrt{2\pi}}} \int_c^\infty \eqchange{e^{-\frac{1}{2}(z-c)^2} e^{-c(z-c)}} \d{z} \notag
            \\ && & \leq \eqcontrast e^{-\frac{1}{2}c^2} \frac{1}{\sqrt{2\pi}} \int_c^\infty e^{-\frac{1}{2}(z-c)^2} \d{z}
            \\ && & = \eqcontrast e^{-\frac{1}{2}c^2} \frac{1}{\sqrt{2\pi}} \int_{\eqchange{0}}^\infty e^{-\frac{1}{2}\eqchange{z}^2} \d{z}
            \\ && & = \eqcontrast e^{-\frac{1}{2}c^2} \eqchange{\Pr*{z>0}}
            \\ && & = \eqcontrast \eqchange{\frac{1}{2}} e^{-\frac{1}{2}c^2}
            \\
            \implies\quad
            && \eqcontrast \Pr*{\eqchange{\abs{\eqnochange{z}}} > c} &\leq \eqcontrast e^{-\frac{1}{2}c^2} \label{deriv:lemma-2-4-ps1-7}
        \end{derivation}
        \begin{justification}
            \item by definition of \(\Normal*{0,1}\)
            \item by completing the square
            \item since \(e^{-c(z-c)} \leq 1\) with \(z \geq c\) (see integration bounds) and \(c > 0\)
            \item by change of integration bounds
            \item by definition of \(\Normal*{0,1}\)
            \item by symmetry of \(\Normal*{0,1}\)
            \item by symmetry of \(\Normal*{0,1}\)
        \end{justification}
        
        The result in \cref{deriv:lemma-2-4-ps1-7} holds for a single point \(z\). We use the union bound to apply this on all finitely many points in \(\Domain_t\) over infinitely many rounds \(t\geq1\). This is where the choice of \(\pi_t\) in \cref{eq:proofs-lemma-2-4-beta} comes into play, which upper bounds the outlier probability for a single round \(t\) with \(\bigO*{\frac{1}{t^2}}\). Hence, the outlier probability over all rounds \(t\geq1\) is constant.
        \begin{derivation}
            && \eqcontrast \Pr*{\eqchange{\frac{\eqnochange{\abs{\eqchange{f(\varphi)-\mu_{t-1}(\varphi)}}}}{\sigma_{t-1}(\varphi)}} > \eqchange{\beta_t^{1/2}}}
            &\leq \eqcontrast e^{-\frac{1}{2}\eqchange{\beta_t}}
            \\
            \implies\quad
            && \eqcontrast \Pr*{\eqchange{\exists\varphi\in\Domain_t\colon} \frac{\abs{f(\varphi)-\mu_{t-1}(\varphi)}}{\sigma_{t-1}(\varphi)} > \beta_t^{1/2}}
            &\leq \eqcontrast \eqchange{\abs{\Domain_t}} e^{-\frac{1}{2}\beta_t}
            \\ && &= \eqcontrast \eqchange{\frac{\delta/2}{\pi_t}}
            \\
            \implies\quad
            && \eqcontrast \Pr*{\multlinedcmd[6cm]{\eqchange{\exists t \geq 1}\exists\varphi\in\Domain_t\colon \\ \frac{\abs{f(\varphi)-\mu_{t-1}(\varphi)}}{\sigma_{t-1}(\varphi)} > \beta_t^{1/2}}}
            &\leq \eqcontrast \eqchange{\sum_{t\geq1}} \frac{\delta/2}{\pi_t}
            \\ && &= \eqcontrast \frac{\delta}{2}
            \\
            \implies\quad
            && \eqcontrast \Pr*{\multlinedcmd[6cm]{\eqchange{\forall} t \geq 1\eqchange{\forall}\varphi\in\Domain_t\colon \\ \frac{\abs{f(\varphi)-\mu_{t-1}(\varphi)}}{\sigma_{t-1}(\varphi)} \eqchange{\leq} \beta_t^{1/2}}}
            &\geq \eqcontrast \eqchange{1 -} \frac{\delta}{2} \notag
            \\
            \implies\quad
            && \eqcontrast \Pr*{\multlinedcmd[6cm]{\forall t \geq 1\forall\varphi\in\Domain_t\colon \\ \abs{f(\varphi)-\mu_{t-1}(\varphi)} \leq \beta_t^{1/2}\eqchange{\sigma_{t-1}(\varphi)}}}
            &\geq \eqcontrast 1 - \frac{\delta}{2} \notag
        \end{derivation}
        \begin{justification}
            \item by instantiating \(\inst{z}{\frac{f(\varphi)-\mu_{t-1}(\varphi)}{\sigma_{t-1}(\varphi)}} \sim \Normal*{0,1}\) and \(\inst{c}{\beta_t^{1/2}} > 0\)
            \item by union bound over \(\varphi\in\Domain_t\)
            \item by \cref{eq:proofs-lemma-2-4-beta} with \(\beta_t = 2 \log*{\frac{\abs{\Domain_t}\pi_t}{\delta/2}} \equivalent \abs{\Domain_t} e^{-\frac{1}{2}\beta_t} = \frac{\delta/2}{\pi_t}\)
            \item by union bound over \(t \geq 1\)
            \item by \cref{eq:proofs-lemma-2-4-pi} with \(\sum_{t\geq1} \frac{1}{\pi_t} = \frac{6}{\pi^2} \sum_{t\geq1} \frac{1}{t^2} = 1\)
        \end{justification}
    \end{proofstep}
    
    \begin{proofstep}\label{ps:lemma-2-4-ps2}
        The goal is to show
        \begin{equation*}
            \Pr*{\forall t\geq1\varphi\in\Domain\colon \abs*{f(\varphi) - f([\varphi]_t)} \leq \frac{1}{t^2}}
            \geq 1 - \frac{\delta}{2}
            .
        \end{equation*}
        The bounded derivatives assumption in \cref{eq:proofs-lemma-2-4-assumption} helps us in providing the lower bound on the probability, while the choice of \(\abs{\Domain_t} \in \bigO*{t^2}\) in \cref{eq:proofs-lemma-2-4-discretization} helps us to upper bound the discretization error by \(\bigO*{\frac{1}{t^2}}\) by making the discretization finer over time, such that the total discretization error for the confidence bounds over all rounds \(t\geq1\) is constant.
        \begin{derivation}
            && \Pr*{\sup_{\varphi\in\Domain} \abs*{\deriv{f}{\varphi}(\varphi)} \leq L}
            &\geq 1 - ae^{-L^2/b^2}
            \\
            \implies\quad
            && \eqcontrast \Pr*{\eqchange{\forall\varphi\in\Domain\colon} \abs*{\deriv{f}{\varphi}(\varphi)} \leq L}
            &\geq \eqcontrast 1 - ae^{-L^2/b^2}
            \\
            \implies\quad
            && \eqcontrast \Pr*{\forall\varphi,\eqchange{\varphi'}\in\Domain\colon \eqchange{\abs*{f(\varphi) - f(\varphi')} \leq L\abs*{\varphi-\varphi'}}}
            &\geq \eqcontrast 1 - ae^{-L^2/b^2}
            \\
            \implies\quad
            && \eqcontrast \Pr*{\multlinedcmd[7cm]{\forall\varphi,\varphi'\in\Domain\colon \\ \abs*{f(\varphi) - f(\varphi')} \leq \eqchange{b\sqrt{\log*{\tfrac{2a}{\delta}}}}\abs*{\varphi-\varphi'}}}
            &\geq \eqcontrast 1 - \eqchange{\frac{\delta}{2}}
            \\
            \implies\quad
            && \eqcontrast \Pr*{\multlinedcmd[7cm]{\eqchange{\forall t\geq1}\varphi\in\Domain\colon \\ \abs*{f(\varphi) - f(\eqchange{[\varphi]_t})} \leq b\sqrt{\log*{\tfrac{2a}{\delta}}}\abs*{\varphi-\eqchange{[\varphi]_t}}}}
            &\geq \eqcontrast 1 - \frac{\delta}{2}
            \\
            \implies\quad
            && \eqcontrast \Pr*{\multlinedcmd[7cm]{\forall t\geq1\varphi\in\Domain\colon \\ \abs*{f(\varphi) - f([\varphi]_t)} \leq \eqchange{\frac{2\pi}{\abs{\Domain_t}}}b\sqrt{\log*{\tfrac{2a}{\delta}}}}}
            &\geq \eqcontrast 1 - \frac{\delta}{2}
            \\
            \implies\quad
            && \eqcontrast \Pr*{\forall t\geq1\varphi\in\Domain\colon \abs*{f(\varphi) - f([\varphi]_t)} \leq \eqchange{\frac{1}{t^2}}}
            &\geq \eqcontrast 1 - \frac{\delta}{2}
        \end{derivation}
        \begin{justification}
            \item by \cref{eq:proofs-lemma-2-4-assumption} (\textit{bounded derivatives assumption})
            \item by definition of supremum
            \item by mean value theorem
            \item since \(ae^{-L^2/b^2} = \frac{\delta}{2} \equivalent L = b\sqrt{\log*{\frac{2a}{\delta}}}\) with \(a,\delta>0\)
            \item by instantiating \(\inst{\varphi'}{[\varphi]_t} \in \Domain\)
            \item by \cref{eq:proofs-lemma-2-4-closest-neighbor} (closest neighbor in \(\Domain_t\))
            \item by \cref{eq:proofs-lemma-2-4-discretization} with \(\abs{\Domain_t} = 2\pi b\sqrt{\log*{\frac{2a}{b}}}\cdot t^2 \equivalent \frac{2\pi}{\abs{\Domain_t}} b\sqrt{\log*{\frac{2a}{b}}} = \frac{1}{t^2}\)
        \end{justification}
    \end{proofstep}
    
    \begin{proofstep}\label{ps:lemma-2-4-ps3}
        The goal is to show
        \begin{equation*}
            \Pr*{\forall t\geq1\forall\varphi\in\Domain\colon \abs{f(\varphi) - \mu_{t-1}([\varphi]_t)} \leq \frac{1}{t^2} + \beta_t^{1/2}\sigma_{t-1}([\varphi]_t)} \geq 1-\delta
            .
        \end{equation*}
        Finally, we combine \cref{ps:lemma-2-4-ps1} and \cref{ps:lemma-2-4-ps2} and provide an probabilistic confidence bound for all \(\varphi\in\Domain\) based on the discretization error from \(\varphi\) to \([\varphi]_t\in\Domain_t\) and the confidence bounds for \([\varphi]_t\in\Domain_t\). Note that the probability of \(1-\delta\) is again obtained based on the union bound over the corresponding complement events.
        \begin{derivation}
            && \abs{f(\varphi) - \mu_{t-1}([\varphi]_t)}
            &= \eqcontrast \abs{f(\varphi) \eqchange{- f([\varphi]_t) + f([\varphi]_t)} - \mu_{t-1}([\varphi]_t)} \notag
            \\ && &\leq \eqcontrast \eqchange{\abs{\eqnochange{f(\varphi) - f([\varphi]_t)}}} + \eqchange{\abs{\eqnochange{f([\varphi]_t) - \mu_{t-1}([\varphi]_t)}}}
            \\ \implies\quad
            & \eqcontrast \mathrlap{\eqchange{\Pr*{\multlinedcmd{\forall t\geq1\forall\varphi\in\Domain\colon \\ \eqnochange{\abs{f(\varphi) - \mu_{t-1}([\varphi]_t)} \leq \eqchange{\frac{1}{t^2}} + \eqchange{\beta_t^{1/2}\sigma_{t-1}([\varphi]_t)}}}} \geq 1-\delta}}
        \end{derivation}
        \begin{justification}
            \item by triangle inequality
            \item by \cref{ps:lemma-2-4-ps1}, \cref{ps:lemma-2-4-ps2} and union bound
            \qedhere
        \end{justification}
    \end{proofstep}
\end{proof}

\subsection{\fullref{lem:lemma-2-5}}\label{ssec:proofs-lemma-2-5}

This proof is taken from \textcites[Lemma 5-7]{prajapat2022nearoptimal}[Lemma 5.3 and 5.4]{srinivas2012informationtheoretic} and adapted to our setting. We first show the general statement \cref{lem:lemma-2-5a}.
\begin{proof}
    We assume
    \begin{equation}\label{eq:proofs-lemma-2-5-assumption}
        \abs{k(\varphi,\varphi')} \leq 1
        \quad\text{ for all } \varphi,\varphi'\in\Domain
    \end{equation}
    for the given kernel function \(k\). In addition, we use the following notations
    \begin{equation*}
        \begin{aligned}
            \lambda_{i,t} &\defeq \lambda_i(\Sigma_{t-1}(X_t))\\
            N_T &\defeq \max_{t=1,\dots,T} n_t
            .
        \end{aligned}
    \end{equation*}

    This proof is structured into two steps:
    \begin{derivation}
        \frac{1}{2} \sum_{t=1}^T \sum_{i=1}^{n_t} \sigma_{t-1}(X_{t,i})^2
        &\leq \eqcontrast \eqchange{\frac{N_T}{\log*{\sigmaeps^{-2} + 1}}} \frac{1}{2} \sum_{t=1}^T \sum_{i=1}^{n_t} \eqchange{\log*{\sigmaeps^{-2} \lambda_{i,t} + 1}}
        \\ &= \eqcontrast \frac{N_T}{\log*{\sigmaeps^{-2} + 1}} \eqchange{I(Y_{1:T};f_{1:T})}
    \end{derivation}
    \begin{justification}
        \item In \cref{ps:lemma-2-5-ps1} we establish a relation between the measured uncertainties and the logarithm of the eigenvalues of the covariance matrix with the help of the auxiliary \cref{lem:lemma-aux-x_leq_c_logx_plus_1}.
        \item In \cref{ps:lemma-2-5-ps2} we use this relation to relate the measured uncertainties to the information gain. We mainly make use of the known expression for the entropy of a Gaussian distribution as stated in \cref{lem:information-entropy-gaussian}.
    \end{justification}
    
    \begin{proofstep}\label{ps:lemma-2-5-ps1}
        The goal is to show
        \begin{equation*}
            \frac{1}{2} \sum_{t=1}^T \sum_{i=1}^{n_t} \sigma_{t-1}(X_{t,i})^2
            \leq \frac{N_T}{\log*{\sigmaeps^{-2} + 1}} \frac{1}{2} \sum_{t=1}^T \sum_{i=1}^{n_t} \log*{\sigmaeps^{-2} \lambda_{i,t} + 1}
            .
        \end{equation*}
        
        In order to achieve this, we want to instantiate the auxiliary \cref{lem:lemma-aux-x_leq_c_logx_plus_1} with \(\inst{x}{\sigmaeps^{-2}\lambda_{i,t}}\) and \(\inst{c}{\sigmaeps^{-2}n_t}\) which provides us
        \begin{equation*}
            \sigmaeps^{-2} \lambda_{i,t}
            \leq \frac{\sigmaeps^{-2}n_t}{\log*{\sigmaeps^{-2} n_t + 1}} \cdot \log*{\sigmaeps^{-2} \lambda_{i,t} + 1}
            \quad\text{ for all } t \geq 1
            .
        \end{equation*}
        To this end, we have to ensure \(x\in[0,c]\). Since \(x \geq 0\) is satisfied, we have to show \(x \leq c\) or more specifically \(\lambda_{i,t} \leq n_t\) for all \(i = 1,\dots,n_t\) and all \(t\geq1\).
        
        The main ideas are to use the trace to relate the eigenvalues \(\lambda_{i,t}\) to the measured uncertainties \(\sigma_{t-1}(X_{t,i})^2\) and to bound these measured posterior uncertainties with the prior uncertainties. Based on the assumption of a bounded kernel function in \cref{eq:proofs-lemma-2-5-assumption}, these prior uncertainties are each bounded by 1 and their sum by \(n_t\).

        We first provide the formal proof that the uncertainties \(\sigma_t(\varphi)^2\) are monotonically decreasing in \(t\) and the posterior uncertainties are smaller than the prior uncertainties. This makes sense, since the uncertainties after gaining new information through measurements must naturally be smaller than the uncertainties without any information.
        \begin{derivation}
            \sigma_t(\varphi)^2
            &= \eqcontrast \eqchange{\Sigma(\varphi) - \Sigma(\varphi,X_{1:t})\paren*{\Sigma(X_{1:t}) + \sigmaeps^2 I}^{-1} \Sigma(X_{1:t},\varphi)} \label{deriv:lemma-2-5-ps1-aux1-start}
            \\ &= \eqcontrast \eqchange{\sigma_0(\varphi)} - \Sigma(\varphi,X_{1:t})\paren*{\Sigma(X_{1:t}) + \sigmaeps^2 I}^{-1} \Sigma(X_{1:t},\varphi)
            \\ &= \eqcontrast \sigma_0(\varphi) - \eqchange{v^T}(\eqchange{A}+\eqchange{B})^{-1}\eqchange{v}
            \\ &\leq \eqcontrast \sigma_0(\varphi) \label{deriv:lemma-2-5-ps1-aux1-end}
        \end{derivation}
        \begin{justification}
            \item by \cref{eq:posterior-surface} (definition of posterior variance)
            \item by \cref{eq:prior-surface} (definition of prior variance)
            \item by substituting \(\subst{v}{\Sigma(X_{1:t},\varphi)}, \subst{A}{\Sigma(X_{1:t})}, \subst{B}{\sigmaeps^2 I}\)
            \item by \cref{lem:lemma-aux-A_plus_B_inv_pd} with \(A\) positive semi-definite by definition of kernel function and \(B\) positive definite with \(\sigmaeps^2 > 0\)
        \end{justification}
        
        Now we can continue showing that \(\lambda_{i,t} \leq n_t\) holds for all \(i = 1,\dots,n_t\) and all \(t\geq1\), such that we can use \cref{lem:lemma-aux-x_leq_c_logx_plus_1}.
        \begin{derivationinline}
            \lambda_{i,t}
            &\leq \eqcontrast \eqchange{\sum_{i=1}^t} \lambda_{i,t} \eqnocontrast
            \tageq{=} \eqcontrast \eqchange{\tr*{\Sigma_{t-1}(X_t)}} \label{deriv:lemma-2-5-ps1-aux2-start} \eqnocontrast
            \tageq{=} \eqcontrast \eqchange{\sum_{i=1}^{n_t} \sigma_{t-1}(X_{t,i})^2} \eqnocontrast
            \\ &\tageq{\leq} \eqcontrast \sum_{i=1}^{n_t} \eqchange{\sigma_0}(X_{t,i})^2 \eqnocontrast
            \tageq{=} \eqcontrast \sum_{i=1}^{n_t} \eqchange{k(\eqnochange{X_{t,i}, X_{t,i}})} \eqnocontrast
            \\ &\tageq{\leq} \eqcontrast \sum_{i=1}^{n_t} \eqchange{1} \eqnocontrast \label{deriv:lemma-2-5-ps1-aux2-end}
            = \eqchange{n_t}
        \end{derivationinline}
        \begin{justification}
            \item by property of trace
            \item by definition of trace and \cref{eq:prior-posterior-variance-surface} (definition of prior and posterior variance)
            \item by \crefrange{deriv:lemma-2-5-ps1-aux1-start}{deriv:lemma-2-5-ps1-aux1-end} (posterior variance \(\leq\) prior variance)
            \item by \cref{eq:prior-posterior-variance-surface} (definition of prior variance)
            \item by \cref{eq:proofs-lemma-2-5-assumption} (\textit{assumption})
        \end{justification}

        This brings us into the position that we can show the desired result.
        \begin{derivation}
            \frac{1}{2} \sum_{t=1}^T \sum_{i=1}^{n_t} \sigma_{t-1}(X_{t,i})^2
            &= \eqcontrast \frac{1}{2} \sum_{t=1}^T \eqchange{\tr*{\Sigma_{t-1}(X_t)}}
            \\ &= \eqcontrast \frac{1}{2} \sum_{t=1}^T \eqchange{\sum_{i=1}^{n_t} \lambda_{i,t}}
            \\ &= \eqcontrast \eqchange{\sigmaeps^2} \frac{1}{2} \sum_{t=1}^T \sum_{i=1}^{n_t} \eqchange{\sigmaeps^{-2}} \lambda_{i,t} \notag
            \\ &\leq \eqcontrast \sigmaeps^2 \frac{1}{2} \sum_{t=1}^T \sum_{i=1}^{n_t} \eqchange{\frac{\sigmaeps^{-2}n_t}{\log*{\sigmaeps^{-2} n_t + 1}} \cdot \log*{\sigmaeps^{-2} \lambda_{i,t} + 1}}
            \\ &\leq \eqcontrast \eqchange{\frac{N_T}{\log*{\sigmaeps^{-2} + 1}}} \frac{1}{2} \sum_{t=1}^T \sum_{i=1}^{n_t} \log*{\sigmaeps^{-2} \lambda_{i,t} + 1}
        \end{derivation}
        \begin{justification}
            \item by definition of trace and \cref{eq:prior-posterior-variance-surface} (definition of prior and posterior variance)
            \item by property of trace
            \item by \cref{lem:lemma-aux-x_leq_c_logx_plus_1} instantiated with \(\inst{x}{\sigmaeps^{-2}\lambda_{i,t}}\) and \(\inst{c}{\sigmaeps^{-2}n_t}\) and \(x\in[0,c]\) ensured by \crefrange{deriv:lemma-2-5-ps1-aux2-start}{deriv:lemma-2-5-ps1-aux2-end}
            \item since \(1\leq n_t \leq N_T\) for all \(t=1,\dots,T\) 
            \\ \(\implies \frac{n_t}{\log*{\sigmaeps^{-2} n_t + 1}} \leq \frac{N_T}{\log*{\sigmaeps^{-2} + 1}}\)
        \end{justification}
    \end{proofstep}

    \begin{proofstep}\label{ps:lemma-2-5-ps2}
        The goal is to show
        \begin{equation*}
            I(Y_{1:T};f_{1:T})
            = \frac{1}{2} \sum_{t=1}^T \sum_{i=1}^{n_t} \log*{\sigmaeps^{-2}\lambda_{i,t} + 1}
            .
        \end{equation*}
        
        First recall the property of information gain
        \begin{equation*}
            I(Y_{1:T};f_{1:T}) = H(Y_{1:T}) - H(Y_{1:T} \mid f_{1:T})
        \end{equation*}
        given in \cref{eq:information-gain-property-conditional}. To show the desired equality, we first focus on the individual entropy terms \(H(Y_{1:T})\) and \(H(Y_{1:T} \mid f_{1:T})\) and derive corresponding expressions. These derivations mainly depend on the fact that the measurements \(Y_t\) are distributed with a Gaussian distribution, because \cref{lem:information-entropy-gaussian} provides us an expression for the entropy of a Gaussian distribution. Once we have both expressions, it is almost straightforward to show the final goal.

        We start with \(H(Y_{1:T})\).
        \begin{derivation}
            \MoveEqLeft[1.5] H(Y_{1:T})
            = \eqcontrast \eqchange{H(Y_T \mid Y_{1:T-1}) + H(Y_{1:T-1})} \label{deriv:lemma-2-5-ps3-aux1-start}
            \\ &= \eqcontrast \eqchange{\frac{1}{2} \log{\det*{2\pi e\paren*{\Sigma_{T-1}(X_T) + \sigmaeps^2 I_{n_T}}}}} + H(Y_{1:T-1})
            \\ &= \eqcontrast \frac{1}{2} \log*{\eqchange{\paren*{2\pi e\sigmaeps^2}^{n_T}}\det*{\eqchange{\sigmaeps^{-2}}\Sigma_{T-1}(X_T) + I_{n_T}}} + H(Y_{1:T-1})
            \\ &= \eqcontrast \frac{1}{2} \eqchange{n_T} \log*{2\pi e\sigmaeps^2} \eqchange{+} \frac{1}{2} \log{\det*{\sigmaeps^{-2}\Sigma_{T-1}(X_T) + I_{n_T}}} + H(Y_{1:T-1})  \label{deriv:lemma-2-5-ps3-aux1-rec}
            \\ &= \cdots \notag
            \\ &= \eqcontrast \frac{1}{2} \eqchange{\sum_{t=2}^T} \eqchange{n_t} \log*{2\pi e\sigmaeps^2} + \frac{1}{2} \eqchange{\sum_{t=2}^T} \log{\det*{\sigmaeps^{-2}\eqchange{\Sigma_{t-1}}(\eqchange{X_t}) + I_{\eqchange{n_t}}}} + H(\eqchange{Y_1})
            \\ &= \eqcontrast \frac{1}{2} \sum_{\eqchange{t=1}}^T n_t \log*{2\pi e\sigmaeps^2} + \frac{1}{2} \sum_{\eqchange{t=1}}^T \log{\det*{\sigmaeps^{-2}\Sigma_{t-1}(X_t) + I_{n_t}}} \label{deriv:lemma-2-5-ps3-aux1-end}
        \end{derivation}
        \begin{justification}
            \item by \cref{eq:information-entropy-properties} (property of joint entropy)
            \item by \cref{lem:information-entropy-gaussian} (entropy of Gaussian distribution) and \cref{eq:posterior-surface,eq:measured-surface} (definition of posterior and noise distribution) with
            \begin{equation*}
                Y_t \mid Y_{1:t-1} \sim \Normal*{\mu_{t-1}(X_t),\Sigma_{t-1}(X_t) + \sigmaeps^2 I_{n_t}}
                \quad\text{ for all } t \geq 2
            \end{equation*}
            obtained from \(Y_t = f_t + \varepsilon_t\) and
            \begin{equation*}
                \begin{alignedat}{2}
                    f_t \mid Y_{1:t-1} &\sim \Normal*{\mu_{t-1}(X_t),\Sigma_{t-1}(X_t)}
                    && \quad\eqnote{posterior distribution}
                    \\
                    \varepsilon_t &\sim \Normal*{0, \sigmaeps^2 I_{n_t}}
                    && \quad\eqnote{noise distribution}
                \end{alignedat}
            \end{equation*}
            \item by property of determinant
            \item by property of logarithm
            \item by \crefrange{deriv:lemma-2-5-ps3-aux1-start}{deriv:lemma-2-5-ps3-aux1-rec} applied iteratively to \(H(Y_{1:t})\)
            \item by \cref{lem:information-entropy-gaussian} (entropy of Gaussian distribution) and \cref{eq:prior-surface,eq:measured-surface} (definition of prior and noise distribution) with
            \begin{equation*}
                Y_1 \sim \Normal*{\mu_0(X_1),\Sigma_0(X_1) + \sigmaeps^2 I_{n_1}}
            \end{equation*}
            obtained from \(Y_1 = f_1 + \varepsilon_1\) and
            \begin{equation*}
                \begin{alignedat}{2}
                    f_1 &\sim \Normal*{\mu_0(X_1),\Sigma_0(X_1)}
                    && \quad\eqnote{prior distribution}
                    \\
                    \varepsilon_1 &\sim \Normal*{0, \sigmaeps^2 I_{n_1}}
                    && \quad\eqnote{noise distribution}
                \end{alignedat}
            \end{equation*}
        \end{justification}
        
        We continue similarly with \(H(Y_{1:T} \mid f_{1:T})\).
        \begin{derivation}
            H(Y_{1:T} \mid f_{1:T})
            &= \eqcontrast \eqchange{H(Y_T \mid Y_{1:T-1},f_{1:T}) + H(Y_{1:T-1} \mid f_{1:T})} \label{deriv:lemma-2-5-ps3-aux2-start}
            \\ &= \eqcontrast H(Y_T \mid \eqchange{f_T}) + H(Y_{1:T-1} \mid \eqchange{f_{1:T-1}})
            \\ &= \eqcontrast \eqchange{\frac{1}{2} \log{\det*{2\pi e\sigmaeps^2 I_{n_T}}}} + H(Y_{1:T-1} \mid f_{1:T-1})
            \\ &= \eqcontrast \frac{1}{2} \log*{\eqchange{\paren*{2\pi e\sigmaeps^2}^{n_T}} \det*{I_{n_T}}} + H(Y_{1:T-1} \mid f_{1:T-1})
            \\ &= \eqcontrast \frac{1}{2} \log*{\paren*{2\pi e\sigmaeps^2}^{n_T}} + H(Y_{1:T-1} \mid f_{1:T-1})
            \\ &= \eqcontrast \frac{1}{2} \eqchange{n_T} \log*{2\pi e\sigmaeps^2} \eqchange{+} H(Y_{1:T-1} \mid f_{1:T-1}) \label{deriv:lemma-2-5-ps3-aux2-rec}
            \\ &= \cdots \notag
            \\ &= \eqcontrast \frac{1}{2} \eqchange{\sum_{t=1}^T n_t} \log*{2\pi e\sigmaeps^2} \label{deriv:lemma-2-5-ps3-aux2-end}
        \end{derivation}
        \begin{justification}
            \item by \cref{eq:information-entropy-properties} (property of joint entropy)
            \item by \cref{eq:measured-surface} with \(Y_t\) given \(f_t\) independent of \(Y_{1:t-1}\), and \(f_{1:t-1}\) and \(Y_{1:t-1}\) independent of \(f_t\)
            \item by \cref{lem:information-entropy-gaussian} (entropy of Gaussian distribution) and \cref{eq:measured-surface} (definition noise distribution) with
            \begin{equation*}
                Y_t \mid f_t \sim \Normal*{0, \sigmaeps^2 I_{n_t}}
                \quad\text{ for all } t \geq 1
            \end{equation*}
            \item by property of determinant
            \item by determinant of identity matrix
            \item by property of logarithm
            \item by \crefrange{deriv:lemma-2-5-ps3-aux2-start}{deriv:lemma-2-5-ps3-aux2-rec} applied iteratively to \(H(Y_{1:t} \mid f_{1:t})\)
        \end{justification}

        Finally we can combine the expressions derived in \cref{deriv:lemma-2-5-ps3-aux1-end,deriv:lemma-2-5-ps3-aux2-end}.
        \begin{derivation}
            I(Y_{1:T};f_{1:T}) &= \eqchange{H(Y_{1:T}) - H(Y_{1:T} \mid f_{1:T})}
            \\ &= \eqcontrast \eqchange{\frac{1}{2} \sum_{t=1}^T \log\det*{\sigmaeps^{-2}\Sigma_{t-1}(X_t) + I_{n_t}}}
            \\ &= \eqcontrast \frac{1}{2} \sum_{t=1}^T \log*{\eqchange{\prod_{i=1}^{n_t}\paren*{\sigmaeps^{-2}\lambda_{i,t} + 1}}}
            \\ &= \eqcontrast \frac{1}{2} \sum_{t=1}^T \eqchange{\sum_{i=1}^{n_t}} \log*{\sigmaeps^{-2}\lambda_{i,t} + 1}
        \end{derivation}
        \begin{justification}
            \item by \cref{eq:information-gain-property-conditional} (property of information gain)
            \item by \crefrange{deriv:lemma-2-5-ps3-aux1-start}{deriv:lemma-2-5-ps3-aux1-end} and \crefrange{deriv:lemma-2-5-ps3-aux2-start}{deriv:lemma-2-5-ps3-aux2-end}
            \item by \cref{lem:lemma-aux-det_A_plus_I} instantiated with \(\inst{A}{\sigmaeps^2 \Sigma_{t-1}(X_t)}\) symmetric and hence diagonalizable
            \item by property of logarithm
        \end{justification}
        Observe how the left sum in \cref{deriv:lemma-2-5-ps3-aux1-end} cancels with \cref{deriv:lemma-2-5-ps3-aux2-end}, which corresponds to the ``information'' of the measurement noise. Since the information gain can also be interpreted as the mutual information between \(Y_{1:T}\) and \(f_{1:T}\) as discussed in \cref{ssec:background-infotheory-infogain}, it makes sense that the measurement noise is not part of \(I(Y_{1:T};f_{1:T})\).
    \end{proofstep}
\end{proof}

Showing \cref{lem:lemma-2-5b} is then straightforward.
\begin{proof}
    We instantiate the result from \cref{lem:lemma-2-5a} with \(X_t = \set{\theta_t}\).
    \begin{derivation}
        && \frac{1}{2} \sum_{t=1}^T \sum_{i=1}^{n_t} \sigma_{t-1}(X_{t,i})^2
        &\leq \frac{N_T}{\log*{\sigmaeps^{-2} + 1}} I(Y_{1:T};f_{1:T})
        \\
        \implies\quad
        && \frac{1}{2} \sum_{t=1}^T \sigma_{t-1}(\theta_t)^2
        &\leq \frac{1}{\log*{\sigmaeps^{-2} + 1}} I(\fm(\theta_{1:T});f(\theta_{1:T}))
    \end{derivation}
    \begin{justification}
        \item by \cref{lem:lemma-2-5a}
        \item by \cref{eq:observed-surface,eq:measured-surface} (definition of observed and measured surface) with \(X_t = \set{\theta_t}\) (\textit{assumption})
        \qedhere
    \end{justification}
\end{proof}

\subsection{\fullref{lem:lemma-2-6}}\label{ssec:proofs-lemma-2-6}

The goal of this proof is to upper bound the information capacity
\begin{equation*}
    \gamma_T
    = \sup_{\Phi\subseteq\Domain,\abs{\Phi}=T} \frac{1}{2} \sum_{t=1}^T \log\det*{I_T + \sigmaeps^{-2}K_\varphi}
\end{equation*}
with \(K_\Phi = \brack{k(\varphi,\varphi')}_{\varphi,\varphi'\in\Phi}\) as defined in \cref{def:information-capacity}.

To obtain an upper bound for \(\gamma_T\) when using the periodic \matern{} kernel \(\kpsuminf[M_\nu]\) with \(\nu = n + \frac{1}{2},n\in\Natural\) as defined in \cref{eq:periodic-matern-kernel-infinite-sum}, we make use of the bound provided by \textcite[Corollary 1]{vakili2021information} which is applicable to all kernels. This bound depends on a sufficiently fast decay of eigenvalues \(\set{\lambda_m}_{m=1}^\infty\) from the eigendecomposition of the used kernel function
\begin{equation*}
    k(\varphi,\varphi') = \sum_{m=1}^\infty \lambda_m \phi_m(\varphi)\phi_m(\varphi')
\end{equation*}
with eigenfunctions \(\set{\phi_m}_{m=1}^\infty\). This decomposition is guaranteed to exist for kernels satisfying the conditions of Mercer's theorem \autocite[Theorem 4.2]{rasmussen2005gaussian}. The reason for requiring a fast eigendecay is that the bound needs the tail mass of eigenvalues \(\sum_{m=D+1}^\infty \lambda_m\) to be sufficiently small for arbitrarily large, but fixed \(D\). The precise reasoning can be found in their derivations.

The assumptions made by \textcite[Assumption 1]{vakili2021information} are:
\begin{enumerate}[label=(A\arabic*), ref=Assumption~\arabic*]
    \item \label{item:assump-mercer-kernel} \(k\) is a Mercer kernel (\ie{} satisfies conditions of Mercer's theorem)
    \item \label{item:assump-bounded-kernel} \(\abs{k(\varphi,\varphi')} \leq k_{max}\) for all \(\varphi,\varphi'\in\Domain\) (\ie{} bounded kernel function)
    \item \label{item:assump-bounded-eigenfunctions} \(\abs{\phi_m(\varphi)} \leq \phi_{max}\) for all \(\varphi\in\Domain,m\in\Natural\) (\ie{} bounded eigenfunctions)
\end{enumerate}

\begin{proof}
    We consider the kernel function \(\kpsuminf[M_\nu](r)\) and assume
    \begin{equation}\label{eq:proofs-lemma-2-6-assumption}
        \abs{\kpsuminf[M_\nu](r)} \leq 1
        \quad\text{ for all } r\in\Real
    \end{equation}
    by choosing \(\sigma_f^2 \leq 1\).

    We first show in \cref{ps:lemma-2-6-ps1} that all assumptions above are satisfied for \(\kpsuminf[M_\nu]\).
    In \cref{ps:lemma-2-6-ps2} we derive a polynomial eigendecay for \(\kpsuminf[M_\nu]\) with the help of spectral analysis. This together allows us to derive the desired upper bound.

    \begin{proofstep}\label{ps:lemma-2-6-ps1}
        \begin{itemize}
            \item \cref{item:assump-mercer-kernel} requires \(\kpsuminf[M_\nu]\) to be a continuous, symmetric, positive semi-definite kernel defined on \(\Domain\) with respect to a finite measure \(\mu\). It is trivial to show that continuity and symmetry are satisfied. For the reasoning that positive definiteness is preserved by periodic summation, we refer to \textcite[Section 3]{borovitskiy2020matern}. Since all our kernels are defined on a compact domain \(\Domain = [0,2\pi]\), the Lebesgue measure defined on \(\Domain\) with \(\mu(\Domain) = 2\pi < \infty\) is finite.%
            \footnote{Note that Mercer's theorem does not specifically require a finite Borel measure as stated by \textcite[Theorem 1]{vakili2021information}, but any finite measure suffices \autocite[Theorem 4.2]{rasmussen2005gaussian}.}
            \item \cref{item:assump-bounded-kernel} is satisfied with \(k_{max} = 1\) by \cref{eq:proofs-lemma-2-6-assumption} (\textit{assumption})
            \item \cref{item:assump-bounded-eigenfunctions} is satisfied with \(\phi_{max} = 1\), since for any stationary kernel defined on a compact domain \([a,b]\) the eigenfunctions with respect to the Lebesgue measure are \(\phi_m(x) = \cos*{2\pi\omega_{m-1} x}\) with frequencies \(\omega_m = \frac{m}{b-a}\) and they all satisfy \(\abs*{\cos*{2\pi\omega_{m-1}x}} \geq 1\). More details are given in \cref{remark:stationary-kernel-eigenfunctions}.
        \end{itemize}
    \end{proofstep}
    \begin{proofstep}\label{ps:lemma-2-6-ps2}
        This second proof step is structured in the following way. \textcite{borovitskiy2020matern} provides us with the spectral density \(S(\omega_m)\) of \(\kpsuminf[M_\nu]\) with \(\omega_m=\frac{m}{2\pi}, m\in\Integer\). By Bochner's Theorem \autocite[Theorem 4.1 and Eq. 4.6]{rasmussen2005gaussian} this corresponds to the Fourier coefficients of \(\kpsuminf[M_\nu]\). As detailed in \cref{remark:stationary-kernel-eigenfunctions}, these Fourier coefficients correspond to twice the eigenvalues \(\lambda_m\) of \(\kpsuminf[M_\nu]\) for \(m\geq2\) and the decay of the frequency spectrum corresponds to the kernel function's eigendecay. Finally, this allows us to apply the results given by \textcite{vakili2021information}.
        \begin{derivation}
            && \lambda_m
            &= \eqcontrast \eqchange{2 \cdot S_\nu(\omega_{m-1})} \eqnocontrast \quad\text{ for all } m \geq 2
            \\ && &= \eqcontrast 2 \cdot \eqchange{C_1 \paren*{\frac{2\nu}{l^2} + \omega_{m-1}^2}^{-\paren*{\nu+\frac{1}{2}}}}
            \\ && &= \eqcontrast 2 \cdot C_1 \paren*{\frac{2\nu}{l^2} + \eqchange{\paren*{\frac{m-1}{2\pi}}^{\eqnochange{2}}}}^{-\paren*{\nu+\frac{1}{2}}}
            \\ && &= \eqcontrast \eqchange{C_2} \paren*{\frac{\eqchange{8\pi^2}\nu}{l^2} + \eqchange{(m-1)}^2}^{-\paren*{\nu+\frac{1}{2}}} \notag
            \\ && &\leq \eqcontrast \eqchange{C_3 \cdot m^{-(2\nu+1)}} \label{deriv:lemma-2-6-ps2-eigendecay}
            \\
            \implies\quad
            && \gamma_T
            &\leq \eqcontrast \paren*{\paren*{\frac{\eqchange{C_3}T}{\sigma_\varepsilon^2}}^{\frac{1}{\eqchange{2\nu+1}}} \log*{1+\frac{T}{\sigma_\varepsilon^2}}^{-\frac{1}{\eqchange{2\nu+1}}} + 1} \log*{1+\frac{T}{\sigma_\varepsilon^2}}
            \\ && &= \eqcontrast \eqchange{C_4} \cdot T^{\frac{1}{2\nu+1}} \log*{1+\frac{T}{\sigma_\varepsilon^2}}^{\frac{\eqchange{2\nu}}{2\nu+1}} \eqchange{+ \log*{1+\frac{T}{\sigma_\varepsilon^2}}}
            \\ && &\leq \eqcontrast \eqchange{\bigO*{T^{\frac{1}{2\nu+1}}\log(T)^{\frac{2\nu}{2\nu+1}}}} \notag
        \end{derivation}
        \begin{justification}
            \item by \cref{remark:stationary-kernel-eigenfunctions} (relation between eigenvalues and Fourier coefficients)
            \item by \textcite[Eq. 48]{borovitskiy2020matern} instantiated%
            \footnote{The reason for this instantiation is the same as for \cref{eq:periodic-matern-kernel-infinite-sum-examples}.}
            with \(\inst{n}{\omega_m}\) and \(\inst{\kappa}{\frac{l}{2\pi}}\)
            \\ \(\implies C_1 = \frac{\sigma_f^2}{C_\nu} \frac{\sqrt{2\nu}\sinh*{\frac{\sqrt{2\nu}\pi}{l}}}{\pi l}\)
            \item by arithmetic
            \\ \(\implies C_2 = 2 \cdot C_1 (4\pi^2)^{v+\frac{1}{2}}\)
            \item by \cref{lem:lemma-aux-a_plus_xsqr_leq_1_plus_x_sqr} instantiated with \(\inst{x}{m-1}\), \(\inst{c}{\frac{8\pi^2\nu}{l^2}}\) and \(\inst{\beta}{\nu+\frac{1}{2}}\)
            \\ \(\implies C_3 = C_2 \cdot \paren*{1+\frac{l^2}{8\pi^2\nu}}^{\nu+\frac{1}{2}}\)
            \item by \cref{ps:lemma-2-6-ps1} and \textcite[Definition 1 and Corollary 1]{vakili2021information}%
            \footnote{Note that \textcite{vakili2021information} define \((C_p,\beta_p)\) polynomial eigendecay in Definition 1 as \(\lambda_m \leq C_p \cdot m^{-\beta_p}\) for all \(m\in\Natural\), while they only need the inequality to hold for \(m\geq D+1\) with \(D\) preferably large (see Proof of Corollary 1, first inequality). Hence, it suffices to show the eigendecay bound in \cref{deriv:lemma-2-6-ps2-eigendecay} only for \(m\geq2\), although the same bound for \(m=1\) follows from \(\lambda_1 = S_\nu(\omega_0)\).}
            instantiated with \(\inst{\beta_p}{2\nu+1}\) and \(\inst{C_p}{C_3}\)
            \item by arithmetic
            \\ \(\implies C_4 = \paren*{\frac{C_3}{\sigmaeps^2}}^{\frac{1}{2\nu+1}}
            = \paren*{\frac{\sigma_f^2}{C_\nu\sigmaeps^2} \frac{2\sqrt{2\nu}\sinh*{\frac{\sqrt{2\nu}\pi}{l}}}{\pi l}}^{\frac{1}{2\nu+1}} \paren*{4\pi^2+\frac{l^2}{2^\nu}}^{1/2}\)
            \qedhere
        \end{justification}
    \end{proofstep}
\end{proof}
\begin{remark}\label{remark:stationary-kernel-eigenfunctions}
    Understanding that the eigenfunctions of all stationary kernels are complex exponentials (\ie{} sinusoidal functions) requires a more in-depth treatment of the spectral analysis of kernel functions. We refer to \textcite[Section A.7]{rasmussen2005gaussian} for a preliminary treatment of measure theory, which is helpful for this remark.
    
    Let \(k\colon\Xcal\times\Xcal\to\Real\) be a kernel function defined on a measure space \((\Xcal, \mu)\) with measure \(\mu\). Each kernel function is associated with the integral operator
    \begin{equation}\label{eq:proofs-remark-integral-operator}
        (T_kf)(x) = \int_\Xcal k(x,x') f(x')\d{\mu(x')}
    \end{equation}
    defined with respect to the measure \(\mu\) \autocite[Eq. 4.1]{rasmussen2005gaussian}. A function \(\psi\) satisfying
    \begin{equation}\label{eq:proofs-remark-eigenfunctions}
        (T_k\psi)(x) = \lambda\psi(x)
    \end{equation}
    is called the eigenfunction of \(T_k\) or equivalently of the kernel \(k\) with corresponding eigenvalue \(\lambda\) with respect to \(\mu\) \autocite[Eq. 4.36]{rasmussen2005gaussian}. We assume the eigenfunctions are normalized with respect to \(\mu\) in these sense of \(\int_\Xcal \psi(x)\psi(x)\d{\mu(x)} = 1\).

    Let us consider a stationary kernel \(k\) defined on \(\Xcal = \Real\). By Bochner's theorem \(k\) can be represented as the Fourier transform of some positive finite measure
    \begin{equation}\label{eq:proofs-remark-Bochners-theorem}
        k(x-x') = \int_\Real e^{2\pi\i \omega(x-x')} \d{\mu_k(\omega)}
        ,
    \end{equation}
    where the measure \(\mu_k\) is specific to each kernel \(k\) \autocite[Theorem 4.1]{rasmussen2005gaussian}. Note that due to symmetry \(k(x-x')=k(x'-x)\), one can also write \cref{eq:proofs-remark-Bochners-theorem} with an additional minus in the complex exponential, which more correctly matches the Fourier transform.
    If the corresponding density of \(\mu_k\) exists,%
    \footnote{A function \(p(x)\) is called the density of \(\mu\) if it  satisfies \(\mu(A) = \int_A p(x)dx\) for all \(A\subseteq\Xcal\).}
    it is known as the spectral density \(S(\omega)\) with \(\d{\mu_k(\omega)} = S(\omega)d\omega\) which allows us to write the integral as a Fourier integral
    \begin{equation}\label{eq:proofs-remark-Bochners-theorem-spectral-density}
        k(x-x') = \int_\Real S(\omega) e^{2\pi\i \omega(x-x')} \d{\omega}
        = \int_\Real S(\omega) e^{2\pi\i \omega x} e^{-2\pi\i \omega x'} \d{\omega}
        .
    \end{equation}
    Intuitively, \(S(\omega)\) weights the different frequency components \(\omega\) contained in \(k\).%
    \footnote{In particular, the stationary kernel and its spectral density are Fourier duals, which is also known as the Wiener-Khintchine theorem \textcite[Eq. 4.6]{rasmussen2005gaussian}.}
    This allows us to show that the complex exponentials \(\psi_\omega(x) = e^{2\pi\i \omega x}\) are the eigenfunctions of \(k\) with respect to the Lebesgue measure, for which the corresponding eigenvalues are given by the spectral density \(S(\omega)\).
    \begin{derivation}
            (T_k\psi_\omega)(x)
            &= \eqcontrast \int_\Real k(x,x') \cdot \eqchange{e^{2\pi\i \omega x'}} \d{\mu(x')}
            \\ &= \eqcontrast \int_\Real k(x,x') \cdot e^{2\pi\i \omega x'} \eqchange{\d{x'}}
            \\ &= \eqcontrast \int_\Real \eqchange{\int_\Real S(\omega') e^{2\pi\i \omega' x} e^{-2\pi\i \omega' x'} \d{\omega'}} \cdot e^{2\pi\i \omega x'} \d{x'}
            \\ &= \eqcontrast \eqchange{\int_\Real S(\omega') e^{2\pi\i \omega' x}} \eqchange{\int_\Real} e^{-2\pi\i \omega' x'} e^{2\pi\i \omega x'} \eqchange{\d{x'} \d{\omega'}}
            \\ &= \eqcontrast \int_\Real S(\omega') e^{2\pi\i \omega' x} \int_\Real \eqchange{1 \cdot} e^{-2\pi\i \eqchange{(\omega'-\omega)} x'} \d{x'} \d{\omega'} \notag
            \\ &= \eqcontrast \int_\Real S(\omega') e^{2\pi\i \omega' x} \eqchange{\Fourier{x'\mapsto1}{\omega'-\omega}} \d{\omega'}
            \\ &= \eqcontrast \int_\Real S(\omega') e^{2\pi\i \omega' x} \eqchange{\delta(\omega'-\omega)} \d{\omega'}
            \\ &= \eqcontrast S(\eqchange{\omega}) e^{2\pi\i \eqchange{\omega} x}
            \\ &= \eqcontrast S(\omega) \eqchange{\psi_\omega(x)} \notag
    \end{derivation}
    \begin{justification}
        \item by \cref{eq:proofs-remark-integral-operator} (definition of integral operator)
        \item by integration on Lebesgue measure
        \item by \cref{eq:proofs-remark-Bochners-theorem-spectral-density} (Bochner's theorem with spectral density)
        \item by Fubini's theorem (i.e. possible to swap integrals)
        \item by definition of Fourier transform
        \item by Fourier transform of \(f(x)=1\)
        \item by property of Dirac delta function
    \end{justification}
    
    However, for non-periodic kernel functions on a compact domain \(\Xcal = [a,b] \subset \Real\), which can be periodically extended to \(\Real\), or for periodic kernel functions on \(\Xcal = \Real\) with periodicity \(b-a\)
    the Fourier integral given in \cref{eq:proofs-remark-Bochners-theorem-spectral-density} turns into a Fourier series
    \begin{equation}\label{eq:proofs-remark-Fourier-series-spectral-density}
        k(x-x')
        = \sum_{m\in\Integer} S(\omega_m) e^{2\pi\i \omega_m(x-x')}
        = \sum_{m\in\Integer} S(\omega_m) e^{2\pi\i \omega_m x} e^{-2\pi\i \omega_m x'}
    \end{equation}
    with Fourier coefficients \(S(\omega_m)\). Since the fundamental frequency of the periodic or periodically extended kernel function is \(\omega_1 = \frac{1}{b-a}\), the frequency spectrum of \(k\) is discrete with frequencies \(\omega_m = \frac{m}{b-a}\) with \(m\in\Integer\). We show that the complex exponentials \(\psi_m(x) = e^{2\pi\i\omega_mx}\) and the spectral density \(S(\omega_m)\) again form the eigenfunctions and eigenvalues of \(k\) with respect to the Lebesgue measure, which closely follows the derivation steps above. First note that the set of complex exponentials \(\set{\psi_m\mid m\in\Integer}\) is orthonormal with respect to the Lebesgue measure, since
    \begin{equation}\label{eq:proofs-remark-orthogonality}
        \inner{\psi_i,\psi_j}_\mu
        = \int_\Xcal \psi_i(x)\overline{\psi_j(x)} \d{\mu(x')}
        = \int_\Xcal e^{2\pi\i\omega_ix}e^{-2\pi\i\omega_jx} \dx
        = \delta_{ij}
    \end{equation}
    by case distinction over \(i=j\) and \(i\neq j\). Then it follows that
    \begin{derivation}
        (T_k\psi_j)(x)
        &= \eqcontrast \int_\Xcal k(x,x') \cdot \eqchange{e^{2\pi\i \omega_j x'}} \d{\mu(x')}
        \\ &= \eqcontrast \int_\Xcal k(x,x') \cdot e^{2\pi\i \omega_j x'} \eqchange{\d{x'}}
        \\ &= \eqcontrast \int_\Xcal \eqchange{\sum_{i\in\Integer} S(\omega_i) e^{2\pi\i \omega_i x} e^{-2\pi\i \omega_i x'}} \cdot e^{2\pi\i \omega_j x'} \d{x'}
        \\ &= \eqcontrast \eqchange{\sum_{i\in\Integer} S(\omega_i) e^{2\pi\i \omega_i x}} \eqchange{\int_\Xcal} e^{-2\pi\i \omega_i x'} e^{2\pi\i \omega_j x'} \eqchange{\d{x'}}
        \\ &= \eqcontrast \sum_{i\in\Integer} S(\omega_i) e^{2\pi\i \omega_i x} \eqchange{\delta_{ij}}
        \\ &= \eqcontrast S(\eqchange{\omega_j}) e^{2\pi\i \eqchange{\omega_j} x}
        \\ &= \eqcontrast S(\omega_j) \eqchange{\psi_j(x)} \notag
    \end{derivation}
    \begin{justification}
        \item by \cref{eq:proofs-remark-integral-operator} (definition of integral operator)
        \item by integration on Lebesgue measure
        \item by \cref{eq:proofs-remark-Fourier-series-spectral-density} (Fourier series)
        \item by dominated convergence theorem (i.e. possible to swap integral and sum)
        \item by \cref{eq:proofs-remark-orthogonality} (orthogonality of complex exponentials)
        \item by property of Kronecker delta
    \end{justification}
    In particular, by symmetry of \(k\) the Fourier series in \cref{eq:proofs-remark-Fourier-series-spectral-density}
    can be equivalently written as
    \begin{equation*}
        k(x-x')
        = S(\omega_0) + \sum_{m=1}^\infty 2 S(\omega_m) \cos*{2\pi\omega_m x} \cos*{2\pi\omega_m x'}
    \end{equation*}
    which closely corresponds to the eigendecomposition of Mercer's theorem
    \begin{equation*}
        k(x-x') = \sum_{m=1}^\infty \lambda_m \phi_m(x)\phi_m(x')
    \end{equation*}
    with eigenfunctions \(\phi_m(x) = \cos*{2\pi\omega_{m-1} x}\) and eigenvalues \(\lambda_1 = S(\omega_0)\) and \(\lambda_m = 2 S(\omega_{m-1}),m\geq2\).
\end{remark}

\subsection{\cref{lem:lemma-2-3-greedy-up}}\label{ssec:proofs-lemma-2-3-greedy-up}

The goal of this proof is to show the last mile towards sublinear regret for the specific greedy algorithm using the \nameref{sssec:design-objective-up} objective. The idea is to first square the sum of objective values, which can be upper bounded based on the sum of squared objective values by a specific version of the Cauchy-Schwarz inequality shown in \cref{lem:lemma-aux-cauchy_schwarz}. This allows us to apply \cref{lem:lemma-2-5} to relate the upper bound with the information gain and thereby also with the information capacity. By taking the square root again on both sides, we arrive at our desired result.

This proof is taken from \textcite[Lemma 6-7]{prajapat2022nearoptimal} and adapted to our setting.

\begin{proof}
    We assume that the assumptions for \cref{lem:lemma-2-4,lem:lemma-2-5} are satisfied. Let \(u_t(\varphi)\) and \(l_t(\varphi)\) be defined according to \cref{eq:confidence-bounds-actual} with confidence parameter \(\beta_t\) chosen as in \cref{lem:lemma-2-4}.
    Let \(\theta_{1:T}\) be arbitrary.
    \begin{derivation}
        && \MoveEqLeft[3.6] \sum_{t=1}^T \Fu[UP](\theta_t\mid\theta_{1:t-1})
        = \eqcontrast \sum_{t=1}^T \eqchange{\frac{1}{h^2} \cdot \abs{\PhiS} \paren*{u_t(\theta_t) - l_t(\theta_t)}}
        \\ && &= \eqcontrast \eqchange{\frac{\abs{\PhiS}}{h^2}} \sum_{t=1}^T \paren*{u_t(\theta_t) - l_t(\theta_t)} \notag
        \\ && &= \eqcontrast \frac{\abs{\PhiS}}{h^2} \sum_{t=1}^T \eqchange{2\paren*{\beta_t^{1/2}\sigma_{t-1}(\theta_t) + \frac{1}{t^2}}}
        \\ && &= \eqcontrast \frac{\eqchange{2}\abs{\PhiS}}{h^2} \paren*{\eqchange{\sum_{t=1}^T} \beta_t^{1/2}\sigma_{t-1}(\theta_t) + \eqchange{\sum_{t=1}^T}\frac{1}{t^2}} \notag
        \\ && &\leq \eqcontrast \frac{2\abs{\PhiS}}{h^2} \paren*{\sum_{t=1}^T \beta_t^{1/2}\sigma_{t-1}(\theta_t) + \sum_{t=1}^{\eqchange{\infty}}\frac{1}{t^2}} \notag
        \\ && &= \eqcontrast \frac{2\abs{\PhiS}}{h^2} \paren*{\sum_{t=1}^T \beta_t^{1/2}\sigma_{t-1}(\theta_t) + \eqchange{\frac{\pi}{6}}} \notag
        \\
        \implies\quad
        && \MoveEqLeft[3.6] \eqcontrast \eqchange{\paren*{\eqnochange{\sum_{t=1}^T \Fu[UP](\theta_t\mid\theta_{1:t-1})}}^2}
        = \eqcontrast \eqchange{\paren*{\eqnochange{\frac{2\abs{\PhiS}}{h^2} \paren*{\sum_{t=1}^T \beta_t^{1/2}\sigma_{t-1}(\theta_t) + \frac{\pi}{6}}}}^2} \notag
        \\ && &= \eqcontrast \eqchange{\frac{4\abs{\PhiS}^2}{h^4}} \paren*{\sum_{t=1}^T \beta_t^{1/2}\sigma_{t-1}(\theta_t) + \frac{\pi}{6}}^2 \notag
        \\ && &= \eqcontrast \frac{4\abs{\PhiS}^2}{h^4} \paren*{\eqchange{\frac{3}{2}\paren*{\eqnochange{\sum_{t=1}^T \beta_t^{1/2}\sigma_{t-1}(\theta_t)}}^2} + \eqchange{3\paren*{\eqnochange{\frac{\pi}{6}}}^2}}
        \\ && &= \eqcontrast \frac{4\abs{\PhiS}^2}{h^4} \paren*{\frac{3}{2}\eqchange{T}\sum_{t=1}^T \eqchange{\paren*{\eqnochange{\beta_t^{1/2}\sigma_{t-1}(\theta_t)}}^2} + \eqchange{\frac{\pi^2}{12}}}
        \\ && &= \eqcontrast \frac{4\abs{\PhiS}^2}{h^4} \paren*{\frac{3}{2}T\sum_{t=1}^T \eqchange{\beta_t\sigma_{t-1}(\theta_t)^2} + \frac{\pi^2}{12}} \notag
        \\ && &= \eqcontrast \frac{4\abs{\PhiS}^2}{h^4} \paren*{\frac{3}{2}T\eqchange{\beta_T}\sum_{t=1}^T \sigma_{t-1}(\theta_t)^2 + \frac{\pi^2}{12}}
        \\ && &= \eqcontrast \frac{4\abs{\PhiS}^2}{h^4} \paren*{\eqchange{3}T\beta_T \eqchange{\cdot \frac{1}{2}}\sum_{t=1}^T \sigma_{t-1}(\theta_t)^2 + \frac{\pi^2}{12}} \notag
        \\ && &= \eqcontrast \frac{4\abs{\PhiS}^2}{h^4} \paren*{\eqchange{\frac{\eqnochange{3}}{\log*{\sigmaeps^{-2} + 1}}}T\beta_T \eqchange{I(\fm(\theta_{1:T});f(\theta_{1:T}))} + \frac{\pi^2}{12}}
        \\ && &= \eqcontrast \frac{4\abs{\PhiS}^2}{h^4} \paren*{\frac{3}{\log*{\sigmaeps^{-2} + 1}}T\beta_T\eqchange{\gamma_T} + \frac{\pi^2}{12}}
        \\
        \implies\quad
        && \MoveEqLeft[3.6] \eqcontrast \sum_{t=1}^T \Fu[UP](\theta_t\mid\theta_{1:t-1})
        \leq \eqcontrast \eqchange{\frac{2\abs{\PhiS}}{h^2}} \eqchange{\sqrt{\eqnochange{\frac{3}{\log*{\sigmaeps^{-2} + 1}}T\beta_T\gamma_T + \frac{\pi^2}{12}}}} \notag
        \\ && &\leq \eqcontrast \frac{2\abs{\PhiS}}{h^2} \paren*{\eqchange{\sqrt{\eqnochange{\frac{3}{\log*{\sigmaeps^{-2} + 1}}T\beta_T\gamma_T}} + \sqrt{\eqnochange{\frac{\pi^2}{12}}}}}
        \\ && &= \eqcontrast \frac{2\abs{\PhiS}}{h^2} \sqrt{\frac{3}{\log*{\sigmaeps^{-2} + 1}}}\sqrt{T\beta_T\gamma_T} + \eqchange{\frac{\abs{\PhiS}}{h^2}\frac{\pi}{\sqrt{3}}} \notag
    \end{derivation}
    \begin{justification}
        \item by definition of \nameref{sssec:design-objective-up}
        \item by \cref{eq:confidence-bounds-actual} (refined definition of confidence bounds)
        \item by \cref{lem:lemma-aux-x_plus_a_sqr_leq_xsqr_plus_asqr} instantiated with \(\inst{x}{\sum_{t=1}T\beta_t^{1/2}\sigma_{t-1}(\theta_t)}\) and \(\inst{a}{\frac{\pi}{6}}\) and \(\inst{c}{\frac{3}{2}}\)
        \item by \cref{lem:lemma-aux-cauchy_schwarz} (Cauchy-Schwarz inequality)
        \item by \cref{lem:lemma-2-4} with \(\beta_t\in\bigO*{\log t}\) monotonically increasing
        \item by \cref{lem:lemma-2-5b}
        \item by \cref{def:information-capacity} (definition of information capacity)
        \item since \(\sqrt{x+y} \leq \sqrt{x+2\sqrt{x}\sqrt{y}+y} = \sqrt{x}+\sqrt{y}\) with \(x,y>0\)
        \qedhere
    \end{justification}
\end{proof}

\subsection{\fullref{thm:no-sublinear-greedy-up}}\label{ssec:proofs-theorem-no-sublinear-greedy-up}

The goal of this proof is to show sublinear regret for our specific design choices by bringing all previous results together. Unfortunately, the chosen objective function \(\Fu[UP]\) does not satisfy the necessary requirement \cref{item:req-necessary-bound} and we cannot show sublinear regret for this specific design choice. However, it is still advantageous to conduct part of this proof, since it serves as a good foundation for showing sublinear regret for similar objective functions satisfying \cref{item:req-necessary-bound}.

\begin{proof}
    We make the following design choices:
    \begin{itemize}
        \item Kernel: \(\kpsuminf[M_{\nu}]\) with \(\nu=\frac{5}{2}\) and \(\sigma_f=1\) (see \cref{eq:periodic-matern-kernel-infinite-sum-examples})
        \item Objective: \(\Fu[UP]\) (see \nameref{sssec:design-objective-up})
        \item Algorithm: \(\A(\cdot;\Fu)\) (see \cref{def:greedy-algorithm})
        \item Failure probability: \(\delta\in(0,1)\)
        \item Confidence bounds: \(u_t\) and \(l_t\) as in \cref{eq:confidence-bounds-actual}
        \item Confidence parameter: \(\beta_t\) as in \cref{lem:lemma-2-4}
    \end{itemize}
    
    To be able to apply all previous results, we have to show that the corresponding assumptions are satisfied.
    \begin{itemize}
        \item[\cmark] \cref{lem:lemma-2-1} makes no assumptions.
        \item[\xmark] \cref{lem:lemma-2-2} assumes a reasonable \(\Fu\) and \(\A\) with
        \begin{equation*}
            F(\thetagre_t \mid \theta_{1:t-1}) \leq \Fu(\theta_t \mid \theta_{1:t-1})
            \quad\text{ for all } t \geq 1
        \end{equation*}
        as specified in \cref{eq:proofs-lemma-2-2-assumption}. This is \textit{not} satisfied by our design choice \(\A[UP](\cdot)=\A(\cdot;\Fu[UP])\), since it is not a necessary upper bound (see \cref{item:req-necessary-bound}) as described in \nameref{sssec:design-objective-up}.
        \item[\cmark] \cref{lem:lemma-2-4} assumes surface functions sampled from the Gaussian process are sufficiently well-behaved with
        \begin{equation*}
            \Pr*{\sup_{\varphi\in\Domain} \abs*{\deriv{f}{\varphi}(\varphi)} \leq L} \geq 1 - ae^{-L^2/b^2}
            \quad\text{ for some } a,b > 0
        \end{equation*}
        as specified in \cref{eq:proofs-lemma-2-4-assumption}. Since this is the case for \matern{} kernels with \(\nu > 2\) as we have discussed in \cref{ssec:analysis-tools-confidence}, this assumption is satisfied by our design choice \(\nu = \frac{5}{2}\).
        \item[\cmark] \cref{lem:lemma-2-5} assumes a uniformly bounded kernel function with
        \begin{equation*}
            \abs{k(\varphi,\varphi')} \leq 1
            \quad\text{ for all } \varphi,\varphi'\in\Domain
        \end{equation*}
        as specified in \cref{eq:proofs-lemma-2-5-assumption}. This is satisfied by our design choice \(\sigma_f = 1\).
        \item[\cmark] \cref{lem:lemma-2-6} only holds for \(\kpsuminf[M_\nu](r)\) and assumes a uniformly bounded kernel function with
        \begin{equation*}
            \abs{\kpsuminf[M_\nu](r)} \leq 1
            \quad\text{ for all } r \in \Real
        \end{equation*}
        as specified in \cref{eq:proofs-lemma-2-6-assumption}. This is satisfied by our design choice for \(\kpsuminf[M_\nu](r)\) with \(\sigma_f = 1\).
        \item[\cmark] \cref{lem:lemma-2-3-greedy-up} makes the same assumptions as \cref{lem:lemma-2-4,lem:lemma-2-5} and, in addition, chooses \(u_t\) and \(l_t\) as in \cref{eq:confidence-bounds-actual} and
        \begin{equation*}
            \beta_t = 2\log*{\frac{2\pi^3b}{3}\sqrt{\log*{\frac{2a}{\delta}}}\frac{t^4} {\delta}}
        \end{equation*}
        as specified in \cref{lem:lemma-2-4}. This is satisfied by our design choices for \(u_t\), \(l_t\) and \(\beta_t\).
    \end{itemize}

    We proceed with the proof.
    \begin{derivation}
        R(T)
        &< \eqcontrast \eqchange{R_{ind}(T)}
        \\ &\nleq \eqcontrast \eqchange{\sum_{t=1}^T \Fu[UP](\theta[UP]_t\mid\theta[UP]_{1:t-1})}
        \\ &\leq \eqcontrast \eqchange{\bigO*{\sqrt{T\beta_T\gamma_T}}}
        \\ &\leq \eqcontrast \bigO*{\sqrt{T \cdot \eqchange{\log{T}} \cdot \eqchange{T^{\frac{1}{2\nu+1}}\log(T)^{\frac{2\nu}{2\nu+1}}}}}
        \\ &\leq \eqcontrast \bigO*{T^{\eqchange{\frac{2\nu+2}{4\nu+2}}}\log(T)^{\eqchange{\frac{4\nu+1}{4\nu+2}}}} \notag
    \end{derivation}
    \begin{justification}
        \item by \cref{lem:lemma-2-1}
        \item since assumption for \cref{lem:lemma-2-2} instantiated with \(\inst{\Fu}{\Fu[UP]}\) and \(\inst{\theta_t}{\theta[UP]_t}\) is not satisfied
        \item by \cref{lem:lemma-2-3-greedy-up} instantiated with \(\inst{\theta_t}{\theta[UP]_t}\)
        \item by \cref{lem:lemma-2-4} and \cref{lem:lemma-2-6}
        \qedhere
    \end{justification}
\end{proof}

\subsection{\cref{lem:lemma-2-3-greedy-u}}\label{ssec:proofs-lemma-2-3-greedy-u}
The goal of this proof is to show the last mile towards sublinear regret for the specific greedy algorithm using the \nameref{sssec:design-objective-u} objective. It closely depends on the proof for \cref{lem:lemma-2-3-greedy-up}.

\begin{proof}
    We assume that the assumptions for \cref{lem:lemma-2-4,lem:lemma-2-5} are satisfied. Let \(u_t(\varphi)\) and \(l_t(\varphi)\) be defined according to \cref{eq:confidence-bounds-actual} with confidence parameter \(\beta_t\) chosen as in \cref{lem:lemma-2-4}.
    Let \(\theta_{1:T}\) be arbitrary.

    We first derive the relation between \nameref{sssec:design-objective-u} and \nameref{sssec:design-objective-up}, which then allows us to directly reuse \cref{lem:lemma-2-3-greedy-up}.
    \begin{derivation}
        && \Fu[U](\theta_t\mid\theta_{1:t-1})
        &= \frac{1}{h^2} \cdot \frac{1}{2}\abs*{\PhiS}(u_t(\theta_t)^2 - l_t(\theta_t)^2)
        \\ && &= \eqcontrast \frac{1}{h^2} \cdot \frac{1}{2}\abs*{\PhiS}\eqchange{(u_t(\theta_t) + l_t(\theta_t)) \cdot (u_t(\theta_t) - l_t(\theta_t))} \notag
        \\ && &= \eqcontrast \frac{1}{2}(u_t(\theta_t) + l_t(\theta_t)) \cdot \eqchange{\frac{1}{h^2} \cdot \abs*{\PhiS}(u_t(\theta_t) - l_t(\theta_t))} \notag
        \\ && &= \eqcontrast \frac{1}{2}(u_t(\theta_t) + l_t(\theta_t)) \cdot \eqchange{\Fu[UP](\theta_t\mid\theta_{1:t-1})}
        \\ && &= \eqcontrast \eqchange{\mu_{t-1}(\theta_t)} \cdot \Fu[UP](\theta_t\mid\theta_{1:t-1})
        \\ && &\leq \eqcontrast \eqchange{\dmax} \cdot \Fu[UP](\theta_t\mid\theta_{1:t-1})
        \\
        \implies\quad
        && \eqcontrast \eqchange{\sum_{t=1}^T} \Fu[U](\theta_t\mid\theta_{1:t-1})
        &\leq \eqcontrast \dmax \eqchange{\sum_{t=1}^T} \Fu[UP](\theta_t\mid\theta_{1:t-1}) \notag
        \\ && &\leq \eqcontrast \begin{aligned}[t]
            & \eqchange{\frac{2\eqnochange{\dmax}\abs{\PhiS}}{h^2} \sqrt{\frac{3}{\log*{\sigmaeps^{-2} + 1}}}\sqrt{T\beta_T\gamma_T}}\\
            & \eqchange{+ \frac{\eqnochange{\dmax}\abs{\PhiS}}{h^2}\frac{\pi}{\sqrt{3}}}
        \end{aligned}
    \end{derivation}
    \begin{justification}
        \item by definition of \nameref{sssec:design-objective-u}
        \item by definition of \nameref{sssec:design-objective-up}
        \item by \cref{eq:confidence-bounds-actual} (refined definition of confidence bounds)
        \item by \cref{item:simp-object-bounded}
        \item by \cref{lem:lemma-2-3-greedy-up}
        \qedhere
    \end{justification}
\end{proof}

\subsection{\fullref{thm:sublinear-greedy-u}}\label{ssec:proofs-theorem-sublinear-greedy-u}

The goal of this proof is to show sublinear regret for the specific greedy algorithm using the \nameref{sssec:design-objective-u} objective by bringing all previous results together. It closely follows the structure of \cref{thm:no-sublinear-greedy-up}.

\begin{proof}
    We make the following design choices:
    \begin{itemize}
        \item Kernel: same as \cref{thm:no-sublinear-greedy-up}
        \item Objective: \(\Fu[U]\) (see \nameref{sssec:design-objective-u})
        \item Algorithm: same as \cref{thm:no-sublinear-greedy-up}
        \item Failure probability: same as \cref{thm:no-sublinear-greedy-up}
        \item Confidence bounds: same as \cref{thm:no-sublinear-greedy-up}
        \item Confidence parameter: same as \cref{thm:no-sublinear-greedy-up}
    \end{itemize}
    
    To be able to apply all previous results, we have to show that the corresponding assumptions are satisfied.
    \begin{itemize}
        \item[\cmark] \cref{lem:lemma-2-1} makes no assumptions.
        \item[\cmark] \cref{lem:lemma-2-2} assumes a reasonable \(\Fu\) and \(\A\) with
        \begin{equation*}
            F(\thetagre_t \mid \theta_{1:t-1}) \leq \Fu(\theta_t \mid \theta_{1:t-1})
            \quad\text{ for all } t \geq 1
        \end{equation*}
        as specified in \cref{eq:proofs-lemma-2-2-assumption}.
        It is clear that \nameref{sssec:design-objective-u} provides a necessary upper bound (see \cref{item:req-necessary-bound}) under the assumption that \(f(\theta)\) lies between \(u_t(\theta)\) and \(l_t(\theta)\).
        Since it is only guaranteed that \(f(\theta)\) lies between \(u_t([\theta]_t)\) and \(l_t([\theta]_t)\) with probability at least \(1-\delta\) by \cref{lem:lemma-2-4}, the necessary upper bound is only satisfied for \(\Fu[U]\) together with \(u_t([\theta]_t)\) and \(l_t([\theta]_t)\). We then derive
        \begin{derivation}
            F(\thetagre_t \mid \theta[U]_{1:t-1})
            &\leq \eqcontrast \frac{1}{h^2} \cdot \frac{1}{2}\abs*{\PhiS}\paren*{u_t(\eqchange{[\thetagre_t]_t})^2 - l_t(\eqchange{[\thetagre_t]_t})^2}
            &&\quad\text{\whp{} } 1-\delta
            \\ &\leq \eqcontrast \frac{1}{h^2} \cdot \frac{1}{2}\abs*{\PhiS}\paren*{u_t(\eqchange{\theta[U]_t})^2 - l_t(\eqchange{\theta[U]_t})^2}
            \\ &= \eqcontrast \Fu[U](\theta[U]_t\mid\theta[U]_{1:t-1})
            .
        \end{derivation}
        \begin{justification}
            \item by necessary upper bound of \nameref{sssec:design-objective-u} with \(u_t([\theta]_t)\) and \(l_t([\theta]_t)\)
            \item by \cref{def:greedy-algorithm} (definition of greedy algorithm)
            \item by definition of \nameref{sssec:design-objective-u}
        \end{justification}
        Hence, the assumption of \cref{lem:lemma-2-2} is satisfied by our design choice \(\A[U](\cdot)=\A(\cdot;\Fu[U])\) with probability at least \(1-\delta\).
        \item[\cmark] \cref{lem:lemma-2-4,lem:lemma-2-5,lem:lemma-2-6} only make assumptions on the kernel, which are satisfied by our design choices as it is the case for \cref{thm:no-sublinear-greedy-up}.
        \item[\cmark] \cref{lem:lemma-2-3-greedy-u} only makes assumptions on the confidence bounds and confidence parameter, which are satisfied by our design choices as it is the case for \cref{thm:no-sublinear-greedy-up}.
    \end{itemize}
    
    We proceed with the proof, which closely follows the derivation for \cref{thm:no-sublinear-greedy-up}.
    \begin{derivation}
        R(T)
        &< \eqcontrast \eqchange{R_{ind}(T)}
        \\ &\leq \eqcontrast \eqchange{\sum_{t=1}^T \Fu[U](\theta[U]_t\mid\theta[U]_{1:t-1})}
        && \quad\text{\whp{} } 1-\delta
        \\ &\leq \eqcontrast \eqchange{\bigO*{\sqrt{T\beta_T\gamma_T}}}
        \\ &\leq \eqcontrast \bigO*{\sqrt{T \cdot \eqchange{\log{T}} \cdot \eqchange{T^{\frac{1}{2\nu+1}}\log(T)^{\frac{2\nu}{2\nu+1}}}}}
        \\ &\leq \eqcontrast \bigO*{T^{\eqchange{\frac{2\nu+2}{4\nu+2}}}\log(T)^{\eqchange{\frac{4\nu+1}{4\nu+2}}}} \notag
    \end{derivation}
    \begin{justification}
        \item by \cref{lem:lemma-2-1}
        \item by \cref{lem:lemma-2-2} instantiated with \(\inst{\Fu}{\Fu[U]}\) and \(\inst{\theta_t}{\theta[U]_t}\) and assumption satisfied with probability at least \(1-\delta\)
        \item by \cref{lem:lemma-2-3-greedy-u} instantiated with \(\inst{\theta_t}{\theta[U]_t}\)
        \item by \cref{lem:lemma-2-4} and \cref{lem:lemma-2-6}
        \qedhere
    \end{justification}
\end{proof}

\subsection{\cref{lem:lemma-2-3-greedy-cs}}\label{ssec:proofs-lemma-2-3-greedy-cs}
The goal of this proof is to show the last mile towards sublinear regret for the specific greedy algorithm using the \nameref{sssec:design-objective-cs} objective as redefined in \cref{eq:objective-function-cs-new}. It closely depends on the proof for \cref{lem:lemma-2-3-greedy-u}.

\begin{proof}
    We assume that the assumptions for \cref{lem:lemma-2-4,lem:lemma-2-5} are satisfied. Let \(u_t(\varphi)\) and \(l_t(\varphi)\) be defined according to \cref{eq:confidence-bounds-actual} with confidence parameter \(\beta_t\) chosen as in \cref{lem:lemma-2-4}. 
    Let \(\theta_{1:T}\) be arbitrary.

    We first derive the relation between \nameref{sssec:design-objective-cs} and \nameref{sssec:design-objective-u}, which then allows us to directly reuse \cref{lem:lemma-2-3-greedy-u}.
    \begin{derivation}
        && \Fu[CS](\theta_t\mid\theta_{1:t-1})
        &= \frac{1}{h^2} \sum_{\varphi\in\brack{\PhiS(\theta_t)}_t} \frac{1}{2} \paren*{u_t(\varphi)^2-l_t(\varphi)^2} \frac{\abs{\PhiS}}{\abs{\brack{\PhiS}_t}}
        \\ && &\leq \eqcontrast \frac{1}{h^2} \eqchange{\abs*{\brack{\PhiS}_t}} \eqchange{\max_{\varphi\in\brack{\PhiS(\theta_t)}_t}} \frac{1}{2} \paren*{u_t(\varphi)^2-l_t(\varphi)^2} \frac{\abs{\PhiS}}{\abs{\brack{\PhiS}_t}} \notag
        \\ && &= \eqcontrast \max_{\varphi\in\brack{\PhiS(\theta_t)}_t} \eqchange{\frac{1}{h^2}} \cdot \frac{1}{2} \eqchange{\abs*{\PhiS}} \paren*{u_t(\varphi)^2-l_t(\varphi)^2} \notag
        \\ && &= \eqcontrast \max_{\eqchange{\theta}\in\brack{\PhiS(\theta_t)}_t} \eqchange{\Fu[U](\theta\mid\theta_{1:t-1})}
        \\ && &\leq \eqcontrast \max_{\theta\in\eqchange{\Domain}} \Fu[U](\theta\mid\theta_{1:t-1}) \notag
        \\ && &= \eqcontrast \Fu[U]\paren*{\eqchange{\theta[U]_t}\mid\theta_{1:t-1}}
        \\
        \implies
        && \eqcontrast \eqchange{\sum_{t=1}^T} \Fu[CS](\theta_t\mid\theta_{1:t-1})
        &\leq \eqcontrast \eqchange{\sum_{t=1}^T} \Fu[U](\theta[U]_t\mid\theta_{1:t-1}) \notag
        \\ && &\leq \eqcontrast \begin{aligned}[t]
            & \eqchange{\frac{2\dmax\abs{\PhiS}}{h^2} \sqrt{\frac{3}{\log*{\sigmaeps^{-2} + 1}}}\sqrt{T\beta_T\gamma_T}} \\
            & \eqchange{+ \frac{\dmax\abs{\PhiS}}{h^2}\frac{\pi}{\sqrt{3}}}
        \end{aligned}
    \end{derivation}
    \begin{justification}
        \item by definition of \nameref{sssec:design-objective-cs}
        \item by definition of \nameref{sssec:design-objective-u}
        \item by \cref{def:greedy-algorithm} (definition of greedy algorithm)
        \item by \cref{lem:lemma-2-3-greedy-u} \TODO{REMOVABLE WEAK shouldn't be \(\theta_t\) and \(\theta_{1:t-1}\) be sequentially the same?}
        \qedhere
    \end{justification}
\end{proof}

\subsection{\fullref{thm:sublinear-greedy-cs}}\label{ssec:proofs-theorem-sublinear-greedy-cs}

The goal of this proof is to show sublinear regret for the specific greedy algorithm using the \nameref{sssec:design-objective-cs} objective as redefined in \cref{eq:objective-function-cs-new} by bringing all previous results together. It closely follows the derivation structure for \cref{thm:no-sublinear-greedy-up} and \cref{thm:sublinear-greedy-u}.

\begin{proof}
    We make the following design choices:
    \begin{itemize}
        \item Kernel: same as \cref{thm:no-sublinear-greedy-up}
        \item Objective: \(\Fu[CS]\) (see \cref{eq:objective-function-cs-new})
        \item Algorithm: same as \cref{thm:no-sublinear-greedy-up}
        \item Failure probability: same as \cref{thm:no-sublinear-greedy-up}
        \item Confidence bounds: same as \cref{thm:no-sublinear-greedy-up}
        \item Confidence parameter: same as \cref{thm:no-sublinear-greedy-up}
    \end{itemize}
    
    To be able to apply all previous results, we have to show that the corresponding assumptions are satisfied.
    \begin{itemize}
        \item[\cmark] \cref{lem:lemma-2-1} makes no assumptions.
        \item[\cmark] \cref{lem:lemma-2-2} assumes a reasonable \(\Fu\) and \(\A\) with
        \begin{equation*}
            F(\thetagre_t \mid \theta_{1:t-1}) \leq \Fu(\theta_t \mid \theta_{1:t-1})
            \quad\text{ for all } t \geq 1
        \end{equation*}
        as specified in \cref{eq:proofs-lemma-2-2-assumption}.
        It is clear that \nameref{sssec:design-objective-cs} provides a sufficient upper bound (see \cref{item:req-necessary-bound}) under the assumption that \(f(\varphi)\) lies between \(u_t(\varphi)\) and \(l_t(\varphi)\). With the refined definition of \(\Fu[CS]\) in \cref{eq:objective-function-cs-new} this is guaranteed with probability at least \(1-\delta\) by \cref{lem:lemma-2-4}.
        \TODO{REMOVABLE WEAK \(f(\varphi)\) might still escape in between discretization points?}
        Hence, the assumption of \cref{lem:lemma-2-2} is satisfied by our design choice \(\A[CS](\cdot)=\A(\cdot;\Fu[CS])\) with probability at least \(1-\delta\).
        \item[\cmark] \cref{lem:lemma-2-4,lem:lemma-2-5,lem:lemma-2-6} only make assumptions on the kernel, which are satisfied by our design choices as it is the case for \cref{thm:no-sublinear-greedy-up}.
        \item[\cmark] \cref{lem:lemma-2-3-greedy-u} only makes assumptions on the confidence bounds and confidence parameter, which are satisfied by our design choices as it is the case for \cref{thm:no-sublinear-greedy-up}.
    \end{itemize}
    
    Sublinear regret follows as for \cref{thm:sublinear-greedy-u} together with \cref{lem:lemma-2-3-greedy-cs} and the instantiation \(\inst{\Fu}{\Fu[CS]}\) and \(\inst{\theta}{\theta[CS]}\). We obtain
    \begin{equation*}
        R(T) \leq \bigO*{T^{\frac{2\nu+2}{4\nu+2}}\log(T)^{\frac{4\nu+1}{4\nu+2}}}
    \end{equation*}
    with probability at least \(1-\delta\).
\end{proof}

\subsection{\cref{lem:lemma-2-3-twophase-csu}}\label{ssec:proofs-lemma-2-3-twophase-csu}

The goal of this proof is to show the last mile towards sublinear regret for the specific two-phase algorithm using the redefined \nameref{sssec:design-objective-cs} objective from \cref{eq:objective-function-cs-new} for phase 1 and the objective \nameref{sssec:design-objective-u} for phase 2. It closely depends on the proof for \cref{lem:lemma-2-3-greedy-u}.

\begin{proof}
    We assume that the assumptions for \cref{lem:lemma-2-4,lem:lemma-2-5} are satisfied. Let \(u_t(\varphi)\) and \(l_t(\varphi)\) be defined according to \cref{eq:confidence-bounds-actual} with confidence parameter \(\beta_t\) chosen as in \cref{lem:lemma-2-4}. 

    We first derive the relation between \(\A[\TCSU]\) and \(\A[U]\), which then allows us to directly reuse \cref{lem:lemma-2-3-greedy-u}. Let \(\theta_{1:t-1}\) be arbitrary.
    \begin{derivation}
        && \Fu[U](\theta[\TCSU]_t\mid\theta_{1:t-1})
        &= \max_{\theta\in\PhiS\paren{\theta[CS]_t}} \Fu[U](\theta\mid\theta_{1:t-1})
        \\ && &\leq \eqcontrast \max_{\theta\in\eqchange{\Domain}} \Fu[U](\theta\mid\theta_{1:t-1}) \notag
        \\ && &= \eqcontrast \Fu[U]\paren*{\eqchange{\theta[U]_t}\mid\theta_{1:t-1}}
        \\
        \implies\quad
        && \eqcontrast \eqchange{\sum_{t=1}^T} \Fu[U](\theta[\TCSU]_t\mid\theta_{1:t-1})
        &\leq \eqcontrast \eqchange{\sum_{t=1}^T} \Fu[U](\theta[U]_t \notag\mid\theta_{1:t-1})
        \\ && &\leq \eqcontrast \begin{aligned}[t]
            &\eqchange{\frac{2\dmax\abs{\PhiS}}{h^2} \sqrt{\frac{3}{\log*{\sigmaeps^{-2} + 1}}}\sqrt{T\beta_T\gamma_T}} \\
            &\eqchange{+ \frac{\dmax\abs{\PhiS}}{h^2}\frac{\pi}{\sqrt{3}}}
        \end{aligned}
    \end{derivation}
    \begin{justification}
        \item by \cref{def:twophase-algorithm} (definition of two-phase algorithm)
        \item by \cref{def:greedy-algorithm} (definition of greedy algorithm)
        \item by \cref{lem:lemma-2-3-greedy-u} \TODO{REMOVABLE WEAK shouldn't be \(\theta_t\) and \(\theta_{1:t-1}\) be sequentially the same?}
        \qedhere
    \end{justification}
\end{proof}

\subsection{\fullref{thm:sublinear-twophase-csu}}\label{ssec:proofs-theorem-sublinear-twophase-csu}

The goal of this proof is to show sublinear regret for the specific two-phase algorithm using the redefined \nameref{sssec:design-objective-cs} objective from \cref{eq:objective-function-cs-new} for phase 1 and the objective \nameref{sssec:design-objective-u} for phase 2. It closely follows the derivation structure for \cref{thm:no-sublinear-greedy-up} and \cref{thm:sublinear-greedy-u}.

\begin{proof}
    We make the following design choices:
    \begin{itemize}
        \item Kernel: same as \cref{thm:no-sublinear-greedy-up}
        \item Objective: \(\Fu[CS]\) and \(\Fu[U]\)  (see \nameref{sssec:design-objective-u} and \cref{eq:objective-function-cs-new})
        \item Algorithm: \(\A(\cdot;\Fu[1],\Fu[2])\) (see \cref{def:twophase-algorithm})
        \item Failure probability: same as \cref{thm:no-sublinear-greedy-up}
        \item Confidence bounds: same as \cref{thm:no-sublinear-greedy-up}
        \item Confidence parameter: same as \cref{thm:no-sublinear-greedy-up}
    \end{itemize}
    
    To be able to apply all previous results, we have to show that the corresponding assumptions are satisfied.
    \begin{itemize}
        \item[\cmark] \cref{lem:lemma-2-1} makes no assumptions.
        \item[\cmark] \cref{lem:lemma-2-2} assumes a reasonable \(\Fu\) and \(\A\) with
        \begin{equation*}
            F(\thetagre_t \mid \theta_{1:t-1}) \leq \Fu(\theta_t \mid \theta_{1:t-1})
            \quad\text{ for all } t \geq 1
        \end{equation*}
        as specified in \cref{eq:proofs-lemma-2-2-assumption}.
        Clearly, \nameref{sssec:design-objective-cs} provides a sufficient upper bound (see \cref{item:req-necessary-bound}) under the assumption that \(f(\theta)\) lies between \(u_t(\theta)\) and \(l_t(\theta)\). With the refined definition of \(\Fu[CS]\) in \cref{eq:objective-function-cs-new} this is guaranteed with probability at least \(1-\delta\) by \cref{lem:lemma-2-4}.
        We derive
        \begin{derivation}
            \Fu(\thetagre_t\mid\theta_{1:t-1})
            &\leq \eqcontrast \eqchange{\Fu[CS]}(\thetagre_t\mid\theta_{1:t-1})
            \\ &\leq \eqcontrast \Fu[CS](\eqchange{\theta[CS]_t}\mid\theta_{1:t-1})
            \\ &= \eqcontrast \frac{1}{h^2} \sum_{\varphi\in\brack{\PhiS(\theta[CS]_t)}_t} \frac{1}{2} \paren*{u_t(\varphi)^2-l_t(\varphi)^2} \frac{\abs{\PhiS}}{\abs{\brack{\PhiS}_t}}
            \\ &\leq \eqcontrast \frac{1}{h^2} \eqchange{\abs*{\brack{\PhiS}_t}} \eqchange{\max_{\varphi\in\brack{\PhiS(\eqchange{\theta[CS]_t})}_t}} \frac{1}{2} \paren*{u_t(\varphi)^2-l_t(\varphi)^2} \frac{\abs{\PhiS}}{\abs{\brack{\PhiS}_t}} \notag
            \\ &= \eqcontrast \max_{\varphi\in\brack{\PhiS(\theta[CS]_t)}_t} \eqchange{\frac{1}{h^2}} \cdot \frac{1}{2} \eqchange{\abs*{\PhiS}} \paren*{u_t(\varphi)^2-l_t(\varphi)^2} \notag
            \\ &= \eqcontrast \max_{\eqchange{\theta}\in\brack{\PhiS(\theta[CS]_t)}_t} \eqchange{\Fu[U](\theta\mid\theta_{1:t-1})}
            \\ &= \eqcontrast \Fu[U](\eqchange{\theta[\TCSU]_t}\mid\theta_{1:t-1})
        \end{derivation}
        \begin{justification}
            \item by sufficient upper bound of \nameref{sssec:design-objective-cs} from \cref{eq:objective-function-cs-new}
            \item by phase 1 of \cref{def:twophase-algorithm} (definition of two-phase algorithm)
            \item by definition of \nameref{sssec:design-objective-cs}
            \item by definition of \nameref{sssec:design-objective-u}
            \item by phase 2 of \cref{def:twophase-algorithm} (definition of two-phase algorithm)
        \end{justification}
        \TODO{REMOVABLE WEAK discretization of summation interval not considered, maybe redefine twophase algorithm?}
        Hence, the assumption of \cref{lem:lemma-2-2} is satisfied by our design choice \(\A[\TCSU](\cdot)=\A(\cdot;\Fu[CS],\Fu[U])\) with probability at least \(1-\delta\).
        \item[\cmark] \cref{lem:lemma-2-4,lem:lemma-2-5,lem:lemma-2-6} only make assumptions on the kernel, which are satisfied by our design choices as it is the case for \cref{thm:no-sublinear-greedy-up}.
        \item[\cmark] \cref{lem:lemma-2-3-greedy-u} only makes assumptions on the confidence bounds and confidence parameter, which are satisfied by our design choices as it is the case for \cref{thm:no-sublinear-greedy-up}.
    \end{itemize}
    
    Sublinear regret follows as for \cref{thm:sublinear-greedy-u} together with \cref{lem:lemma-2-3-twophase-csu} and the instantiation \(\inst{\Fu}{\Fu[U]}\) and \(\inst{\theta}{\theta[\TCSU]}\). We obtain
    \begin{equation*}
        R(T) \leq \bigO*{T^{\frac{2\nu+2}{4\nu+2}}\log(T)^{\frac{4\nu+1}{4\nu+2}}}
    \end{equation*}
    with probability at least \(1-\delta\).
\end{proof}


\newpage
\section{Auxiliary Proofs}\label{sec:proofs-auxiliary}

\begin{lemma}\label{lem:lemma-aux-x_leq_c_logx_plus_1}
    \(x \leq \frac{c}{\log(c+1)} \log(x+1)\) for all \(x\in[0,c]\) with \(c \geq 0\).
\end{lemma}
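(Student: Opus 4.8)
The plan is to recognize that the claimed inequality is, after multiplying by the positive quantity $\frac{\log(c+1)}{c}$ (valid for $c>0$), equivalent to the chord bound
$$\log(x+1) \geq \frac{\log(c+1)}{c}\, x \quad\text{for all } x\in[0,c],$$
so that the whole statement collapses to a standard concavity fact about the function $x\mapsto\log(x+1)$.

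First I would observe that $\phi(x) \defeq \log(x+1)$ is concave on $[0,\infty)$, since $\phi''(x) = -(x+1)^{-2} < 0$. Next I would consider the chord of $\phi$ joining the endpoints $(0,\phi(0)) = (0,0)$ and $(c,\phi(c)) = (c,\log(c+1))$; because this chord passes through the origin, it is exactly the line $L(x) = \frac{\log(c+1)}{c}\,x$. Concavity of $\phi$ guarantees that its graph lies on or above every chord over the corresponding subinterval, so $\phi(x) \geq L(x)$ for all $x\in[0,c]$, which is precisely the displayed bound. Multiplying through by $\frac{c}{\log(c+1)} > 0$ then yields $x \leq \frac{c}{\log(c+1)}\log(x+1)$, as desired.

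Alternatively, I could prove the equivalent monotonicity statement $\frac{x}{\log(x+1)} \leq \frac{c}{\log(c+1)}$ by showing that $g(x) \defeq \frac{x}{\log(x+1)}$ is increasing on $(0,\infty)$: a quotient-rule computation reduces the sign of $g'$ to the auxiliary inequality $\log(x+1) \geq \frac{x}{x+1}$, itself provable by noting both sides vanish at $x=0$ and comparing derivatives. The concavity route is cleaner and avoids the quotient, so I would present that one and keep the monotonicity version only as a remark.

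I expect no serious obstacle; the only care needed is at the boundaries. At $x=0$ both sides equal $0$, so the chord bound holds there with equality and the subsequent multiplication is harmless. The genuinely degenerate case is $c=0$, where $[0,c]=\{0\}$ forces $x=0$ while the factor $\frac{c}{\log(c+1)}$ is a $0/0$ form; I would simply run the main argument under $c>0$ and dispose of $c=0$ as the trivial equality $0\leq 0$, interpreting the constant through its limit $\lim_{c\to 0^+}\frac{c}{\log(c+1)}=1$.
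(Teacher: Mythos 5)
Your proposal is correct and takes essentially the same route as the paper: the paper also reduces the claim to concavity of the logarithm, writing \(x = (1-\alpha)\cdot 0 + \alpha c\) and applying the concavity inequality \(\alpha\log(c+1) \leq \log(\alpha c + 1)\), which is exactly your chord bound in convex-combination form. Your explicit disposal of the degenerate case \(c=0\) (where \(\tfrac{c}{\log(c+1)}\) is a \(0/0\) form) is in fact slightly more careful than the paper, whose final multiplication step tacitly assumes \(c>0\).
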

\begin{proof}
    The inequality \(\log(x+1) \leq x\) is commonly known, but we want the reversed inequality
    \begin{equation*}
        x \leq C \cdot \log(x+1)
        \quad\text{ for all } x \in [0,c]
    \end{equation*}
    to hold by scaling the logarithm with an appropriate \(C \geq 1\). We already know that the inequality is satisfied exactly for \(x=0\). If we can show that both sides are also equal for \(x=c\), it is intuitive that the inequality holds for \(x\in[0,c]\) due to concavity of the logarithm. Plugging \(x=c\) into the inequality yields \(c \leq C\cdot \log(c+1)\) and tells us that it is satisfied exactly for \(C = \frac{c}{\log(c+1)}\).
    
    The rigorous proof goes as follows. Let \(c \geq 0\) and \(x\in[0,c]\) arbitrary.
    \begin{derivation}
        && x = (1-\alpha) \cdot 0 + \alpha \cdot c = \alpha \cdot c \in [0,c]
        \quad\with \alpha \in [0,1] \notag
        \span
        \\
        \implies\quad
        && \eqcontrast (1 - \alpha) \eqchange{\log(\eqnochange{0}+1)} + \alpha \eqchange{\log(\eqnochange{c}+1)}
        &\leq \eqcontrast \eqchange{\log(\eqnochange{(1-\alpha) \cdot 0 + \alpha \cdot c} + 1)}
        \\
        \implies\quad
        && \eqcontrast \alpha \log(c+1)
        &\leq \eqcontrast \log(\alpha \cdot c + 1) \notag
        \\
        \implies\quad
        && \eqcontrast \alpha \eqchange{c}
        &\leq \eqcontrast \frac{\eqchange{c}}{\eqchange{\log(c+1)}} \log(\alpha \cdot c + 1)
        \\
        \implies\quad
        && \eqcontrast \eqchange{x}
        &\leq \eqcontrast \frac{c}{\log(c+1)} \log(\eqchange{x} + 1)
    \end{derivation}
    \begin{justification}
        \item by concavity of logarithm
        \item since \(c \geq 0\)
        \item since \(x = \alpha c\)
        \qedhere
    \end{justification}
\end{proof}

\begin{lemma}\label{lem:lemma-aux-A_plus_B_inv_pd}
    \((A+B)^{-1}\) is positive definite with \(A \in \Real^{n,n}\) positive semi-definite and \(B \in \Real^{n,n}\) positive definite.
\end{lemma}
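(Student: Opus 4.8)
The statement to prove is that $(A+B)^{-1}$ is positive definite, where $A$ is positive semi-definite and $B$ is positive definite. Let me think about the cleanest way to do this.

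The plan: I want to show $(A+B)^{-1}$ is positive definite. The natural two-step approach is first to show that $A+B$ itself is positive definite (hence invertible, so the inverse exists and makes sense), and then to show that the inverse of a positive definite matrix is again positive definite.

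For the first step, positive definiteness of $A+B$ follows directly from the definitions: for any nonzero $x \in \Real^n$, we have $x^T(A+B)x = x^TAx + x^TBx$. Since $A$ is positive semi-definite, $x^TAx \geq 0$, and since $B$ is positive definite, $x^TBx > 0$ for $x \neq 0$. Adding these gives $x^T(A+B)x > 0$, so $A+B$ is positive definite. In particular it is invertible, so $(A+B)^{-1}$ exists.

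For the second step, I need that the inverse of a positive definite matrix $M := A+B$ is positive definite. The quickest argument: a positive definite matrix is invertible, and for any nonzero $y$, setting $x = M^{-1}y$ (which is nonzero since $M^{-1}$ is invertible), we get $y^T M^{-1} y = (Mx)^T M^{-1}(Mx) = x^T M^T x = x^T M x > 0$, using symmetry $M^T = M$ (both $A$ and $B$ are symmetric as they arise as covariance-type matrices, so $M$ is symmetric, hence so is $M^{-1}$). This establishes positive definiteness of $(A+B)^{-1}$.

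The main subtlety, and the only place to be careful, is the implicit symmetry assumption: positive (semi-)definiteness in this context should be understood together with symmetry of $A$ and $B$, which guarantees $M = A+B$ is symmetric and therefore $M^{-1}$ is symmetric too; without symmetry one would have to be more careful about what ``positive definite'' even means for the inverse. I do not expect any genuine obstacle here — this is a routine linear algebra fact — so I would simply write out the two short derivations cleanly, making the symmetry point explicit where $x^T M^T x = x^T M x$ is used.
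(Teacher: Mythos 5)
Your proof is correct, and its first half coincides exactly with the paper's: both show \(v^T(A+B)v = v^TAv + v^TBv > 0\) for all nonzero \(v\), so \(A+B\) is positive definite and hence invertible. Where you diverge is in passing from positive definiteness of \(M = A+B\) to positive definiteness of \(M^{-1}\). The paper argues via the spectrum: \(M\) positive definite implies \(\lambda_i(M) > 0\), the eigenvalues of \(M^{-1}\) are the reciprocals \(1/\lambda_i(M) > 0\), hence \(M^{-1}\) is positive definite. You instead use a change of variables in the quadratic form: for \(y \neq 0\), write \(y = Mx\) with \(x = M^{-1}y \neq 0\), so \(y^T M^{-1} y = x^T M^T x = x^T M x > 0\). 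Your route is more elementary (no spectral theory needed) and has the virtue of making explicit where symmetry of \(M\) enters, namely in \(M^T = M\); the paper's eigenvalue argument is shorter on the page but silently relies on the same symmetry, since for a non-symmetric matrix positive eigenvalues do not imply a positive quadratic form, nor conversely. Both proofs are sound under the standing convention (appropriate here, since \(A\) and \(B\) arise as covariance matrices plus \(\sigma^2 I\)) that positive (semi-)definite matrices are symmetric.
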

\begin{proof}
    Let \wloog{} \(A \in \Real^{n,n}\) be positive semi-definite and \(B \in \Real^{n,n}\) positive definite.
    \begin{derivation}
        & v^TAv \geq 0 \quad\text{and}\quad v^TBv > 0
        &&\quad\text{ for all } v\in\Real^n\setminus\set{0}
        \\ \implies\quad
        & v^T(A+B)v = v^TAv + v^TBv > 0
        &&\quad\text{ for all } v\in\Real^n\setminus\set{0} \notag
        \\ \implies\quad
        & A+B \text{ positive definite}
        \\ \implies\quad
        & \lambda_i(A+B) > 0
        &&\quad\text{ for all } i\in\set{1,\dots,n}
        \\ \implies\quad
        & \lambda_i\paren*{(A+B)^{-1}} = \frac{1}{\lambda_i(A+B)} > 0
        &&\quad\text{ for all } i\in\set{1,\dots,n}
        \\ \implies\quad
        & (A+B)^{-1} \text{ positive definite}
    \end{derivation}
    \begin{justification}
        \item by definition of positive (semi-)definiteness
        \item by definition of positive definiteness
        \item by property of positive definiteness
        \item by eigenvalues of inverse matrix
        \item by property of positive definiteness
        \qedhere
    \end{justification}
\end{proof}

\begin{lemma}\label{lem:lemma-aux-det_A_plus_I}
    \(\det(A+I) = \prod_{i=1}^n (\lambda_i(A) + 1)\) with \(A \in \Real^{n,n}\) diagonalizable.
\end{lemma}
\begin{proof}
    Let \(A \in \Real^{n,n}\) be a diagonalizable matrix.
    \begin{derivation}
        \det(A+I) &= \eqcontrast \det(\eqchange{V\Lambda V^{-1}} + \eqchange{VV^{-1}})
        \quad\with \Lambda \text{ diagonal}
        \\ &= \eqcontrast \det(\eqchange{V(\Lambda + I)V^{-1}}) \notag
        \\ &= \eqcontrast \eqchange{\det(\eqnochange{V})\det(\eqnochange{\Lambda + I})\det(\eqnochange{V^{-1}})}
        \\ &= \eqcontrast \det(\Lambda + I)
        \\ &= \eqcontrast \eqchange{\prod_{i=1}^n (\lambda_i(A) + 1)}
    \end{derivation}
    \begin{justification}
        \item by eigendecomposition of diagonalizable matrix
        \item by determinant of product
        \item by determinant of inverse matrix
        \item by determinant of diagonal matrix
        \qedhere
    \end{justification} 
\end{proof}

\begin{lemma}\label{lem:lemma-aux-a_plus_xsqr_leq_1_plus_x_sqr}
    \((c+x^2)^{-\beta} \leq \paren*{1+\frac{1}{c}}^\beta (1+x)^{-2\beta}\) for all \(x\neq -1\) with \(c > 0,\beta\geq 0\).
\end{lemma}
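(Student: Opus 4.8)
The plan is to strip away the exponent $\beta$ entirely and reduce the claim to an elementary polynomial inequality. First I would dispose of the degenerate case $\beta = 0$, where both sides equal $1$ and the inequality holds with equality. For $\beta > 0$ the map $t \mapsto t^\beta$ is strictly increasing on $(0,\infty)$, and since $c + x^2 > 0$ (because $c > 0$) and $(1+x)^2 > 0$ (because $x \neq -1$), both bases occurring in the inequality are positive. Rewriting the right-hand side as $\bigl(\tfrac{c+1}{c(1+x)^2}\bigr)^\beta$ and the left-hand side as $\bigl(\tfrac{1}{c+x^2}\bigr)^\beta$, the stated inequality is therefore equivalent to the $\beta$-free inequality
\[
    \frac{1}{c+x^2} \leq \frac{c+1}{c(1+x)^2}.
\]

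Next I would clear denominators. Both $c+x^2$ and $c(1+x)^2$ are strictly positive, so cross-multiplication preserves the direction and the claim becomes $c(1+x)^2 \leq (c+1)(c+x^2)$. Expanding gives $c + 2cx + cx^2$ on the left and $c^2 + c + cx^2 + x^2$ on the right; cancelling the common terms $c + cx^2$ reduces everything to $2cx \leq c^2 + x^2$, that is, to $0 \leq (c-x)^2$, which is manifestly true.

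There is no substantial obstacle here: the entire content is the perfect-square inequality $(c-x)^2 \geq 0$. The only points requiring care are bookkeeping ones — verifying that both bases are positive (so that the monotone-power reduction and the cross-multiplication are legitimate) and handling $\beta = 0$ separately — both of which follow immediately from the hypotheses $c > 0$ and $x \neq -1$. I would present the argument as a short chain running backwards from $(c-x)^2 \geq 0$, in the derivation-with-justification format used elsewhere in this appendix, with justifications reading roughly ``by completing the square'', ``by cross-multiplication with positive denominators $c+x^2$ and $c(1+x)^2$'', and ``since $t \mapsto t^\beta$ is increasing on $(0,\infty)$ for $\beta \geq 0$''.
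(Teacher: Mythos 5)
Your proposal is correct and follows essentially the same route as the paper's proof: both arguments reduce the claim to the polynomial inequality $c(1+x)^2 \leq (c+1)(c+x^2)$, which collapses to the perfect square $(c-x)^2 \geq 0$, and both handle the exponent via monotonicity of power functions with positive bases (the paper raises $c+x^2 \geq \frac{c}{c+1}(1+x)^2$ directly to the power $-\beta$, which is the same step as your reciprocal-and-$\beta$ reduction). The only differences are bookkeeping: the paper does not split off $\beta = 0$ separately and phrases the key identity as $c+x^2-\frac{c}{c+1}(1+x)^2 = \frac{1}{c+1}(x-c)^2$.
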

\begin{proof}
    The idea of this inequality reduces to
    \begin{equation*}
        c+x^2 \geq C \cdot (1+x)^2
    \end{equation*}
    with \(c>0\), which should hold for some \(C\in\Real\) which scales down the parabola \(\x\)-shifted by \(-1\), such that it lies below the parabola \(\y\)-shifted by \(c\) for all \(x\in\Real\). The goal is to choose \(C\) largest possible. We show that this inequality is satisfied exactly for \(C=\frac{c}{c+1}\).
    
    Let \(c > 0,\beta\geq 0\) and \(x\neq -1\) arbitrary.
    \begin{derivation}
        && c+x^2 - \frac{c}{c+1}(1+x)^2
        &= \eqcontrast \paren*{1\eqchange{-\frac{c}{c+1}}}\eqchange{x^2} \eqchange{- 2\frac{c}{c+1}x} + \paren*{c\eqchange{-\frac{c}{c+1}}} \notag
        \\ && &= \eqcontrast \eqchange{\frac{1}{c+1}}x^2 - \frac{1}{c+1}\eqchange{2c}x + \eqchange{\frac{1}{c+1}c^2} \notag
        \\ && &= \eqcontrast \frac{1}{c+1} \eqchange{(x-c)^2} \notag
        \\ && &\geq \eqcontrast \eqchange{0}
        \\ \implies\quad && \eqcontrast c + x^2
        &\geq \eqcontrast \eqchange{\frac{c}{c+1}(1+x)^2} \notag
        \\ \implies\quad && \eqcontrast (c+x^2)^{\eqchange{-\beta}}
        &\leq \eqcontrast \paren*{\eqchange{1+\frac{1}{c}}}^{\eqchange{\beta}} (1+x)^{\eqchange{-\eqnochange{2}\beta}}
    \end{derivation}
    \begin{justification}
        \item since \(c > 0\)
        \item since \(\beta \geq 0\) and both sides nonzero as
        \begin{align*}
            c > 0 &\implies c+x^2 > 0 \text{ and } \frac{c}{c+1} \geq 0
            \\ x \neq -1 &\implies (1+x)^2 > 0
            \tag*{\qedhere}
        \end{align*}
    \end{justification}
\end{proof}



\begin{lemma}\label{lem:lemma-aux-x_plus_a_sqr_leq_xsqr_plus_asqr}
    \((x+a)^2 \leq cx^2 + \frac{c}{c-1}a^2\) for all \(x\in\Real\) with \(c > 1, a\in\Real\).
\end{lemma}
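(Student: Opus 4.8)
The plan is to reduce the claimed inequality to the nonnegativity of a perfect square by completing the square in $x$. First I would expand the left-hand side as $(x+a)^2 = x^2 + 2ax + a^2$ and subtract it from the right-hand side, so that the assertion becomes equivalent to showing $(c-1)x^2 - 2ax + \frac{1}{c-1}a^2 \geq 0$ for all $x\in\Real$; here I have used that the constant term simplifies via $\frac{c}{c-1} - 1 = \frac{1}{c-1}$. At this point the hypothesis $c>1$ enters crucially, since it guarantees $c-1>0$, so that the leading coefficient is positive and the term $\frac{1}{c-1}a^2$ is well-defined.

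The key step is then to recognize this quadratic as a perfect square, namely
\begin{equation*}
    (c-1)x^2 - 2ax + \frac{1}{c-1}a^2 = \left(\sqrt{c-1}\,x - \frac{a}{\sqrt{c-1}}\right)^2,
\end{equation*}
which I would confirm by expanding the right-hand side (the cross term $-2\sqrt{c-1}\cdot\frac{1}{\sqrt{c-1}}\,ax$ collapses to exactly $-2ax$). Since any real square is nonnegative, the inequality $(c-1)x^2 - 2ax + \frac{1}{c-1}a^2 \geq 0$ holds for every $x\in\Real$, and reversing the initial rearrangement yields the stated bound. To match the presentation style of the surrounding auxiliary lemmas, I would lay this out in a \texttt{derivation} environment with an accompanying \texttt{justification} list, flagging ``by completing the square'' and ``since a real square is nonnegative'' as the two justifications.

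I do not anticipate a genuine obstacle: the argument is a one-line completion of the square, and the only point requiring care is the sign condition $c>1$, which is exactly what legitimizes taking $\sqrt{c-1}$ and dividing by $c-1$. It is worth noting for context that this lemma is invoked in the proof of \cref{lem:lemma-2-3-greedy-up} with the instantiation $x \to \sum_{t=1}^T \beta_t^{1/2}\sigma_{t-1}(\theta_t)$, $a \to \frac{\pi}{6}$, and $c \to \frac{3}{2}$, so that $c>1$ is indeed satisfied there and the resulting factor $\frac{c}{c-1}=3$ appears in that derivation.
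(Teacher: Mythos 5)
Your proposal is correct and matches the paper's own proof essentially verbatim: both rearrange the inequality into $(c-1)x^2 - 2ax + \frac{1}{c-1}a^2 \geq 0$ and recognize this as the perfect square $\paren*{\sqrt{c-1}\,x - \frac{a}{\sqrt{c-1}}}^2$, with $c>1$ justifying the square root. Your remark about the instantiation $\inst{c}{\frac{3}{2}}$ in the proof of \cref{lem:lemma-2-3-greedy-up} is also accurate.
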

\begin{proof}
    The idea of this inequality is to show
    \begin{equation*}
        (x+a)^2 \leq C_1 \cdot x^2 + C_2
        ,
    \end{equation*}
    which should hold for some \(C_1\in\Real\) which scales up the parabola \(\y\)-shifted by some \(C_2\in\Real\), such that it lies above the parabola \(\x\)-shifted by \(-a\) for all \(x\in\Real\). The goal is to choose \(C_1\) and \(C_2\) smallest possible. We show that this inequality is satisfied exactly for \(C_1 = c\) and \(C_2 = \frac{c}{c-1}\), where \(c>1\) can be freely chosen and trade-offs the size of the additive and multiplicative factor.

    Let \(c > 1, a\in\Real\) and \(x\in\Real\) arbitrary.
    \begin{derivation}
        && cx^2 + \frac{c}{c-1}a^2 - (x+a)^2
        &= \eqcontrast cx^2 + \frac{c}{c-1}a^2 - \eqchange{(x^2+2ax+a^2)} \notag
        \\ && &= \eqcontrast (c\eqchange{-1})x^2 \eqchange{- 2ax} + \frac{\eqchange{1}}{c-1}a^2 \notag
        \\ && &= \eqcontrast \eqchange{\paren*{\sqrt{c-1}x - \frac{1}{\sqrt{c-1}}a}^2}
        \\ && &\geq \eqcontrast \eqchange{0} \notag
        \\
        \implies\quad
        && (x+a)^2 &\leq cx^2 + \frac{c}{c-1}a^2 \notag
    \end{derivation}
    \begin{justification}
        \item since \(c > 1\)
        \qedhere
    \end{justification}
\end{proof}

\begin{lemma}\label{lem:lemma-aux-cauchy_schwarz}
    \((\sum_{i=1}^n x_i)^2 \leq n \cdot \sum_{i=1}^n x_i^2\) for all \(x\in\Real^n\) and \(n\in\Natural\).
\end{lemma}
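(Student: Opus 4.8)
The plan is to recognize this as the standard special case of the Cauchy--Schwarz inequality obtained by pairing the vector $x$ with the all-ones vector. Concretely, I would invoke Cauchy--Schwarz in the form $\left(\sum_{i=1}^n a_i b_i\right)^2 \leq \left(\sum_{i=1}^n a_i^2\right)\left(\sum_{i=1}^n b_i^2\right)$ and instantiate $a_i = x_i$ and $b_i = 1$ for all $i$. The right-hand factor $\sum_{i=1}^n b_i^2$ then collapses to $n$, while the left-hand side becomes $\left(\sum_{i=1}^n x_i\right)^2$, which is exactly the claimed bound. If the ambient Cauchy--Schwarz inequality is taken as known, this is a one-line derivation with no real obstacle, and I would present it in the paper's derivation-with-justifications format using a single justified step labelled ``by Cauchy--Schwarz instantiated with $\subst{a_i}{x_i}$ and $\subst{b_i}{1}$''.

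Since this lemma sits among the auxiliary results and the paper consistently establishes its elementary inequalities from scratch by completing the square (as in \cref{lem:lemma-aux-a_plus_xsqr_leq_1_plus_x_sqr,lem:lemma-aux-x_plus_a_sqr_leq_xsqr_plus_asqr}), a fully self-contained alternative is to prove the Lagrange-type identity $n\sum_{i=1}^n x_i^2 - \left(\sum_{i=1}^n x_i\right)^2 = \sum_{1\le i<j\le n}(x_i - x_j)^2$ and then observe that the right-hand side is a sum of squares, hence non-negative. I would derive the identity by expanding $\left(\sum_i x_i\right)^2 = \sum_i x_i^2 + 2\sum_{i<j} x_i x_j$ and using the counting fact that each $x_k^2$ appears exactly $n-1$ times in $\sum_{i<j}(x_i^2 + x_j^2)$, so that $\sum_{i<j}(x_i-x_j)^2 = (n-1)\sum_k x_k^2 - 2\sum_{i<j}x_i x_j = n\sum_k x_k^2 - \left(\sum_i x_i\right)^2$.

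The only step requiring any care --- and it is bookkeeping rather than a genuine difficulty --- is this combinatorial counting of how often each squared term occurs in the double sum over unordered pairs. Given the triviality of the statement, I expect no substantive obstacle either way; I would opt for the Cauchy--Schwarz instantiation for brevity and reserve the sum-of-squares identity as an explicit fallback only if a proof independent of Cauchy--Schwarz is preferred, noting in both cases that equality holds precisely when all $x_i$ coincide.
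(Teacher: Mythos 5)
Your proposal is correct and matches the paper's own proof exactly: the paper also applies the Cauchy--Schwarz inequality to \(x\) paired with the all-ones vector \(\1_n\), writing \(\paren*{\sum_{i=1}^n x_i}^2 = \abs{\inner{x,\1_n}}^2 \leq \norm{x}^2\norm{\1_n}^2 = n\sum_{i=1}^n x_i^2\). Your sum-of-squares fallback is a fine alternative but unnecessary here, since the paper is content to cite Cauchy--Schwarz as known.
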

\begin{proof}
    Let \(n\in\Natural\) and \(x\in\Real^n\) arbitrary.
    \begin{derivationinline}
        \paren*{\sum_{i=1}^n x_i}^2
        = \eqcontrast \paren*{\sum_{i=1}^n \eqchange{1 \cdot} x_i}^2 \eqnocontrast
        = \eqcontrast \eqchange{\abs{\inner{x,\1_n}}^2} \eqnocontrast
        \tageq{\leq} \eqcontrast \eqchange{\norm{x}^2\norm{\1_n}^2} \eqnocontrast
        = \eqcontrast \eqchange{n} \cdot \eqchange{\sum_{i=1}^n x_i^2} \eqnocontrast
    \end{derivationinline}
    \begin{justification}
        \item by Cauchy-Schwarz inequality
        \qedhere
    \end{justification}
\end{proof}
\chapter{Simulation Framework}\label{chp:simulation}

The simulation framework formed an integral part of this work to empirically guide the design of algorithms prior their theoretical analysis and then to verify the theoretical results after the analysis. This framework is published open source at
\begin{center}\unskip
    \href{https://github.com/danielyxyang/active_reconstruction}{github.com/danielyxyang/active\_reconstruction}
\end{center}\unskip
and consists of an interactive simulation of different algorithms as well as an interactive environment for running multiple simulation experiments.

In this chapter, we briefly review some parts of the simulation framework which are relevant to this written thesis, but leave the remaining information up to the public codebase.
In \cref{sec:simulation-background} we provide mathematical background related to the geometries of the real and polar world which is used by \cref{sec:design-objective} for the design of objective functions.
In \cref{sec:simulation-objects} we present the set of objects on which we evaluate our algorithms in \cref{sec:experiments-results}.


\section{Mathematical Background}\label{sec:simulation-background}

\newcommand{\xw}{x}
\newcommand{\yw}{y}
\newcommand{\rw}{r}
\newcommand{\varphiw}{\varphi}

\newcommand{\xtilde}{\tilde{x}}
\newcommand{\ytilde}{\tilde{y}}
\newcommand{\varphitilde}{\tilde{\varphi}}
\newcommand{\rtilde}{\tilde{r}}

\newcommand{\xc}{x^{(c)}}
\newcommand{\yc}{y^{(c)}}
\newcommand{\rc}{r^{(c)}}
\newcommand{\varphic}{\varphi^{(c)}}

In this section, we derive closed-form expressions for quantities which characterize the shape of the FOV in the polar world. We first derive a polar function parameterizing the rays cast by the camera, which is used by \cref{eq:fov-boundary}, in \cref{ssec:simulation-background-camray}. Then we derive an expression for the polar angles of the two FOV boundary endpoints, which is used by \cref{eq:sum-interval-simple}, in \cref{ssec:simulation-background-fovendpoint}.

\subsection{Camera Ray Function}\label{ssec:simulation-background-camray}

The goal is to find a polar function defined in the polar world coordinate system, which parameterizes rays cast by the camera at the position \(\theta\) and with an casting angle \(\alpha\) relative to the camera's LOS as seen in \cref{fig:problem-different-fovs}.

The idea is to describe the rays with line equations in the Cartesian world coordinate system and to transform the line equations into polar functions. Note that the slope of a ray cast at angle \(\alpha\) relative to the camera's LOS has angle \(\theta+\alpha\) in the world coordinate system.

Assume \(\theta+\alpha \in_\pi \brack*{-\frac{\pi}{4},\frac{\pi}{4}}\) which means that the ray changes slower in its \(y\)-world coordinate than in its \(x\)-world coordinate. This allows us to use
\begin{equation}
    y = mx + b
    \quad\with m=\tan(\theta+\alpha) \text{ and } b = y_{cam} - mx_{cam}
    .
\end{equation}
After instantiating \(\inst{x}{r(\varphi)\cos(\varphi)}\) and \(\inst{y}{r(\varphi)\sin(\varphi)}\), we solve for \(r(\varphi)\) which gives us the camera ray function
\begin{equation}
    \ray{\theta,\alpha}{\varphi}
    = \frac{b}{\sin(\varphi) - m\cos(\varphi)}
    = \dcam \frac{\sin(\varphi) - \tan(\theta+\alpha)\cos(\varphi)}{\sin(\varphi) - \tan(\theta+\alpha)\cos(\varphi)}
    .
\end{equation}
In case \(\theta+\alpha \notin_\pi \brack*{-\frac{\pi}{4},\frac{\pi}{4}}\), the derivation goes similar with the line equation \(x=my+b\). The final camera ray function is then given as
\begin{equation}\label{eq:camera-ray}
    \ray{\theta,\alpha}{\varphi} \defeq \begin{dcases}
        \dcam  \frac{\sin(\theta)-\tan(\theta+\alpha)\cos(\theta)}{\sin(\varphi)-\tan(\theta+\alpha)\cos(\varphi)},
        & \textstyle \theta + \alpha \in_\pi
        \brack*{-\frac{\pi}{4},\frac{\pi}{4}}
        \\
        \dcam  \frac{\cos(\theta)-\cot(\theta+\alpha)\sin(\theta)}{\cos(\varphi)-\cot(\theta+\alpha)\sin(\varphi)},
        & \text{otherwise}
        .
    \end{dcases}
\end{equation}
Note that the case distinction is only made once for each ray depending on \(\theta+\alpha\) and is only relevant for numerical computation, since both expressions are equivalent in the end.

\subsection{FOV Boundary Endpoint}\label{ssec:simulation-background-fovendpoint}

The goal is to find the polar angles of the two FOV boundary endpoints characterized by the quantities \(\frac{\FOV}{2}\) and \(\DOF\).

To this end, we define the \emph{camera coordinate system} to be centered at the camera's current position and its \(x\)-axis aligned with the camera's LOS, which corresponds to an angle of \(\theta+\pi\) relative to the world coordinate system.
In this coordinate system, the left and right FOV boundary can be parameterized with the parametric functions \(r \mapsto \paren*{\frac{\FOV}{2}, r}\) and \(r \mapsto \paren*{-\frac{\FOV}{2}, r}\) with \(r\in[0,\DOF]\) as the distance to the camera. The goal is to transform the polar coordinates given by these parametric functions into polar coordinates in the world coordinate system.

We first define the following notations for the coordinates relative to the world and camera coordinate system:
\begin{equation*}
    \begin{aligned}
        (\xw, \yw) &\defeq \text{Cartesian world coordinates}\\
        (\varphiw, \rw) &\defeq \text{polar world coordinates}\\
        (\xc, \yc) &\defeq \text{Cartesian camera coordinates}\\
        (\varphic, \rc) &\defeq \text{polar camera coordinates}
    \end{aligned}
\end{equation*}
Assume that the camera is currently located at \(\theta\) with the Cartesian world coordinates given as
\begin{equation}\label{eq:simulation-background-camera}
    \begin{aligned}
        \xw_{cam} &= \dcam \cos(\theta)\\
        \yw_{cam} &= \dcam \sin(\theta)
        .
    \end{aligned}
\end{equation}
Given some fixed point with polar world coordinates \((\rw,\varphiw)\), the \emph{point's Cartesian world coordinates} are
\begin{equation}\label{eq:simulation-background-point}
    \begin{aligned}
        \xw &= \rw \cos(\varphiw)\\
        \yw &= \rw \sin(\varphiw)
        .
    \end{aligned}
\end{equation}
Recall, that the transformation of the world to the camera coordinate system corresponds to shifting the world coordinate system with \((\xw_{cam},\yw_{cam})\) and rotating it with \(\theta+\pi\). Since the actual location \((\xw, \yw)\) of the fixed point must be invariant under this transformation of the coordinate system, we apply the inverse transformation to the coordinates of this point by shifting it with \((-\xw_{cam},-\yw_{cam})\) and rotating it with \(-(\theta+\pi)\). This gives us the \emph{point's Cartesian camera coordinates}
\begin{equation}\label{eq:simulation-background-cam-cartesian}
    \begin{aligned}
        \begin{pmatrix} \xc \\ \yc \end{pmatrix}
        &= R^{-1}(\theta+\pi) \begin{pmatrix}
            \xw - \xw_{cam} \\ \yw - \yw_{cam}
        \end{pmatrix}
        \\ &= \begin{pmatrix}
            -\cos(\theta) & -\sin(\theta) \\
             \sin(\theta) & -\cos(\theta)
        \end{pmatrix} \begin{pmatrix}
            \xw - \xw_{cam} \\ \yw - \yw_{cam}
        \end{pmatrix}
    \end{aligned}
\end{equation}
and plugging in \cref{eq:simulation-background-point,eq:simulation-background-camera} yields
\begin{derivation*}
    \xc
    &= \begin{aligned}[t]
        &-\cos(\theta) \cdot \paren*{\rw\cos(\varphiw) - \dcam\cos(\theta)} \\
        &-\sin(\theta) \cdot \paren*{\rw\sin(\varphiw) - \dcam\sin(\theta)}
    \end{aligned}
    \\ &= \dcam \paren*{\cos(\theta)^2 + \sin(\theta)^2} - \rw\paren*{\cos(\theta)\cos(\varphiw) + \sin(\theta)\sin(\varphiw)}
    \\ &= \dcam - \rw\cos(\theta-\varphiw)
\intertext{and similarly}
    \yc
    &= \begin{aligned}[t]
        &\sin(\theta) \cdot \paren*{\rw\cos(\varphiw) - \dcam\cos(\theta)} \\
        &-\cos(\theta) \cdot \paren*{\rw\sin(\varphiw) - \dcam\sin(\theta)}
    \end{aligned}
    \\ &= \rw\paren*{\sin(\theta)\cos(\varphiw) - \cos(\theta)\sin(\varphiw)}
    \\ &= \rw\sin(\theta-\varphiw)
    .
\end{derivation*}
Finally, we obtain the \emph{point's polar camera coordinates}
\begin{equation}\label{eq:simulation-background-cam-polar}
    \begin{aligned}
        \varphic &= \arctan*{\frac{\yc}{\xc}} 
        = \arctan*{\frac{\rw\sin(\theta-\varphiw)}{\dcam - \rw\cos(\theta-\varphiw)}} \\
        \rc &= \sqrt{\paren*{\xc}^2 + \paren*{\yc}^2}
        .
    \end{aligned}
\end{equation}
Since the camera always faces the world center, \(\varphic\) stays inside \(\paren*{-\frac{\pi}{2},\frac{\pi}{2}}\) and the \(\arctan\) does not need additional handling.

To obtain the reversed transformation from the camera to the world coordinate system, which is what we are interested in, one can reuse the result from \cref{eq:simulation-background-cam-polar}. This result provides us with the transformation of polar coordinates between two different polar coordinate systems under the assumption that the center of the target coordinate system (previously the camera's) is located at angle \(\theta\) relative to the source coordinate system (previously the world's). In addition, the target's \(x\)-axis (previously the camera's LOS) is directed towards the center of the source coordinate system.

By swapping the source and target coordinate system we know that the world center, now our target, is located at \(\theta=0\) relative the camera's coordinate system, which is now our source. In addition, we assume that the \(x\)-axis of the world coordinate system is similarly directed towards the camera. This assumed world coordinate system corresponds to rotating the actual world coordinate system by \(\theta\). The resulting Cartesian and polar coordinates in this assumed world coordinate system are then given by \cref{eq:simulation-background-cam-cartesian,eq:simulation-background-cam-polar} as
\begin{equation}
    \begin{aligned}
        \xtilde &= \dcam - \rc\cos(-\varphic)
        \\ \ytilde &= \rc\sin(-\varphic)
        \\ \varphitilde &= \arctan*{\frac{\ytilde}{\xtilde}}
        \\ \rtilde &= \sqrt{\xtilde^2 + \ytilde^2}
        .
    \end{aligned}
\end{equation}
To obtain the polar coordinates in the actual world coordinate system, we simply add \(\theta\) to the polar angle and keep the radial distance the same. We obtain
\begin{equation}
    \begin{aligned}
        \varphiw &= \theta + \varphitilde
        = \theta - \arctan*{\frac{\rc\sin(\varphic)}{\dcam - \rc\cos(\varphic)}}
        \\ \rw &= \rtilde
        .
    \end{aligned}
\end{equation}

Hence, the endpoints of the FOV boundary with the polar camera coordinates \(\paren*{\frac{\FOV}{2}, \DOF}\) and \(\paren*{-\frac{\FOV}{2}, \DOF}\) have the world polar angles
\begin{equation*}
    \begin{aligned}
        \varphiw_1 &= \theta - \arctan*{\frac{\DOF\sin(\FOV/2)}{\dcam - \DOF\cos(\FOV/2)}}\\
        \varphiw_2
        &= \theta + \arctan*{\frac{\DOF\sin(\FOV/2)}{\dcam - \DOF\cos(\FOV/2)}}
    \end{aligned}
\end{equation*}
as used in \cref{eq:sum-interval-simple} for the endpoints of the simple FOV endpoint summation interval.


\newpage
\section{Set of Evaluation Objects}\label{sec:simulation-objects}

\begin{figure}[h!]
    \centering
    \begin{subfigure}{\linewidth}
        \centering
        \includegraphics[width=0.2\linewidth]{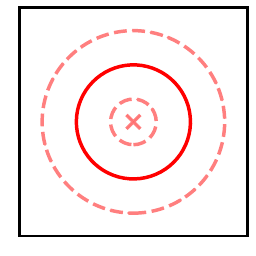}%
        \includegraphics[width=0.2\linewidth]{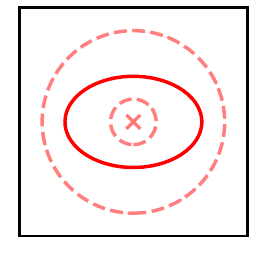}%
        \includegraphics[width=0.2\linewidth]{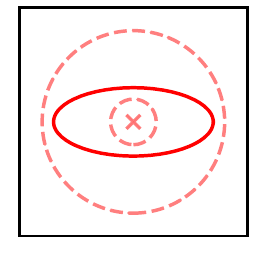}%
        \includegraphics[width=0.2\linewidth]{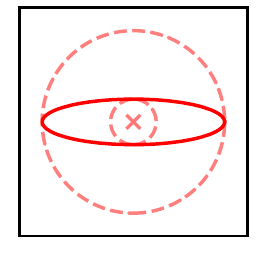}
        \caption{ellipse objects}
    \end{subfigure}
    \par\smallskip
    \begin{subfigure}{\linewidth}
        \centering
        \includegraphics[width=0.2\linewidth]{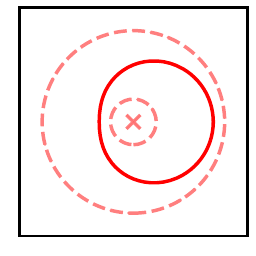}%
        \includegraphics[width=0.2\linewidth]{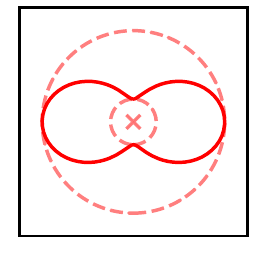}\\
        \includegraphics[width=0.2\linewidth]{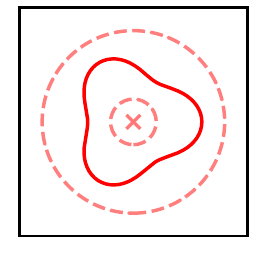}%
        \includegraphics[width=0.2\linewidth]{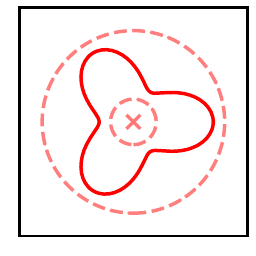}%
        \includegraphics[width=0.2\linewidth]{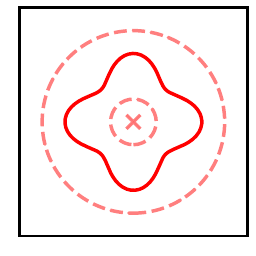}%
        \includegraphics[width=0.2\linewidth]{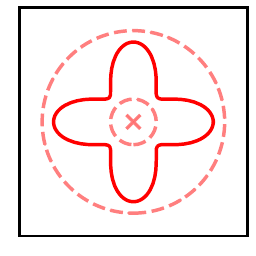}\\
        \includegraphics[width=0.2\linewidth]{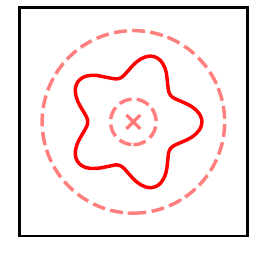}%
        \includegraphics[width=0.2\linewidth]{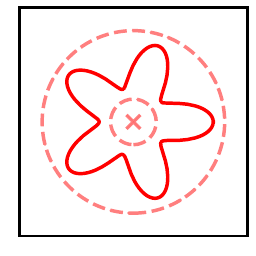}%
        \includegraphics[width=0.2\linewidth]{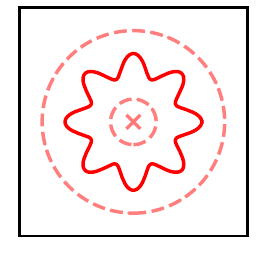}%
        \includegraphics[width=0.2\linewidth]{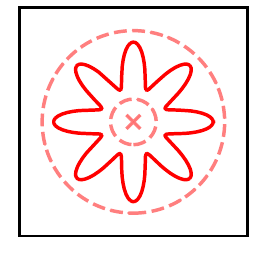}
        \caption{flower objects}
    \end{subfigure}
    \par\smallskip
    \begin{subfigure}{\linewidth}
        \centering
        \includegraphics[width=0.2\linewidth]{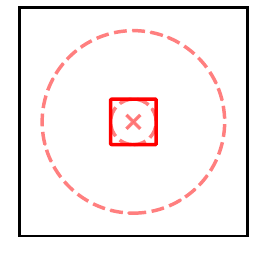}%
        \includegraphics[width=0.2\linewidth]{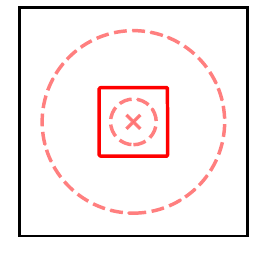}%
        \includegraphics[width=0.2\linewidth]{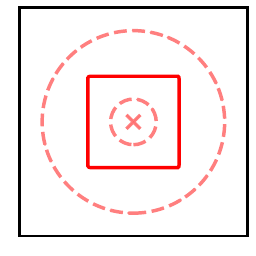}%
        \includegraphics[width=0.2\linewidth]{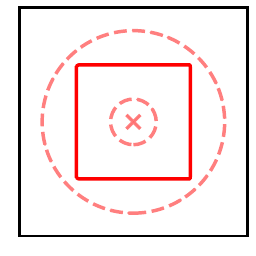}
        \caption{square objects}
    \end{subfigure}
    \par\smallskip
    \begin{subfigure}{\linewidth}
        \centering
        \includegraphics[width=0.2\linewidth]{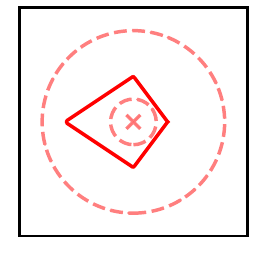}%
        \includegraphics[width=0.2\linewidth]{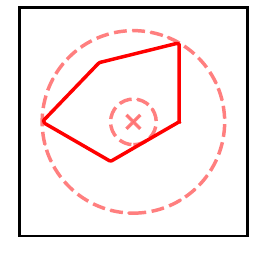}%
        \includegraphics[width=0.2\linewidth]{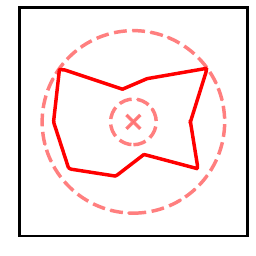}%
        \includegraphics[width=0.2\linewidth]{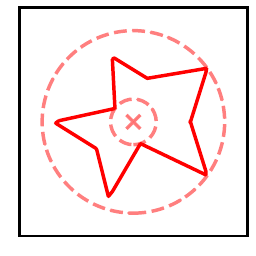}
        \caption{polygon objects}
    \end{subfigure}
    \caption[Set of Evaluation Objects]{
        Set of Evaluation Objects.
        These figures show the complete set of objects which we use in our experiments for evaluating our algorithms in \cref{sec:experiments-results}.
    }
    \label{fig:simulation-experiment-objects}
\end{figure}

\cleartorecto
\includepdf[pages={-}]{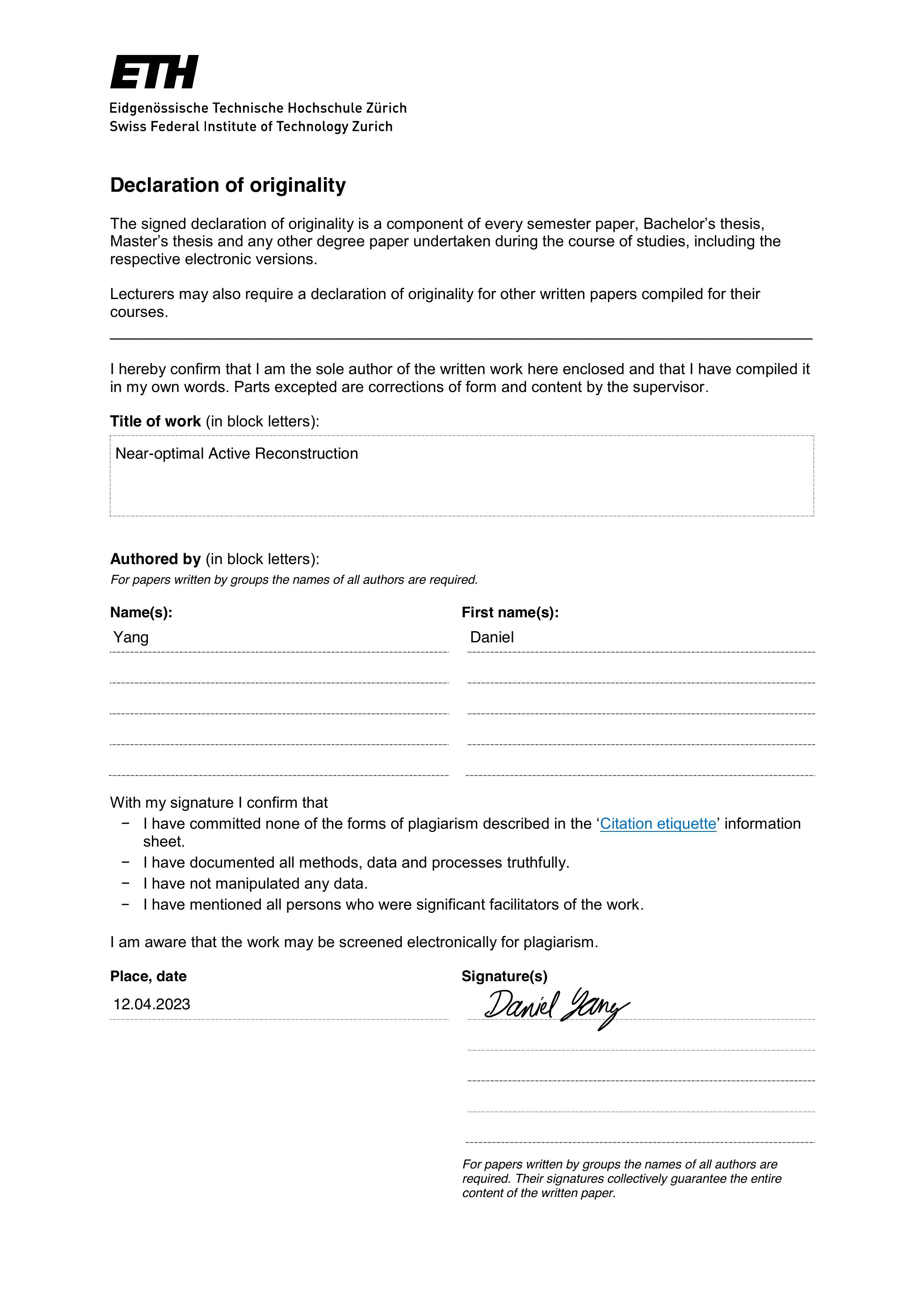}

\end{document}